\newcommand{\E}{\mathbf{E}}
\newcommand{\minimize}{\mathop{\mathrm{minimize}{}}}
\newcommand{\R}{\mathbf{R}}
\newcommand{\dk}{d^k}
\newcommand{\dkm}{d^{k-1}}
\newcommand{\hk}{h^k}
\newcommand{\hkm}{h^{k-1}}
\newcommand{\xk}{x^k}
\newcommand{\xkp}{x^{k+1}}
\newcommand{\xik}{{\xi_k}}
\newcommand{\gk}{g^k}
\newtheorem{prop}{Proposition}
\title{\huge Statistical Adaptive Stochastic Gradient Methods}
\author{
Pengchuan Zhang\thanks{
Microsoft Research AI, Redmond, WA 98052, USA (\texttt{penzhan@microsoft.com}).}
\and
Hunter Lang\thanks{
Microsoft Research NExT, Redmond, WA 98052, USA (\texttt{hjl@mit.edu}).} 
\and 
Qiang Liu\thanks{
University of Texas at Austin, Austin, TX 78712, USA (\texttt{lqiang@cs.utexas.edu}).}
\and
Lin Xiao\thanks{
Microsoft Research AI, Redmond, WA 98052, USA (\texttt{lin.xiao@microsoft.com}).}
\vspace{1ex}
}
\author{%
Pengchuan Zhang\\
Microsoft Research AI\\
Redmond, WA 98052, USA\\
\texttt{penzhan@microsoft.com}\\ 
\and
Hunter Lang\\
Microsoft Research NExT\\
Redmond, WA 98052, USA\\
\texttt{hjl@mit.edu}\\ 
\and 
Qiang Liu \\
University of Texas at Austin\\
Austin, TX 78712, USA\\
\texttt{lqiang@cs.utexas.edu}\\
\and
Lin Xiao\\
Microsoft Research AI\\
Redmond, WA 98052, USA\\
\texttt{lin.xiao@microsoft.com}\\ 
}
\date{February 24, 2020}
\begin{document}
\maketitle

\begin{abstract}
We propose a statistical adaptive procedure called SALSA for automatically scheduling the learning rate (step size) in stochastic gradient methods. 
SALSA first uses a smoothed stochastic line-search procedure to gradually increase the learning rate, then automatically switches to a statistical method to decrease the learning rate.
The line search procedure ``warms up'' the optimization process, 
reducing the need for expensive trial and error in setting an initial learning rate.
The method for decreasing the learning rate is based on a new statistical test for detecting stationarity when using a constant step size. 
Unlike in prior work, our test applies to a broad class of stochastic gradient algorithms without modification.
The combined method is highly robust and autonomous, and it matches the performance of the best hand-tuned learning rate schedules in our experiments on several deep learning tasks.
\end{abstract}

\section{Introduction}

We study adaptive stochastic optimization methods in the context of 
large-scale machine learning. 
Specifically, we consider the stochastic optimization problem
\begin{equation}\label{eqn:stoch-opt}
\minimize_{x\in\R^p}\quad F(x)\triangleq \E_\xi\bigl[f_\xi(x)\bigr],
\end{equation}
where $\xi$ is a random variable representing data sampled from some
(unknown) probability distribution, $x\in\R^p$ represents the
parameters of the machine learning model 
(e.g., the weight matrices in a neural network), 
and $f_\xi$ is the loss function associated with a random sample~$\xi$ (which can also be a mini-batch of samples).
%The objective function~$F$ is the expectation of $f_\xi$ over the distribution of the data.

Many stochastic optimization methods for solving 
problem~\eqref{eqn:stoch-opt} can be written in the form of
\begin{equation}\label{eqn:sgm-general}
    \xkp = \xk - \alpha_k \dk,
\end{equation}
where $\dk$ is a (stochastic) search direction 
and $\alpha_k>0$ is the step size or learning rate.  
In the classical stochastic gradient descent (SGD) method, 
\begin{equation}\label{eqn:d-sgd}
    \dk=\gk\triangleq\nabla f_{\xi^k}(\xk),
\vspace{1ex}
\end{equation}
where $\xi^k$ is a training example (or a mini-batch of examples)
randomly sampled at iteration~$k$.
This method traces back to the seminal work of
\citet{RobbinsMonro51}, and it has become very popular in machine
learning \citep[e.g.,][]{Bottou98,GoodfellowBengioCourville16book}.
Many modifications of SGD aim to improve its theoretical and practical performance. 
For example, \citet{GupalBazhenov72} studied a stochastic analog (SHB) of the 
heavy-ball method \citep{Polyak64heavyball}, where
%\vspace{-1ex}
\begin{equation}\label{eqn:d-shb}
    \dk = (1-\beta_k)\gk + \beta_k \dkm,
\end{equation}
and $\beta_k\in[0, 1)$ is the momentum coefficient. 
\citet{SutskeverMartensDahlHinton13} proposed to use a stochastic variant of
Nesterov's accelerated gradient (NAG) method \citep{Nesterov04book}, where
\begin{equation}\label{eqn:d-nes}
    \dk = \nabla f_{\xik}(\xk-\alpha_k\beta_k\dkm) + \beta_k\dkm.
\end{equation}
Other recent variants include, e.g., \citet{Jain2018accelerating}, \citet{kidambi2018insufficiency}
and \citet{ma2019qh}.

Theoretical conditions for the asymptotic convergence of SGD
are well studied, and they mostly focus on polynomially decaying learning
rates of the form $\alpha_k=a/(b+k)^c$ for some $a,b>0$ and $1/2<c\leq 1$
\citep[e.g.,][]{RobbinsMonro51,PolyakJuditsky92acceleration}.
%\citep[e.g.,][]{KushnerYin03book}.
Similar conditions for the stochastic heavy-ball methods are also established
\citep[e.g.,][]{GupalBazhenov72,Polyak77comparison}.
However, these learning rate schedules still require significant hyperparameter tuning efforts in modern machine learning practice. 

Adaptive rules for adjusting the learning rate and other parameters on the fly 
have been developed in both the stochastic optimization literature
\citep[e.g.,][]{Kesten58,MirzoakhmedovUryasev83,RuszczynskiSyski83stats,
RuszczynskiSyski84IFAC,RuszczynskiSyski86Cesaro,RuszczynskiSyski86forcing,DelyonJuditsky93} 
and by the machine learning community
\citep[e.g.,][]{Jacobs88DBD,Sutton92IDBD,Schraudolph1999,
MahmoodSutton12TuningFree,Baydin18hypergradient}.  
Recently, adaptive algorithms that use diagonal scaling --- replacing
$\alpha_k$ in~\eqref{eqn:sgm-general} with an adjustable diagonal
matrix --- have become very popular
\citep[e.g.,]{DuchiHazanSinger2011adagrad,
TielemanHinton2012lecture,KingmaBa2014adam}.
Despite these advances, costly hand-tuning efforts are still needed to obtain good 
(generalization) performance \citep{wilson2017marginal}.
%performance in practice \citep{wilson2017marginal}.

A typical procedure for hand-tuning the learning rate is as follows.
First, trial and error is required to set an initial learning rate, which cannot be too small (leading to very slow training) or too large (causing instability or bad train and test performance).
The choice of a good initial rate often depends on the particular model and dataset used in training.
Then the learning rate must be gradually decreased, which is critical for the convergence of stochastic gradient methods and also for obtaining good testing performance in practice. 
A very popular scheme is to use a ``constant-and-cut'' learning-rate schedule, which decreases the learning rate by a constant factor after some fixed number of epochs. Both the factor of reduction and number of epochs between reductions also require trial and error to set.

\begin{figure}[t]
    \centering
    \includegraphics[width=0.45\linewidth]{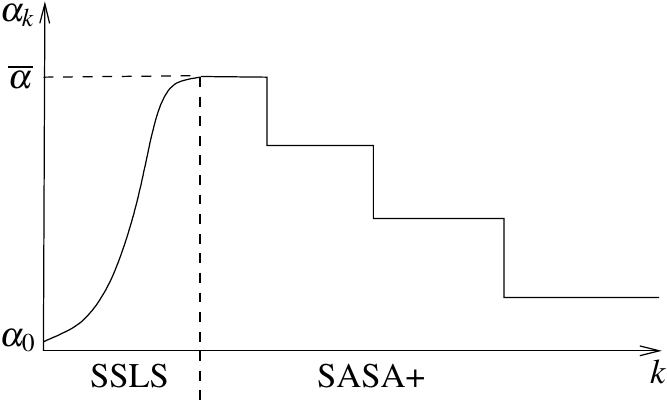}
%    \vspace{-1em}
    \caption{Typical learning rate profile of SALSA (log scale).}
%    \vspace{-em}
    \label{fig:salsa_profile}
\end{figure}

In this paper, we study statistical methods for automatically
scheduling the learning rate, and make the following contributions.
%\begin{itemize}[leftmargin=1em,topsep=0pt,itemsep=0pt]
\begin{itemize}
\item 
    We propose a smoothed stochastic line-search (SSLS) method that ``warms up'' the optimization process.
    Starting from a small, arbitrary learning rate, this method  
    increases it to a larger value to enable fast initial progress of the stochastic gradient method. 
    Unlike most line-search methods in optimization \citep[e.g.,][Chapter~3]{NocedalWright2006book}, 
    the purpose of SSLS is \emph{not} asymptotic convergence, but to automatically reach a stable learning rate that is well-suited to the training task.
    We demonstrate empirically that it significantly reduces the need for trial and error in setting an appropriate initial learning rate.
\item  
    Given a good initial learning rate, %(found by SSLS or trial and error), 
    we propose a statistical procedure called SASA+ that can automatically decrease the learning rate to obtain good training and test performance.
    SASA+ uses a statistical test to detect if the optimization process with a constant learning rate has reached stationarity, signaling slow training progress. 
    Whenever the test fires, the learning rate is decreased by a fixed factor. 
    SASA+ is an extension of SASA (Statistical Adaptive Stochastic Approximation) proposed by
    \citet{LangZhangXiao2019}, whose statistical test only works for 
    the stochastic heavy ball (SHB) method in~\eqref{eqn:d-shb}. 
    In SASA+, we derive a new stationarity test that applies to a broad class of stochastic methods without modification.
    %Moreover, we simplify the statistical test itself!
    %Because of its conceptual and analytical simplicity, it greatly simplifies implementation and deployment in software packages.
% \iffalse
%     We consider a broad family of stochastic optimization methods
%     with constant hyperparameters (including the learning rate and 
%     various forms of momentum) and derive a more general necessary condition 
%     (than Yaida's) for the associated learning process to be stationary.
%     \citet{yaida2018fluctuation} established a ``master equation,'' from which
%     different stationarity conditions can be derived for different variants 
%     of stochastic gradient methods 
%     (some of them may not admit a usable analytical form).
%     We derive a simple ``master condition'' that works for different methods 
%     without any change and holds \emph{exactly} for any stationary process.
%     Because of its conceptual and analytical simplicity, 
%     it greatly simplifies implementation and deployment in software packages.
% \item 
%     We develop a simple statistical test for checking stationarity based on
%     our ``master condition.'' 
%     It is a simple confidence interval test (checking if an interval contains zero),
%     as opposed to the ``equivalence test'' for Yaida's 
%     condition used by \citet{LangZhangXiao2019}.
%     This simple test is as robust as the equivalence test 
%     %of \citet{LangZhangXiao2019} 
%     and avoids the use of an additional hyperparameter.
% \fi
\end{itemize}

\begin{figure}[t]
    \centering
    \includegraphics[width=0.33\linewidth]{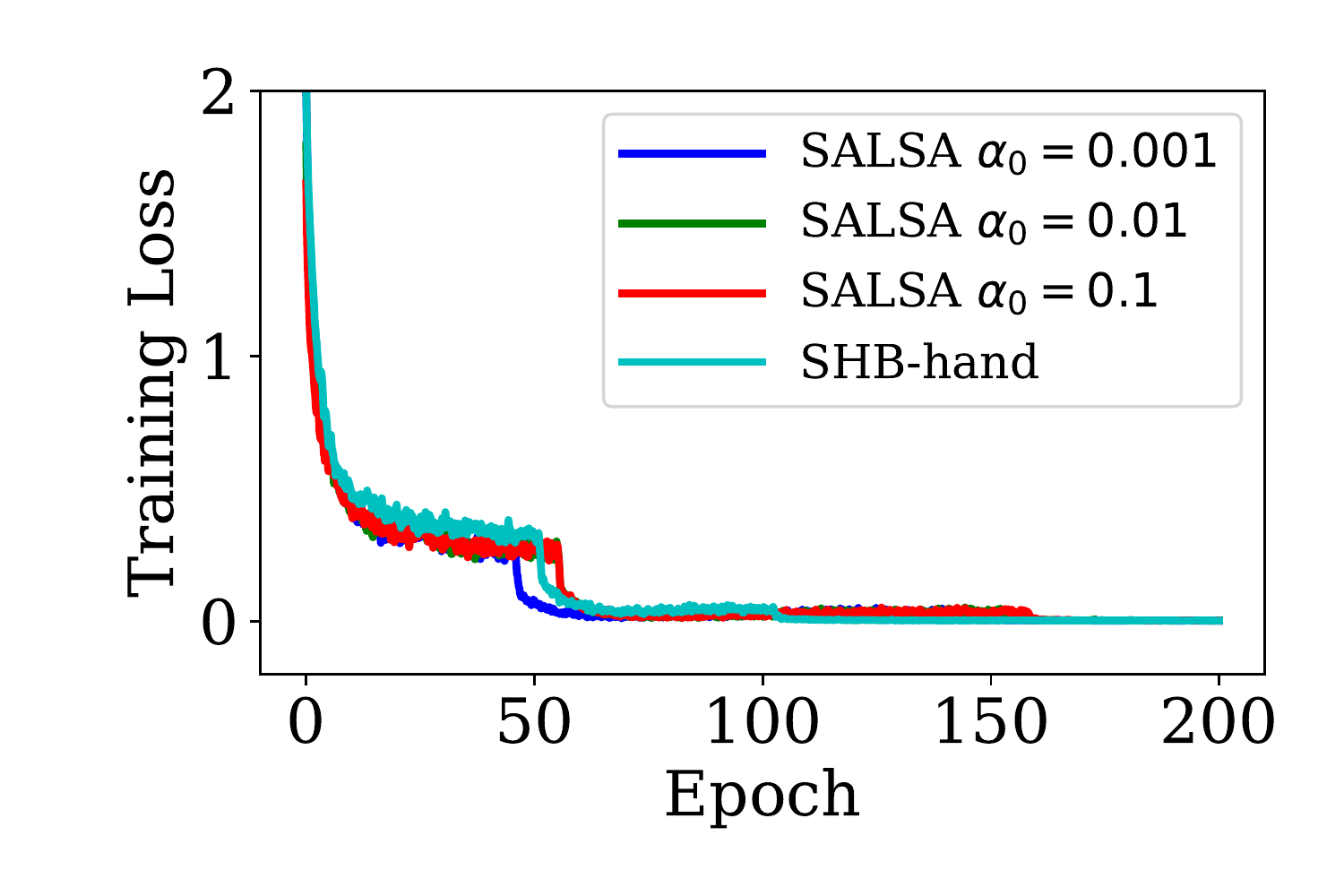}\quad
    \includegraphics[width=0.33\linewidth]{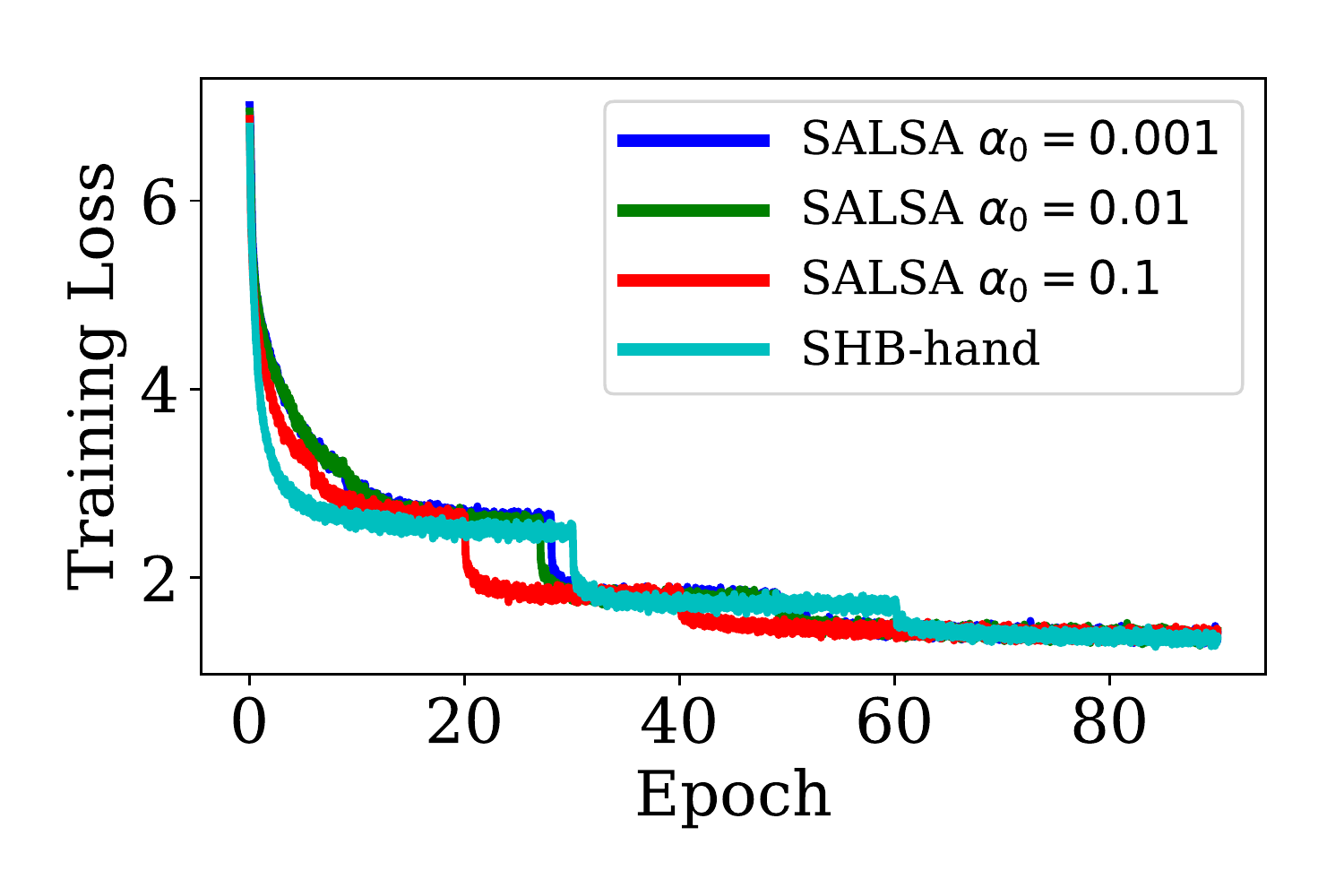}\\
    \includegraphics[width=0.33\linewidth]{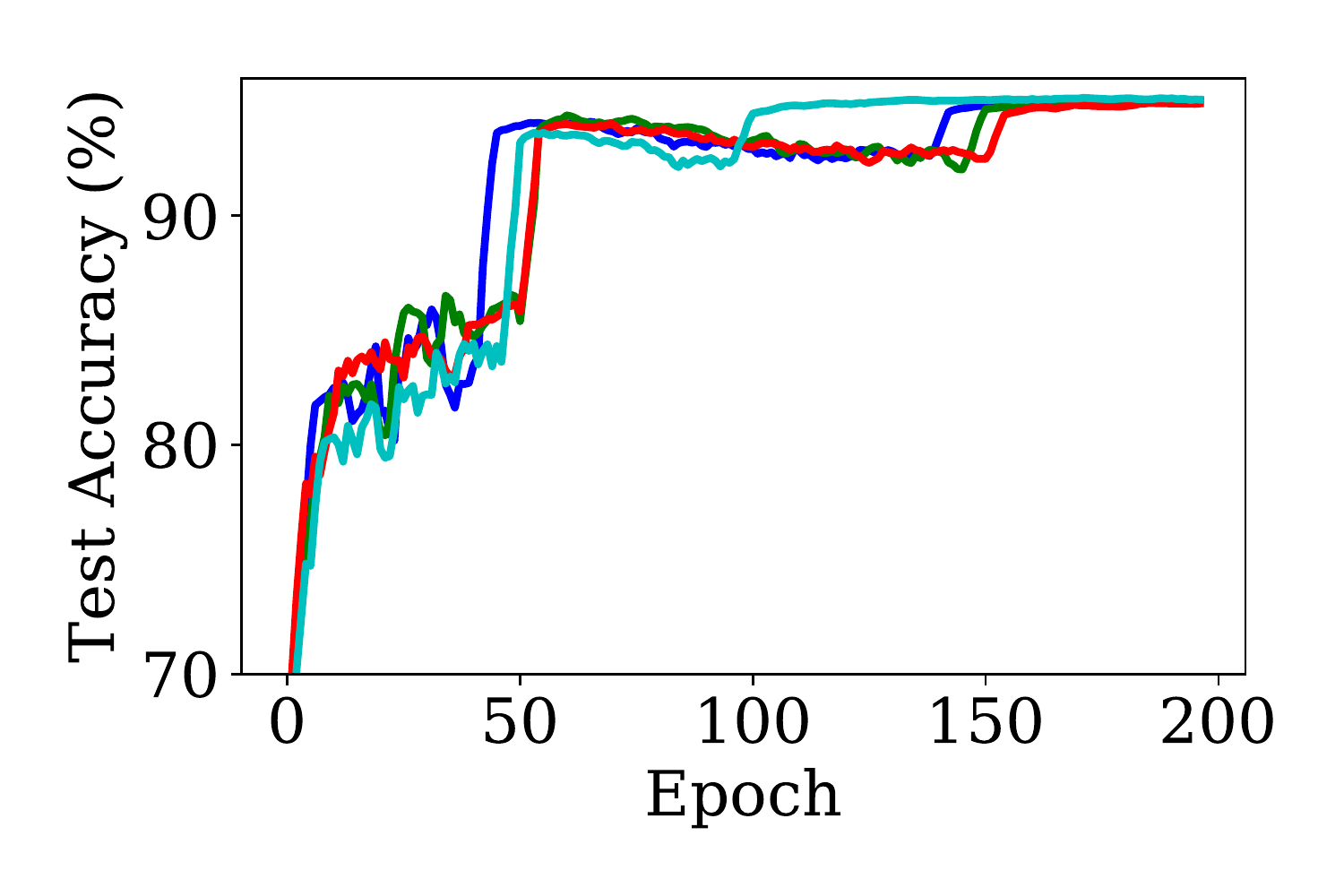}\quad
    \includegraphics[width=0.33\linewidth]{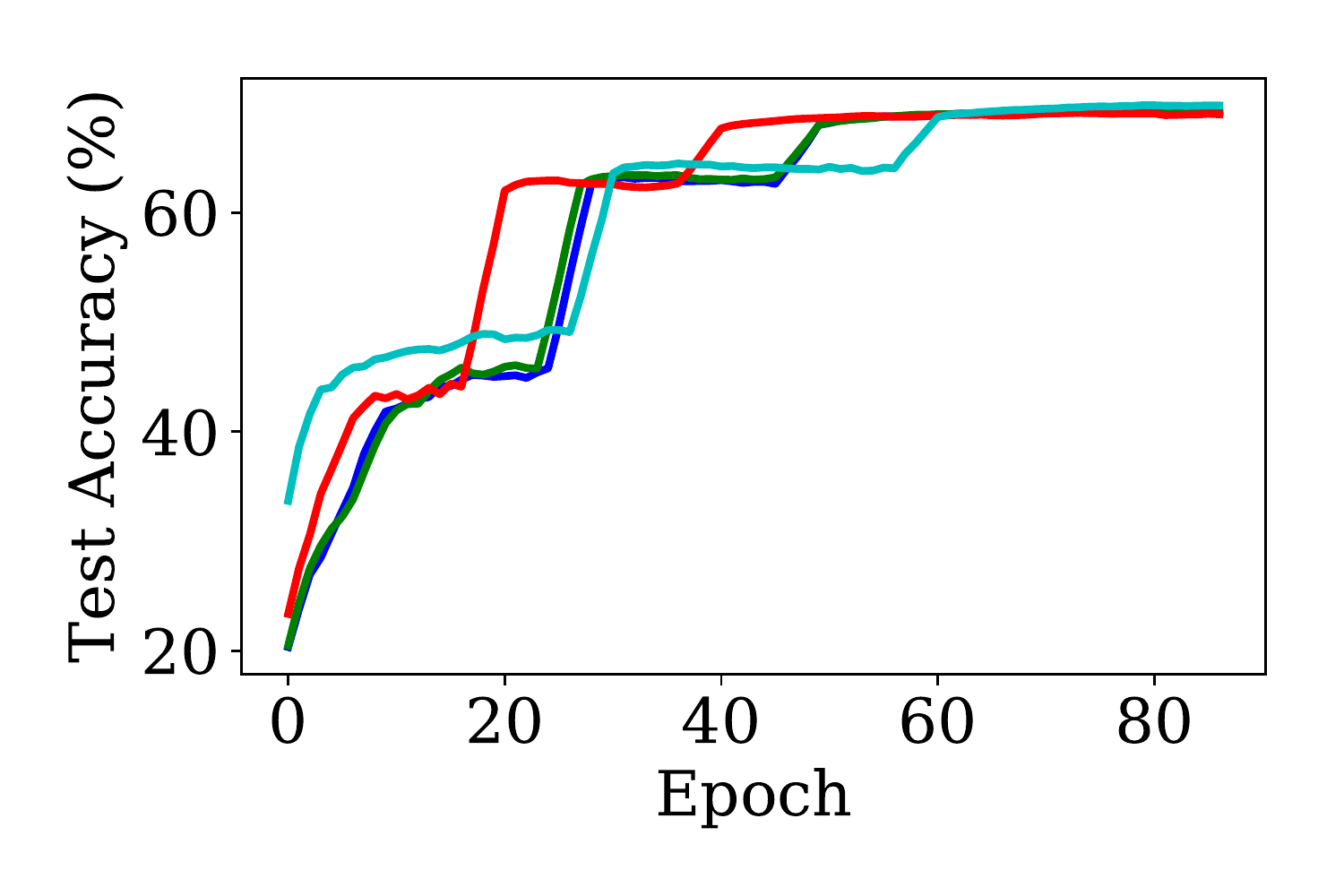}\\
    \includegraphics[width=0.33\linewidth]{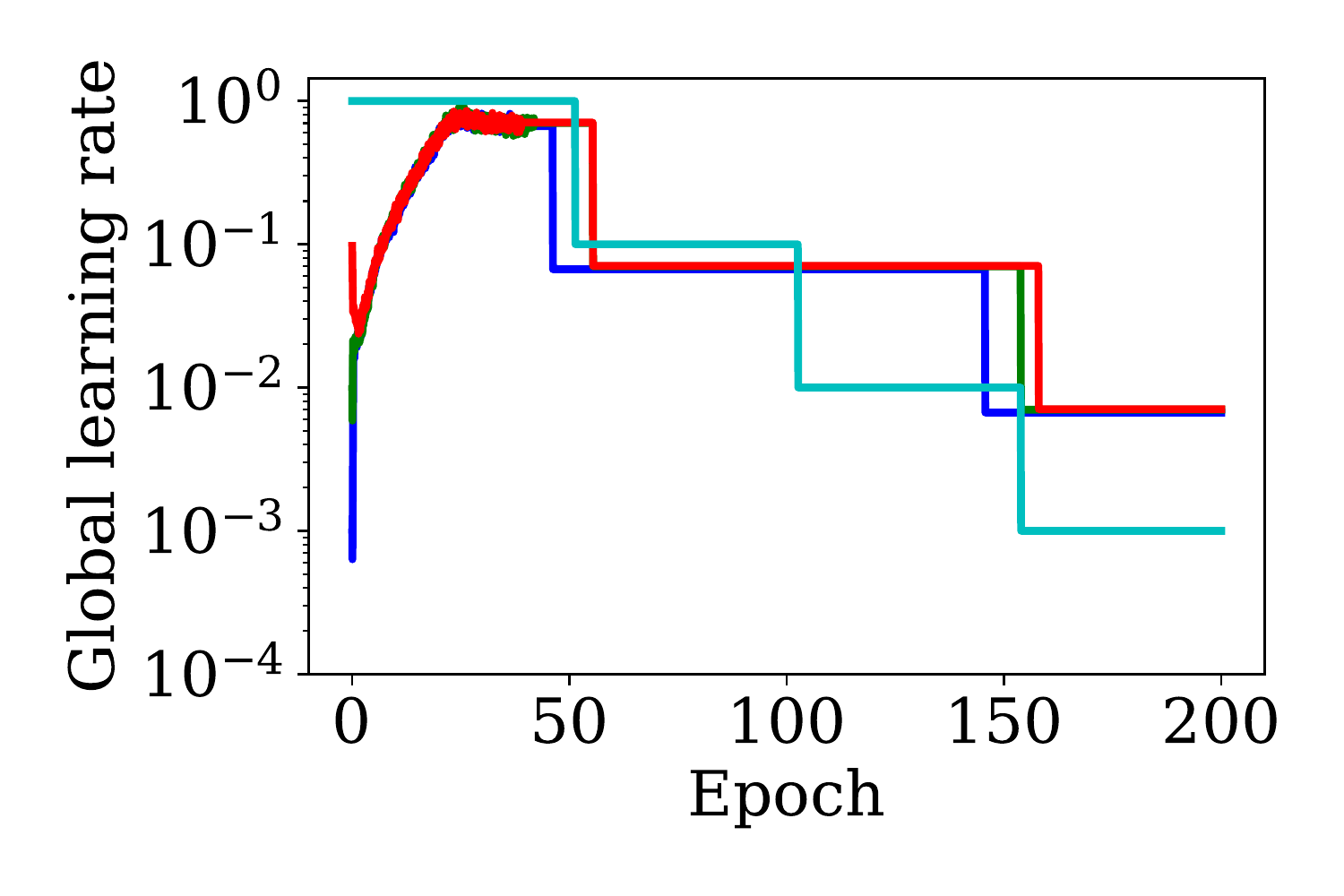}\quad
    \includegraphics[width=0.33\linewidth]{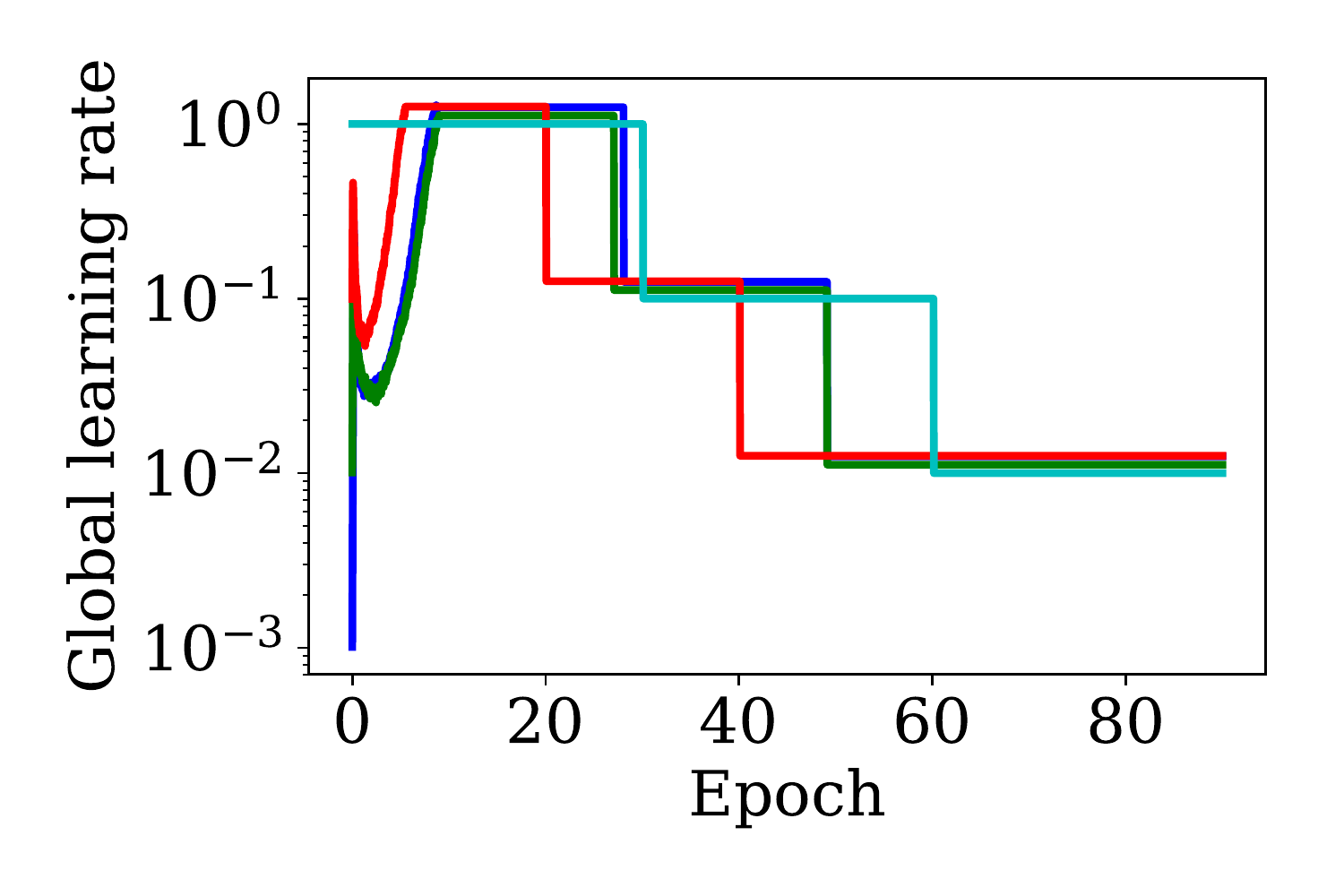}
%    \vspace{-2mm}
    \caption{SALSA on CIFAR-10 (left column) and ImageNet (right column) with default parameters, starting from three different initial learning rates. The performance of SALSA matches that of a hand-tuned SHB method (SHB-hand).
    %It matches the training and testing performance of hand-tuned SHB method (SHB-hand), which chooses $\alpha_0=1.0$ after trial and error and drops the leaning rate by a factor of 10 after every 50 epochs for CIFAR-10 and every 30 epochs for ImageNet.
    }
    \label{fig:salsa_cifar10_imagenet}
\end{figure}

We combine SSLS and SASA+ to form an autonomous algorithm called SALSA 
(Stochastic Approximation with Line-search and Statistical Adaptation).
Figure~\ref{fig:salsa_profile} shows the typical learning rate profile generated by SALSA. 
It starts with a small, arbitrary learning rate $\alpha_0$ and uses SSLS to warm up the optimization process; once a stable learning rate $\bar{\alpha}$ is reached, it automatically switches to SASA+, which relies on online statistical tests to decrease the learning rate in a constant-and-cut (staircase) fashion.  
%It reduces the need for trial and error with different machine learning models and datasets.
%Because SALSA automatically sets the stable learning rate~$\bar{\alpha}$ and the times to drop the learning rate, it reduces the need for trial and error when doing optimization with different machine learning models and datasets.
%The initial learning rate $\alpha_0$ can be arbitrary, but we usually choose it to be very small to be safe. 

Figure~\ref{fig:salsa_cifar10_imagenet} shows the performance of SALSA on training ResNet18~\cite{HeZhangRenSun2016ResNet} on two different datasets: 
CIFAR-10 \citep{krizhevsky2009learning} and ImageNet \citep{imagenet_cvpr09}. 
We tried three small initial learning rates $\alpha_0\in\{0.1,0.01,0.001\}$ with SALSA. In all three cases, the SSLS phase automatically settled to a stable learning rate close to~$1.0$ and switched to SASA+, which then automatically decreased the learning rate twice (each by a factor of 10) based on the online statistical test. 
SALSA matches the train and test performance of a hand-tuned SHB method, for which the best schedule (obtained after significant tuning) sets $\alpha_0=1.0$ and drops the leaning rate by a factor of 10 after every 50 and 30 epochs for CIFAR-10 and ImageNet, respectively.
The learning rate profiles in the bottom row of Figure~\ref{fig:salsa_cifar10_imagenet} are all similar to the sketch in Figure~\ref{fig:salsa_profile}.

In the rest of this paper, we first present the SASA+ method in Section~\ref{sec:sasa+} and then the SSLS procedure in Section~\ref{sec:ssls}. Finally, we describe their combination into SALSA in Section~\ref{sec:salsa} and conclude in Section~\ref{sec:conclusion}.

\section{The SASA+ Method}
\label{sec:sasa+}

In this section, we focus on statistical methods for automatically
decreasing the learning rate. This line of work goes back to
\citet{Kesten58}, who used the change of sign of the inner products of
consecutive stochastic gradients as a statistical indicator of slow
progress and as a trigger to decrease the learning rate \citep[see
  extensions in][]{DelyonJuditsky93}.  \citet{Pflug83,Pflug88IFAC}
considered the dynamics of SGD with constant step size for minimizing
convex quadratic functions and derived necessary conditions for the
resulting dynamic process to be stationary.
Most recently, \citet{yaida2018fluctuation} derived fluctuation-dissipation
relations that characterized the stationary behavior of SGD and SHB
with constant learning rate and momentum.
We derive a more general condition for testing stationarity that works for a much broader family of stochastic optimization methods. 

%These relations hold \emph{exactly} for any stationary state (regardless of the loss function or the noise model), so they can be very effective at detecting stationarity and can serve as a reliable indicator of when to decrease the learning rate.

We consider general stochastic optimization methods 
in the form of~\eqref{eqn:sgm-general} with a constant learning rate:
\begin{equation}\label{eqn:sgm-const-lr}
    \xkp = \xk - \alpha \dk.
\end{equation}
We assume that the stochastic search direction $\dk$ is generated with 
time-homogeneous dynamics; in particular, any additional hyperparameters
involved must be constant (not depending on~$k$).
As an example, we consider the search directions generated by the family
of Quasi-Hyperbolic Momentum (QHM) methods \citep{ma2019qh}:
\begin{equation}\label{eqn:d-qhm}
    \begin{split}
        \hk &= (1-\beta)\gk + \beta\hkm,  \\
        \dk &= (1-\nu)\gk + \nu\hk, 
   \end{split}
\end{equation}
where $0\leq\beta<1$ and $0\leq\nu\leq 1$.
With $\beta=0$ or $\nu=0$, it recovers the SGD direction~\eqref{eqn:d-sgd}.
With $\nu=1$,~\eqref{eqn:d-qhm} recovers the SHB direction~\eqref{eqn:d-shb}.
With $0<\beta=\nu<1$, it is equivalent to the direction of
Nesterov momentum~\eqref{eqn:d-nes} 
\citep{SutskeverMartensDahlHinton13,GitmanLangZhangXiao2019}.
We assume that the dynamics of~\eqref{eqn:sgm-const-lr}, driven by 
the stochastic gradients $\gk$ through \eqref{eqn:d-qhm}, are stable. 
Stability regions of the hyperparameters in QHM are characterized
by \citet{GitmanLangZhangXiao2019}.

\subsection{Necessary Conditions for Stationarity}

%In addition, we assume that the stochastic process $\{\xk\}$, as $k\to\infty$, becomes \emph{stationary}. 
%and let $\pi$ denote the stationary distribution of $\xk$.
The stochastic process $\{\xk\}$ is (strongly) stationary if the joint
distribution of any subset of the sequence is invariant with respect to
simultaneous shifts in the time index 
\citep[see, e.g.,][Chapter~9]{GrimmettStirzaker2001}.
As a direct consequence, for any test function $\phi:\R^p\to\R$, we have
\begin{equation}\label{eqn:stationary-test}
\E_{x^k \sim \pi}\bigl[\phi(x^{k+1})\bigr] 
= \E_{x^k \sim \pi}\bigl[\phi(x^k)\bigr], 
\end{equation}
where $\pi$ denotes the stationary distribution of $\{\xk\}$.
If~$\phi$ is smooth, then we use Taylor expansion to obtain
\begin{align}
\phi(\xkp)
&=\phi(\xk) \!-\! \alpha\bigl\langle\nabla\phi(\xk),\dk\bigr\rangle
\textstyle \!+\!\frac{\alpha^2}{2}\bigl\langle\nabla^2\phi(\xk)\dk,\dk\bigr\rangle + O(\alpha^3) .
\label{eqn:taylor-expansion}
\end{align}
Taking expectations on both sides of the above equality and 
applying~\eqref{eqn:stationary-test}, we obtain (after cancleing a common factor~$\alpha$)
\begin{equation}\label{eqn:stationary-2nd}
\E_{x^k \sim \pi} \Bigl[\bigl\langle\nabla\phi(\xk),\dk\bigr\rangle
\!-\textstyle\frac{\alpha}{2}\bigl\langle\nabla^2\phi(\xk)\dk,\dk\bigr\rangle
\Bigr] \!= O(\alpha^2) .
\end{equation}
For an arbitrary test function~$\phi$, it is very hard in practice to compute or approximate the $O(\alpha^2)$ term on the right-hand side. 
In addition, computing the Hessian-vector product $\nabla^2\phi(\xk)\dk$ can be very costly. 
%Therefore, we only consider simple quadratic functions for which the
%$O(\alpha^2)$ term in~\eqref{eqn:stationary-2nd} vanishes. 
%In particular, the choice of $\phi(x) = (1/2)\|x\|^2$ results in
%\begin{equation}\label{eqn:master-condition}
%\E_\pi \left[\bigl\langle \xk,\dk\bigr\rangle
%-\textstyle\frac{\alpha}{2}\|\dk\|^2\right] = 0.
%\end{equation}
%
%For an arbitrary test function~$\phi$, it is very hard in practice to compute/approximate the expectations in~\eqref{eqn:stationary-test}. 
Therefore, we choose\footnote{Note that the choice of the test function~$\phi$
has no implications for the loss function~$F$: we do \emph{not} assume that~$F$ is quadratic, or even that~$F$ is convex. The choice of $\phi$ is arbitrary.}
%and so can be done to enable easy computation of \eqref{eqn:stationary-test}.} 
the simple quadratic function $\phi(x) = \frac{1}{2}\|x\|^2$, which results in
\begin{equation}\label{eqn:master-condition}
\E_{x^k \sim \pi} \left[\bigl\langle \xk,\dk\bigr\rangle
-\textstyle\frac{\alpha}{2}\|\dk\|^2\right] = 0.
\end{equation}
This condition holds \emph{exactly} for any stochastic optimization method
of the form~\eqref{eqn:sgm-const-lr} if it reaches stationarity.
% Indeed, \emph{weak} stationarity is sufficient since $\phi$ is a quadratic function of the state \citep[e.g.,][Section~3.2.3]{Dembo2013LectureNote}.
Beyond stationarity, it requires no specific assumption on the loss function
or noise model for the stochastic gradients. 
%Moreover, it can be applied to different stochactic optimization methods without any change.
%We call this the \emph{master condition} for stationarity.

\citet{yaida2018fluctuation} focused on the SHB method with direction 
$\dk=(1-\beta)\gk+\beta\dkm$ and proposed the condition 
\begin{equation}\label{eqn:yaida-hb-test}
\E_{x^k \sim \pi} \left[\bigl\langle \xk,\gk\bigr\rangle \textstyle
-\frac{\alpha}{2}\frac{1+\beta}{1-\beta}\|\dk\|^2\right] = 0 .
\end{equation}
%by combining the stationarity and the heavy-ball update rule.
It can be shown that this is equivalent to a special case of~\eqref{eqn:master-condition}.
%In Appendix~\ref{apd:batch-means}, we show that this is equivalent to our master condition~\eqref{eqn:master-condition}.
%For the QHM update~\eqref{eqn:d-qhm}, a more complex stationarity condition may also be derived, but it will still be equivalent to~\eqref{eqn:master-condition}.
%In practice, the single, simple master condition~\eqref{eqn:master-condition} is much more preferred, since it applies no matter how the direction $\dk$ is generated.
Condition~\eqref{eqn:master-condition} can be applied \emph{without modification} to the more general QHM family \eqref{eqn:d-qhm} and other algorithms of the form \eqref{eqn:sgm-const-lr} with time-homogeneous dynamics.

If $\{\xk\}$ starts with a nonstationary distribution and converges
to a stationary state, we have
\begin{equation}\label{eqn:asymptotic-test}
    \lim_{k\to\infty} 
\E \left[\bigl\langle \xk,\dk\bigr\rangle
-\textstyle\frac{\alpha}{2}\|\dk\|^2\right] = 0.
\end{equation}
In the next section, we devise a simple statistical test to determine
if condition~\eqref{eqn:master-condition} fails to hold. In this case, the learning
process has not stalled, and we can continue with the same step size.
If we fail to detect non-stationarity (i.e., the dynamics may be
approximately stationary), we will reduce the learning
rate~$\alpha$ to allow for finer convergence.

%In general can use $\phi(x)=\frac{1}{2}\langle Ax, x\rangle$, especially $A$
%being a diagonal matrix.

%$\phi\equiv F$ the objective function of the optimization problem.
%Need approximations due to the $O(\alpha^2)$ term.
%The same problem exists for any non-quadratic functions.

%A (discrete-time) stochastic process is \emph{weak stationary} 
%if its means $\mu(k)$ is constant and its auto-covariance functions 
%satisfy $\rho(t,s)=r(|t-s|)$ for all $t,s\geq 0$.
%(The weak sense stationarity is sufficient for quadratic test functions!)

\textbf{Stationarity of the Loss Function.}
Another obvious test function one may consider is the loss function~$F$ itself.
Since $F(x^k)=\E_{\xi}[f_{\xi}(x^k)]$, we can test (for any $i>0$)
\begin{equation}
%\E_{x^k\sim\pi}[F(x^k)] = \E_{x^{k+i}\sim\pi}[F(x^{k+i})], \quad \forall\,i>0.
\E_{x^k, \xi^k}\bigl[f_{\xi^k}(x^k)\bigr] = \E_{x^{k+i},\xi^{k+i}}\bigl[f_{\xi^{k+i}}(x^{k+i})\bigr], 
\end{equation}
where $\xi^k$ is independent of~$x^k$. 
We cannot use $\phi(x)=F(x)$ in~\eqref{eqn:taylor-expansion} to derive a condition like~\eqref{eqn:master-condition}, since both the Hessian and the higher order terms are hard to estimate in practice. 
Instead, we will derive a simple SLOPE test in the next section to detect if the training loss has a decreasing trend. 
%XXX this feels out of place, but no space for transition.
As we will show in experiments, the SLOPE test does not work well for detecting stationarity for the purpose of decreasing the learning rate. However, it can be very useful to automatically switch from SSLS to SASA+ (Figure~\ref{fig:salsa_profile}).
%(see Figure~\ref{fig:salsa_profile}).

% DUE TO SPACE LIMIT, WE WILL NOT DISCUSS THE FOLLOWING POINTS FOR NOW, MAYBE IN FINAL VERSION WITH ONE MORE PAGE ALLOWED.
%the gradient test at stationary point: $\E_{x^k\sim\pi}[\nabla F(x^k)] = 0$, good to mention, but hard to estimate variance for statistical testing.
%We can even mention using validation loss for generalization purposes. This is what has been done in practice
%\paragraph{Stationarity from Different Perspectives} It is often observed in deep learning practice, especially for overparametrized models, training loss goes to zero but still good to continue training in order to obtain better test performance. The overall dynamics has not settled even though the training loss is not changing!

\subsection{Statistical tests of (non-)stationarity}
\label{sec:stats-tests}

%\citet{Pflug83,Pflug90nonasymptotic} also devised a sequential statistical test to detect stationarity, and he proposed to decrease the step size by a constant factor whenever the test fires. \citet{RuszczynskiSyski83stats} used online statistical tests to check if the present step size and momentum constants satisfy certain optimality conditions and adjust them if the conditions are not satisfied. However, \citet{LangZhangXiao2019} found that these methods have limited success in machine learning applications due to their reliance on using a quadratic approximation of the loss function and/or strong assumptions on the noise models.

%\citet{LangZhangXiao2019} devised a more robust statistical test for Yaida's conditions and obtained performance on common deep learning tasks that is competitive with the best hand-tuned schedules.

\SetAlgoHangIndent{3em}  % for lines that are too long use second line
\begin{algorithm}[t]
	\DontPrintSemicolon
	%\caption{SASA+: SASA with master condition and simple statistical testing}
	\caption{SASA+}
	\label{alg:sasa}
	\textbf{input:} 
	$x^0$, $\alpha_0$~
	(default parameters: $N_\mathrm{min}$, $K_\mathrm{test}$, $\delta, \theta, \tau$)\\
%    $\delta\in(0,1)$, $\theta\in(0,1)$, $\tau\in(0,1)$\\
    $\alpha\gets\alpha_0$\\
    $k_o\gets 0$ \\
	\For{$k = 0,...,T-1$}{
        Randomly sample $\xik$ and compute $\dk$ (e.g., using QHM)\\
        $\xkp \gets \xk - \alpha \dk$ \\
        %\rule[0.5ex]{0.9\linewidth}{0.4pt} \\
        $\Delta_k \gets \bigl\langle \xk, \dk\bigr\rangle-\frac{\alpha}{2}\|\dk\|^2$\\
        $N \gets \lceil \theta (k -k_o) \rceil$  \\
        \If{$N > N_\mathrm{min}$ \textbf{and} $k\!\! \mod K_\mathrm{test} == 0$}{
            $(\hat\mu_N, \hat{\sigma}_N)\gets$ statistics of
            %sample mean and BM/OLBM variance of
            $\{\Delta_{k-N+1},\ldots,\Delta_k\}$ \\[0.5ex]
            \If{$0\in\hat\mu_N\pm t^*_{1-\delta/2}\frac{\hat{\sigma}_N}{\sqrt{N}}$}{
            \vspace{0.5ex}
                $\alpha \gets \tau \alpha$ \\
                $k_o \gets k$
            }
            \vspace{-0.3ex}
        }
        \vspace{-0.3ex}
    }
    \vspace{-0.3ex}
    \textbf{output:} $x^T$ (or the average of last epoch)
\end{algorithm}

In order to test if the stationarity condition~\eqref{eqn:master-condition} holds
approximately, we collect 
%at each iteration 
the simple statistics 
\[
\Delta_k~\triangleq~ \textstyle
\bigl\langle \xk,\dk\bigr\rangle-\frac{\alpha}{2}\|\dk\|^2.
%~=~\bigl\langle \xkp,\dk\bigr\rangle+\frac{\alpha}{2}\|\dk\|^2.
\]
%Here the second expression for $\Delta_k$ is obtained using the direct substitution $\xkp=\xk-\alpha\dk$, which can be more convenient to implement if $\Delta_k$ is collected after the state updates to $\xkp$.
%
In the language of hypothesis testing
\citep[e.g.,][]{LehmannRomano2005book}, we make as our
\emph{null hypothesis} that the dynamics~\eqref{eqn:sgm-const-lr} have
reached a stationary distribution $\pi$. If we have $N$ samples
$\{\Delta_k\}$, we know from equation~\eqref{eqn:master-condition} and
the Markov chain CLT (see its application in \citet{jones2006fixed})
that as $N\to\infty$, under the null hypothesis, the mean statistic
$\bar{\Delta}$ follows a normal distribution with mean~0 and variance
$\sigma_\Delta^2/\sqrt{N}$. Our \emph{alternative hypothesis} is that
the dynamics \eqref{eqn:sgm-const-lr} have \emph{not} reached
stationarity. 

To test these hypotheses, we adopt the classical
confidence interval test.  We use the most
recent~$N$ samples $\{\Delta_{k-N+1},\ldots,\Delta_k\}$ to compute the
sample mean $\hat\mu_N$ and a variance estimator $\hat\sigma_N^2$ for
$\sigma_\Delta^2$.  Then we form the $(1\!-\!\delta)$-confidence interval
$(\hat\mu_N-\omega_N, ~\hat\mu_N+\omega_N)$ with half width
\begin{equation}\label{eqn:confi-interval}
%\hat\mu_N \pm t^*_{1-\delta/2}\frac{\hat{\sigma}_N}{\sqrt{N}},
%(\hat\mu_N-\omega_N, ~\hat\mu_N+\omega_N)
%\mbox{ with half width }
\omega_N = t^*_{1-\delta/2}\frac{\hat{\sigma}_N}{\sqrt{N}},
\end{equation}
where $t_{1-\delta/2}^*$ is the $(1-\delta/2)$ quantile of the Student's
$t$-distribution with degrees of freedom corresponding to that
%the degrees of freedom 
in the variance estimator $\hat\sigma_N^2$. Because the sequence 
$\{\Delta_{k-N+1},\ldots,\Delta_k\}$
are highly correlated due to the underlying Markov dynamics,
the classical formula for the sample variance 
(obtained by assuming i.i.d.\ samples) does not work for $\hat\sigma_N^2$
(causing too small confidence intervals).
We need to use more sophisticated \emph{batch mean} (BM) or 
\emph{overlapping batch mean} (OLBM) variance estimators developed in the 
Markov chain Monte Carlo literature
\citep[e.g.,][]{jones2006fixed,flegal2010batch}.
See \citet{LangZhangXiao2019} for detailed explanation.
We also list the formulas for computing BM and OLBM in 
Appendix~\ref{apd:batch-means}.

If the confidence interval in~\eqref{eqn:confi-interval} contains~$0$, 
we fail to reject the null hypothesis that the learning process is at stationarity, which
means the learning rate should be decreased.  Otherwise, we accept the
alternative hypothesis of non-stationarity and keep using the current
constant learning rate.
Algorithm~\ref{alg:sasa} summarizes our new method called SASA+,
and Table~\ref{tab:sasa-params} lists its hyperparameters and their default values.
%(where $n$ is total number of training examples and $b$ is the mini-batch size).

\begin{table}[t]
%\vspace{-1ex}
\caption{List of hyperparameters of Algorithm~\ref{alg:sasa}}
\label{tab:sasa-params}
\begin{center}
%\resizebox{1.\linewidth}{!}{
\begin{tabular}{lll}
    \textbf{Parameter} & \textbf{Explanation} & \textbf{Default value}
    \\ \hline \\[-1ex]
$N_\textrm{min} \in \mathbb{Z}_+$  & minimum \# of samples for testing & $\min\{1000, \lceil n/b\rceil\}$ \\
$K_\textrm{test} \in \mathbb{Z}_+$ & period to perform statistical test & $\min\{100, \lceil n/b\rceil\}$\\
$\delta \in (0,1)$    & $(1\!-\!\delta)$-confidence interval & 0.05 \\
%$\delta \in (0,1)$    & $(1\!-\!\delta)$-confidence interval & 0.05 (95\% confidence)\\
%$\theta \in (0,1)$    & fraction of recent samples to keep (after reset) & $1/8$ \\
$\theta \in (0,1)$    & fraction of recent samples to keep & $1/8$ \\
$\tau \in (0,1)$     & learning rate drop factor   & $1/10$ \\
\hline\\[-2ex]
\multicolumn{3}{l}{where $n$ is total number of training examples and $b$ is the mini-batch size.}
\end{tabular}
%}
\end{center}
\end{table}

Our method is an extension of SASA (Statistical Adaptive Stochastic Approximation) proposed by
\citet{LangZhangXiao2019}, which is based on testing the condition~\eqref{eqn:yaida-hb-test} for the SHB dynamics only.
In addition to using a much more general stationarity condition~\eqref{eqn:master-condition},
%\citet{LangZhangXiao2019} focused on the stochastic heavy-ball method and
%devised a statistical test for the condition~\eqref{eqn:yaida-hb-test}.
another major difference is that they set non-stationarity as the null 
hypothesis and stationarity as the alternative hypothesis (opposite to ours).
This leads to a more complex ``equivalence test''
that requires an additional hyperparameter (see Appendix~\ref{apd:sasavssasaplus}).
Our test is simpler, 
%more rigorous, 
more intuitive, and computationally as robust.

\begin{figure}[t]
\centering
    \includegraphics[width=0.33\linewidth]{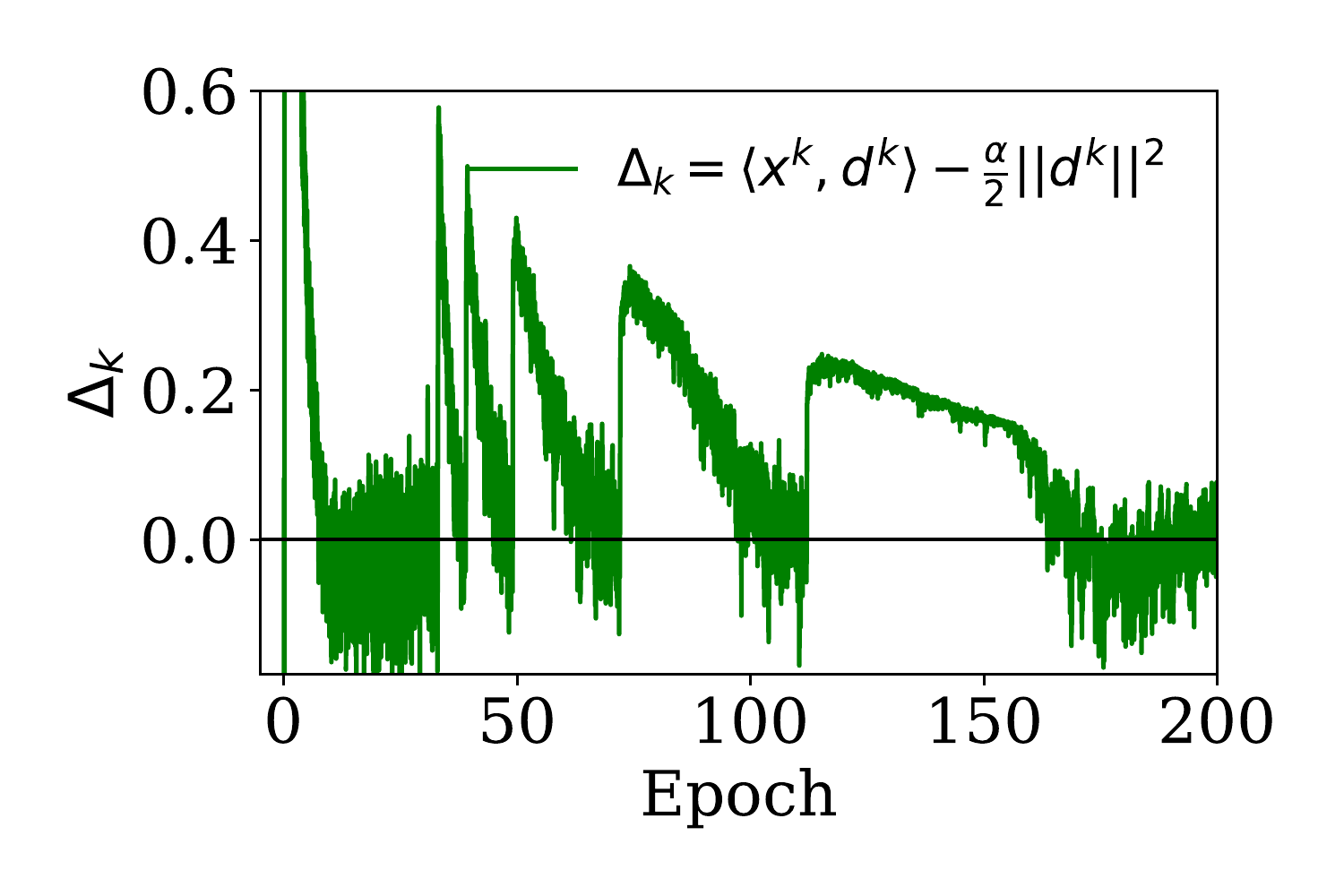}\quad
    \includegraphics[width=0.33\linewidth]{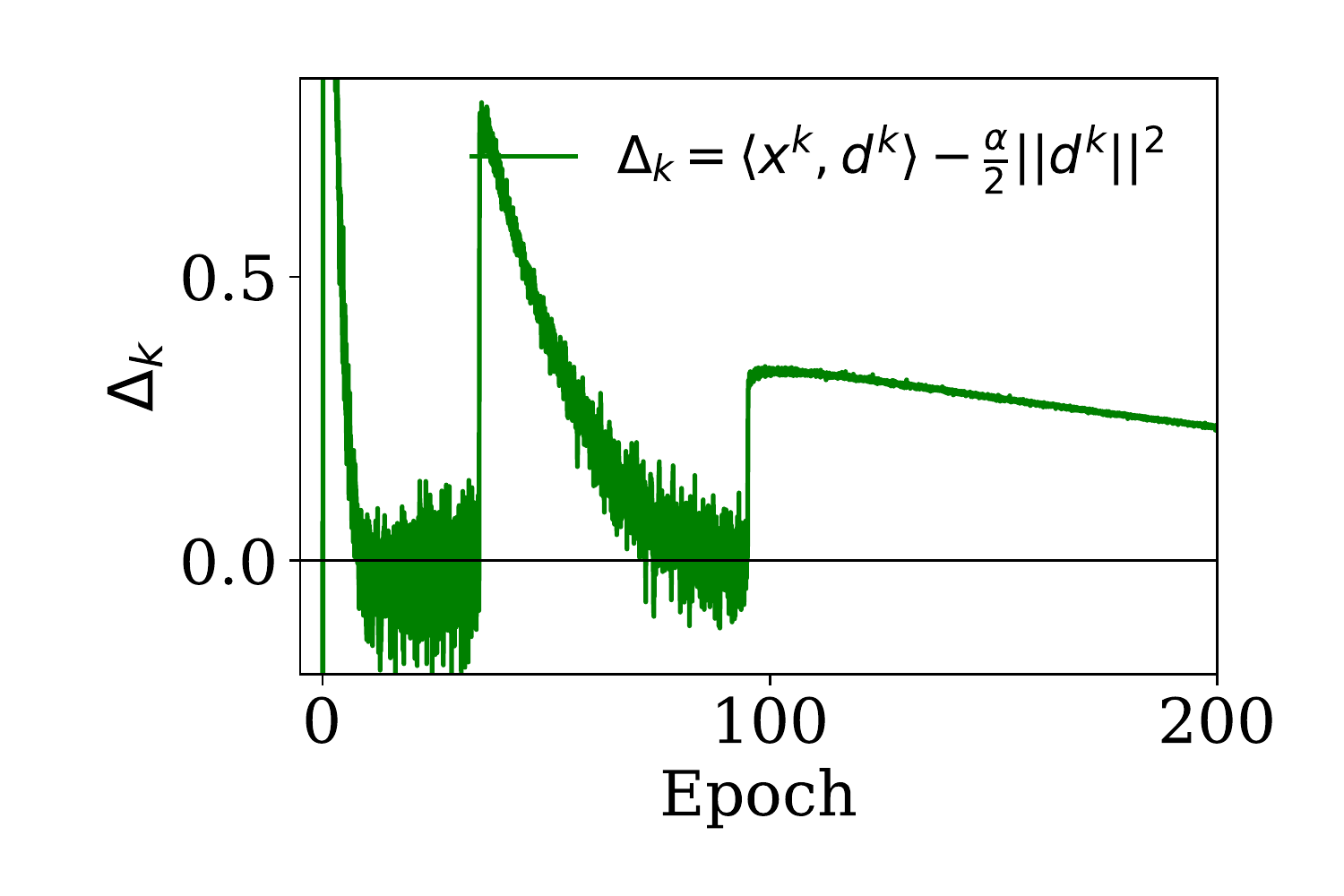}\\%[-1ex]
    \includegraphics[width=0.33\linewidth]{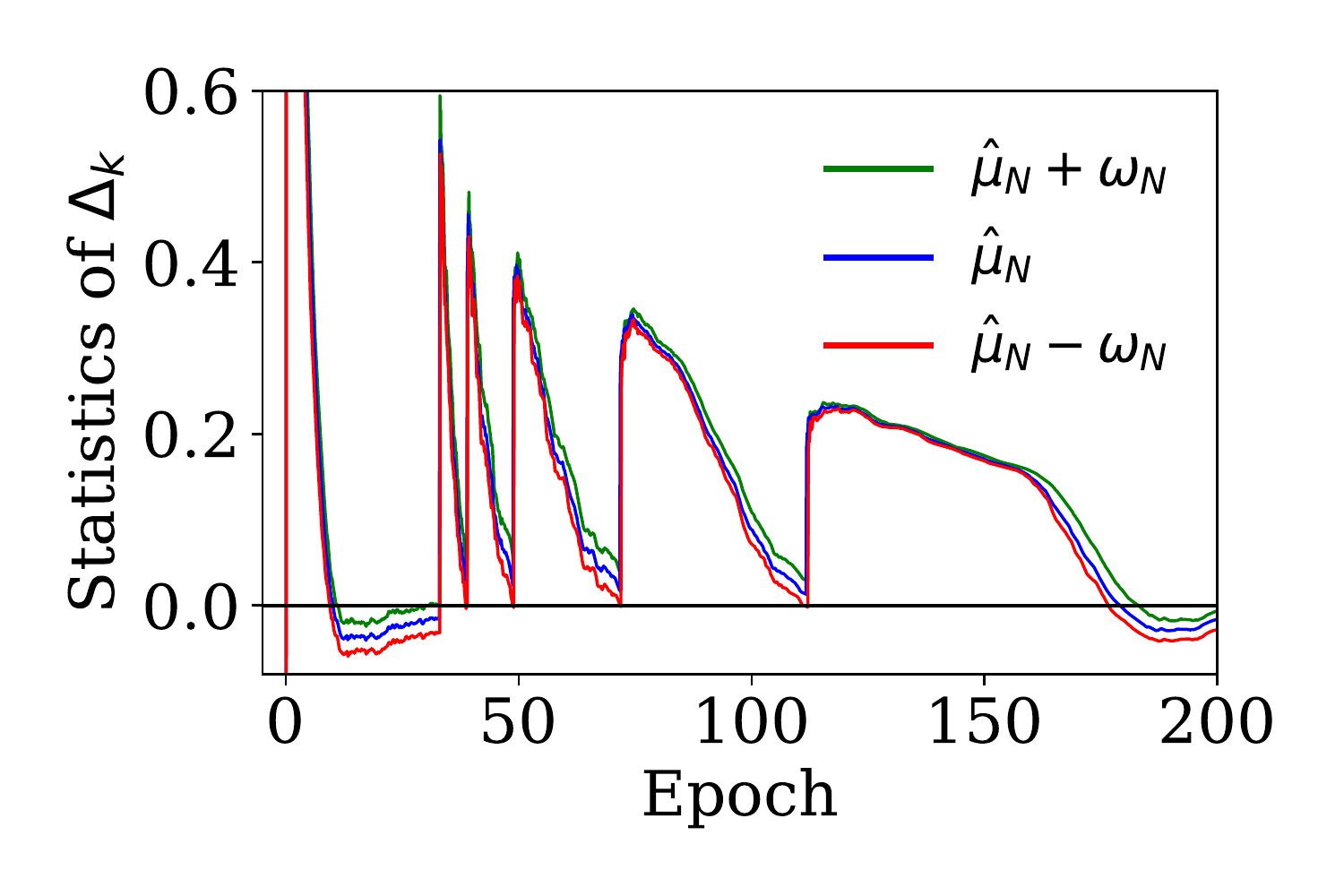}\quad
    \includegraphics[width=0.33\linewidth]{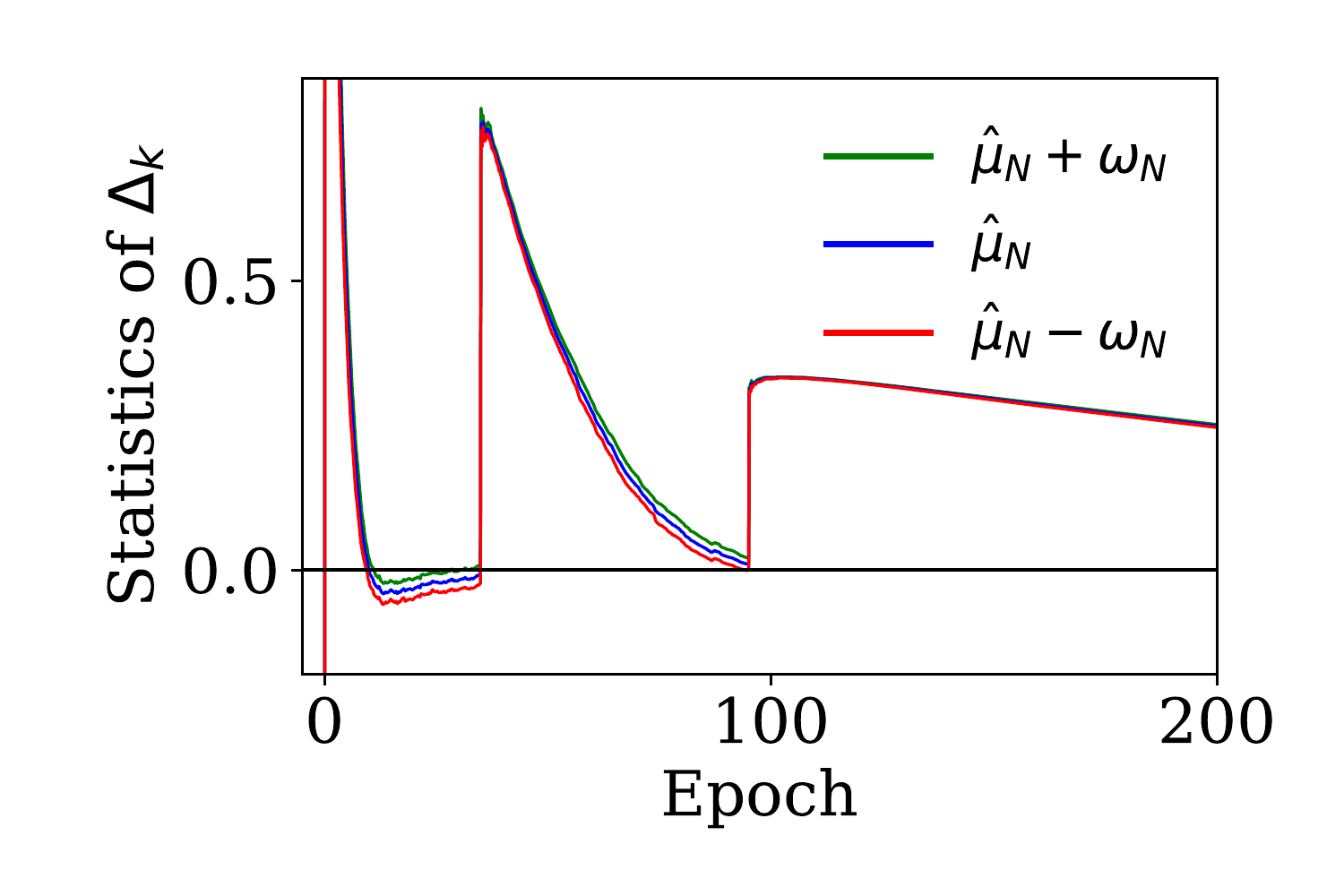}\\%[-1ex]
    \includegraphics[width=0.33\linewidth]{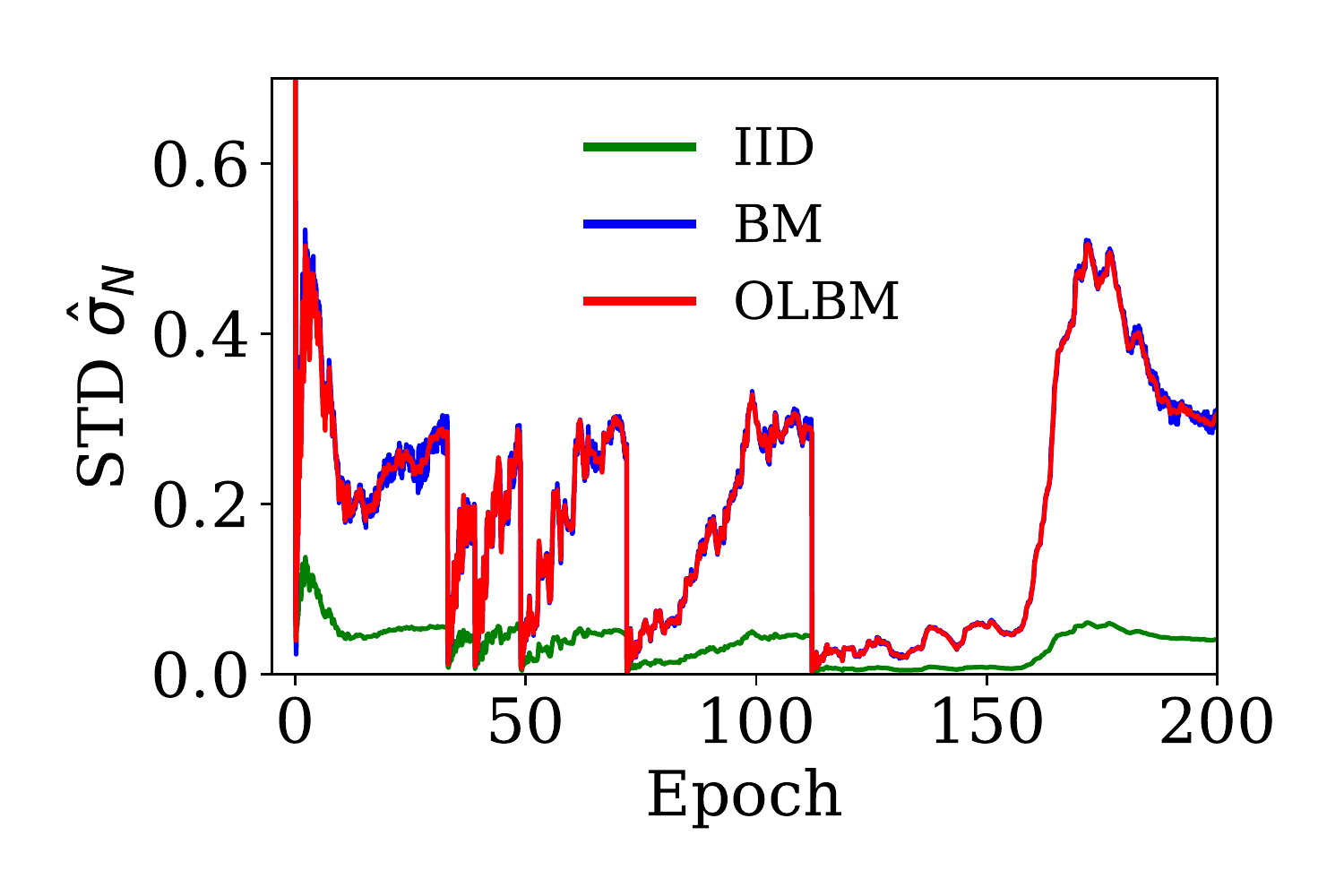}\quad
    \includegraphics[width=0.33\linewidth]{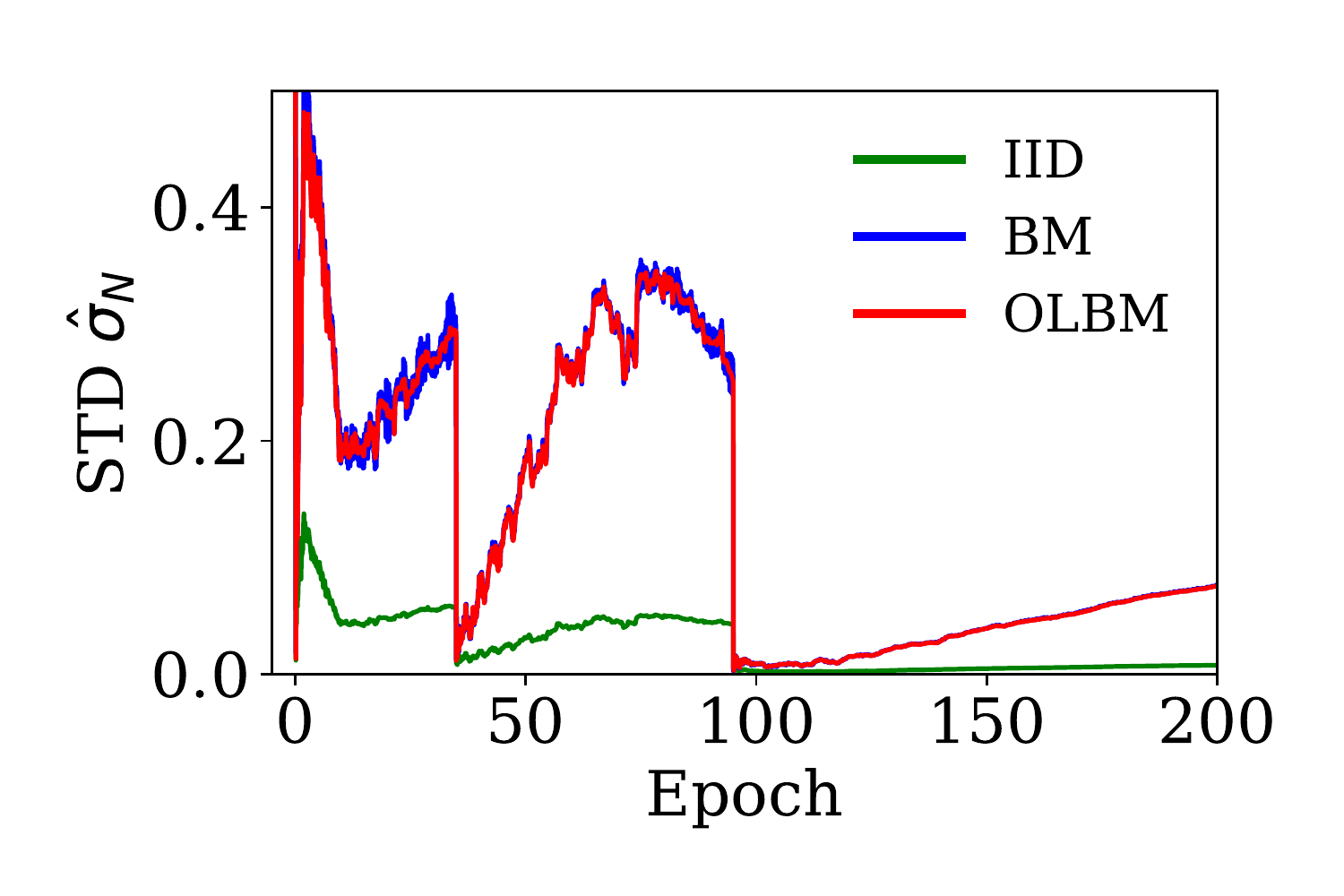}
%\vspace{-2ex}
    \caption{Statistics in SASA+. 
    The two columns are from runs with drop ratio $\tau=1/2$ and $1/10$ respectively.
    The top row shows the instantaneous value of $\Delta_k$. The middle row shows the confidence interval of $\E[\Delta_k]$, which contains~0 with high probability if the process is stationary. The bottom row shows the variance estimated by different methods:
    i.i.d.\ formula, BM and OLBM estimators.}
    %where BM and OLBM estimators take into account of correlation in Markov chains and are more accurate.}
    \label{fig:cifar_sasa_statistics}
\end{figure}

\begin{figure}[t]
\centering
    \includegraphics[width=0.32\linewidth]{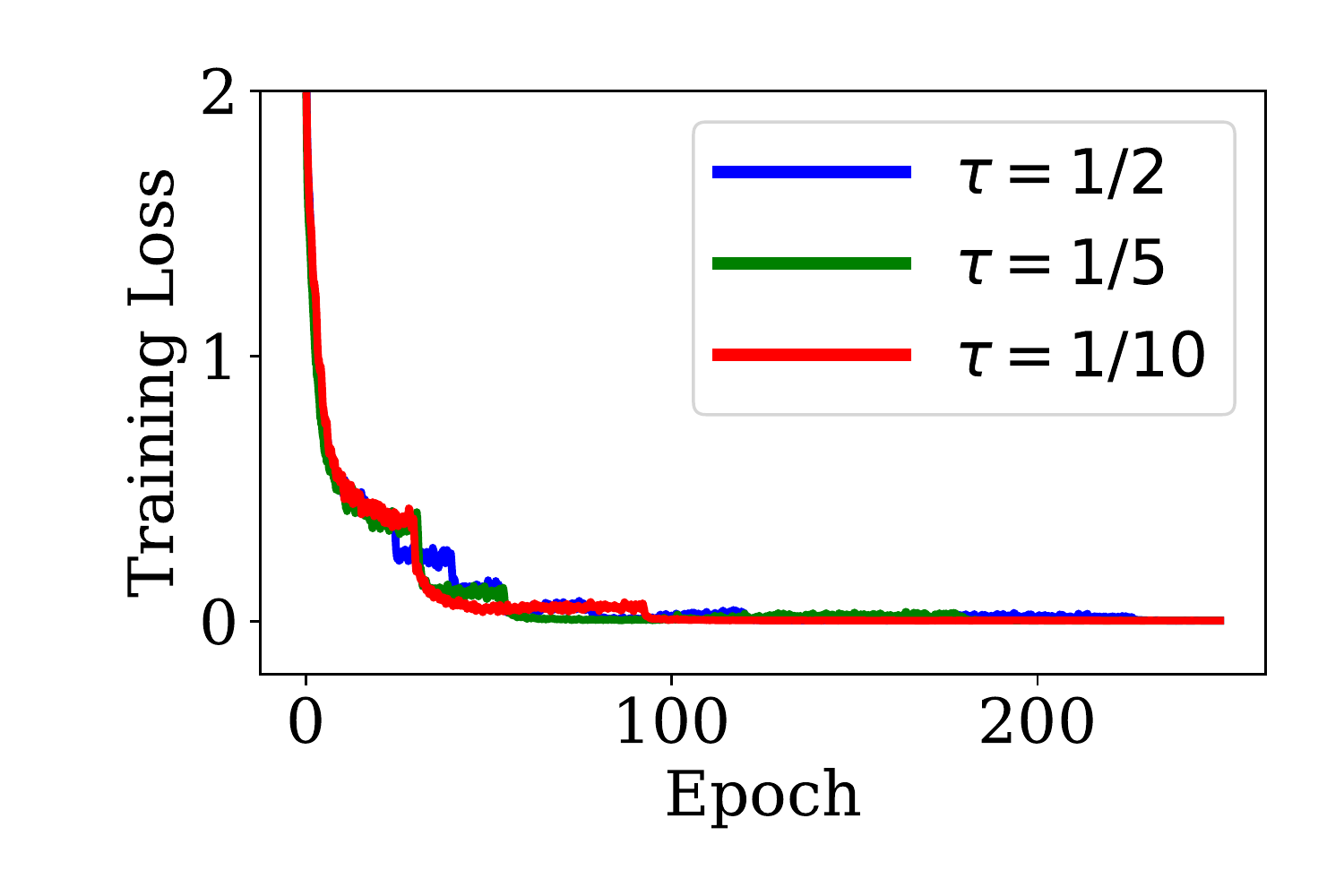}
    \includegraphics[width=0.33\linewidth]{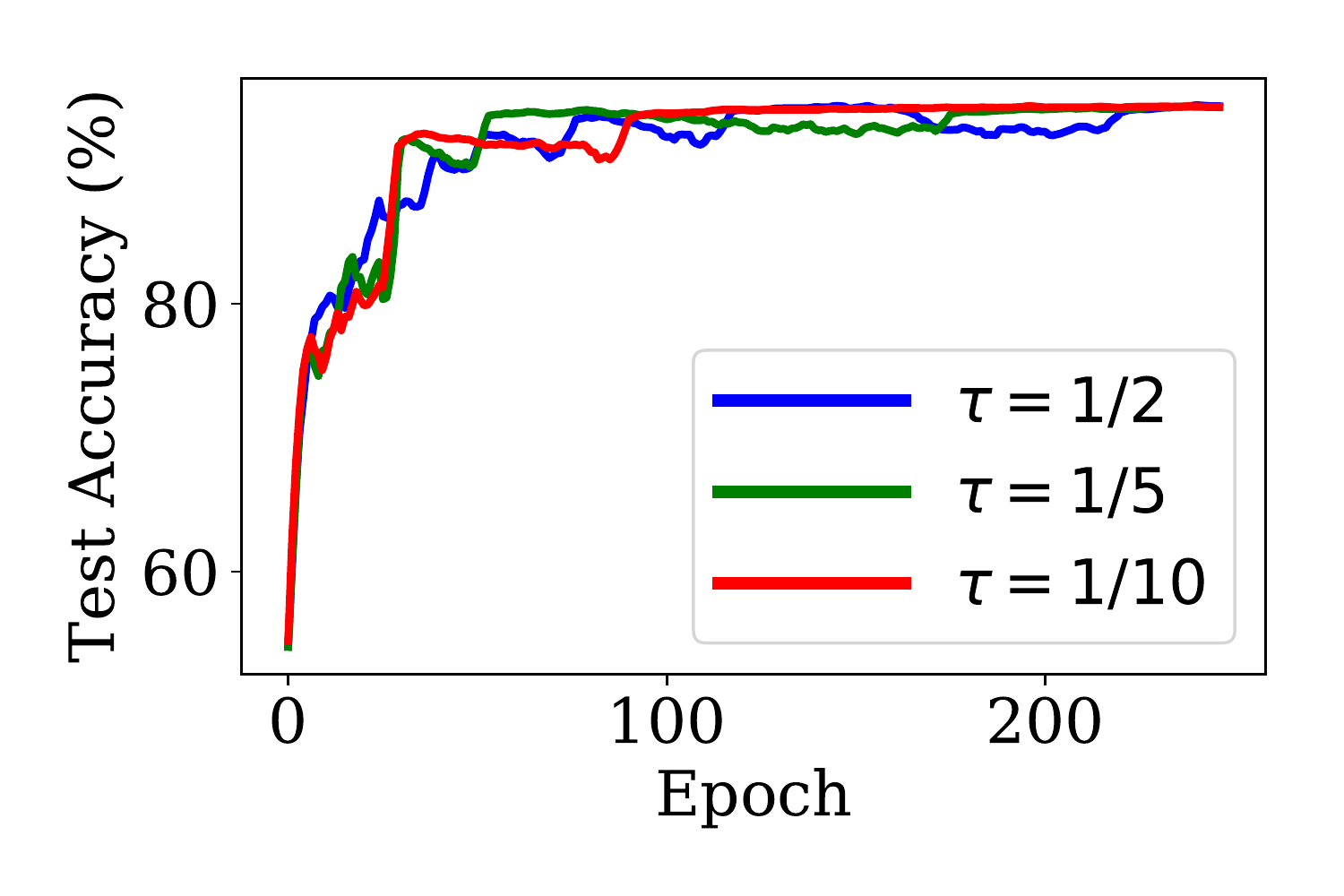}
    \includegraphics[width=0.33\linewidth]{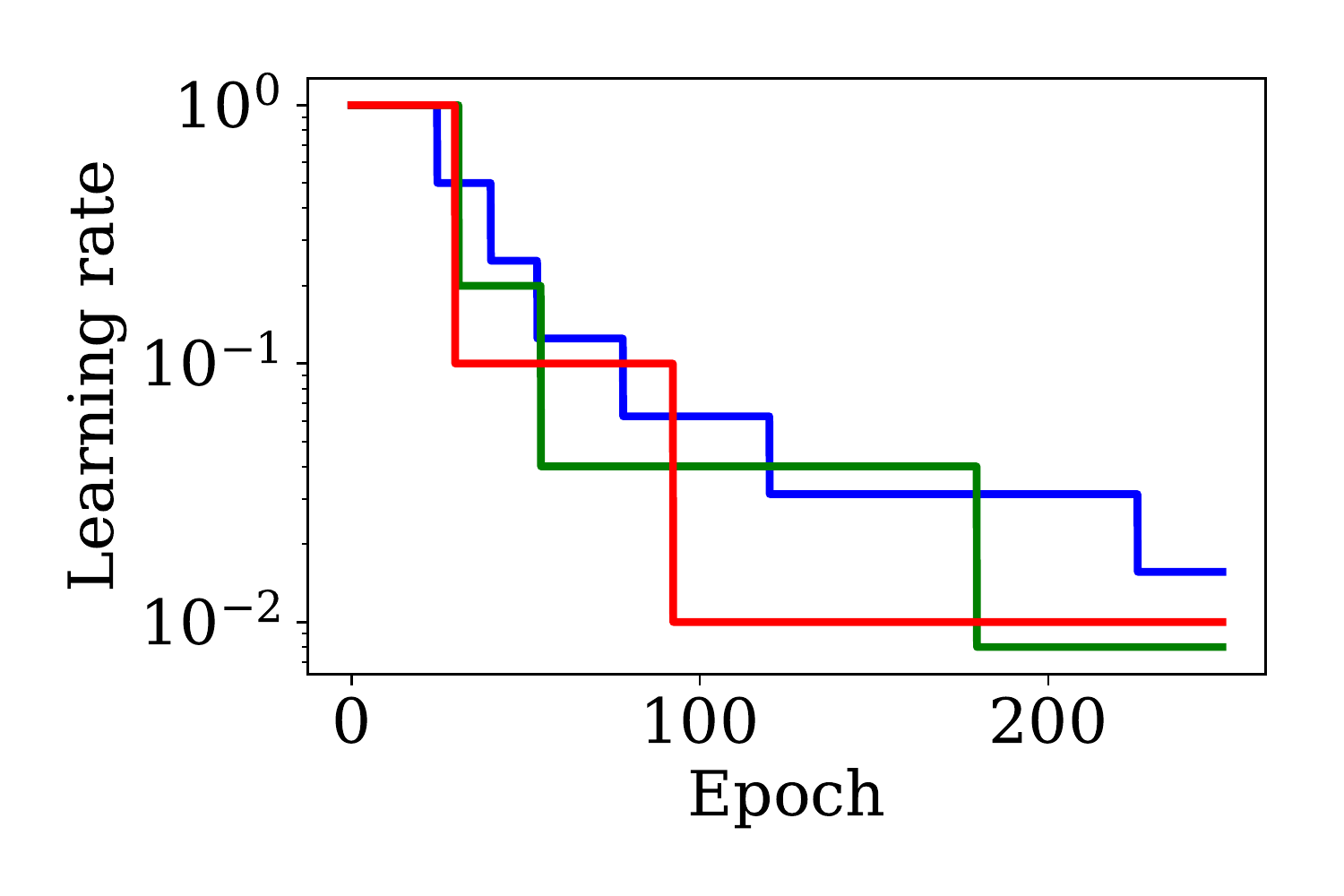} \\%[-1ex]
    \includegraphics[width=0.32\linewidth]{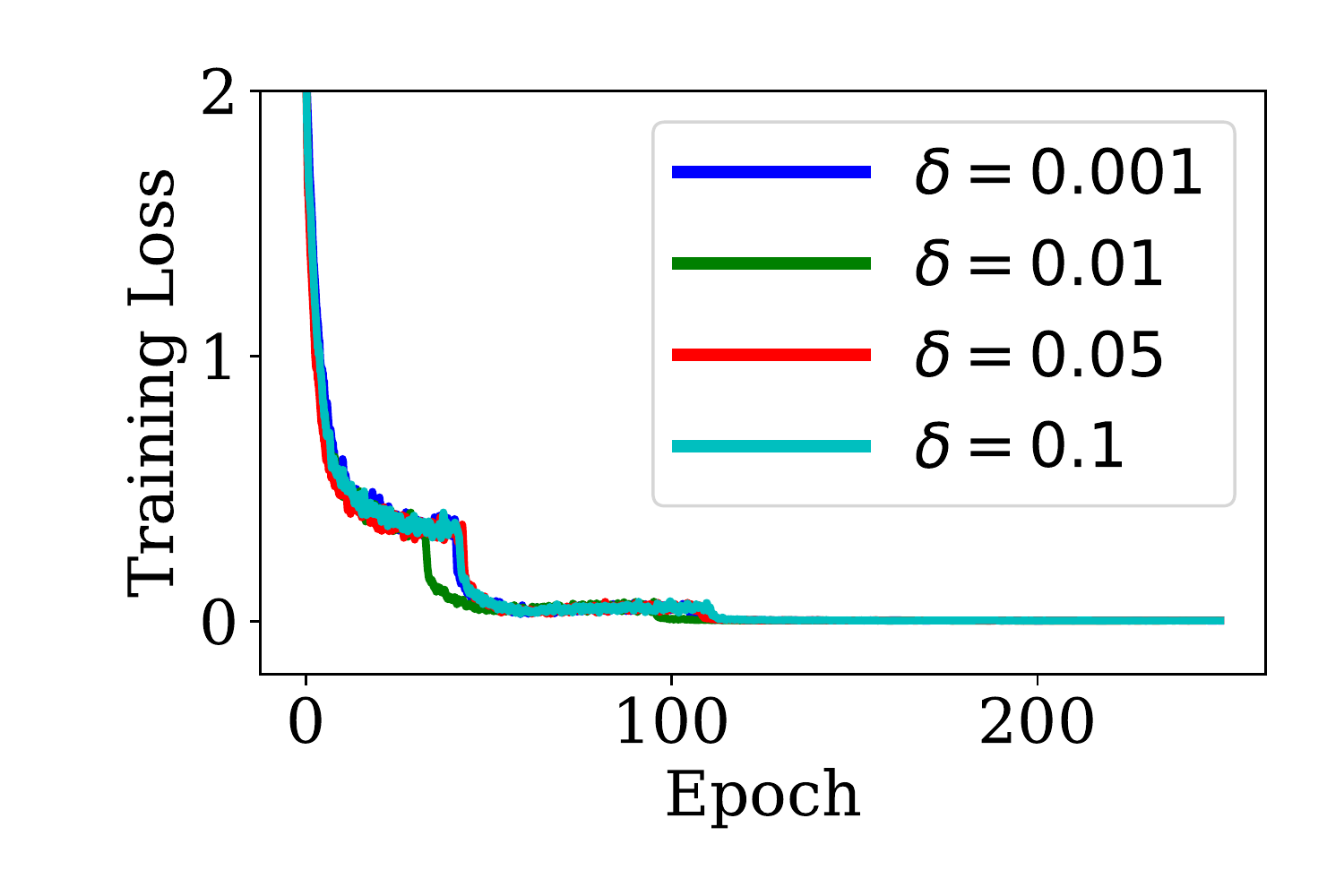}
    \includegraphics[width=0.33\linewidth]{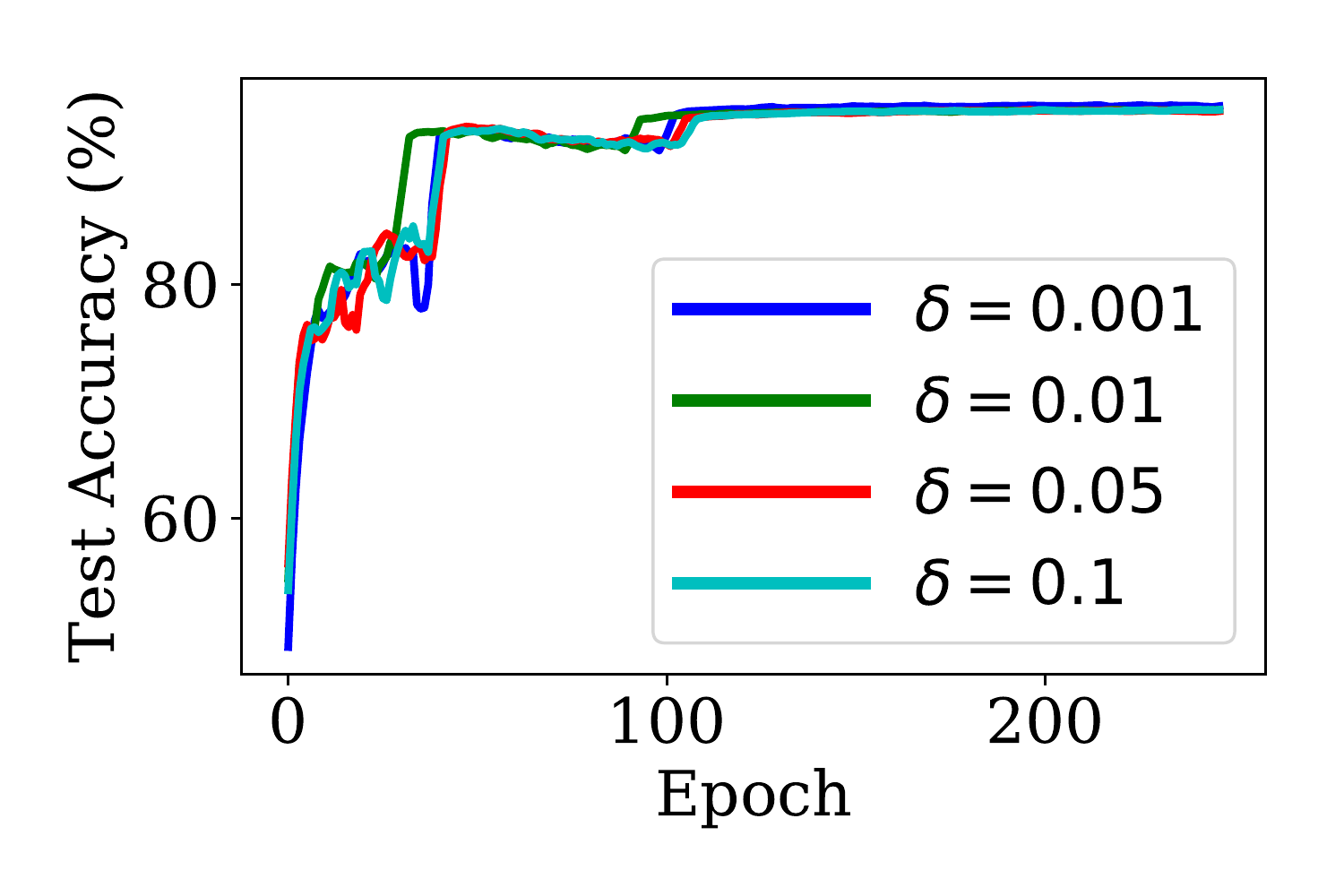}
    \includegraphics[width=0.33\linewidth]{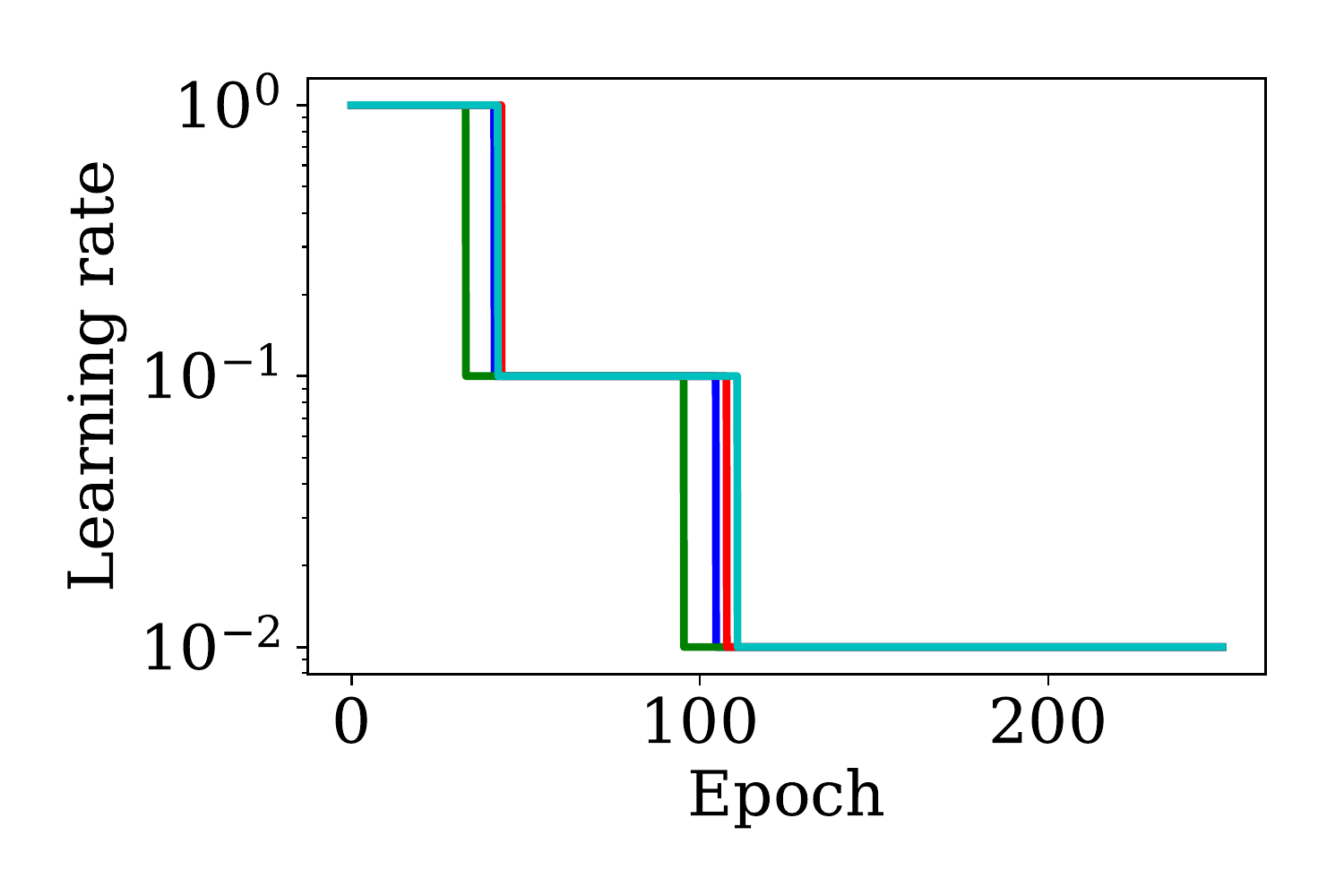} \\%[-1ex]
    \includegraphics[width=0.32\linewidth]{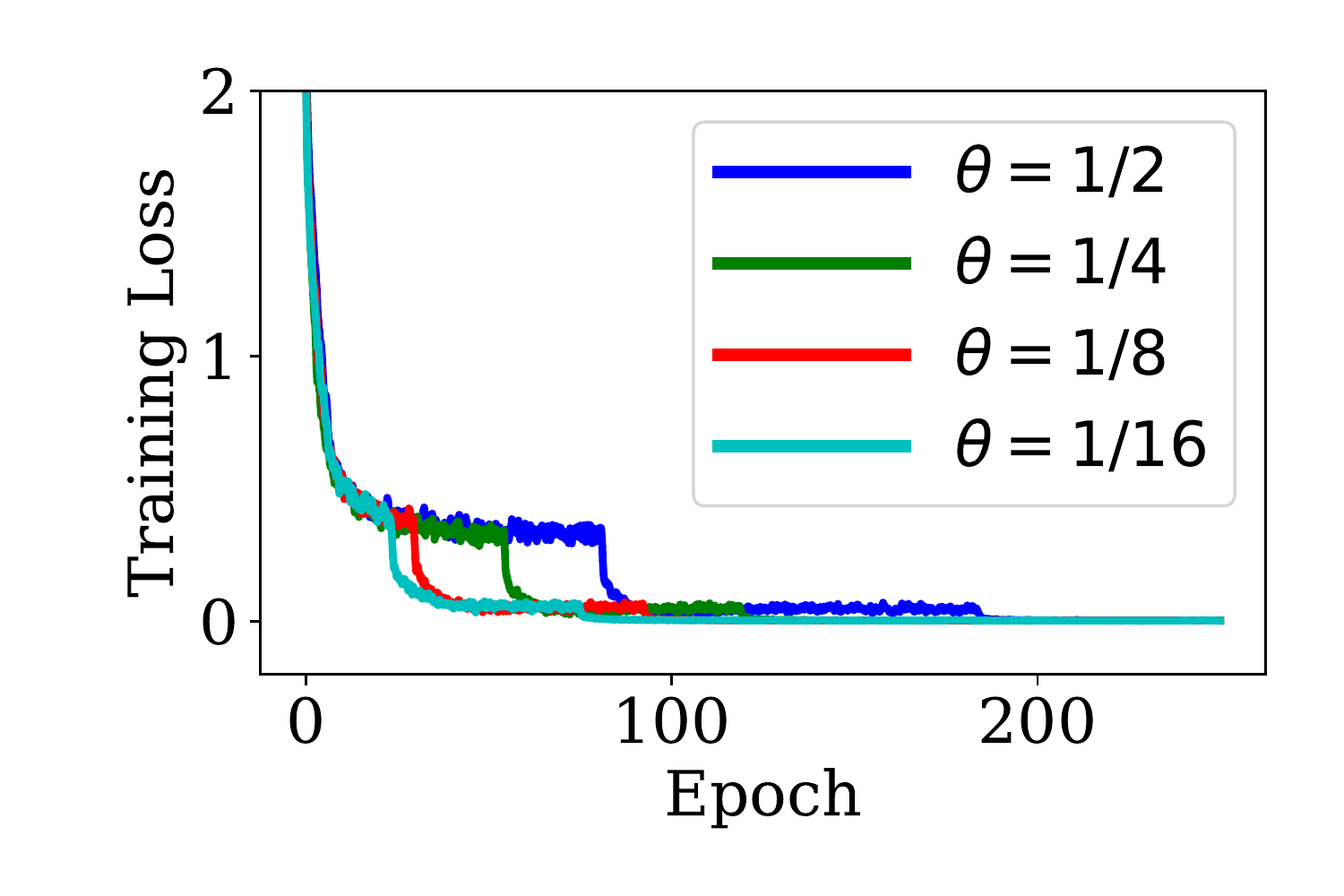}
    \includegraphics[width=0.33\linewidth]{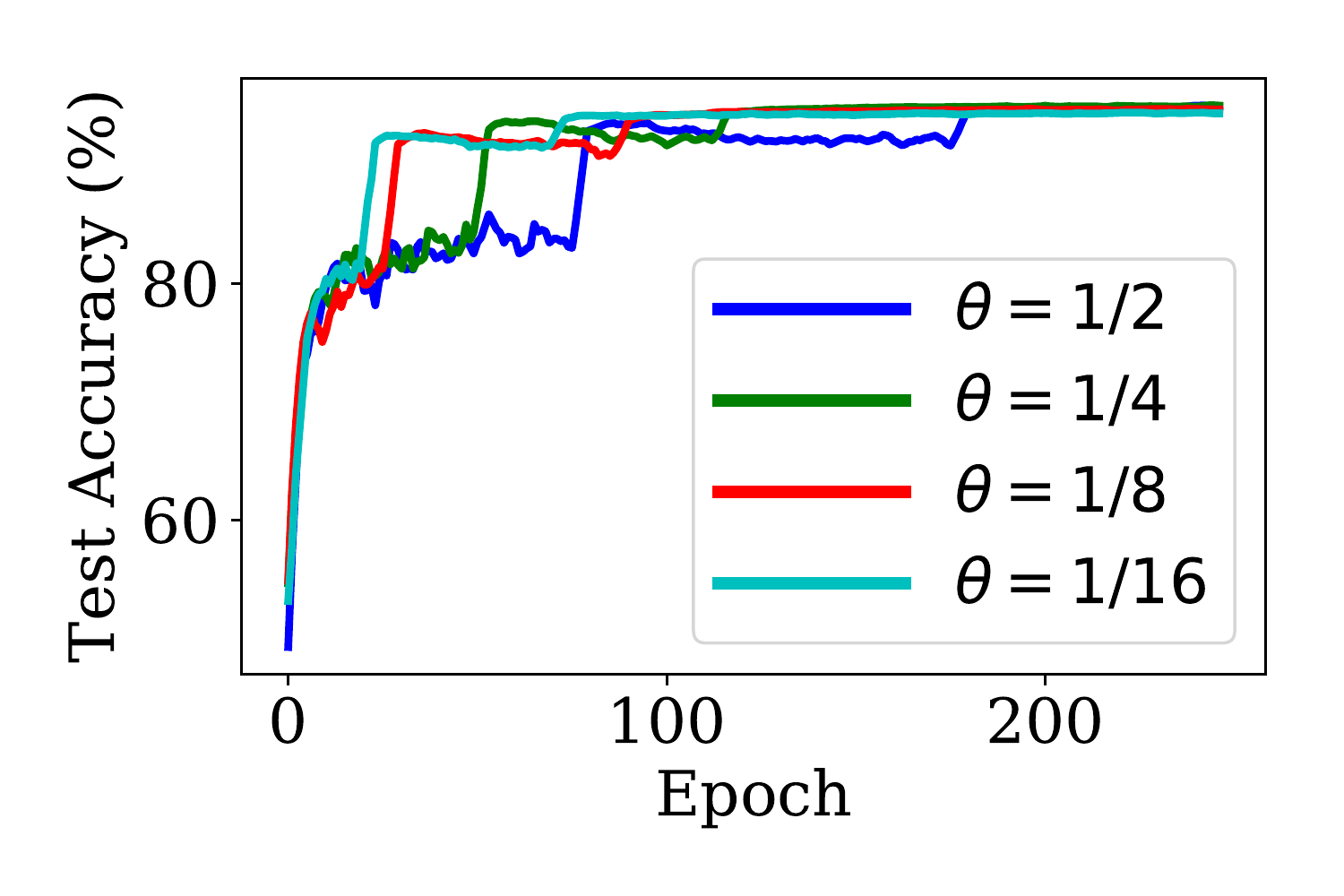}
    \includegraphics[width=0.33\linewidth]{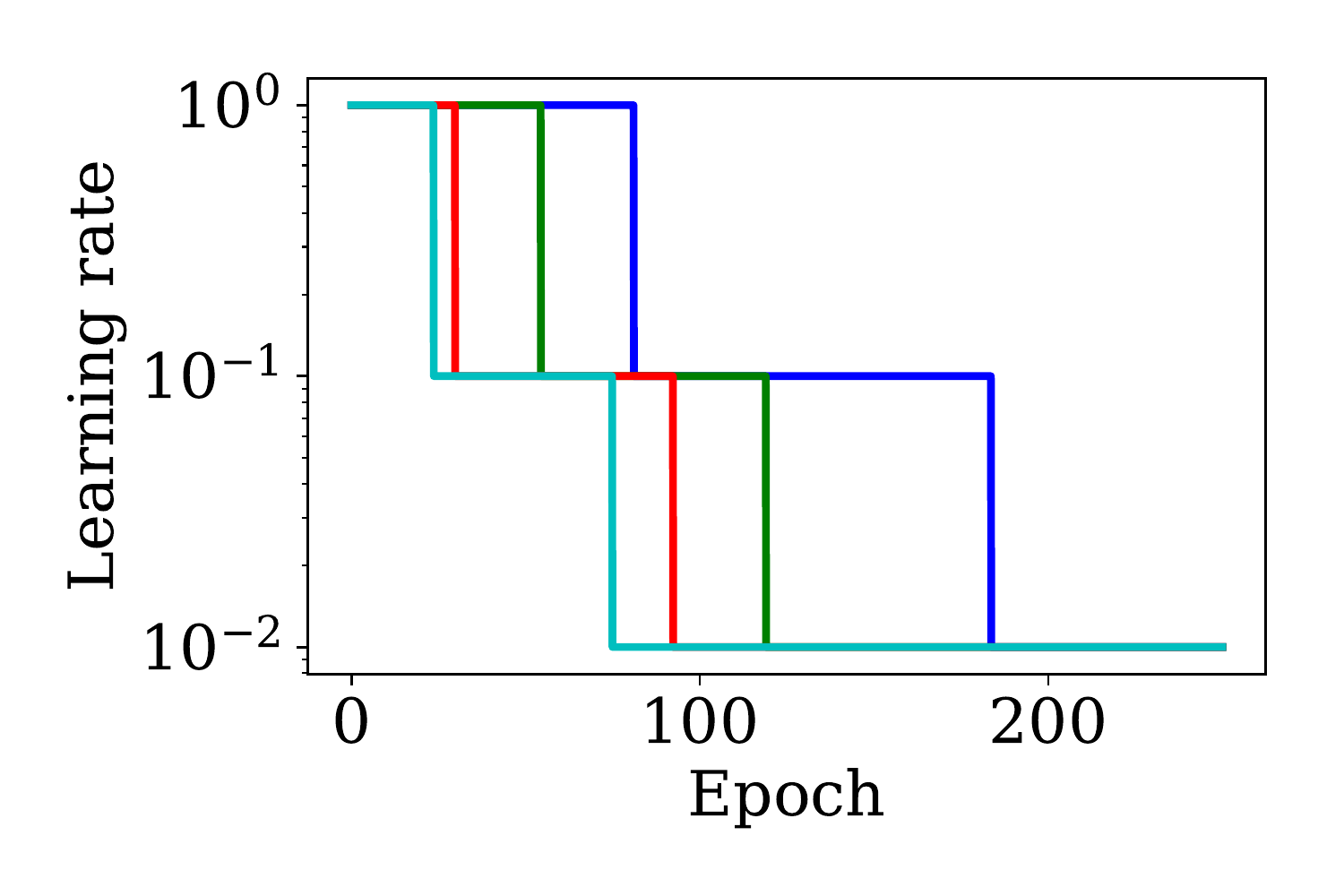} \\%[-1ex]
    \includegraphics[width=0.32\linewidth]{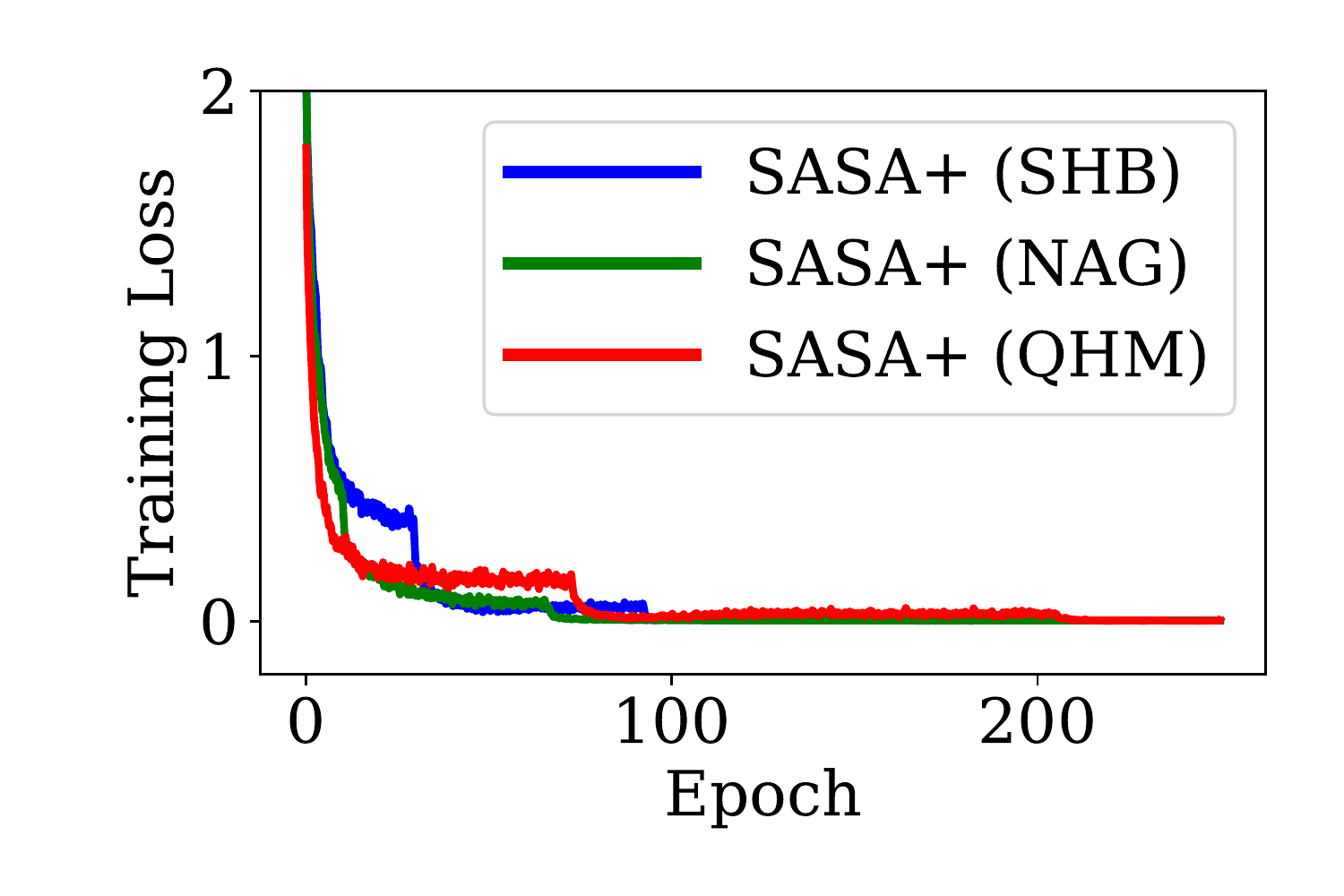}
    \includegraphics[width=0.33\linewidth]{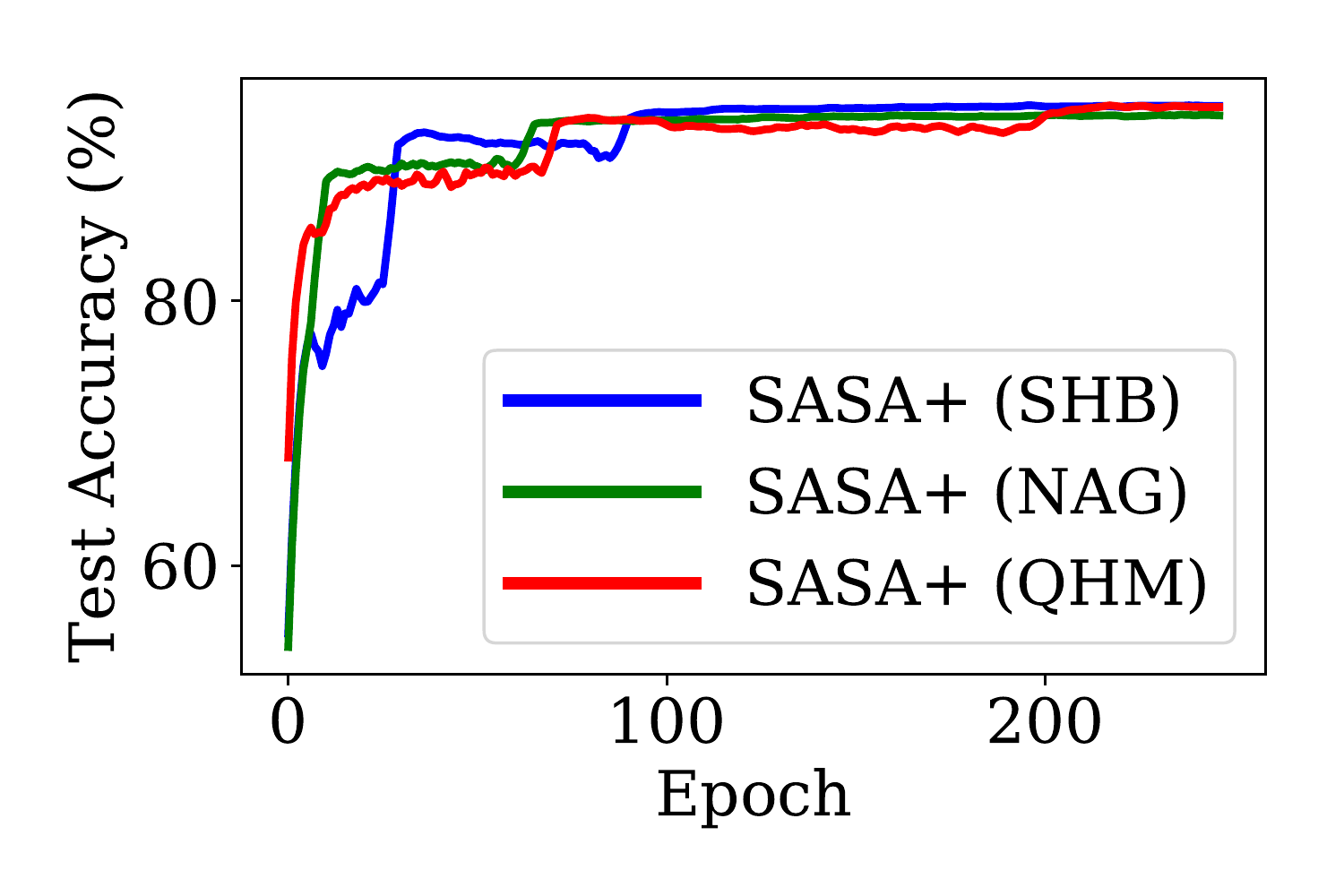}
    \includegraphics[width=0.33\linewidth]{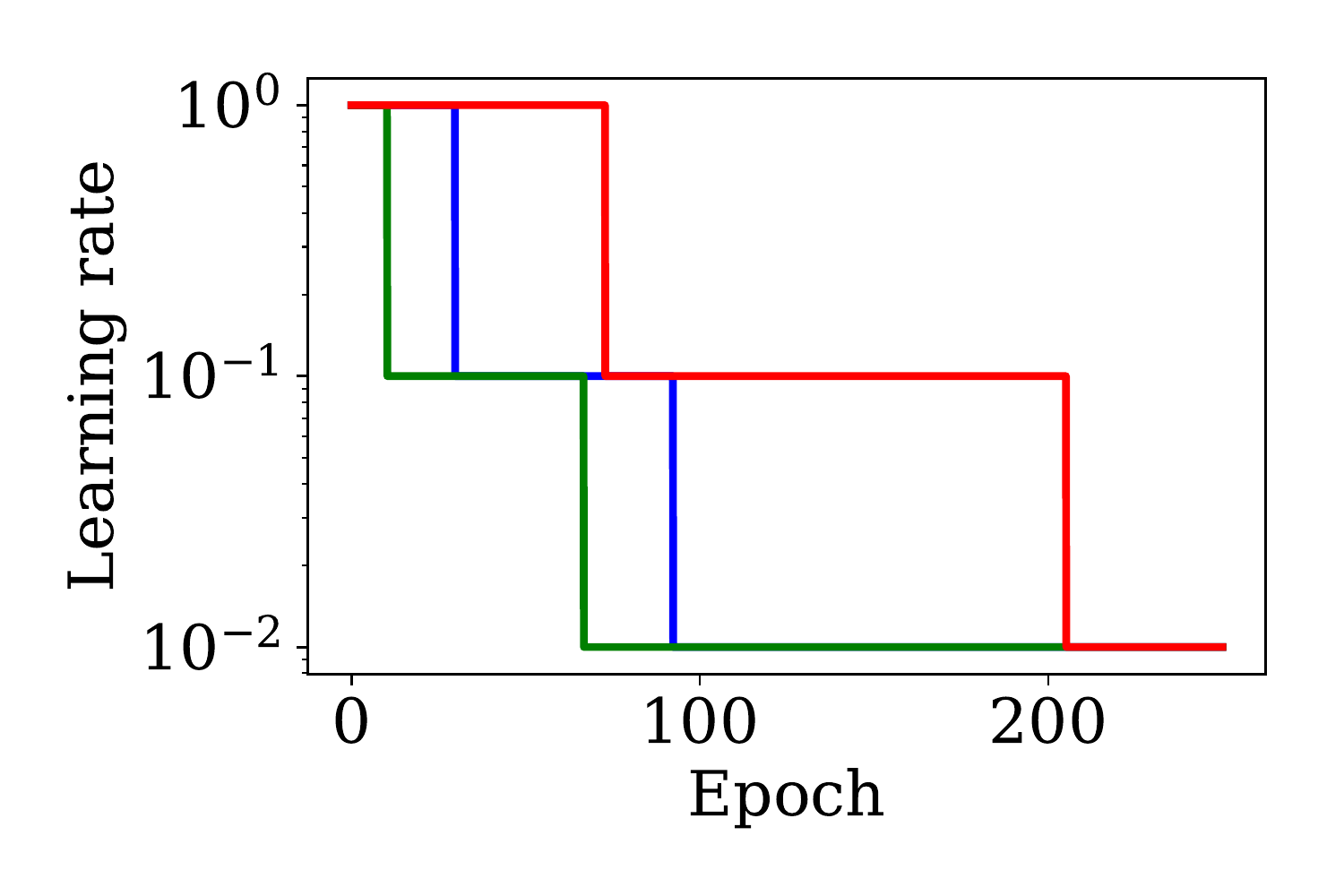}
%    \vspace{-2ex}
    \caption{Sensitivity analysis of SASA+ on CIFAR10. The test accuracy and learning rate schedule for SASA+ using different values of $\tau$ (first row), $\delta$ (second row) and $\theta$ (third row) around the default values in Table~\ref{tab:sasa-params};  SASA+ applied on different methods (i.e., SHB, NAG and QHM) (last row).}
    \label{fig:cifar_sasa_ablation}
\end{figure}

\textbf{SLOPE test for training loss.} 
Here we describe a statistical test that aims to detect if the training loss is no longer decreasing.
As the training goes on, we collect the minibatch loss $f_{\xi^k}(\xk)$ at every iteration, 
%The most intuitive criterion maybe is that the training loss stops decaying. To test this, we can
and build a linear model based on the last~$N$ mini-batches 
$\{(i, f_{\xi^i}(x^i))\}_{i=k-N+1}^k$.
We denote this linear model as $\hat{f}_{\xi^i}(x^i) = c_0 + c_1 k$,
%and test whether the slope is negative (the loss is still decreasing). 
and propose to use the following one-sided test
\begin{equation}\label{test:slope}
%	H_0: c_1 \ge 0 \quad \mbox{vs.} \quad H_1: c_1 < 0.
	\mbox{null}: c_1 \ge 0 \quad \mbox{vs.} \quad \mbox{alternative}: c_1 < 0.
\end{equation}
Like in SASA+, if we reject the null hypothesis with high confidence, 
% (say 0.95), 
%then we are very confident that the training loss is still decreasing and thus we keep the learning rate. 
then we keep the learning rate;
Otherwise, we decrease the learning rate by a constant factor. 
We use the standard t-test for linear regression \citep[e.g.,][Section~2.3]{montgomery2012introduction}. 
Specifically, we first compute the estimators 
$\hat{c}_0$ of $c_0$ and $\hat{c}_1$ of $c_1$ by solving a least-squares problem, 
then compute a statistic
$t_{\text{slope}} = \hat{c}_1/\sqrt{\hat{\sigma}_f}$, where $\hat{\sigma}_f$ can be computed with
standard formula which we provide in Appendix~\ref{apd:slopetest}.
When $f_{\xi_i}(x_i)) \!=\! c_0 \!+\! \zeta_i$ with i.i.d.\ Gaussian noises $\zeta_i$, the statistic $t_{\text{slope}}$ follows the $t$-distribution with $N\!-\!2$ degrees of freedom. 
Given a confidence level $(1\!-\!\delta$), we reject the null if $t_{\text{slope}} \!<\! t_{\delta, N-2}$ and accept it otherwise.

\iffalse
\begin{figure}[t]
	%\begin{center}
% 	\includegraphics[width=0.32\linewidth]{figs/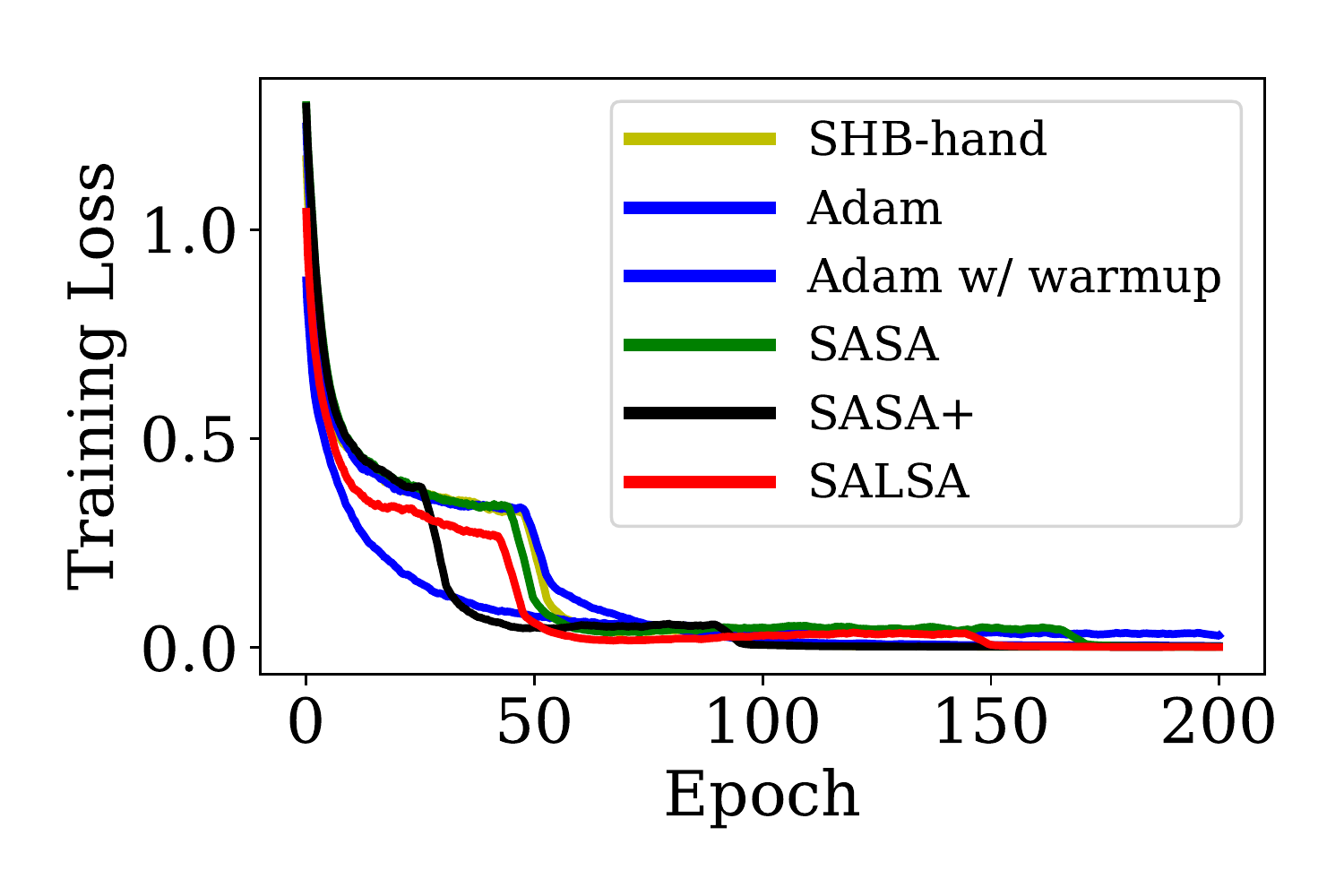}
% 	\includegraphics[width=0.32\linewidth]{figs/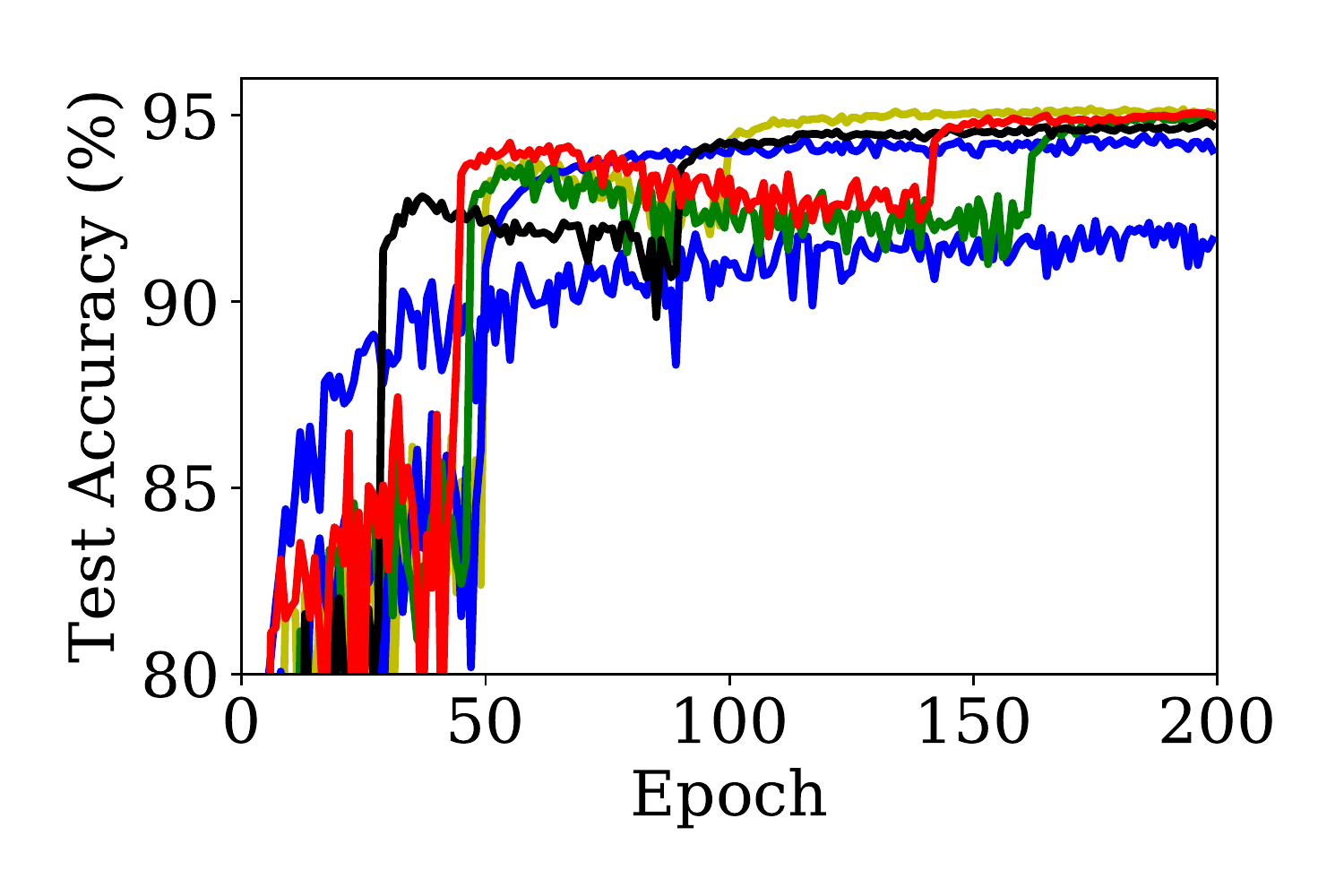}
% 	\includegraphics[width=0.32\linewidth]{figs/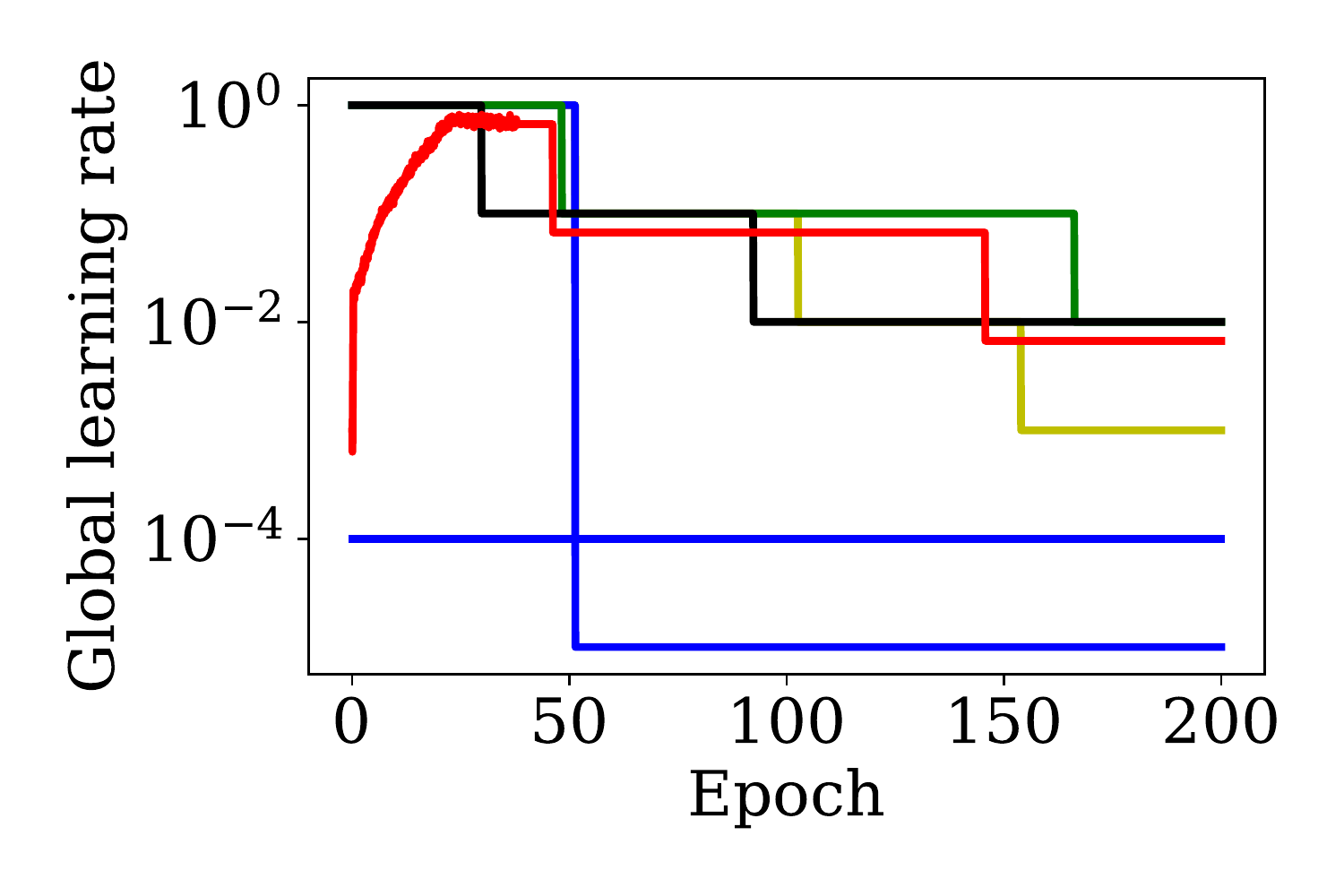}
	\includegraphics[width=0.32\linewidth]{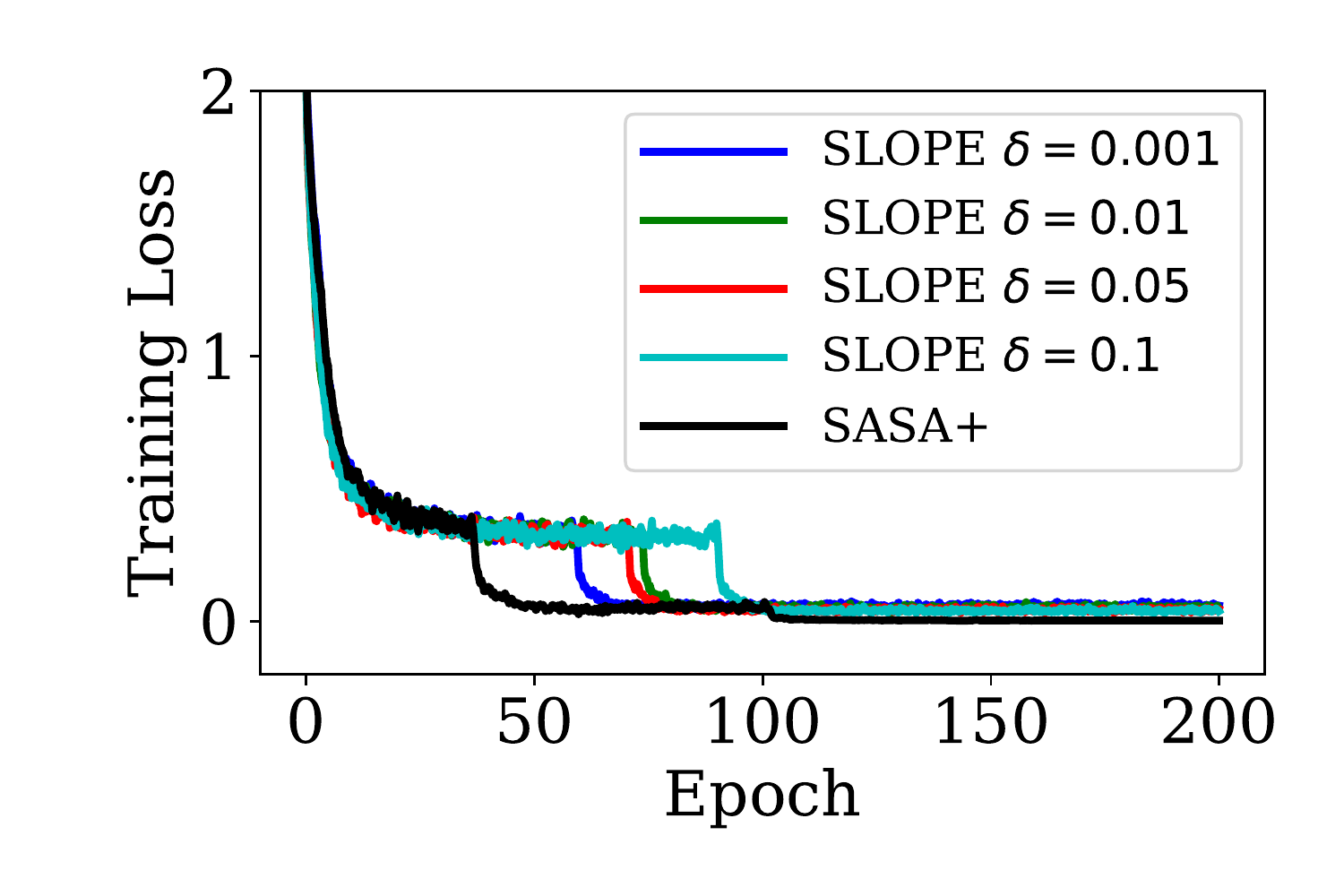}
	\includegraphics[width=0.32\linewidth]{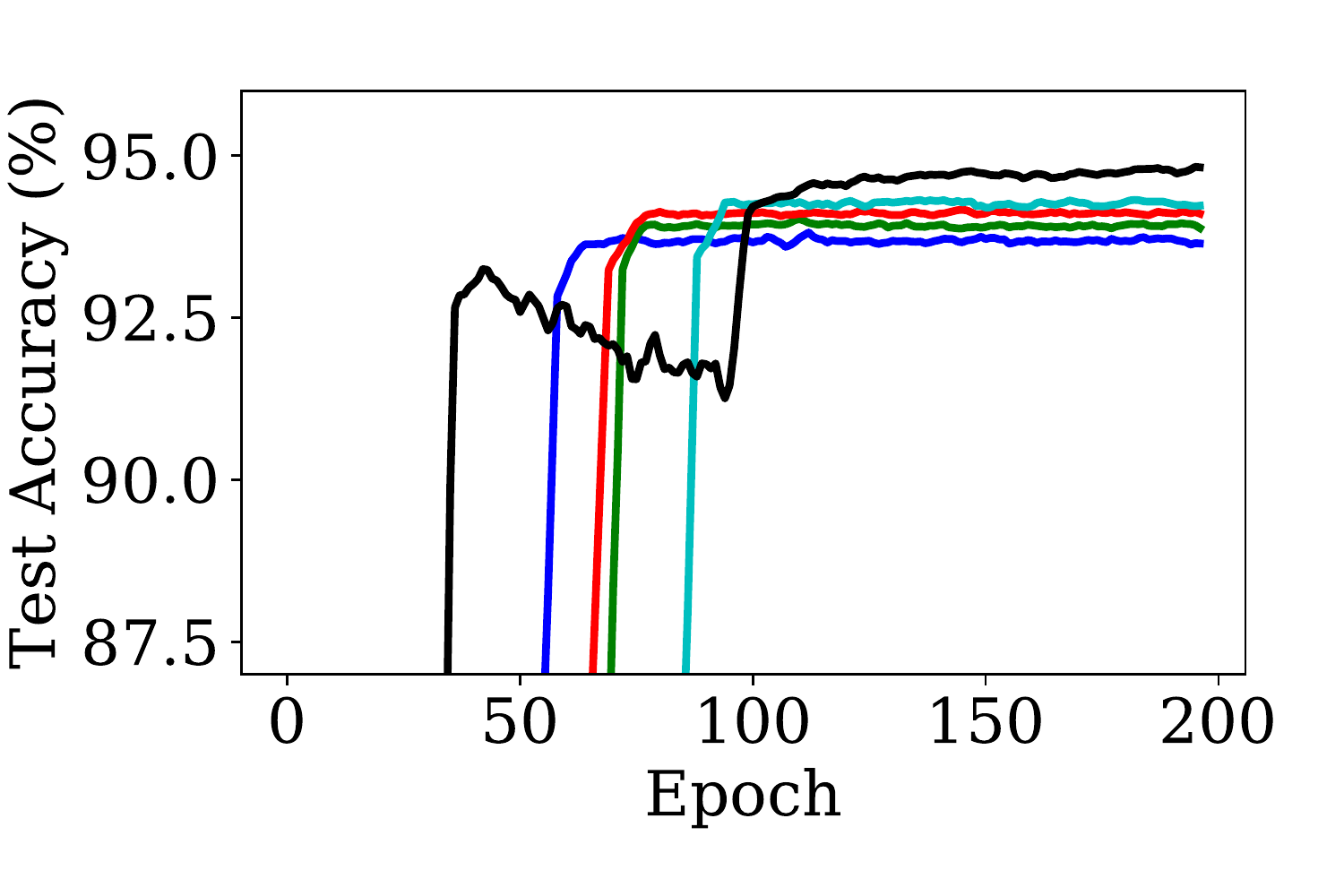}
	\includegraphics[width=0.32\linewidth]{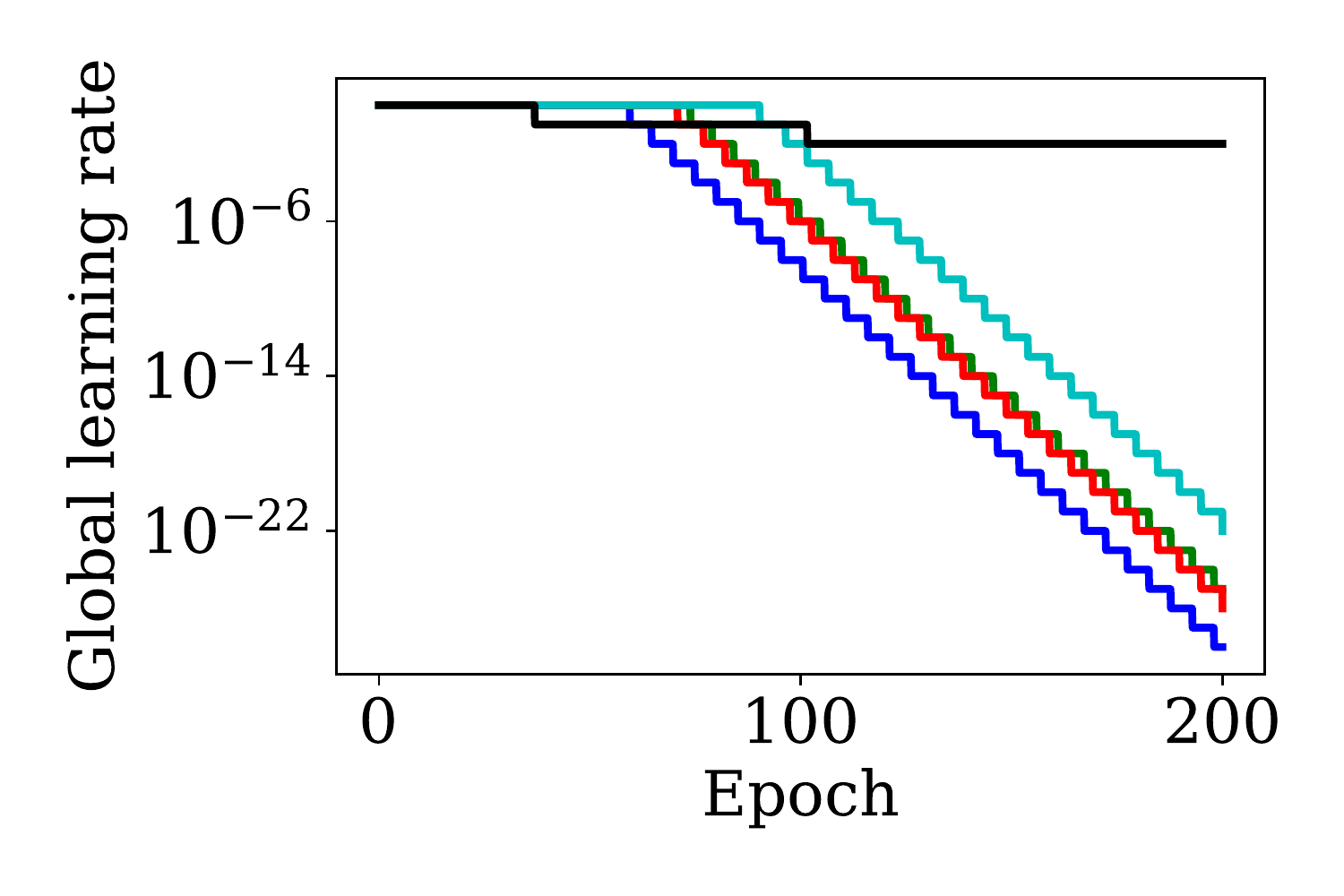} \\
% 	\includegraphics[width=0.32\linewidth]{new_figs/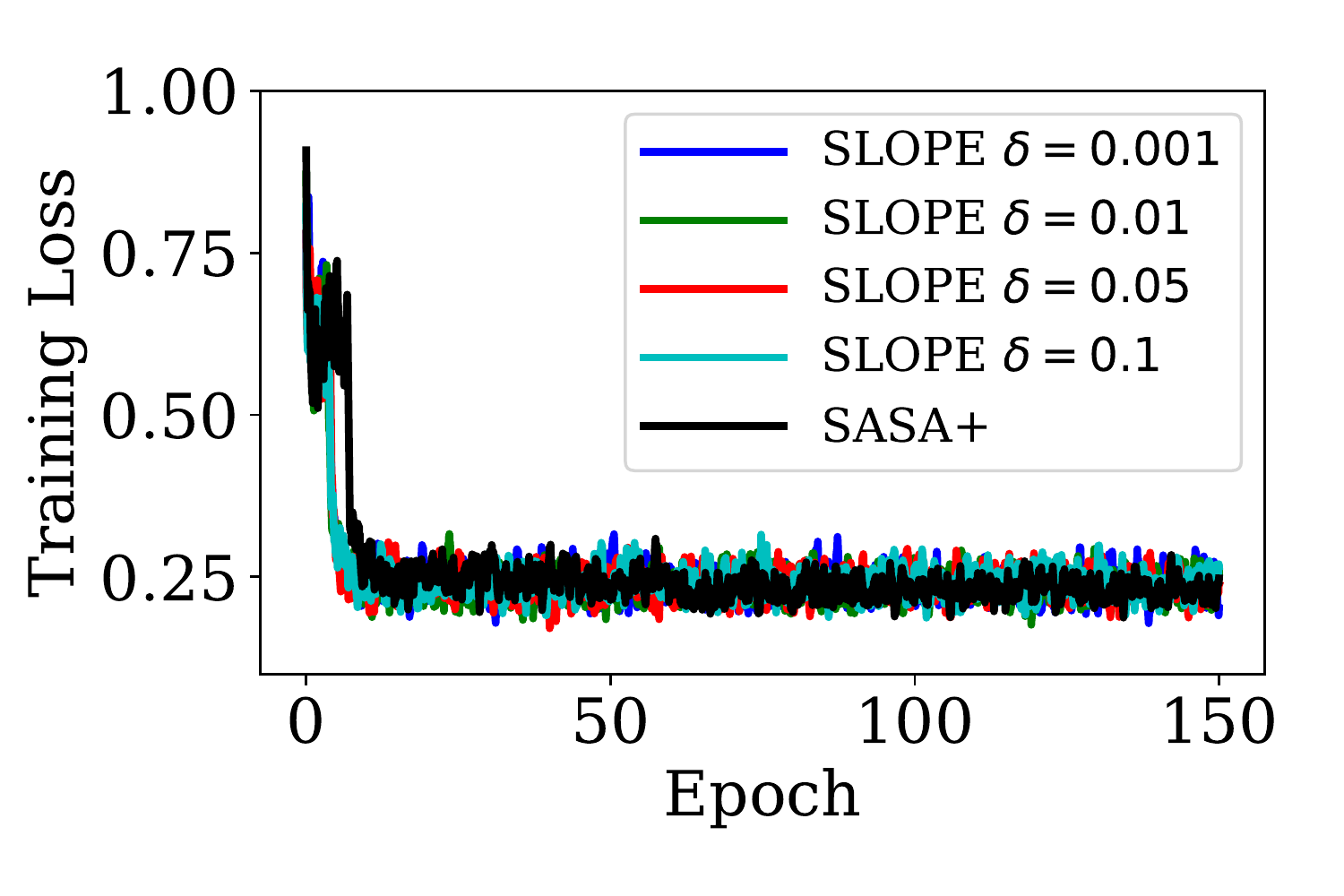}
% 	\includegraphics[width=0.32\linewidth]{new_figs/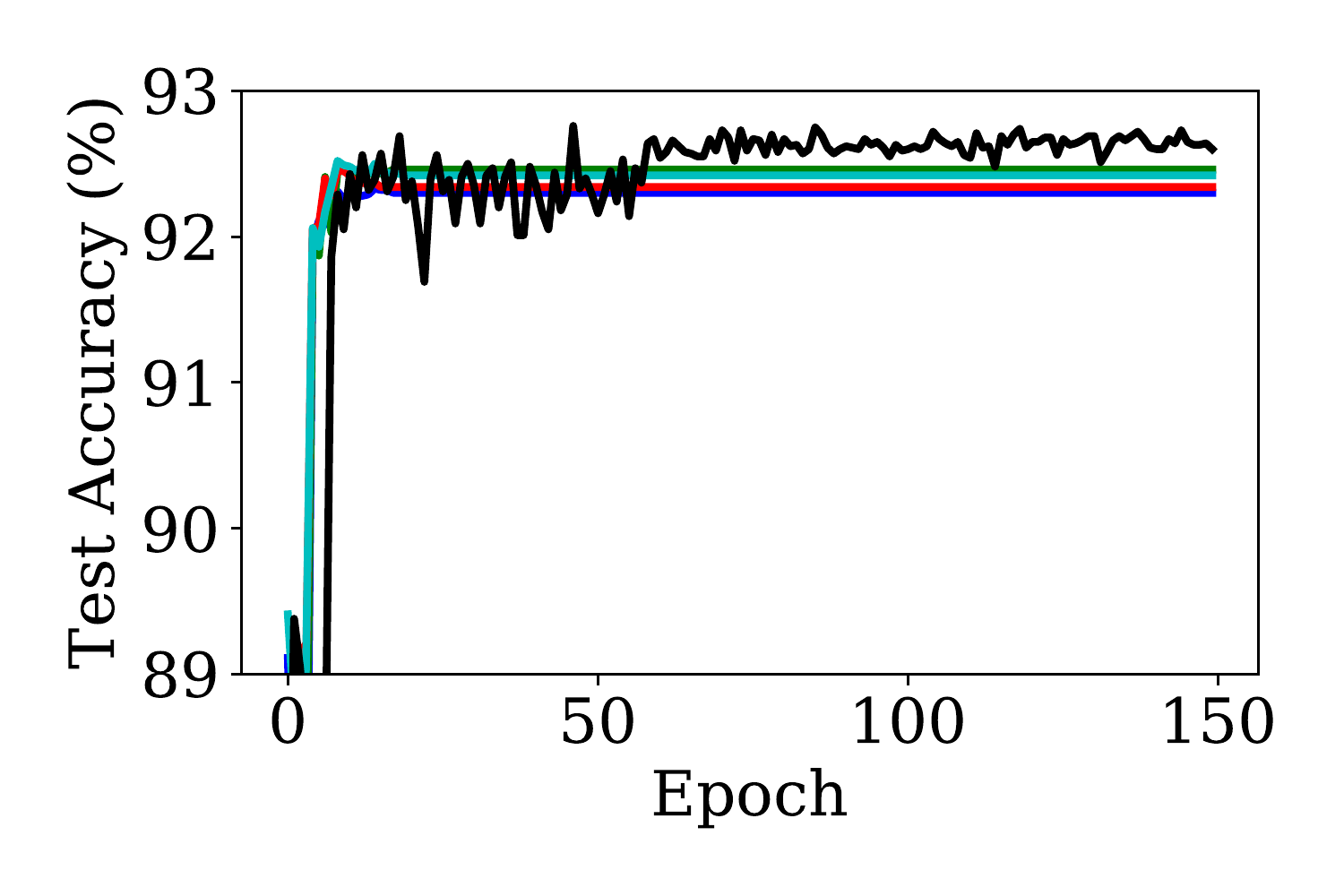}
% 	\includegraphics[width=0.32\linewidth]{new_figs/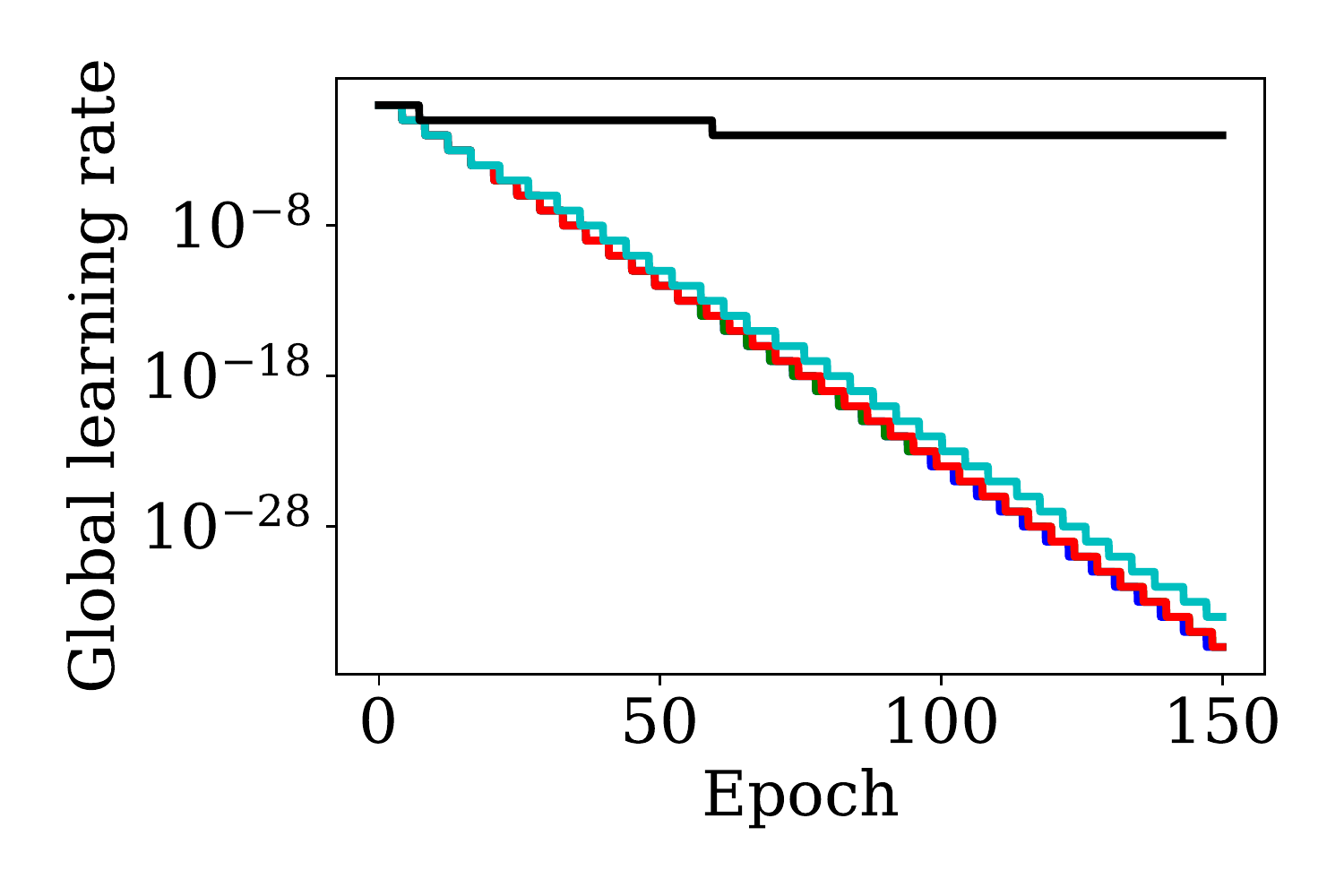} \\
	\includegraphics[width=0.32\linewidth]{wd5e-4_optslope_modelinear_trainloss_smooth_mnist.pdf}
	\includegraphics[width=0.32\linewidth]{wd5e-4_optslope_modelinear_testacc_mnist.pdf}
	\includegraphics[width=0.32\linewidth]{wd5e-4_optslope_modelinear_lrs_mnist.pdf} \\
	\includegraphics[width=0.32\linewidth]{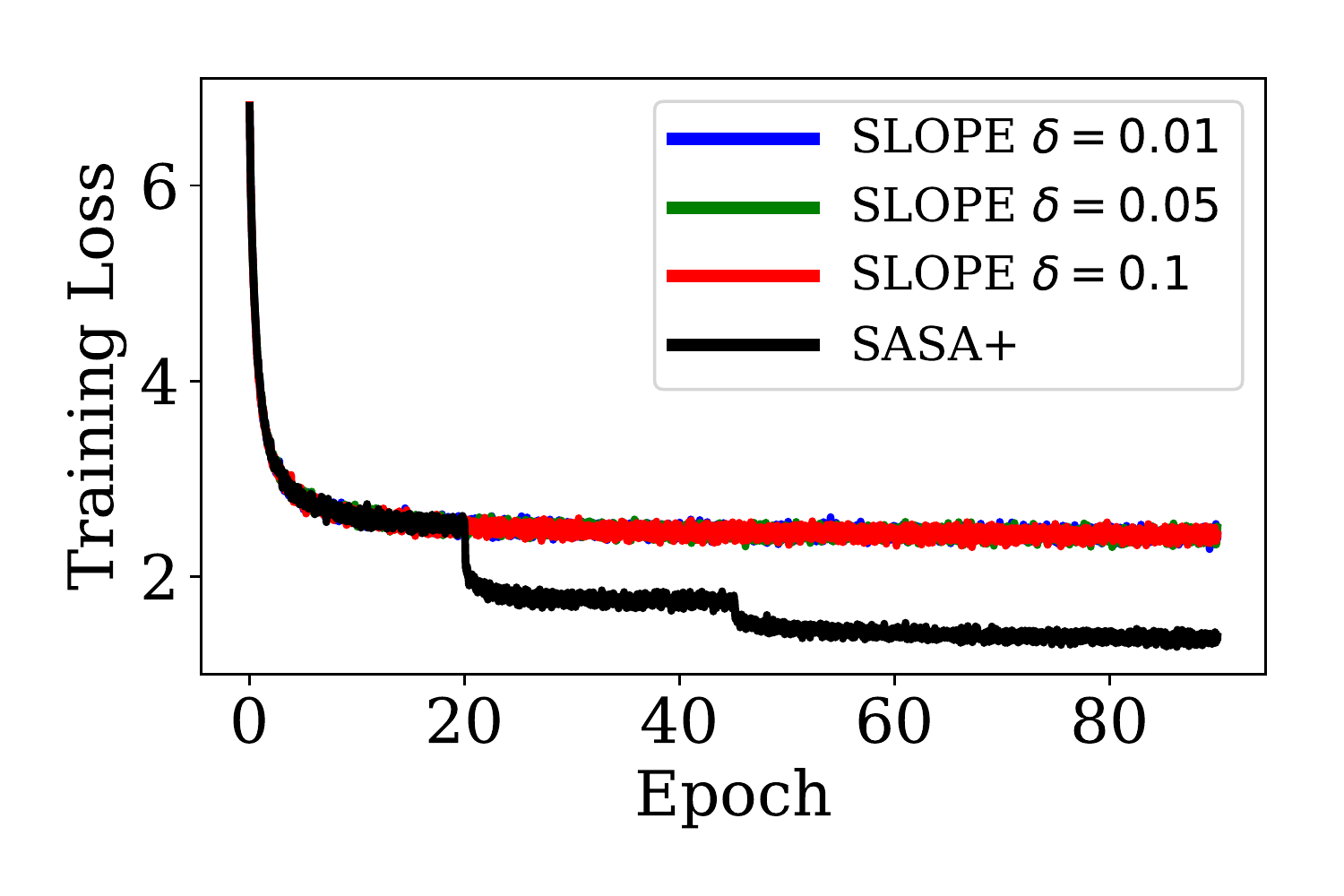}
	\includegraphics[width=0.32\linewidth]{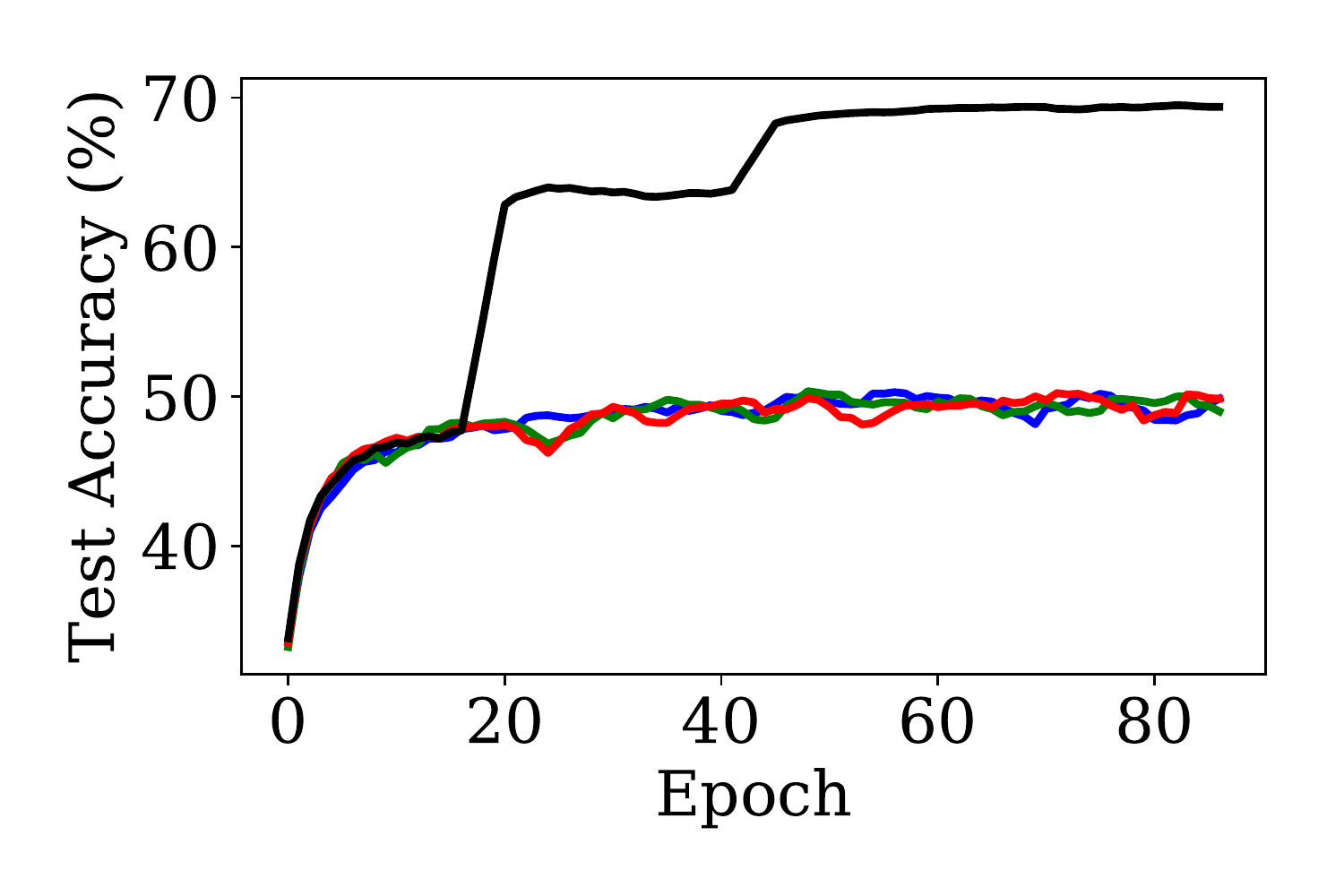}
	\includegraphics[width=0.32\linewidth]{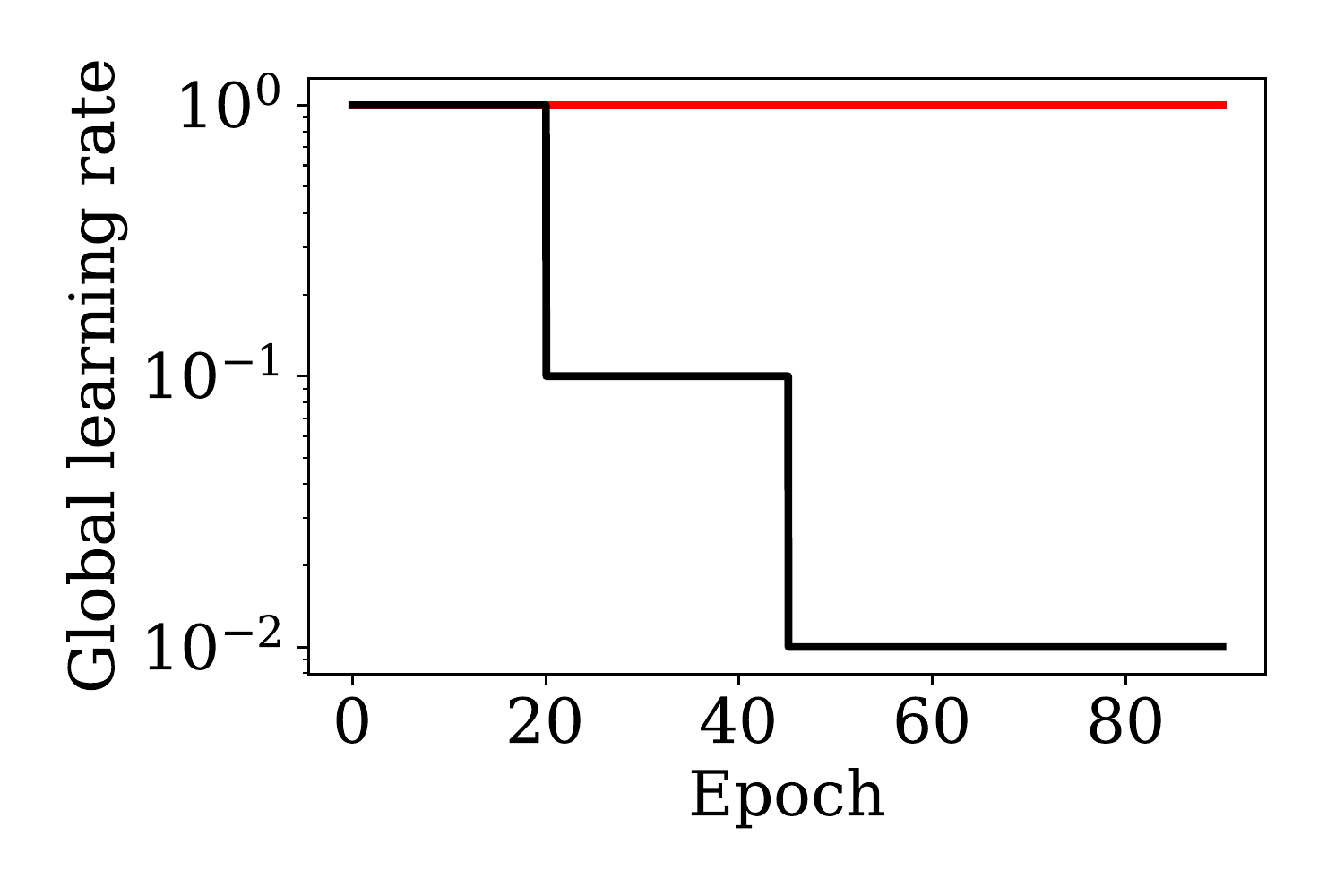}
	%\end{center}
	\caption{Comparison of SLOPE test with different confidence level $\delta$ and the default run of SASA+. First row: CIFAR-10 with ResNet18. Second row: MNIST with the linear model. Third row: ImageNet with ResNet18.}
	\label{fig:slopevssasa}
\end{figure}
\fi

\begin{figure}[t]
    \centering
	\includegraphics[width=0.33\linewidth]{archmyresnet18_testsperepoch1_modelinear_trainloss_smooth.pdf}\quad
	\includegraphics[width=0.33\linewidth]{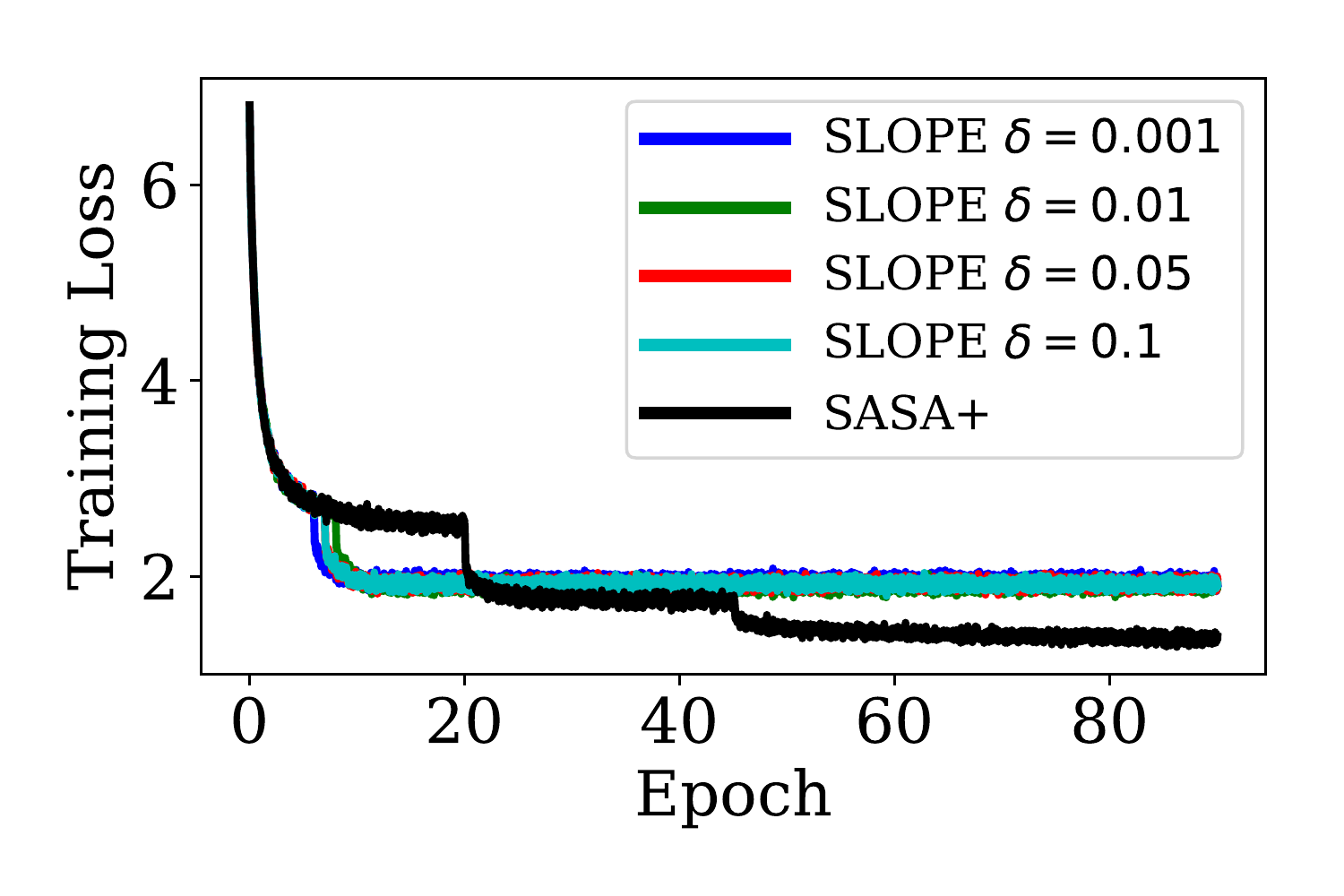}\\%[-1ex]
	\includegraphics[width=0.33\linewidth]{archmyresnet18_testsperepoch1_modelinear_testacc.pdf}\quad
	\includegraphics[width=0.33\linewidth]{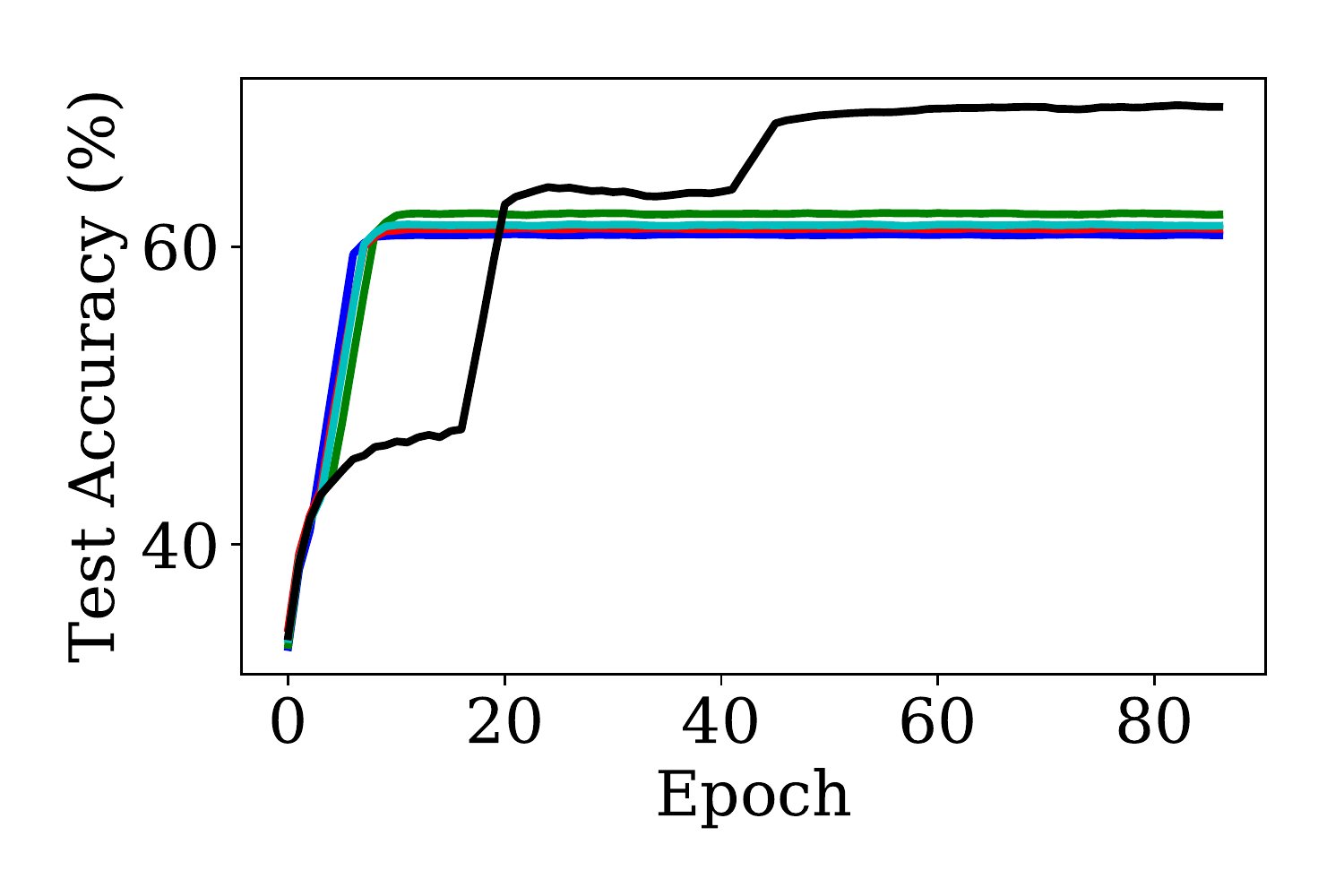}\\%[-1ex]
	\includegraphics[width=0.33\linewidth]{archmyresnet18_testsperepoch1_modelinear_lrs.pdf}\quad
	\includegraphics[width=0.33\linewidth]{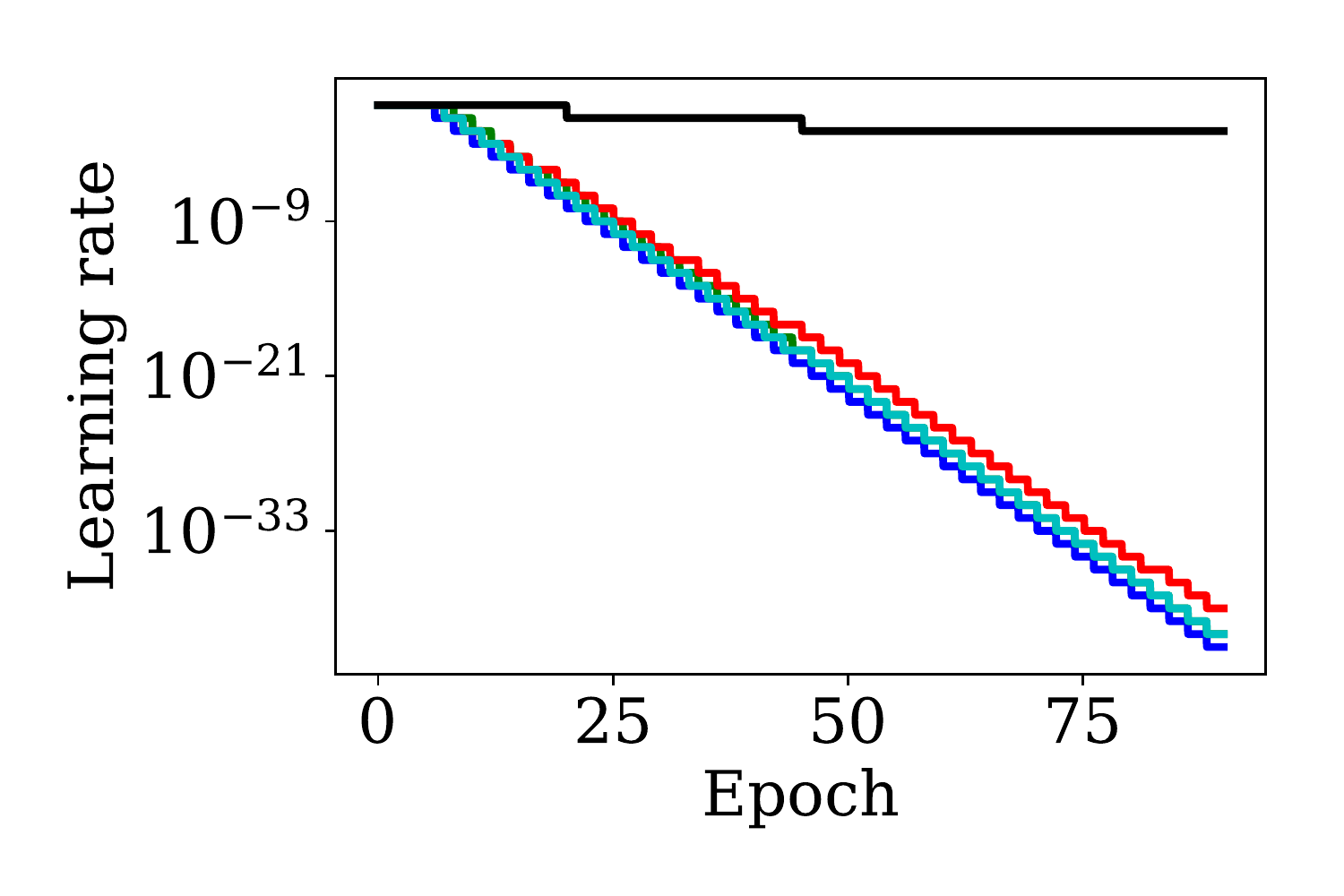}
% 	\includegraphics[width=0.32\linewidth]{archresnet18_trainloss_smooth_slope_leak8.pdf}
% 	\includegraphics[width=0.32\linewidth]{archresnet18_testacc_slope_leak8.pdf}
% 	\includegraphics[width=0.32\linewidth]{archresnet18_lrs_slope_leak8.pdf}\\
%	\vspace{-2ex}
	\caption{Comparison of SLOPE test with different confidence level $\delta$ and the default run of SASA+. First column: CIFAR-10 with ResNet18. Second column: ImageNet with ResNet18.}
	\label{fig:slopevssasa}
\end{figure}

\subsection{SASA+ Experiments}
\label{sec:sasa+experiments}
Figure \ref{fig:cifar_sasa_statistics} shows the evolution of SASA+'s different statistics during training the ResNet18 model on CIFAR-10 using the default parameters in Table~\ref{tab:sasa-params}. Between two jumps, the statistics $\Delta_k$ decays toward zero. As long as its confidence interval~\eqref{eqn:confi-interval} does {\it not} overlap with 0, we are confident that the process is not stationary yet and keep training with the current learning rate. Otherwise, we decrease the learning rate and enter another cycle of approaching stationarity. 

Figure~\ref{fig:cifar_sasa_ablation} illustrates the sensitivity of SASA+ to its hyperparameters around their default values. The first row shows that SASA+ is robust to the choice of the dropping factor $\tau$. The larger $\tau$ is, the longer the process will stay between two drops (also see Figure \ref{fig:cifar_sasa_statistics}). The second row shows SASA+ is insensitive to the confidence level $\delta$. The third row shows the effect of $\theta$, the fractions of samples to keep after reset. Smaller $\theta$ values lead to more frequent dropping, but do not impact the final performance. The fourth row shows that SASA+ can work with various algorithms, i.e., SHB~\eqref{eqn:d-shb},
%(which \cite{yaida2018fluctuation} and \cite{LangZhangXiao2019} focused on), 
NAG~\eqref{eqn:d-nes} and QHM~\eqref{eqn:d-qhm}. Results on ImageNet and MNIST in Appendix~\ref{apd:sasaplusresults} also support these findings.  

Figure~\ref{fig:slopevssasa} shows that the SLOPE test %~\eqref{test:slope} 
is too aggressive after the first learning rate drop, and thus cannot reach good training loss and testing accuracy. See similar results for logistic regression on MNIST in Appendix~\ref{apd:sloperesults}.

% \vspace{-4mm}
\section{Smoothed Stochastic Line Search}
\label{sec:ssls}

\iffalse
\SetAlgoHangIndent{3em}  % for lines that are too long use second line
\setlength{\algomargin}{1.2em}
\begin{algorithm}[t]
	\DontPrintSemicolon
	\caption{Smoothed Stochastic Line-Search (SSLS)}
	\label{alg:ssls}
	\textbf{input:} 
    $x^0\in\R^p$, $\alpha_{-1}>0$,
    sufficient decrease coefficient $c\in(0,1/2)$, 
    line-search factors $\rho_\mathrm{inc}\geq 1$, $\rho_\mathrm{dec}\in(0,1)$, 
    smoothing parameter $\gamma\in[0,1]$, 
    and maximum LS count~$m$ \\
	\For{$k = 0,...,T-1$}{
        Sample $\xik$, compute $\gk \gets \nabla f_{\xik}(\xk)$ 
        and $\dk$ \\ 
        $\eta_{k} \gets \rho_\mathrm{inc} \alpha_{k-1} $\\
        \While{$f_{\xik}(\xk-\eta_k\gk)>f_{\xik}(\xk)-c\cdot\eta_k\|\gk\|^2$ \textbf{and} $\eta_k>\rho_\mathrm{dec}^{m} \alpha_{k-1}$}{
            $\eta_k \gets \rho_\mathrm{dec} \eta_k$ 
        }
        $\alpha_k \gets (1-\gamma)\alpha_{k-1} + \gamma\eta_k$ \\
        $\xkp \gets \xk - \alpha_k \dk$
    }
    \textbf{output:} $x^T$ 
\end{algorithm} 
\fi

\SetAlgoHangIndent{3em}  % for lines that are too long use second line
\begin{algorithm}[t]
	\DontPrintSemicolon
	\caption{Smoothed Stochastic Line-Search (SSLS)}
	\label{alg:ssls}
	\textbf{input:} 
    $x^0$, $\alpha_0$ \\
    \textbf{parameters:} $c\in(0,1/2)$, $\rho_\mathrm{inc}\geq 1$, $\rho_\mathrm{dec}\in(0,1)$, $m>0$ \\
	\For{$k = 0,...,T-1$}{
        Sample $\xik$, compute $\gk \gets \nabla f_{\xik}(\xk)$ 
        and $\dk$ \\ 
        $\eta_{k} \gets \rho_\mathrm{inc} \alpha_{k-1} $\\
        \For{$i=1,\ldots,m$}{
        \eIf{$f_{\xik}(\xk-\eta_k\gk) < f_{\xik}(\xk)-c\cdot\eta_k\|\gk\|^2$}{
            \vspace{0.5ex}
            \textbf{break} (out of for loop)}{
            $\eta_k \gets \rho_\mathrm{dec} \eta_k$ 
        }
        }
        $\alpha_k \gets (1-\gamma)\alpha_{k-1} + \gamma\eta_k$ \\
        $\xkp \gets \xk - \alpha_k \dk$
    }
    \textbf{output:} $x^T$ 
\end{algorithm}

SASA+ can automatically decrease the learning rate to refine the last
phase of the optimization process, but it relies on an appropriate
initial learning rate to make fast progress.
The appropriate initial learning rate varies
substantially for different objective functions, machine learning
models, and training datasets. 
Setting it appropriately without expensive trial and error is a major challenge for all stochastic gradient methods, adaptive or not.

Several recent works 
\citep[e.g.,][]{SchmidtLeRouxBach2017,Vaswani2019LineSearch}
explore the use of classical line-search schemes 
\citep[e.g.,][Chapter~3]{NocedalWright2006book}
for stochastic optimization.
One of the main difficulties is that the estimated step sizes may vary a lot from step to
step and it may not capture the right step size for the average loss function.
To overcome this difficulty, we propose a smoothed stochastic
line-search (SSLS) procedure listed in Algorithm~\ref{alg:ssls}.

During each step~$k$, SSLS uses the classical Armijo line-search 
\citep[e.g.,][Chapter~3]{NocedalWright2006book}
to find a step size $\eta_k$ 
for the randomly chosen function $f_{\xik}$
(initialized by $\alpha_{k-1}$), then sets the next learning rate to be
\[
\alpha_k = (1-\gamma)\alpha_{k-1} + \gamma \eta_k,
\]
where $\gamma\in[0,1]$ is a smoothing parameter.
%When $\gamma=0$, there is no line-search and SSLS always use the initial learning rate.
When $\gamma=1$, SSLS reduces to the stochastic Armijo line-search used by
\citet{Vaswani2019LineSearch}.
For optimization over a finite dataset,
a good choice is to set $\gamma=b/n$ where $n$ is the total number of training
examples and $b$ is the mini-batch size.
Suppose $\rho_\mathrm{inc}=2$ and $\eta_k=2\alpha_{k-1}$ is accepted 
at step~$k$, then
$
    \alpha_k = (1-\gamma)\alpha_{k-1} + \gamma (2\alpha_{k-1})
    = (1+\gamma)\alpha_{k-1}.
$
If this happens at every iteration over one epoch 
(of $\lceil n/b\rceil$ iterations) and $n\gg b$, then
\begin{equation}\label{eqn:epoch-growth}
    \alpha_{k+\lceil n/b\rceil} = (1+b/n)^{\lceil n/b\rceil} \alpha_k 
    \approx e \cdot \alpha_k .
\end{equation}
Therefore, the most aggressive growth of the learning rate is by a factor 
of~$e$ over one epoch.
Such a growing factor is reasonable for line search in deterministic
optimization \citep[e.g.,][]{Nesterov2013composite}.
Setting $\gamma=\sqrt{b/n}$ leads to maximum growth of $e^2$ over one epoch.
A similar smoothing effect holds for decreasing the learning rate as well.
The smoothing scheme allows us to use standard increasing and decreasing
factors, and we use $\rho_\mathrm{inc}=2$, $\rho_\mathrm{dec}=1/2$, $c=0.05$, and $\gamma = \sqrt{b/n}$ as the default parameters.
%
%Without the smoothing scheme, \citet{Vaswani2019LineSearch} set 
%$\rho_\mathrm{inc}=2^{b/n}$ to restrict dramatic growth of $\alpha_k=\eta_k$
%(equivalent to $\gamma=1$ in SSLS).
%However, the decreasing of $\alpha_k=\eta_k$ can be excessive and premature,
%even with $\rho_\mathrm{dec}=0.9$. 
%(setting $\rho_\mathrm{dec}$ closer to~$1$ would dramatically slow down line search, losing the main point of line search).

\begin{figure}[t]
\centering
    \includegraphics[width=0.243\linewidth]{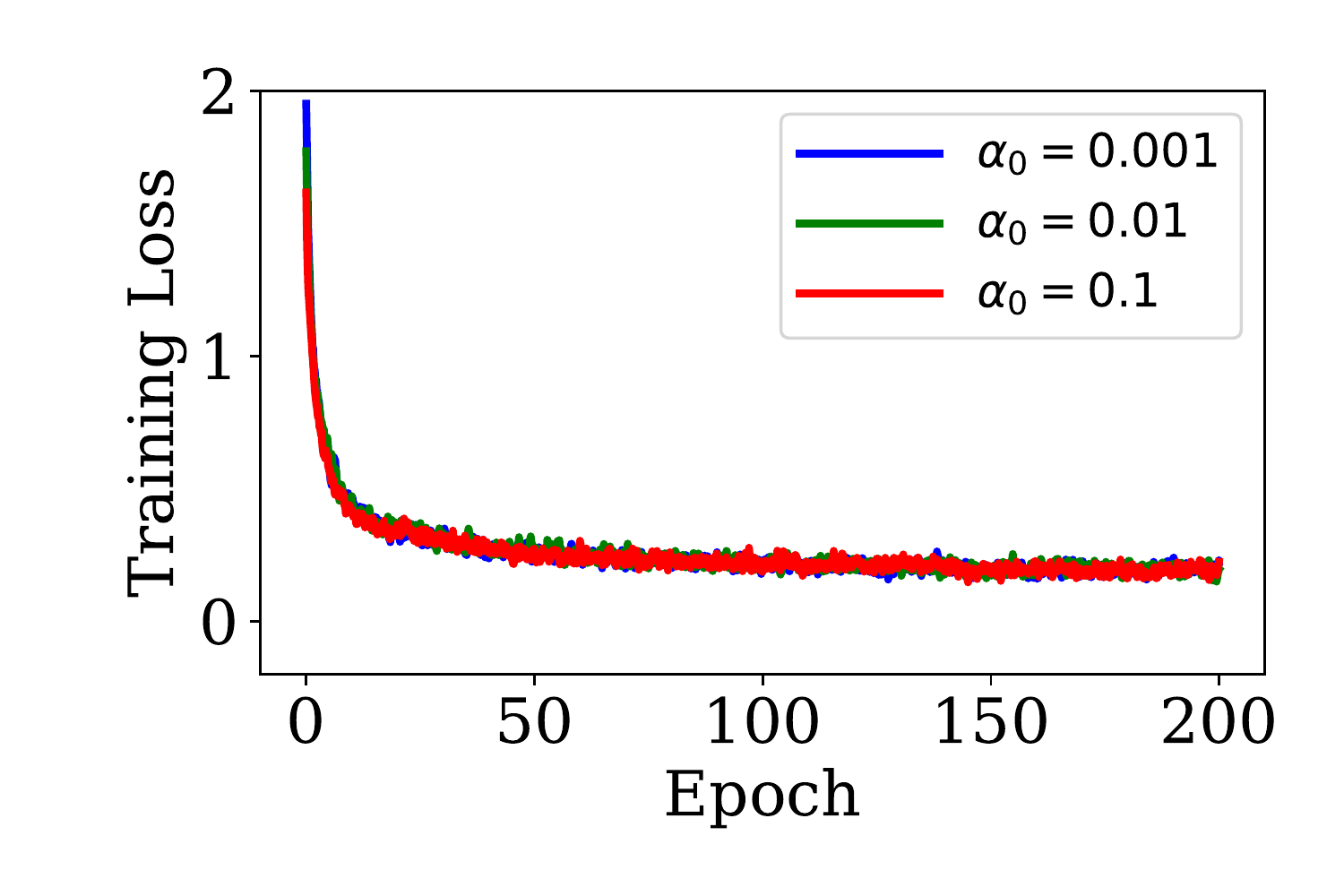}
    \includegraphics[width=0.243\linewidth]{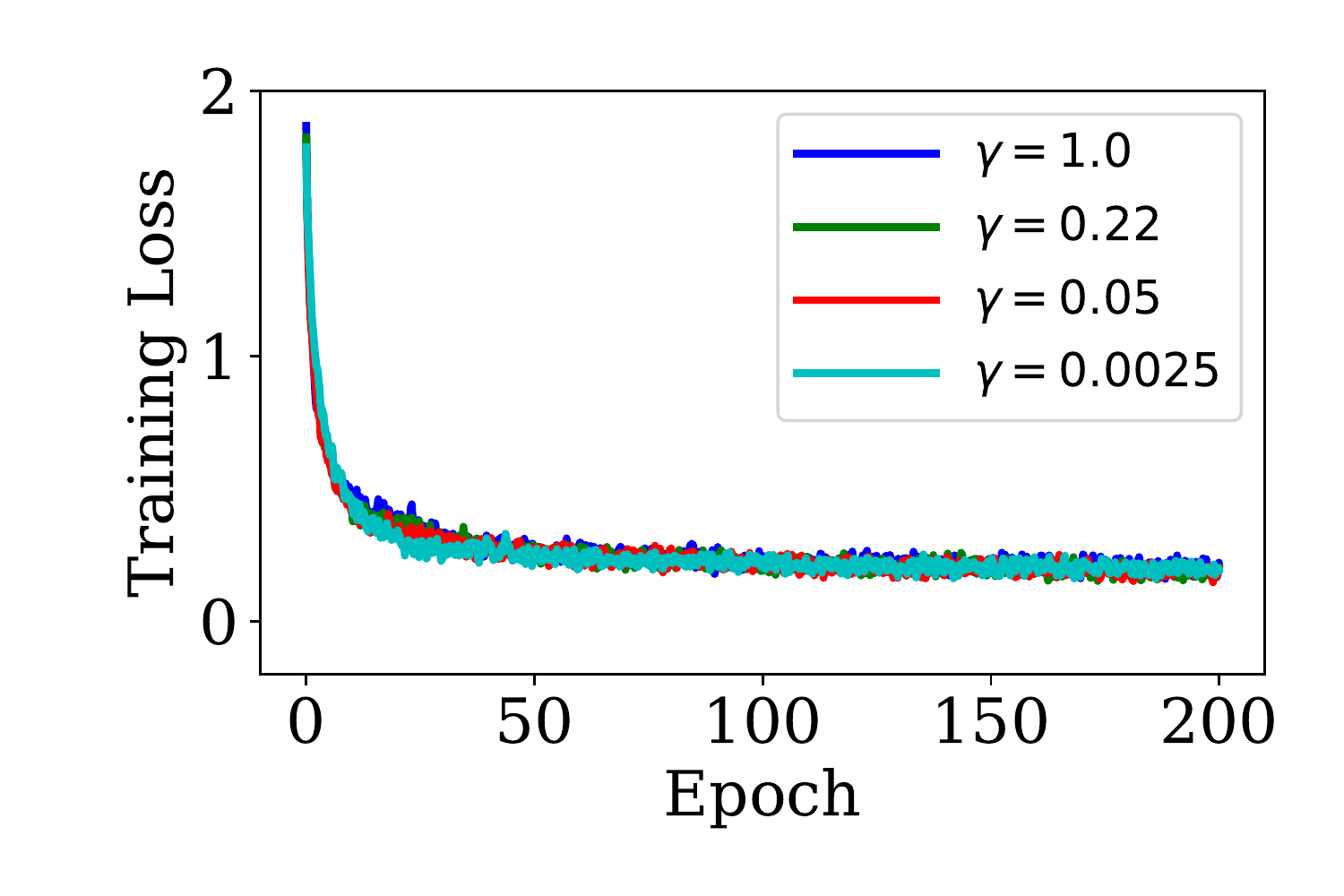}
    \includegraphics[width=0.243\linewidth]{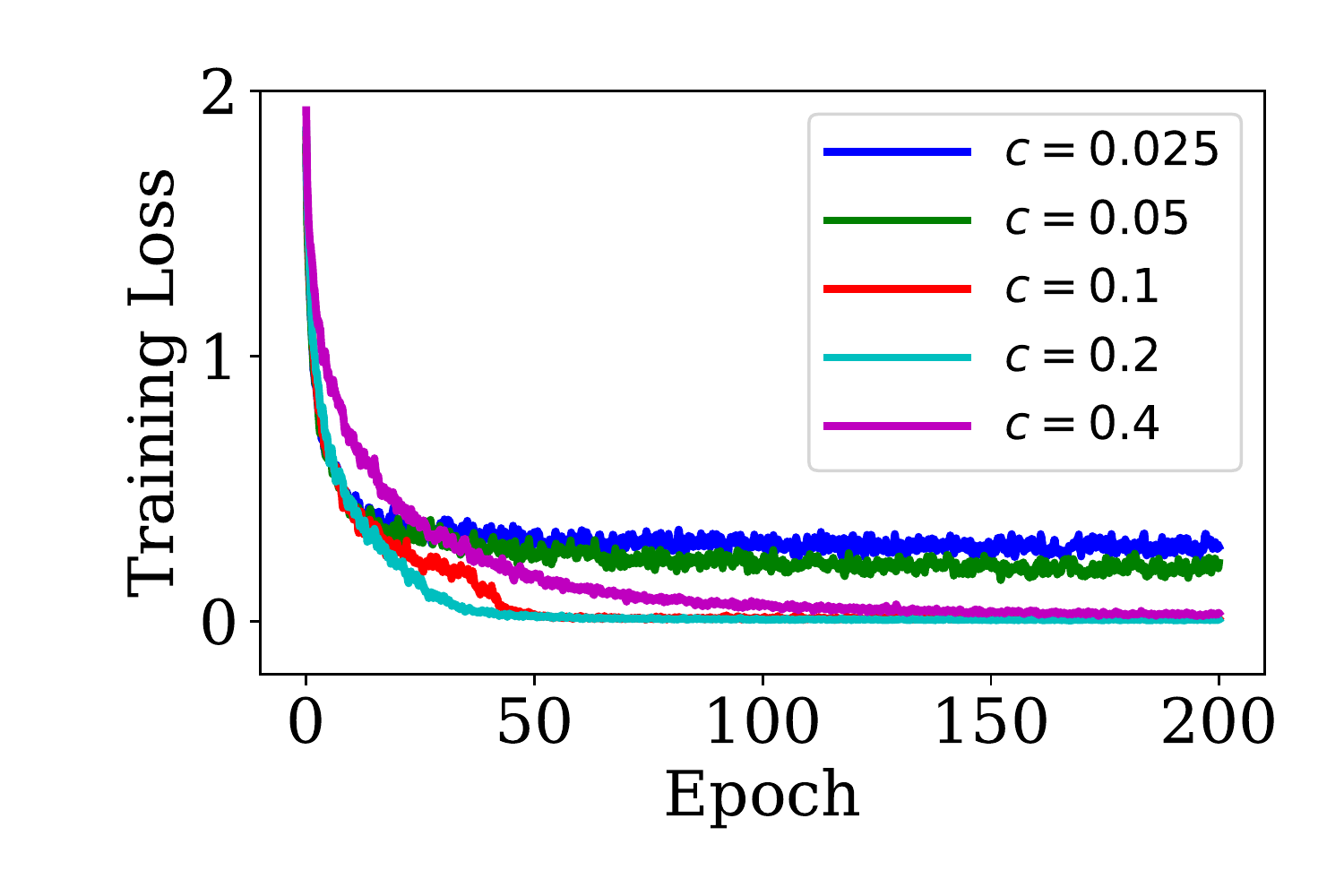}
    \includegraphics[width=0.243\linewidth]{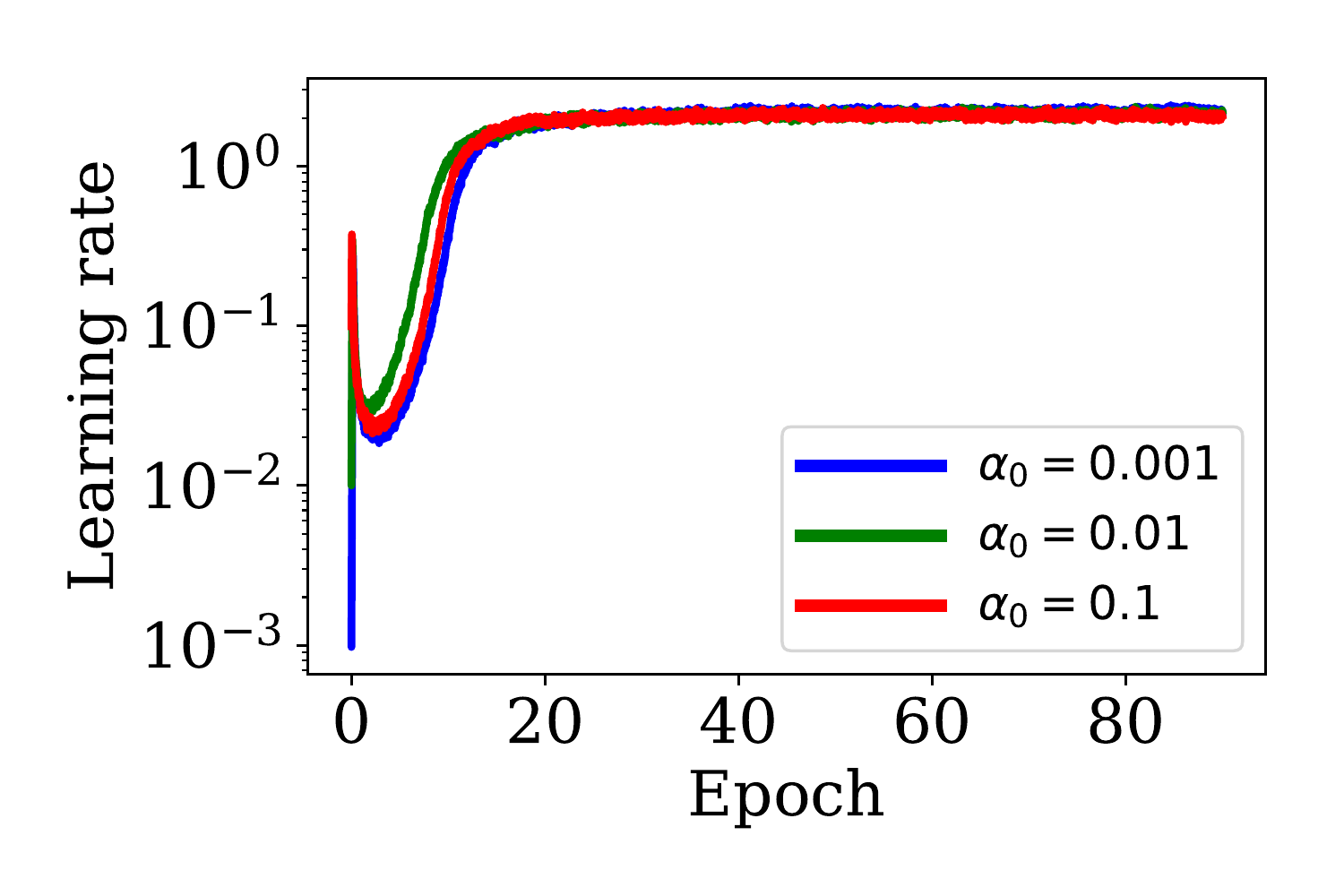} \\%[-1ex]
    \includegraphics[width=0.243\linewidth]{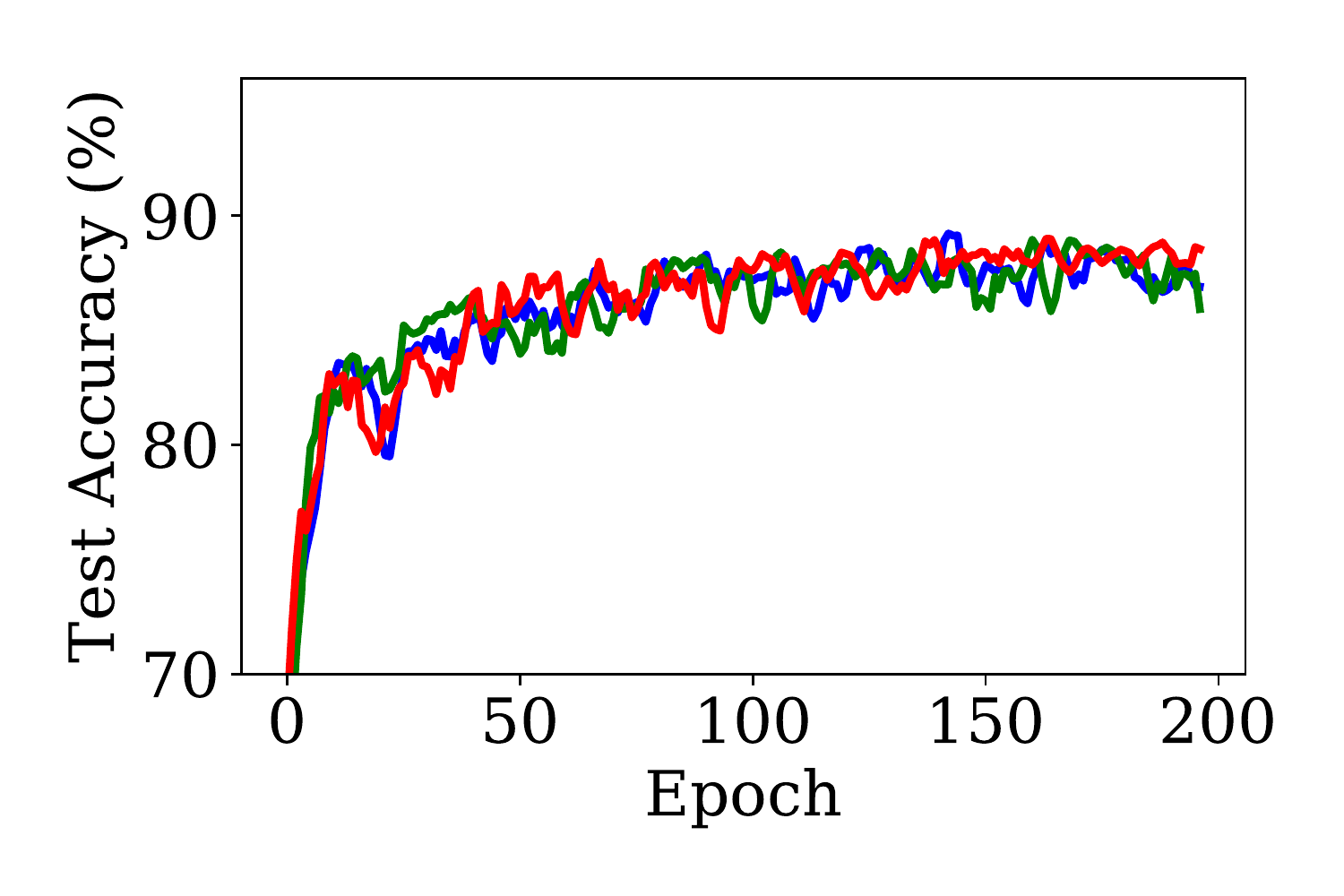}
    \includegraphics[width=0.243\linewidth]{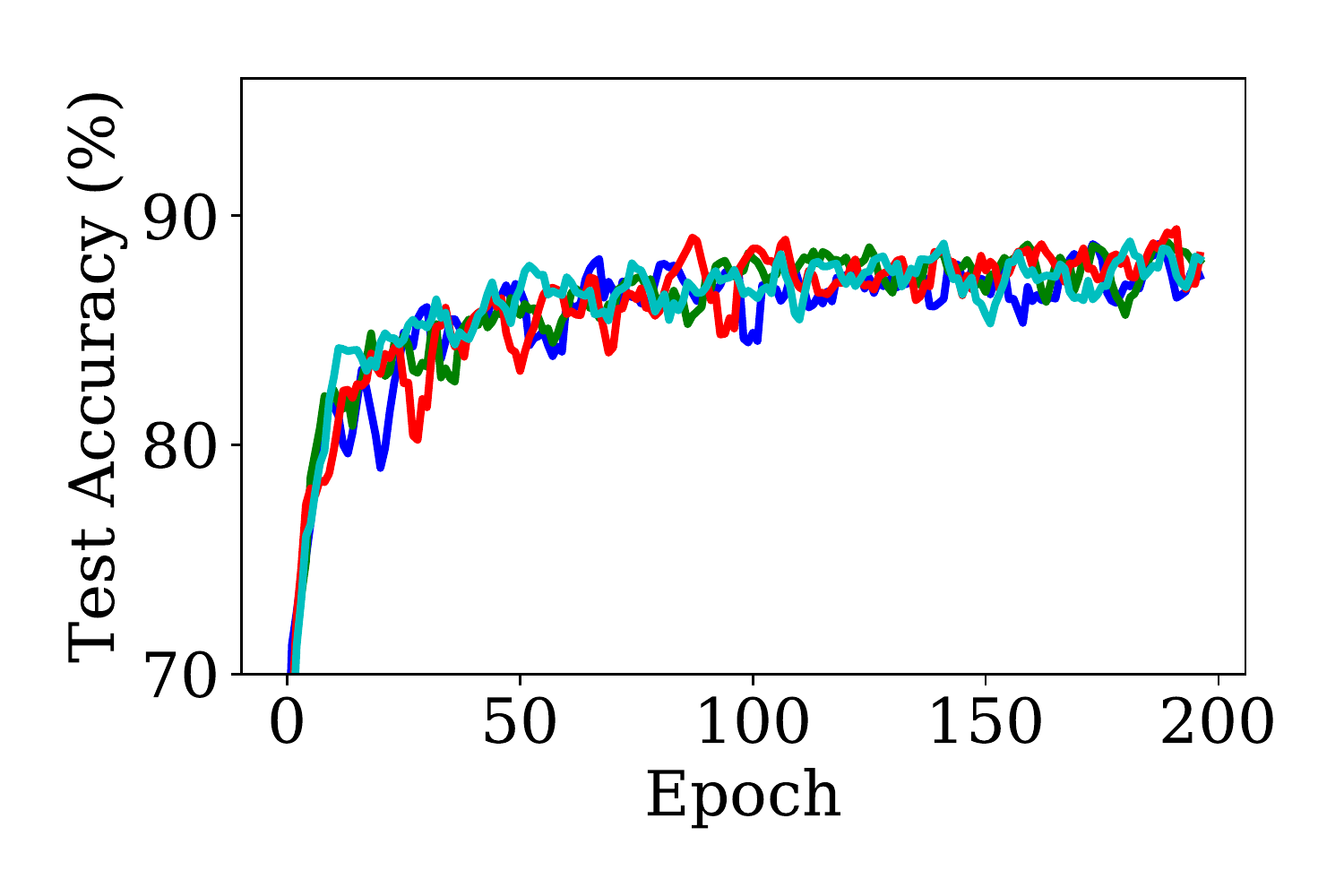}
    \includegraphics[width=0.243\linewidth]{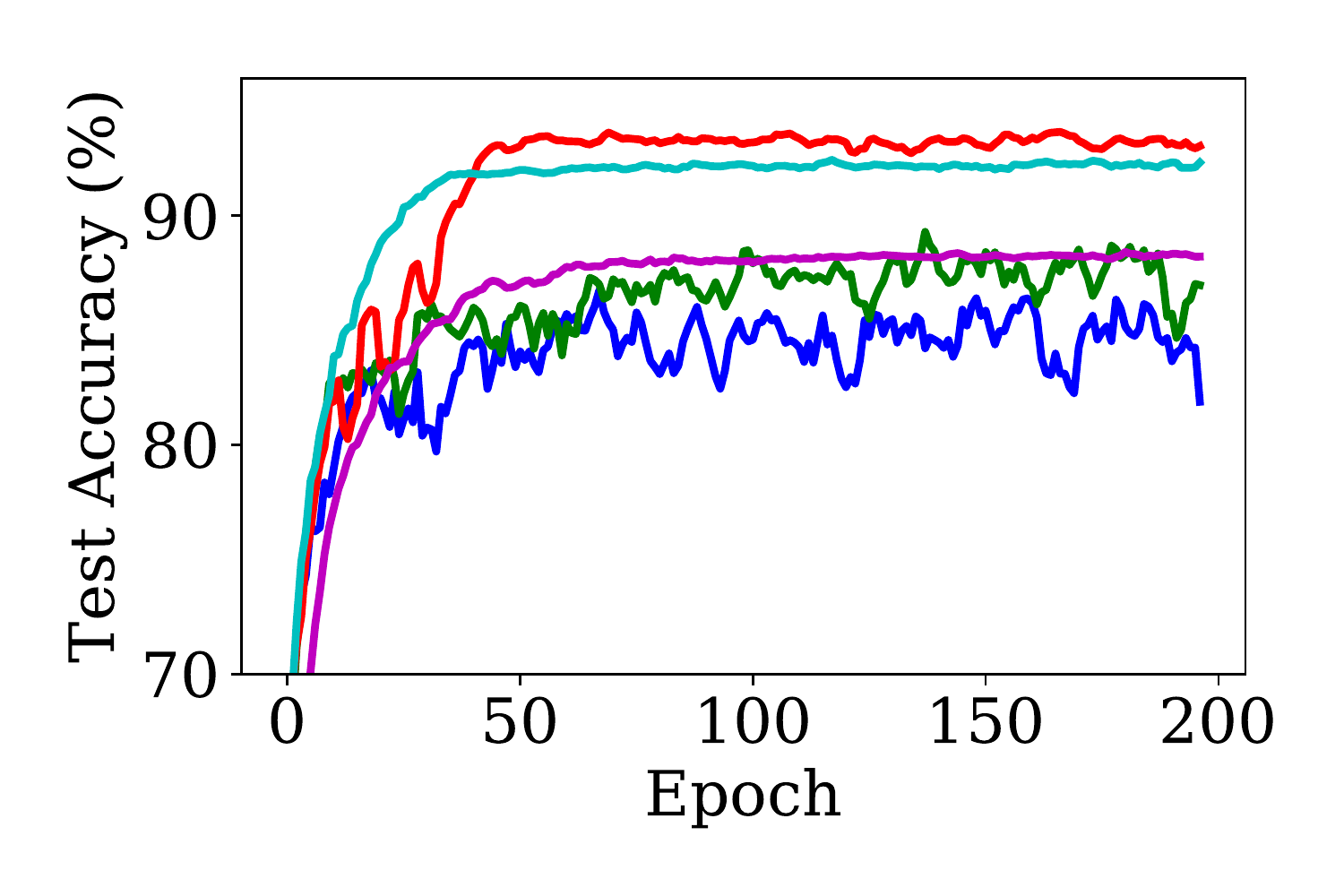} 
    \includegraphics[width=0.243\linewidth]{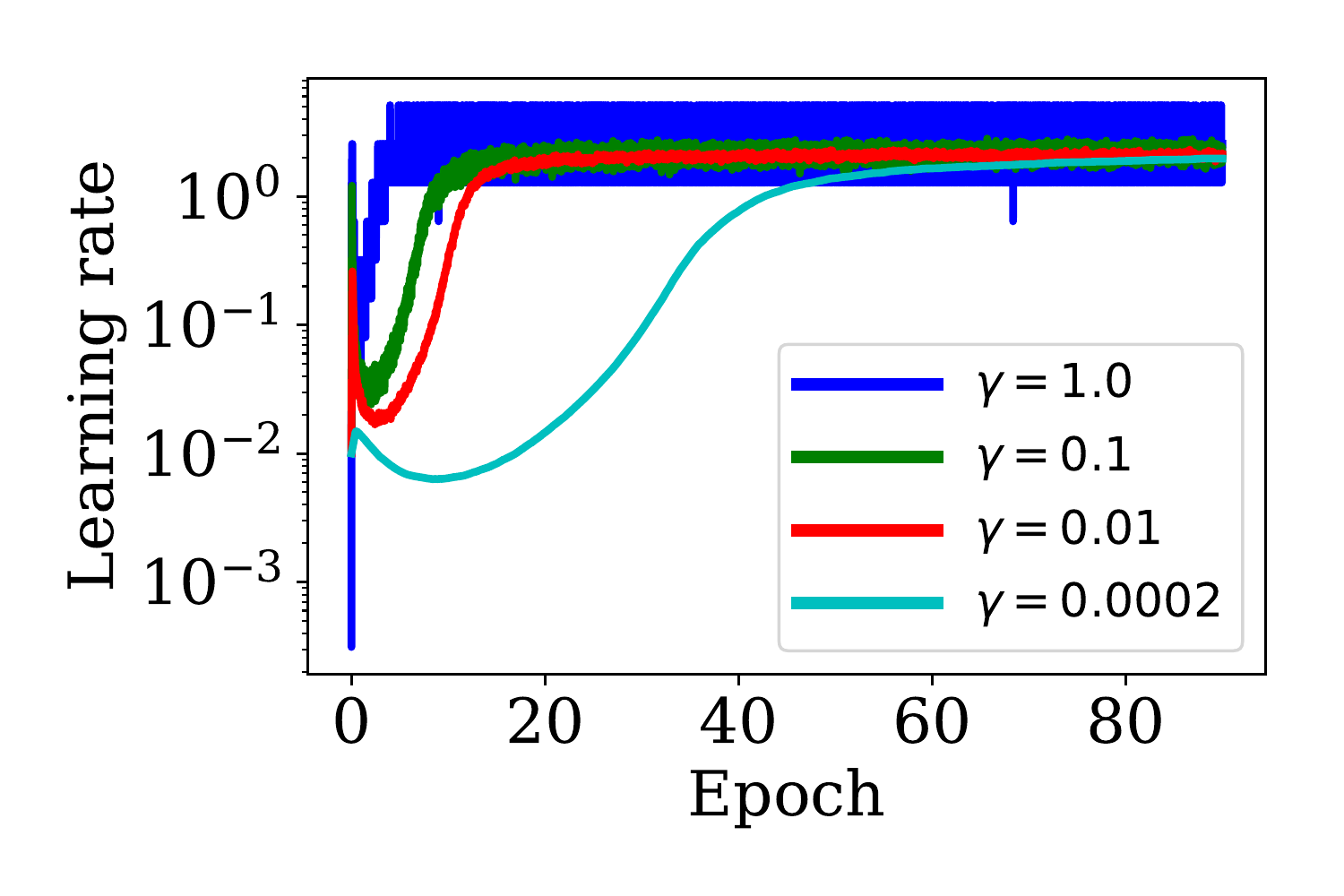}\\%[-1ex]
    \includegraphics[width=0.243\linewidth]{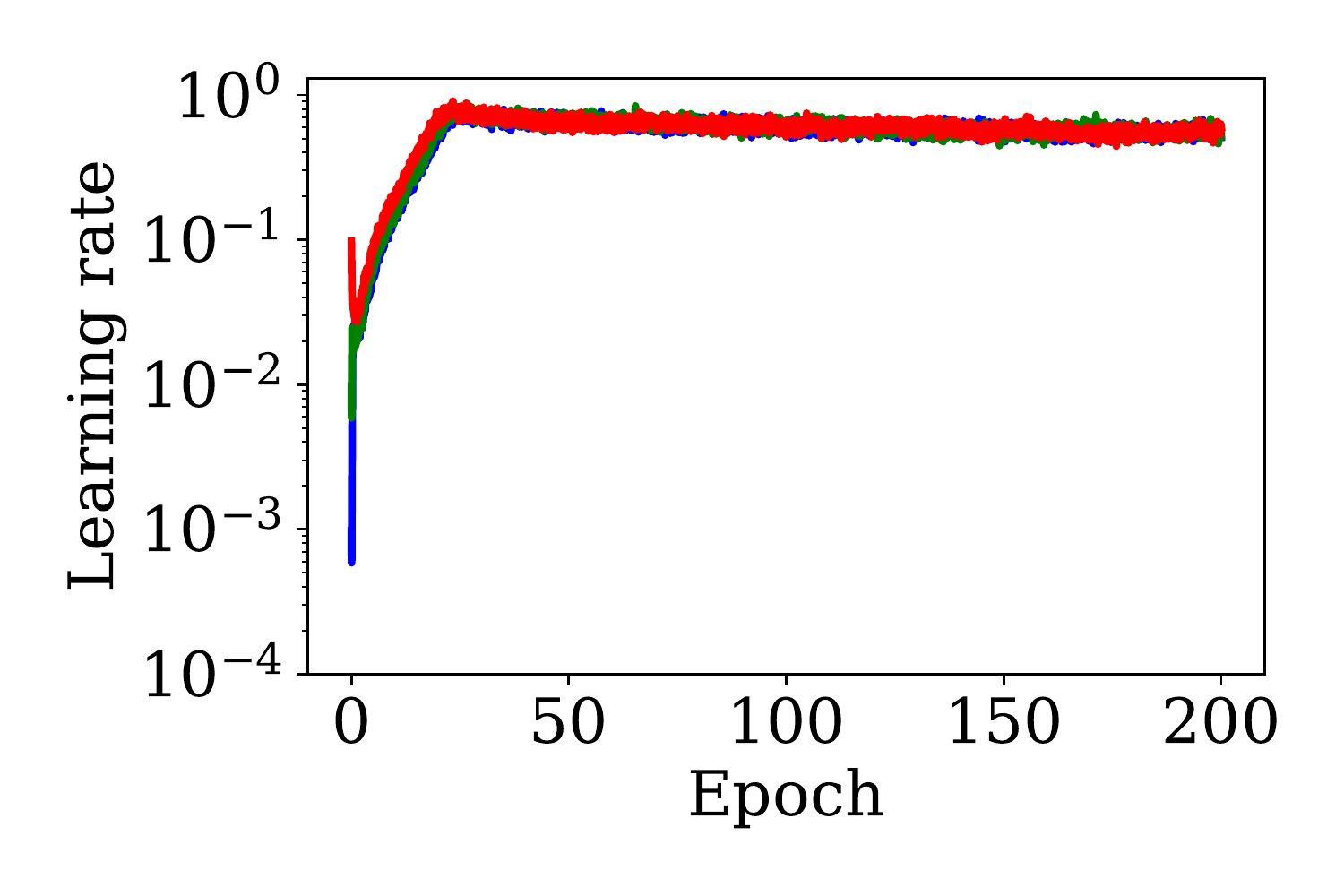} 
    \includegraphics[width=0.243\linewidth]{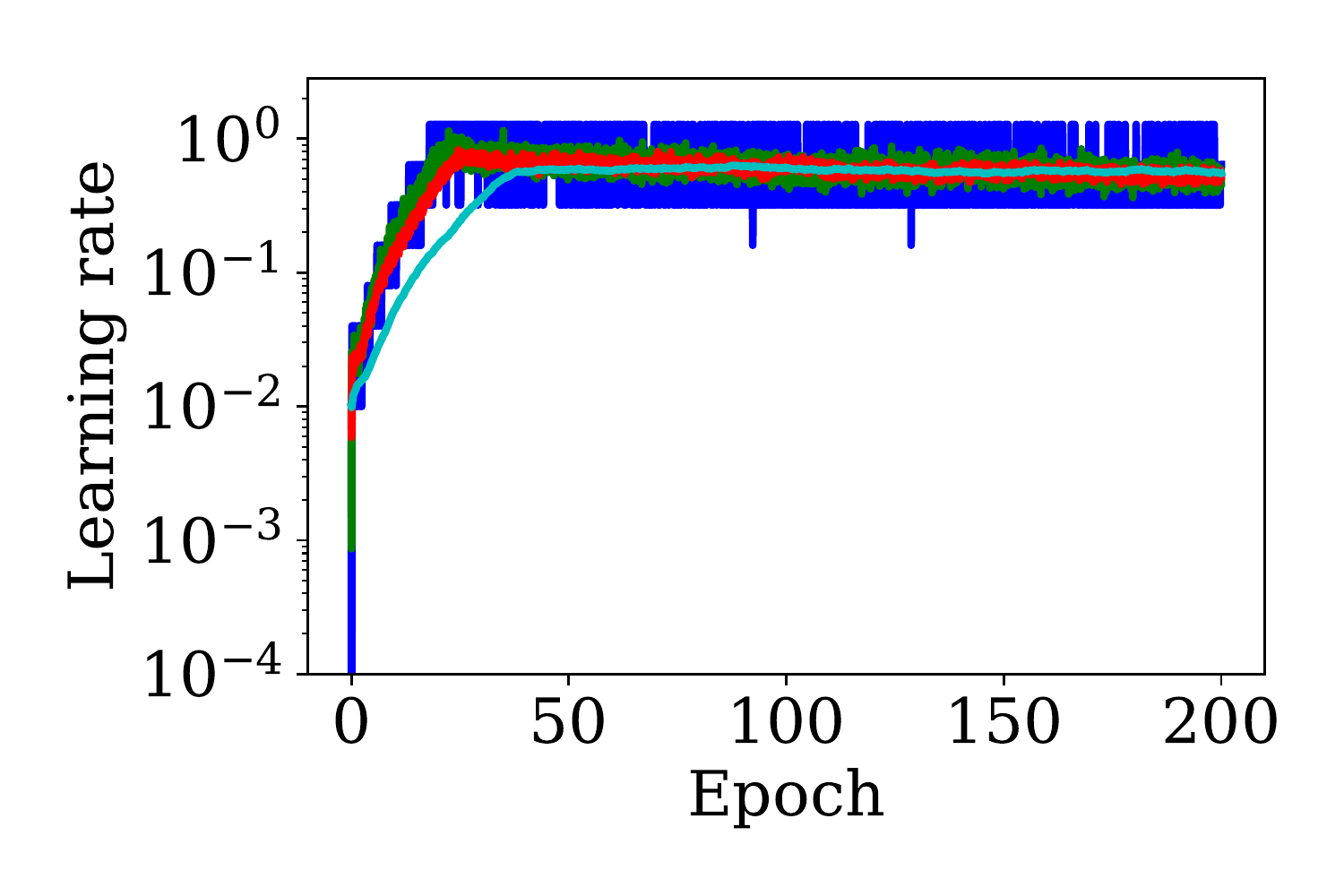} 
    \includegraphics[width=0.243\linewidth]{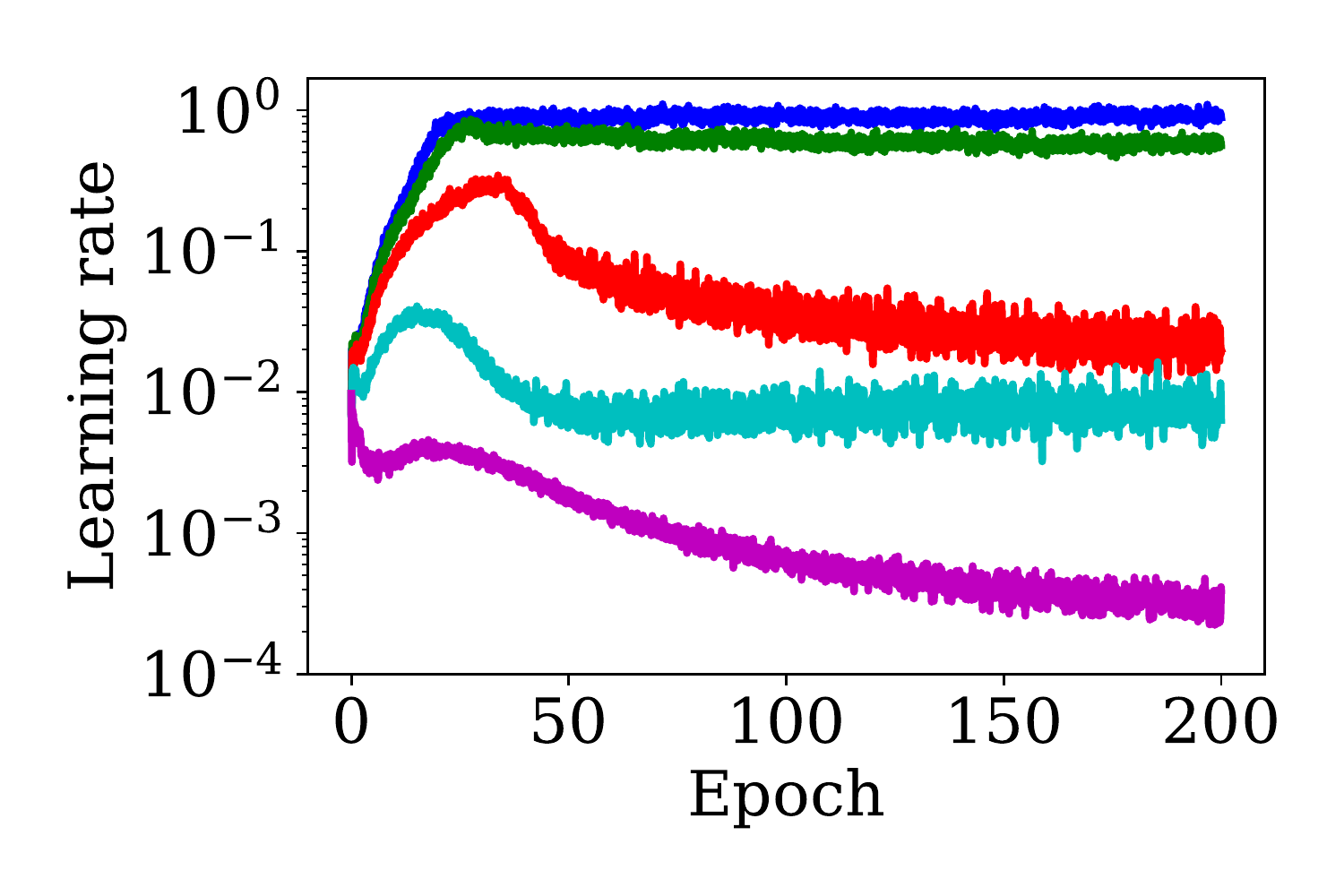}
    \includegraphics[width=0.243\linewidth]{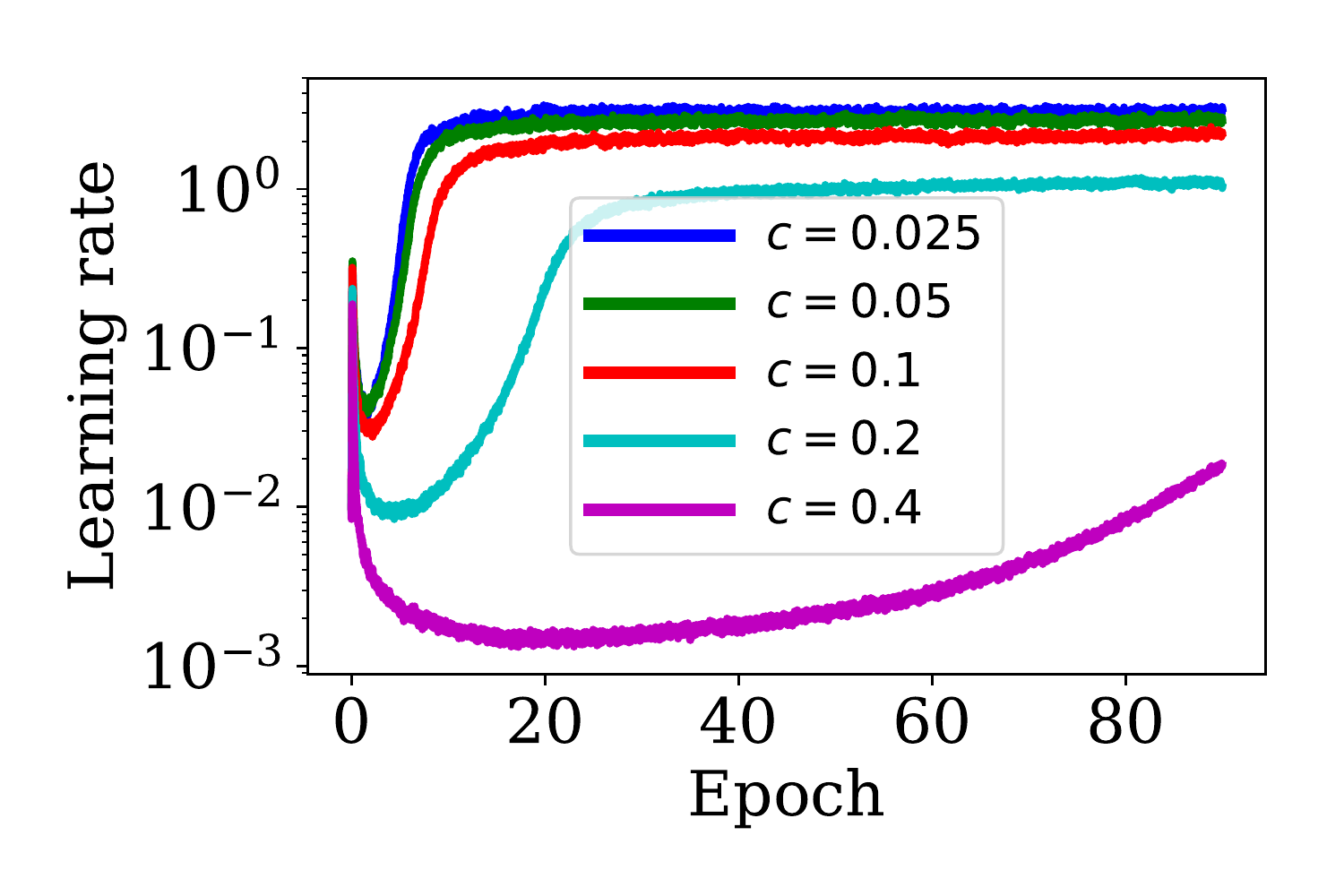}\\%[-1ex]
 	\centerline{\small (a) \hspace{3.5cm} (b) \hspace{3.5cm} (c) \hspace{3.5cm} (d)}
% 	\vspace{-1ex}
    \caption{Sensitivity of SSLS around its default values $c=0.05$ and $\gamma = \sqrt{b/n}$ (0.05 for CIFAR-10 and 0.01 for ImageNet). (a) Initial learning rate $\alpha_0$, (b) The smoothing factor $\gamma$ and (c) and the sufficient decrease constant $c$ on CIFAR-10 with ResNet18. The column (d) shows the learning rate schedules of SSLS when applied to ImageNet with ResNet18 with different hyperparameters. Best viewed in color.}
    \label{fig:sgdsls}
\end{figure}

Following the analysis of Armijo line-search in \citet{Vaswani2019LineSearch},
it is possible to show that SSLS has similar convergence properties 
under the smooth and interpolation assumptions,
which we leave as a future research project.
In this paper, we are mainly interested in its performance as a practical heuristic.
In particular, we use it on deep neural network models with ReLU activations. 
Here the loss functions are non-smooth, and classical theoretical 
analysis for line-search do not carry over.
Nevertheless, we found SSLS to have robust performance in 
all of our experiments. 
In order to handle the case of potential non-descent directions in the 
non-smooth case, we always exit the line search after a maximum of $m$ tries.
By default, we set $m=2$ (there is no significant difference from $m=2$ to $m=10$).

\textbf{SSLS Experiments.} Figure~\ref{fig:sgdsls} shows our sensitivity study on SSLS. Column (a) and (b) indicate that for a wide range of the initial learning rate $\alpha_0$ and the smoothing parameter~$\gamma$, SSLS always settles down to a stable learning rate ($0.53$ for CIFAR-10 and $2.1$ for ImageNet, which is comparable with the best hand-tuned value of 1.0).
However, large~$\gamma$ causes oscillation around a stable learning rate. Column~(c) shows that the larger the sufficient descent constant $c$ is, the smaller the stable learning rate is and the slower SSLS reaches it. This is intuitive because larger $c$ requires steeper descent, forcing SSLS to settle on a smaller learning rate on average. Note that both training loss and validation accuracy obtained by SSLS is worse than the results obtained by hand-tuned optimizers.
%however, we can switch to SASA+ to gradually reducing the leraning rates.
Column (d) shows the resulting learning rates of SSLS on ImageNet, with more details in Appendix~\ref{apd:sslsresults}.

% \begin{figure*}[t]
%     \includegraphics[width=0.245\linewidth]{archmyresnet18_trainloss_smooth_sslslrs.pdf}
%     \includegraphics[width=0.245\linewidth]{archmyresnet18_mom09_ssls_gammas_trainloss_smooth.pdf}
%     \includegraphics[width=0.245\linewidth]{archmyresnet18_trainloss_smooth_ssls_sdcs.pdf}
%     \includegraphics[width=0.245\linewidth]{archresnet18_dirg_lrs_sslslrs.pdf} \\
%     \includegraphics[width=0.245\linewidth]{archmyresnet18_testacc_sslslrs.pdf}
%     \includegraphics[width=0.245\linewidth]{archmyresnet18_mom09_ssls_gammas_testacc.pdf}
%     \includegraphics[width=0.245\linewidth]{archmyresnet18_testacc_ssls_sdcs.pdf} 
%     \includegraphics[width=0.245\linewidth]{archresnet18_dirg_lrs_sslsgamma.pdf} \\
%     \includegraphics[width=0.245\linewidth]{archmyresnet18_lrs_sslslrs.pdf} 
%     \includegraphics[width=0.245\linewidth]{archmyresnet18_mom09_ssls_gammas_lrs.pdf} 
%     \includegraphics[width=0.245\linewidth]{archmyresnet18_lrs_ssls_sdcs.pdf}
%     \includegraphics[width=0.245\linewidth]{archresnet18_dirg_lrs_sslssdcs.pdf}\\[-1ex]
%  	\centerline{\small (a) \hspace{4cm} (b) \hspace{4cm} (c) \hspace{4cm} (d)}
%  	\vspace{-4ex}
%     \caption{Sensitivity of SSLS. (a) Initial learning rate $\alpha_0$, (b) The smoothing factor $\gamma$ and (c) and the sufficient decrease constant $c$ on CIFAR-10 with ResNet18. The column (d) shows the learning rate schedules of SSLS when applied to ImageNet with ResNet18 with different hyperparameter settings.}
%     \label{fig:sgdsls}
% \end{figure*}

%\clearpage
%\section{SALSA and Experiments}
\section{SALSA}
\label{sec:salsa}

\begin{algorithm}[t]
	\DontPrintSemicolon
	\caption{SALSA: SASA+ with warmup by SSLS}
	\label{alg:salsa}
    \textbf{input:} $x^0\in\R^p$, $\alpha_0>0$, $switched$=False\\
    % $switched\gets$ False \\
    \For{$k = 0,...,T$}{
        \eIf{\emph{not} $switched$}{
            Run one step of SSLS (Algorithm~\ref{alg:ssls}) \\
%            $x\_stationary\gets$ stationary\_test (SASA+)\\
%            $f\_stationary\gets$ slope\_test (SLOPE) \\
            $x\_stationary\gets$ SASA+ test in Algorithm~\ref{alg:sasa} \\
            $f\_stationary\gets$ SLOPE test~\eqref{test:slope} \\
            $switched\gets$  $x\_stationary$ or $f\_stationary$
        }{
            Run one step of SASA+ (Algorithm~\ref{alg:sasa}) 
        }
    }
    \textbf{output:} $x^{T}$ 
\end{algorithm}

% \begin{figure*}[t]
%     \includegraphics[width=0.245\linewidth]{archmyresnet18_trainloss_smooth_sslslrs.pdf}
%     \includegraphics[width=0.245\linewidth]{archmyresnet18_mom09_ssls_gammas_trainloss_smooth.pdf}
%     \includegraphics[width=0.245\linewidth]{archmyresnet18_trainloss_smooth_ssls_sdcs.pdf}
%     \includegraphics[width=0.245\linewidth]{archresnet18_dirg_lrs_sslslrs.pdf} \\
%     \includegraphics[width=0.245\linewidth]{archmyresnet18_testacc_sslslrs.pdf}
%     \includegraphics[width=0.245\linewidth]{archmyresnet18_mom09_ssls_gammas_testacc.pdf}
%     \includegraphics[width=0.245\linewidth]{archmyresnet18_testacc_ssls_sdcs.pdf} 
%     \includegraphics[width=0.245\linewidth]{archresnet18_dirg_lrs_sslsgamma.pdf} \\
%     \includegraphics[width=0.245\linewidth]{archmyresnet18_lrs_sslslrs.pdf} 
%     \includegraphics[width=0.245\linewidth]{archmyresnet18_mom09_ssls_gammas_lrs.pdf} 
%     \includegraphics[width=0.245\linewidth]{archmyresnet18_lrs_ssls_sdcs.pdf}
%     \includegraphics[width=0.245\linewidth]{archresnet18_dirg_lrs_sslssdcs.pdf}\\[-1ex]
%  	\centerline{\small (a) \hspace{4cm} (b) \hspace{4cm} (c) \hspace{4cm} (d)}
%  	\vspace{-4ex}
%     \caption{Sensitivity of SSLS. (a) Initial learning rate $\alpha_0$, (b) The smoothing factor $\gamma$ and (c) and the sufficient decrease constant $c$ on CIFAR-10 with ResNet18. The column (d) shows the learning rate schedules of SSLS when applied to ImageNet with ResNet18 with different hyperparameter settings.}
%     \label{fig:sgdsls}
% \end{figure*}

Finally, we combine SSLS with SASA+ to form Algorithm~\ref{alg:salsa},
which we call SALSA (Stochastic Approximation with Line-search and
Statistical Adaption). Without prior knowledge of the loss function
and training dataset, we start with a very small learning rate and use
SSLS to gradually increase it to be around a stationary value that is
(automatically) customized to the problem, as shown in Figure~\ref{fig:sgdsls}. At every $K_\text{test}$ iterations, SALSA performs the stationary test in SASA+ (Algorithm~\ref{alg:sasa}) and the SLOPE test~\eqref{test:slope}, to determine whether the dynamics become stationary and whether the training loss is still decreasing, respectively. 
%If either the dynamics become stationary or the training loss is {\it not} decreasing, 
If either form of stationarity is detected,
SALSA switches from SSLS to SASA+. After the switch, SASA+ takes over the learning rate scheduling and finishes the training. 
The SLOPE test proves to be very effective in detecting whether the training loss is still decreasing, which prevents the learning rate growing too large 
(but not as effective in reducing the learning rate afterwards, as shown in Section~\ref{sec:sasa+experiments}) 
%However, the SLOPE test is not as a replacement for SASA+. Figure \ref{fig:slopevssasa} shows that SASA+ outperforms the SLOPE test when the latter is used as a trigger for dropping the learning rate. 

%. use of the SLOPE test for this switch is necessary because the stationary learning rate reached by SSLS may be too large, resulting in non-decreasing training loss. The SLOPE test is effective in preventing from those large stationary learning rate. See the experiment $\gamma=2$ in Figure~\eqref{fig:imagenet_ssls_ablation} Row2 for an example. 

\textbf{Computational overhead of SALSA.} When we fix the number of training epochs, the wall-clock time of SSLS is {\it at most} 1.5 times of that of without using line search. This 0.5x overhead is due to the {\it at most 3} extra function evaluations in each line search step (with $m=2)$. Notice that line search is performed on the same minibatch and only needs the function value (not the gradient). In SALSA, SSLS typically switches to SASA+ in less than one-third of the total training epochs. Therefore, the total overhead of using SSLS is typically only 0.15 that of the total time without SSLS. The overhead of SASA+ is the same as that of SASA \citep{LangZhangXiao2019}, which is negligible in practice.

% \twocolumn[
\begin{figure}[t]
\centering
	\includegraphics[width=0.243\textwidth]{cifar_final_com_trainloss_smooth.pdf}
	\includegraphics[width=0.243\textwidth]{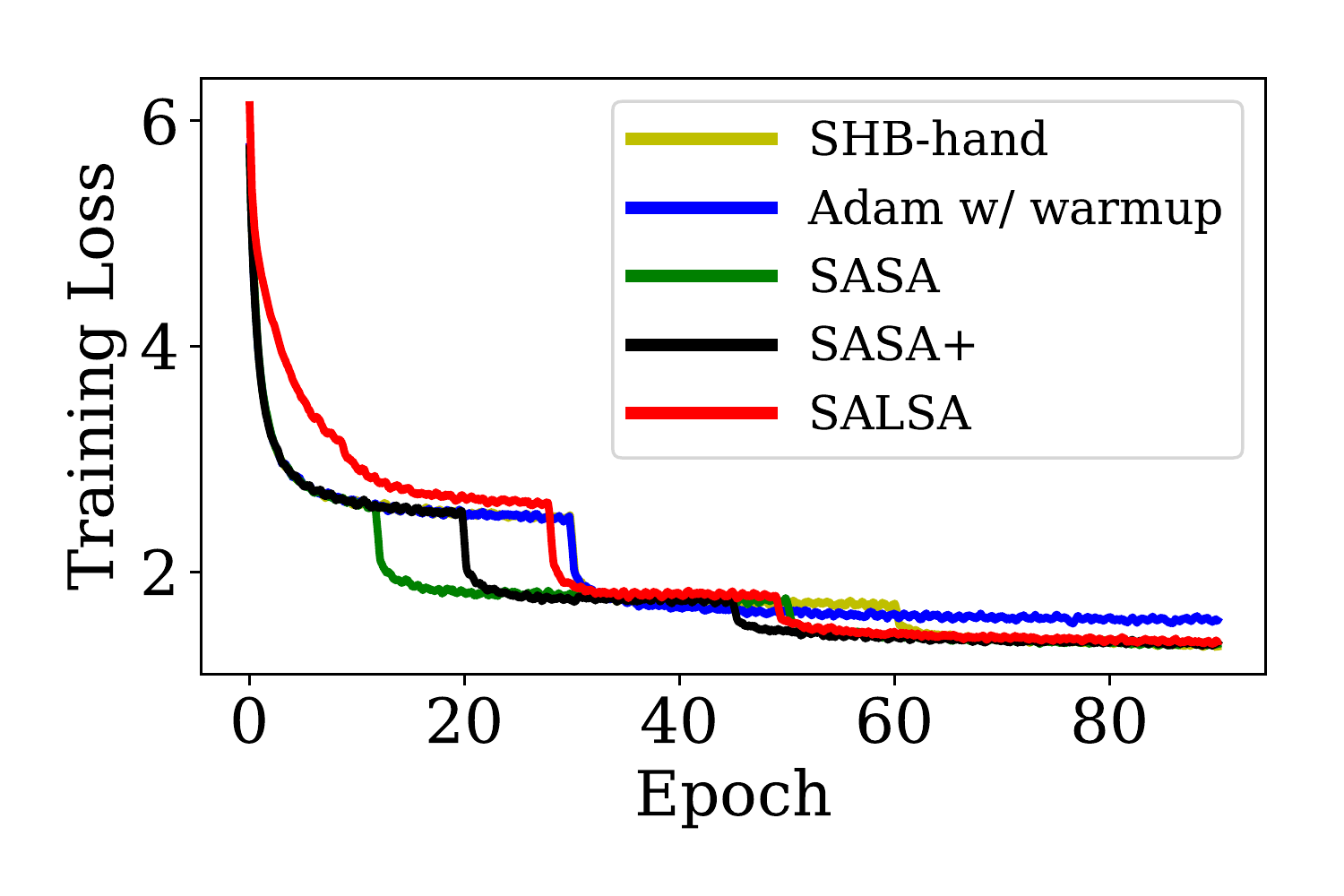}
	\includegraphics[width=0.243\textwidth]{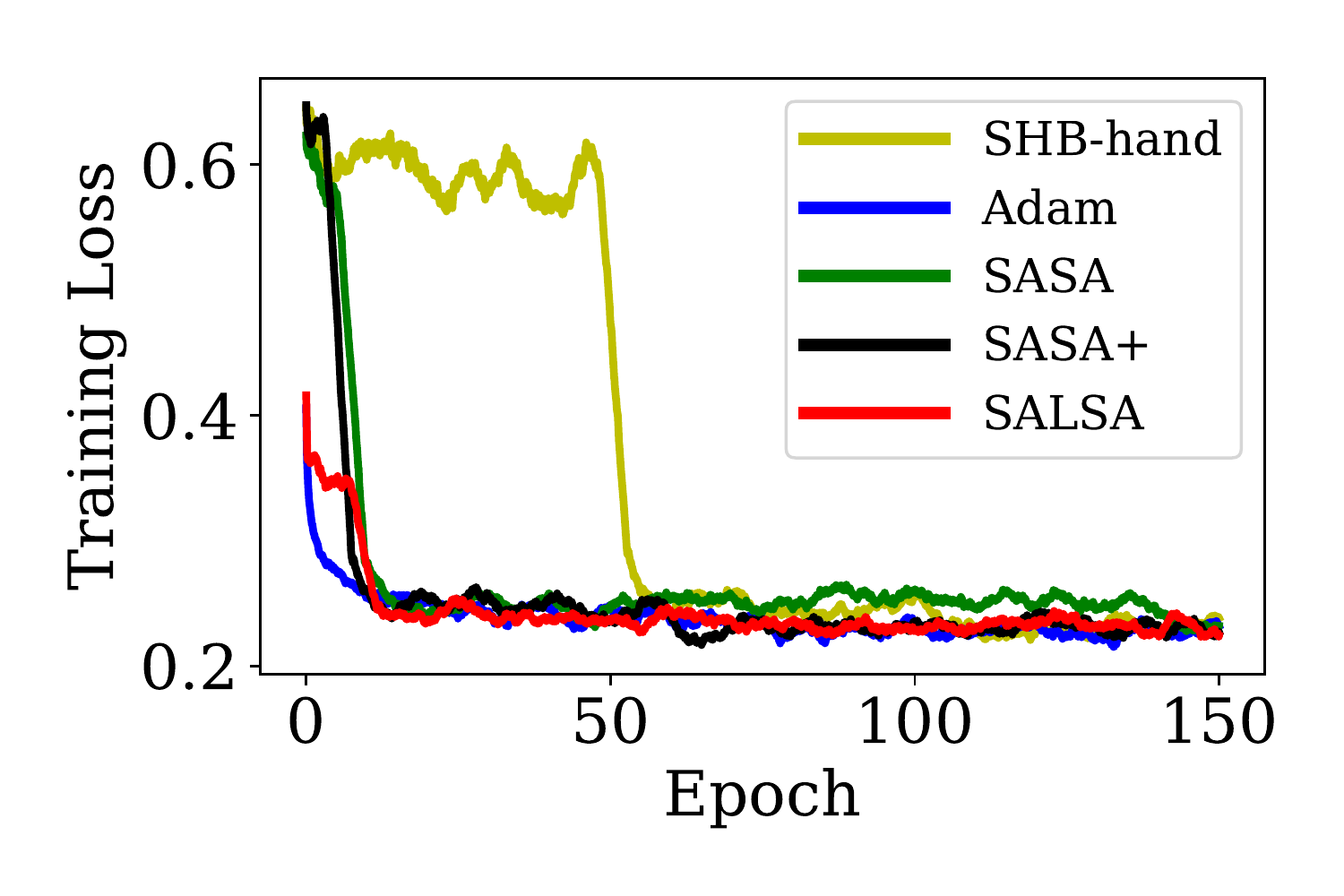}
	\includegraphics[width=0.243\textwidth]{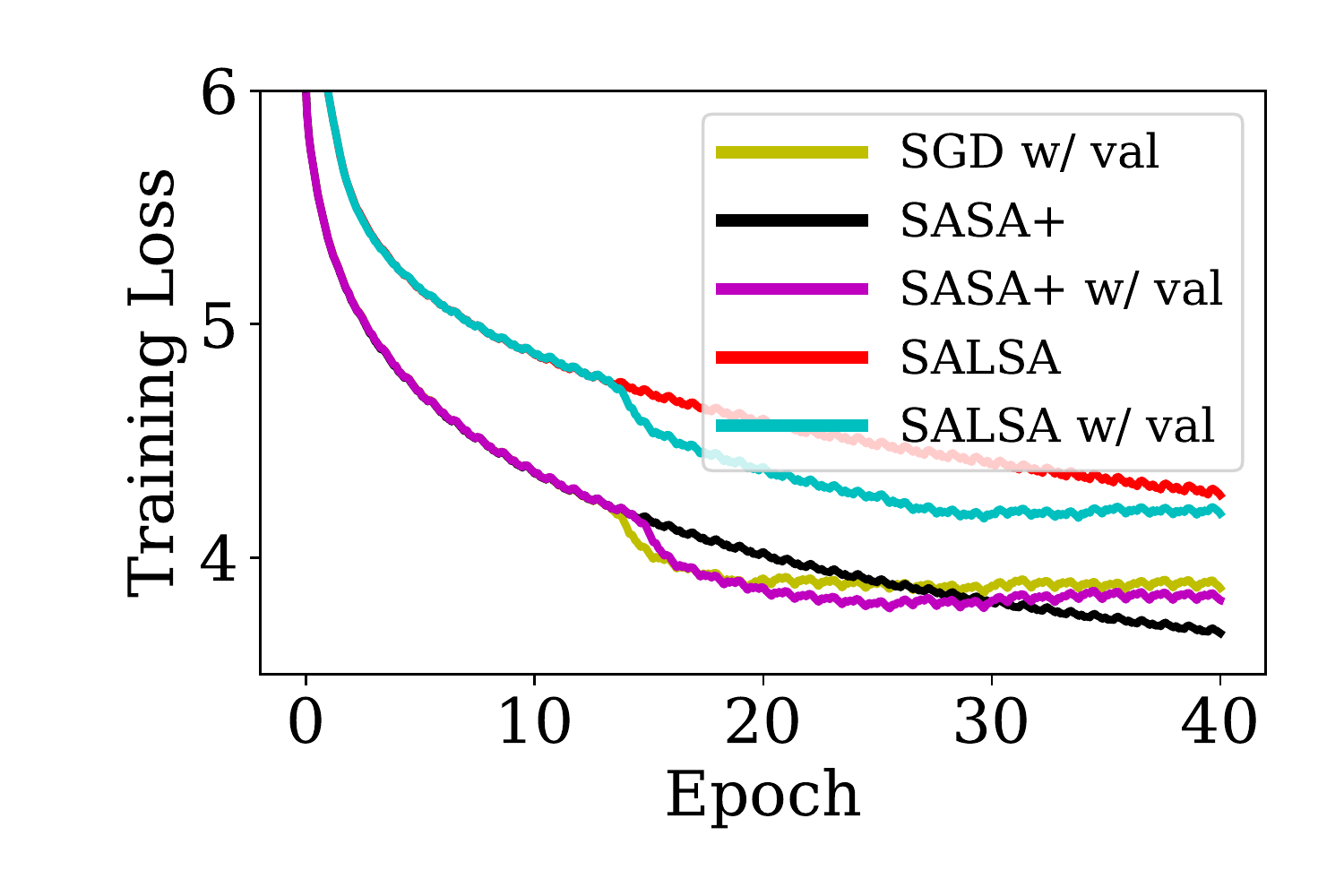}
%	\vspace{-1ex}
	\\
	\includegraphics[width=0.243\textwidth]{cifar_final_com_testacc.pdf}
	\includegraphics[width=0.243\textwidth]{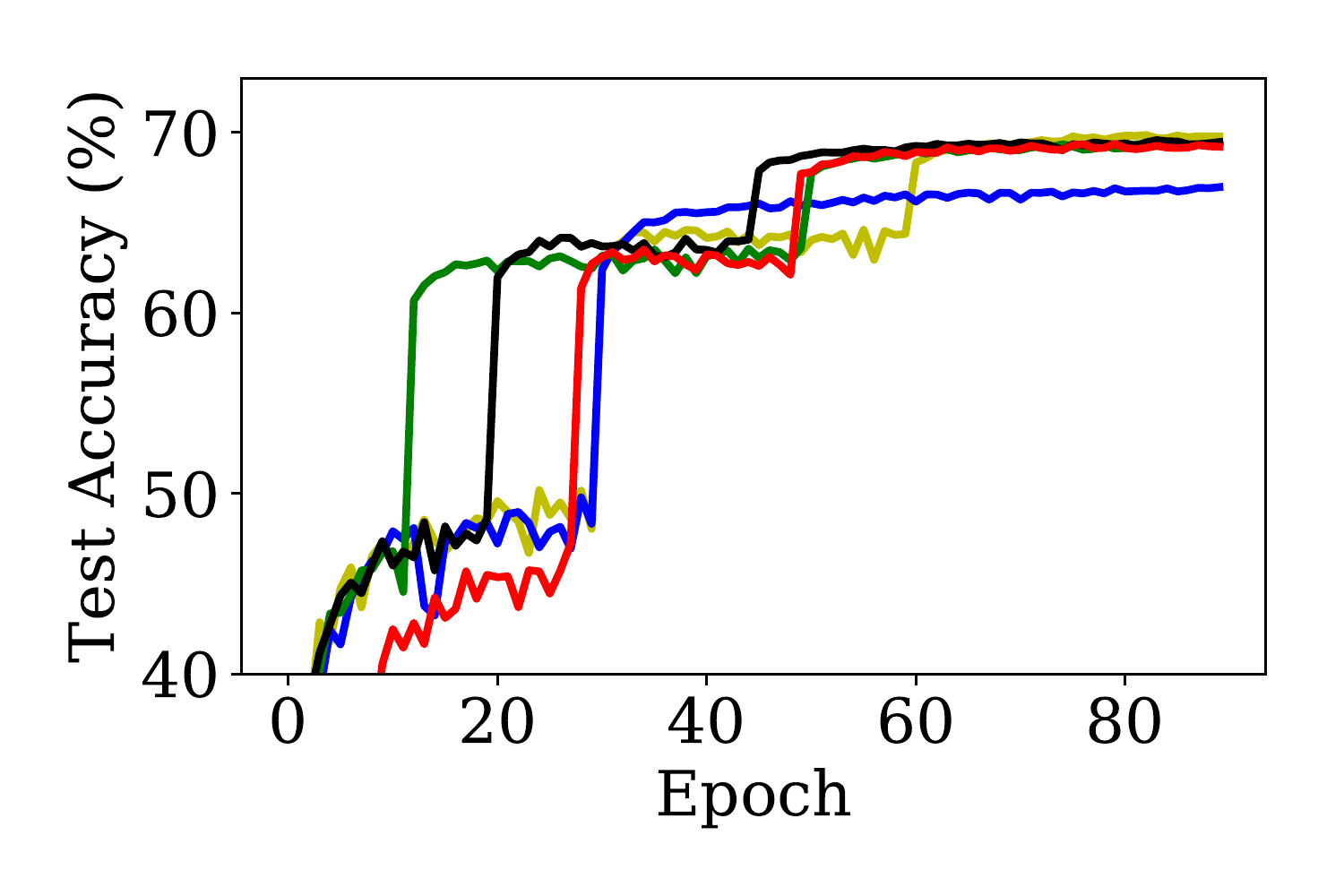}
	\includegraphics[width=0.243\textwidth]{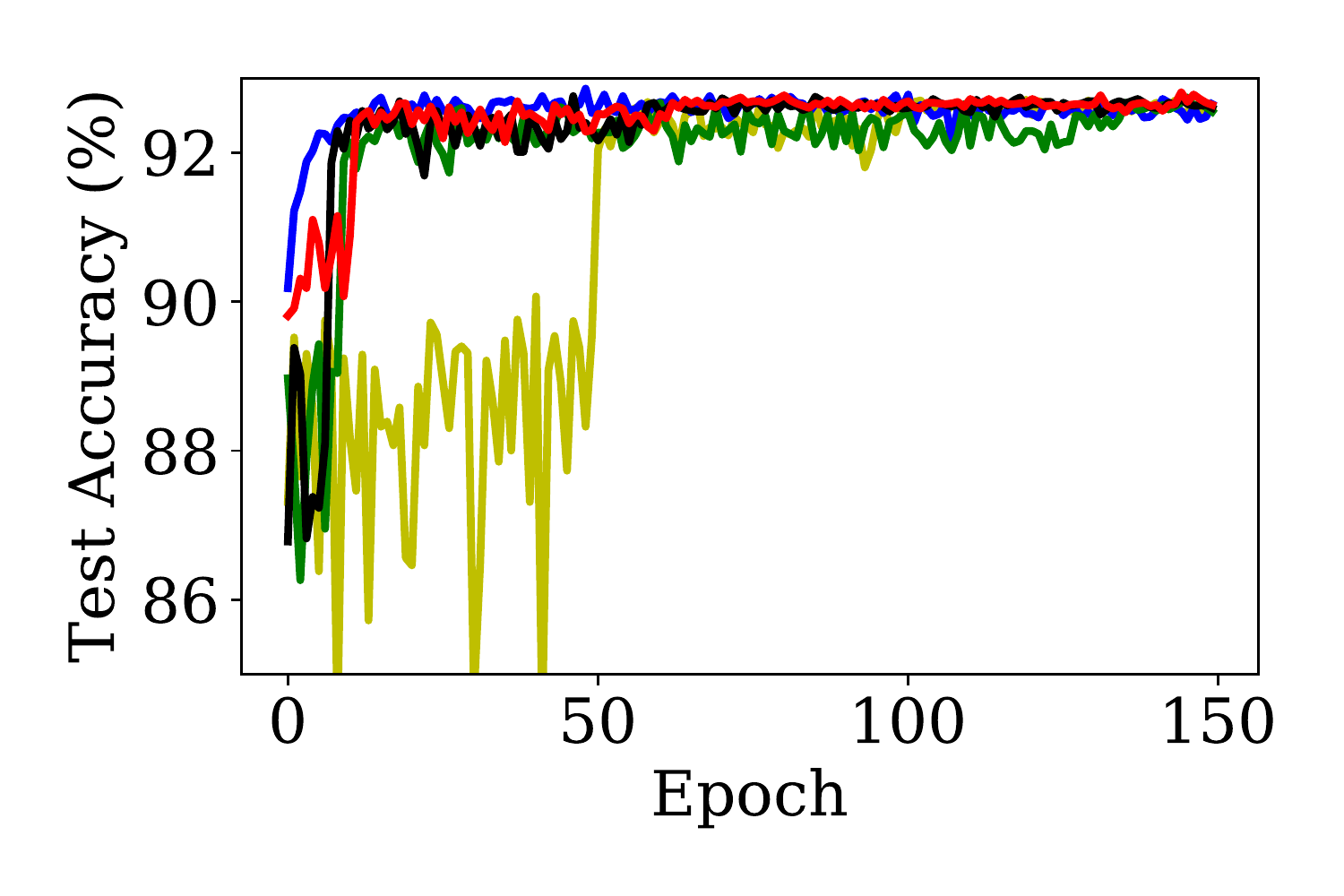}
	\includegraphics[width=0.243\textwidth]{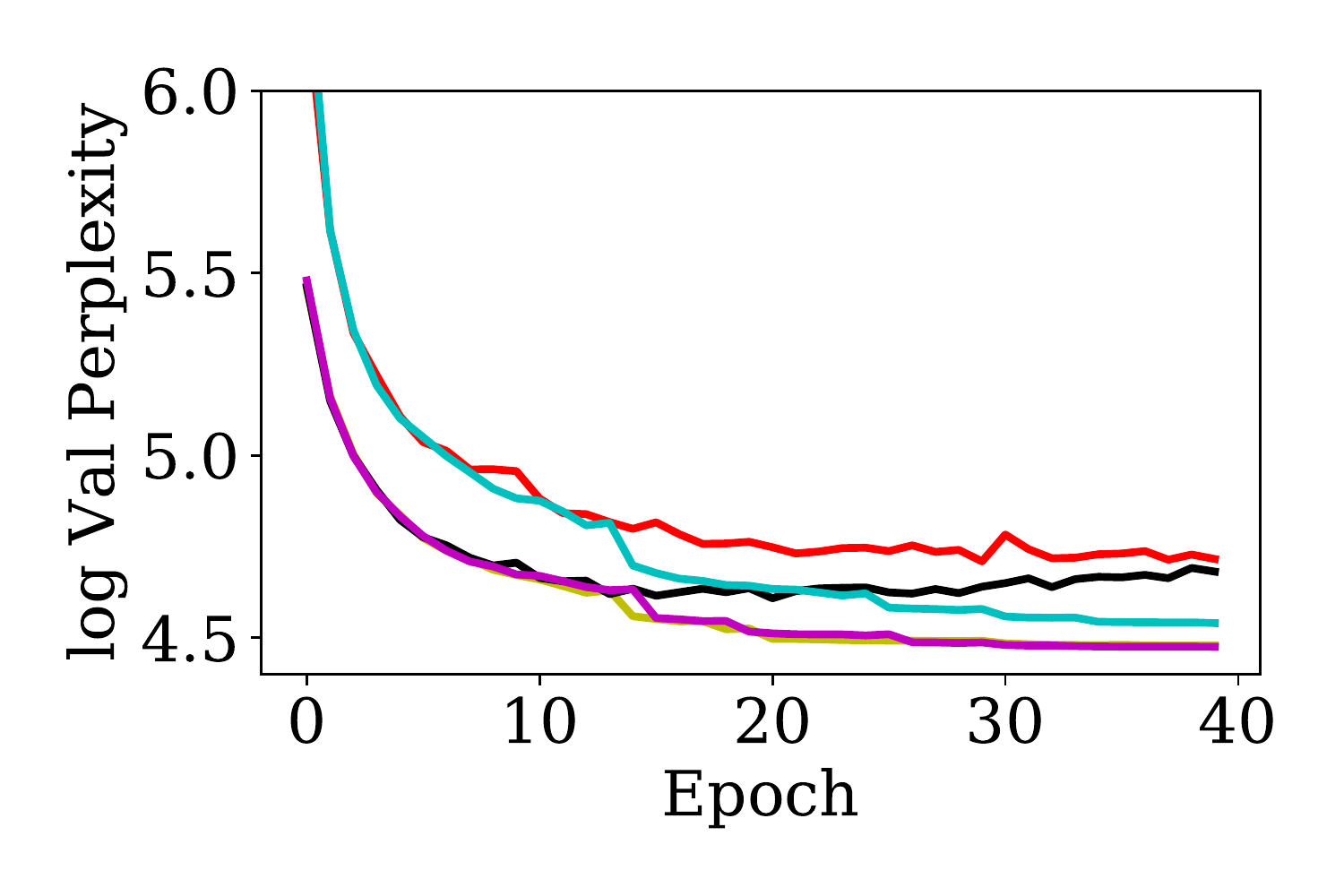}
%	\vspace{-1ex}
	\\
	\includegraphics[width=0.243\textwidth]{cifar_final_com_lrs.pdf}
	\includegraphics[width=0.243\textwidth]{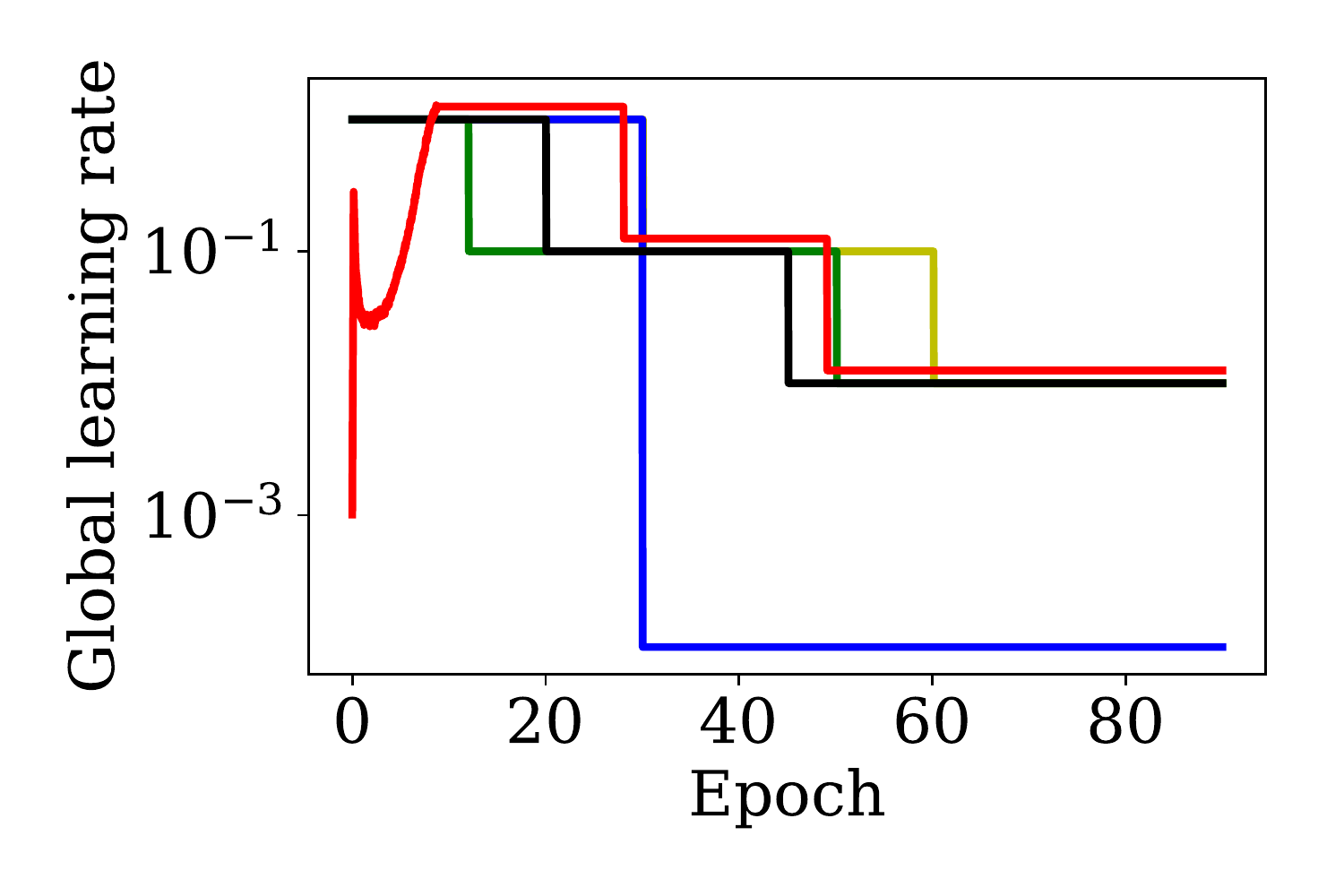}
	\includegraphics[width=0.243\textwidth]{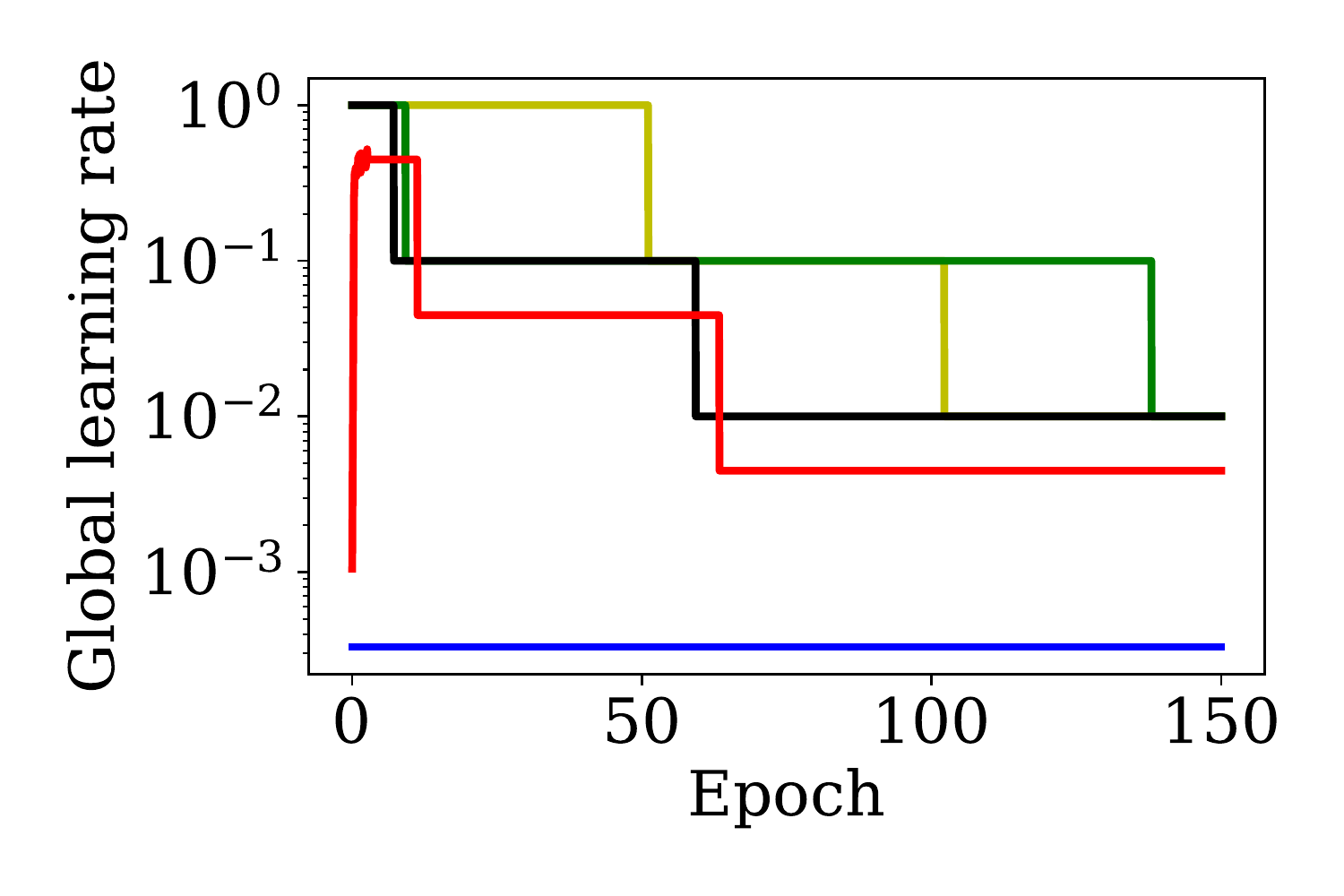}
	\includegraphics[width=0.243\textwidth]{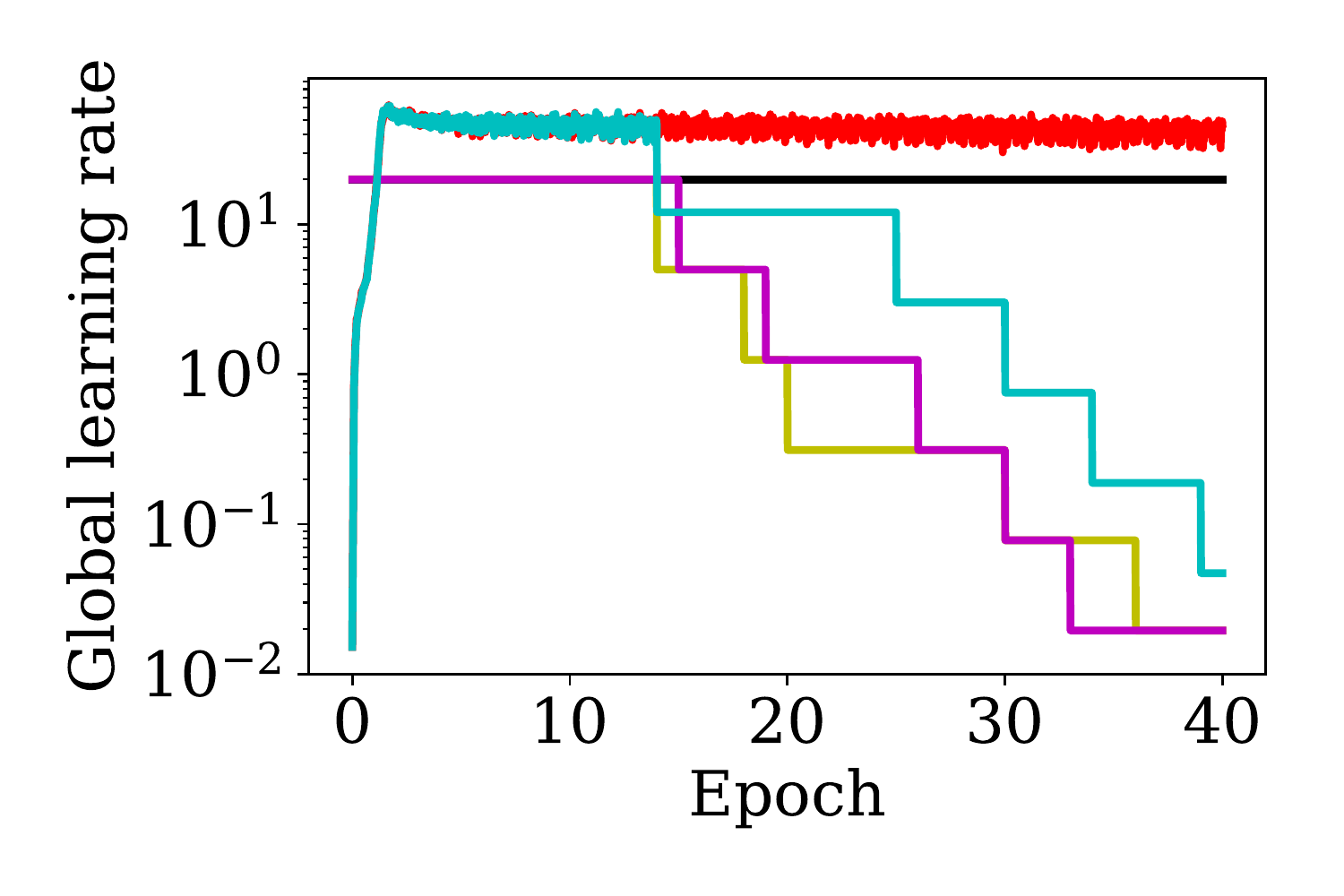}\\[-1.5ex]
% 	\centerline{\small (a) \hspace{4cm} (b) \hspace{4cm} (c) \hspace{4cm} (d)}
% 	\vspace{-1ex}
	\caption{Comparison of different optimizers on CIFAR-10 with ResNet18 (first column), ImageNet with ResNet18 (second column), MNIST with a linear model (third column) and Wikitext-2 with LSTM (fourth column). SALSA starts from an small, arbitrary learning rate while other methods start with a hand-tuned initial learning rate. In Column (d), ``w/ val'' means decreasing the learning rate when the validation loss stops improving (or the statistical test fires, whichever comes first). Best viewed in color.}
	\label{fig:cifar_results}
%	\vspace{-2ex}
\end{figure}
% ]

% \subsection{Experiments}

\textbf{SALSA Experiments.} In Figure~\ref{fig:cifar_results}, we evaluate the performance of SALSA with experiments on four popular datasets. We compare SALSA to the following baselines:
SHB with a \emph{hand-tuned} constant-and-cut learning rate schedule (SHB-hand), Adam~\cite{KingmaBa2014adam} with a tuned warmup phase (Adam w/ warmup) \citep[e.g.,][]{wilson2017marginal}, SASA from \citet{LangZhangXiao2019}, and our SASA+. SALSA starts from a small, arbitrary learning rate while all other methods start with a hand-tuned initial learning rate. 
We showed in Figure~\ref{fig:salsa_cifar10_imagenet} that for both CIFAR-10 and ImageNet, SALSA can start with an arbitrary small initial learning rate and obtain training and testing performance that are on par with that of a hand-tuned optimizer. 
Figure~\ref{fig:salsa_mnist_rnn} in Appendix~\ref{apd:salsaresults} shows that this is also true for the MNIST and Wikitext-2 datasets. 
Unless otherwise stated, we use the \emph{default values} in Table~\ref{tab:sasa-params} for hyperparameters in SASA+ and SALSA.

\textbf{CIFAR-10.}
With random cropping and random horizontal flipping for data augmentation, one can train a modified ResNet18 model (with weight decay $0.0005$) using a hand-tuned optimizer (SHB-hand)
to achieve testing accuracy of 95\% \citep{pytorchcifar}.
%which is higher than the standard ResNet18 result on CIFAR-10 \citep{krizhevsky2009learning}. 
Using a tuned $\alpha_0 = 1e^{-4}$ (by grid search), Adam is only able to reach around 92.5\% for the same model. With an additional ``warmup'' phase of 50 epochs, ``Adam w/ warmup'' can achieve 94.5\% test accuracy.
In contrast, both SASA+ and SALSA are able to reach test accuracy similar to SHB-hand. While other methods starts from a hand-tuned initial learning rate of $1.0$, SALSA starts from a small initial learning rate of~$0.01$.

\textbf{ImageNet.}
On the large scale ImageNet dataset \citep{imagenet_cvpr09}, we use the ResNet18 architecture, random cropping and random horizontal flipping for data augmentation, and weight decay $0.0001$. 
Column (b) in Figure \ref{fig:cifar_results} compares the performance of the different optimizers.
Even with a hand-tuned $\alpha_0 = 1e^{-4}$ and allowed to have a long warmup phase (30 epochs), ``Adam w/ warmup'' fails to match the testing accuracy of the hand-tuned SHB. On the other hand, both SASA+ and SALSA are able to match the best performance.

\textbf{MNIST.}
We train a linear model (logistic regression) on the MNIST dataset. Column (c) in Figure \ref{fig:cifar_results} shows that, for this simple convex optimization problem, all methods finally achieve similar performance.

\textbf{Wikitext-2.}
We train the PyTorch word-level language model example
\citep{pytorchmodel} on the Wikitext-2 dataset
\citep{merity2016pointer}. We use 1500-dimensional embeddings, 1500 hidden units, tied weights, and dropout 0.65, and also gradient clipping with threshold 0.25. We compare against SGD with a learning rate tuned using a validation set. This baseline starts with a hand-tuned initial learning rate of $\alpha_0=20$ and drops the learning rate by a factor of 4 when the validation loss stops improving. Column (d) of Figure~\ref{fig:cifar_results} shows that SASA \citep{LangZhangXiao2019}, SASA+ and SALSA all overfit to the training loss in this case. 
%To fix the reliance of these statistical methods (which cannot detect overfitting) on the training dynamics, we can combine them with the validation heuristic, so we drop the learning rate if the statistical test fires \emph{or} the validation loss stops decreasing, whichever comes first. 
We tried to combine them with the validation heuristic, so we drop the learning rate if the statistical test fires \emph{or} the validation loss stops decreasing (yet another statistical test!), whichever comes first. 
This combination (shown as ``SASA+ w/val'' and ``SALSA w/val'') largely closes the gap in testing performance between SALSA and ``SGD w/val.''

%Column (d) of Figure~\ref{fig:cifar_results} shows that \emph{with using the validation set and without knowing the oracle maximal learning rate}, SALSA is competitive with the baseline in terms of the final perplexity score on the test dataset. Compared with hand-tuned maximal learning rate 20, the large stationary learning rate (around 40) found by SSLS results in a small gap between SGD and SALSA.

%Notice that neither SASA+ and SASA decay the learning rate without the validation set! The top figure in Column (d) of Figure~\ref{fig:cifar_results} shows that their training loss (black and red lines) are rapidly decreasing and the training dynamics are far away from stationary, while the middle figure in Column (d) suggests that the model is already overfitting the data. The original SASA also suffers from this overfitting and never decreases the learning rate, see more results in Appendix~\ref{apd:sasaplusresults}. It is impossible to avoid this overfiting by purely examing the training dynamics, as all SASA, SASA+ and SALSA currently do. However, it is possible to do statistical test on the validation set to schedule the learning rate more robustly, compared with the simple rule of decaying the learning rate when the validation loss stops improving.

\section{Conclusions}
\label{sec:conclusion}

We presented SASA+, a simpler, yet more powerful variant of the 
statistical adaptive stochastic approximation (SASA) method
proposed by \citet{LangZhangXiao2019}.
SASA+ uses a single condition for (non-)stationarity that works 
without modification for a broad family of stochastic optimization methods.
This greatly simplifies its implementation and deployment in software packages. 
%The statistical test used for this general condition is simpler and more rigorous than the one used in SASA.
While SASA+ focuses on how to automatically reduce the learning rate 
to obtain better asymptotic convergence, 
we also propose a smoothed stochastic line-search (SSLS) method to
warm up the optimization process, thus removing the burden of 
expensive trial and error for setting a good initial learning rate.
The combined algorithm, SALSA, is highly autonomous and robust to 
different models and datasets.
%changes in its hyperparameters. 
Using the same default settings, SALSA obtained
state-of-the-art performance on several common deep learning models 
that is competitive with the best hand-tuned optimizers. 

In general, we believe that statistical tests are powerful tools that should be exploited further in stochastic optimization, especially for making the training of large-scale machine learning models more autonomous and reliable.

%XXX Should say this in the conclusion: Heuristics sometimes are unavoidable in pushing automatic training of machine learning models, but many can be distilled and captured by powerful statistical tools. 

\iffalse
% This should appear at the end of a talk! Not in the paper!
\begin{quote}
    The speed of convergence questions are closely related to on-line rules
    for determining step coefficients in SA algorithms. In the authors'
    opinion, the use of statistical tests, similar to those described in
    Section~IV, is a promising direction of further research. 
    Of interest is especially the choice of gains near the equilibrium.
    The known results on asymptotic behavior of SA algorithms with constant
    gains (see \dots) could be very useful for constructing appropriate rules.
\end{quote}
\fi

\bibliographystyle{plainnat}
\bibliography{salsa}

\begin{thebibliography}{49}
\providecommand{\natexlab}[1]{#1}
\providecommand{\url}[1]{\texttt{#1}}
\expandafter\ifx\csname urlstyle\endcsname\relax
  \providecommand{\doi}[1]{doi: #1}\else
  \providecommand{\doi}{doi: \begingroup \urlstyle{rm}\Url}\fi

\bibitem[Baydin et~al.(2018)Baydin, Cornish, Rubio, Schmidt, and
  Wood]{Baydin18hypergradient}
Atilim~G{\"u}nes Baydin, Robert Cornish, David~Mart{\'i}nez Rubio, Mark
  Schmidt, and Frank Wood.
\newblock Online learning rate adaptation with hypergradient descent.
\newblock In \emph{Proceedings of the Sixth International Conference on
  Learning Representations (ICLR)}, Vancouver, Canada, 2018.

\bibitem[Bottou(1998)]{Bottou98}
L\'{e}on Bottou.
\newblock Online algorithms and stochastic approximations.
\newblock In David Saad, editor, \emph{Online Learning and Neural Networks}.
  Cambridge University Press, Cambridge, UK, 1998.
\newblock URL \url{http://leon.bottou.org/papers/bottou-98x}.
\newblock Revised, Oct 2012.

\bibitem[Delyon and Juditsky(1993)]{DelyonJuditsky93}
B.~Delyon and A.~Juditsky.
\newblock Accelerated stochastic approximation.
\newblock \emph{SIAM Journal on Optimization}, 3\penalty0 (4):\penalty0
  868--881, 1993.

\bibitem[Deng et~al.(2009)Deng, Dong, Socher, Li, Li, and
  Fei-Fei]{imagenet_cvpr09}
J.~Deng, W.~Dong, R.~Socher, L.-J. Li, K.~Li, and L.~Fei-Fei.
\newblock {ImageNet: A Large-Scale Hierarchical Image Database}.
\newblock In \emph{CVPR09}, 2009.

\bibitem[Duchi et~al.(2011)Duchi, Hazan, and
  Singer]{DuchiHazanSinger2011adagrad}
John Duchi, Elad Hazan, and Yoram Singer.
\newblock Adaptive subgradient methods for online learning and stochastic
  optimization.
\newblock \emph{Journal of Machine Learning Research}, 12\penalty0
  (Jul):\penalty0 2121--2159, 2011.

\bibitem[Flegal and Jones(2010)]{flegal2010batch}
James~M Flegal and Galin~L Jones.
\newblock Batch means and spectral variance estimators in markov chain monte
  carlo.
\newblock \emph{The Annals of Statistics}, 38\penalty0 (2):\penalty0
  1034--1070, 2010.

\bibitem[Gitman et~al.(2019)Gitman, Lang, Zhang, and
  Xiao]{GitmanLangZhangXiao2019}
Igor Gitman, Hunter Lang, Pengchuan Zhang, and Lin Xiao.
\newblock Understanding the role momentum in stochastic gradient methods.
\newblock In \emph{Advances in Neural Information Processing Systems},
  volume~32, 2019.

\bibitem[Goodfellow et~al.(2016)Goodfellow, Bengio, and
  Courville]{GoodfellowBengioCourville16book}
Ian Goodfellow, Yoshua Bengio, and Aaron Courville.
\newblock \emph{Deep Learning}.
\newblock MIT Press, 2016.
\newblock \url{http://www.deeplearningbook.org}.

\bibitem[Grimmett and Stirzaker(2001)]{GrimmettStirzaker2001}
Geoffrey Grimmett and David Stirzaker.
\newblock \emph{Probability and Random Processes}.
\newblock Oxford University Press, 3rd edition, 2001.

\bibitem[Gupal and Bazhenov(1972)]{GupalBazhenov72}
A.~M. Gupal and L.~T. Bazhenov.
\newblock A stochastic analog of the conjugate gradient method.
\newblock \emph{Cybernetics}, 8\penalty0 (1):\penalty0 138--140, 1972.

\bibitem[He et~al.(2016)He, Zhang, Ren, and Sun]{HeZhangRenSun2016ResNet}
Kaiming He, Xiangyu Zhang, Shaoqing Ren, and Jian Sun.
\newblock Deep residual networks for image recognition.
\newblock In \emph{Proceedgins of the 29th IEEE Conference on Computer Vision
  and Pattern Recognition (CVPR)}, pages 770--778, 2016.

\bibitem[Jacobs(1988)]{Jacobs88DBD}
R.~A. Jacobs.
\newblock Increased rates of convergence through learning rate adaption.
\newblock \emph{Neural Networks}, 1:\penalty0 295--307, 1988.

\bibitem[Jain et~al.(2018)Jain, Kakade, Kidambi, Netrapalli, and
  Sidford]{Jain2018accelerating}
Prateek Jain, Sham~M Kakade, Rahul Kidambi, Praneeth Netrapalli, and Aaron
  Sidford.
\newblock Accelerating stochastic gradient descent for least squares
  regression.
\newblock In \emph{Conference On Learning Theory}, pages 545--604, 2018.

\bibitem[Jones et~al.(2006)Jones, Haran, Caffo, and Neath]{jones2006fixed}
Galin~L Jones, Murali Haran, Brian~S Caffo, and Ronald Neath.
\newblock Fixed-width output analysis for markov chain monte carlo.
\newblock \emph{Journal of the American Statistical Association}, 101\penalty0
  (476):\penalty0 1537--1547, 2006.

\bibitem[Kesten(1958)]{Kesten58}
Harry Kesten.
\newblock Accelerated stochastic approximation.
\newblock \emph{Annals of Mathematical Statistics}, 29\penalty0 (1):\penalty0
  41--59, 1958.

\bibitem[Kidambi et~al.(2018)Kidambi, Netrapalli, Jain, and
  Kakade]{kidambi2018insufficiency}
Rahul Kidambi, Praneeth Netrapalli, Prateek Jain, and Sham Kakade.
\newblock On the insufficiency of existing momentum schemes for stochastic
  optimization.
\newblock In \emph{2018 Information Theory and Applications Workshop (ITA)},
  pages 1--9. IEEE, 2018.

\bibitem[Kingma and Ba(2014)]{KingmaBa2014adam}
Diederik~P Kingma and Jimmy Ba.
\newblock Adam: A method for stochastic optimization.
\newblock \emph{arXiv preprint arXiv:1412.6980}, 2014.

\bibitem[Krizhevsky and Hinton(2009)]{krizhevsky2009learning}
Alex Krizhevsky and Geoffrey Hinton.
\newblock Learning multiple layers of features from tiny images.
\newblock Technical report, Citeseer, 2009.

\bibitem[Lang et~al.(2019)Lang, Zhang, and Xiao]{LangZhangXiao2019}
Hunter Lang, Pengchuan Zhang, and Lin Xiao.
\newblock Using statistics to automate stochastic optimization.
\newblock In \emph{Advances in Neural Information Processing Systems},
  volume~32, 2019.

\bibitem[Lehmann and Romano(2005)]{LehmannRomano2005book}
Erich~L. Lehmann and Joseph~P. Romano.
\newblock \emph{Testing Statistical Hypotheses}.
\newblock Springer, 3rd edition, 2005.

\bibitem[Liu(2019)]{pytorchcifar}
Kuang Liu.
\newblock Pytorch cifar.
\newblock Github repository,
  \url{https://github.com/kuangliu/pytorch-cifar/tree/master/models}, 2019.

\bibitem[Ma and Yarats(2019)]{ma2019qh}
Jerry Ma and Denis Yarats.
\newblock Quasi-hyperbolic momentum and adam for deep learning.
\newblock In \emph{International Conference on Learning Representations}, 2019.

\bibitem[Mahmood et~al.(2012)Mahmood, Sutton, Degris, and
  Pilarski]{MahmoodSutton12TuningFree}
Ashique~Rupam Mahmood, Richard~S. Sutton, Thomas Degris, and Patrick~M.
  Pilarski.
\newblock Tuning-free step-size adaption.
\newblock In \emph{Proceedings of the IEEE International Conference on
  Acoustics, Speech and Signal Processing (ICASSP)}, pages 2121--2124, 2012.

\bibitem[Merity et~al.(2016)Merity, Xiong, Bradbury, and
  Socher]{merity2016pointer}
Stephen Merity, Caiming Xiong, James Bradbury, and Richard Socher.
\newblock Pointer sentinel mixture models.
\newblock \emph{arXiv preprint arXiv:1609.07843}, 2016.

\bibitem[Mirzoakhmedov and Uryasev(1983)]{MirzoakhmedovUryasev83}
F.~Mirzoakhmedov and S.~P. Uryasev.
\newblock Adaptive step adjustment for a stochastic optimization algorithm.
\newblock \emph{Zh. Vychisl. Mat. Mat. Fiz.}, 23\penalty0 (6):\penalty0
  1314--1325, 1983.
\newblock [U.S.S.R. Comput. Math. Math. Phys. {23:6}, 1983].

\bibitem[Montgomery et~al.(2012)Montgomery, Peck, and
  Vining]{montgomery2012introduction}
D.C. Montgomery, E.A. Peck, and G.G. Vining.
\newblock \emph{Introduction to Linear Regression Analysis}.
\newblock Wiley Series in Probability and Statistics. Wiley, 2012.
\newblock ISBN 9780470542811.
\newblock URL \url{https://books.google.com/books?id=0yR4KUL4VDkC}.

\bibitem[Nesterov(2013)]{Nesterov2013composite}
Y.~Nesterov.
\newblock Gradient methods for minimizing composite objective function.
\newblock \emph{Mathematical Programming, Series B}, 140:\penalty0 125--161,
  2013.

\bibitem[Nesterov(2004)]{Nesterov04book}
Yurii Nesterov.
\newblock \emph{Introductory Lecture on Convex Optimization: A Basic Course}.
\newblock Kluwer Academic Publishers, 2004.

\bibitem[Nocedal and Wright(2006)]{NocedalWright2006book}
Jorge Nocedal and Stephen~J. Wright.
\newblock \emph{Numerical Optimization}.
\newblock Springer, 2nd edition, 2006.

\bibitem[Pflug(1983)]{Pflug83}
Georg~{\relax Ch}. Pflug.
\newblock On the determination of the step size in stochastic quasigradient
  methods.
\newblock Collaborative Paper CP-83-025, International Institute for Applied
  Systems Analysis (IIASA), Laxenburg, Austria, 1983.

\bibitem[Pflug(1988)]{Pflug88IFAC}
Georg~{\relax Ch}. Pflug.
\newblock Adaptive stepsize control in stochastic approximation algorithms.
\newblock In \emph{Proceedings of 8th IFAC Symposium on Identification and
  System Parameter Estimation}, pages 787--792, Beijing, 1988.

\bibitem[Polyak(1964)]{Polyak64heavyball}
Boris~T. Polyak.
\newblock Some methods of speeding up the convergence of iteration methods.
\newblock \emph{USSR Computational Mathematics and Mathematical Physics},
  4\penalty0 (5):\penalty0 1--17, 1964.

\bibitem[Polyak(1977)]{Polyak77comparison}
Boris~T. Polyak.
\newblock Comparison of the rates of convergence of one-step and multi-step
  optimization algorithms in the presence of noise.
\newblock \emph{Engineering Cybernetics}, 15:\penalty0 6--10, 1977.

\bibitem[Polyak and Juditsky(1992)]{PolyakJuditsky92acceleration}
Boris~T Polyak and Anatoli~B Juditsky.
\newblock Acceleration of stochastic approximation by averaging.
\newblock \emph{SIAM Journal on Control and Optimization}, 30\penalty0
  (4):\penalty0 838--855, 1992.

\bibitem[PyTorch(2019)]{pytorchmodel}
PyTorch.
\newblock Pytorch word language model.
\newblock
  \url{https://github.com/pytorch/examples/tree/master/word_language_model},
  2019.

\bibitem[Robbins and Monro(1951)]{RobbinsMonro51}
Herbert Robbins and Sutton Monro.
\newblock A stochastic approximation method.
\newblock \emph{The Annals of Mathematical Statistics}, 22\penalty0
  (3):\penalty0 400--407, 1951.

\bibitem[Ruszczy\'nski and Syski(1983)]{RuszczynskiSyski83stats}
Andrzej Ruszczy\'nski and Wojciech Syski.
\newblock Stochastic approximation method with gradient averaging for
  unconstrained problems.
\newblock \emph{IEEE Transactions on Automatic Control}, 28\penalty0
  (12):\penalty0 1097--1105, 1983.

\bibitem[Ruszczy\'nski and Syski(1984)]{RuszczynskiSyski84IFAC}
Andrzej Ruszczy\'nski and Wojciech Syski.
\newblock Stochastic approximation algorithm with gradient averaging and
  on-line stepsize rules.
\newblock In J.~Gertler and L.~Keviczky, editors, \emph{Proceedings of 9th IFAC
  World Congress}, pages 1023--1027, Budapest, Hungary, 1984.

\bibitem[Ruszczy\'nski and Syski(1986{\natexlab{a}})]{RuszczynskiSyski86Cesaro}
Andrzej Ruszczy\'nski and Wojciech Syski.
\newblock On convergence of the stochastic subgradient method with on-line
  stepsize rules.
\newblock \emph{Journal of Mathematical Analysis and Applications},
  114:\penalty0 512--527, 1986{\natexlab{a}}.

\bibitem[Ruszczy\'nski and
  Syski(1986{\natexlab{b}})]{RuszczynskiSyski86forcing}
Andrzej Ruszczy\'nski and Wojciech Syski.
\newblock A method of aggregate stochastic subgradients with on-line stepsize
  rules for convex stochastic programming problems.
\newblock \emph{Mathematical Programming Study}, 28:\penalty0 113--131,
  1986{\natexlab{b}}.

\bibitem[Schmidt et~al.(2017)Schmidt, Roux, and Bach]{SchmidtLeRouxBach2017}
Mark Schmidt, Nicolas~Le Roux, and Francis Bach.
\newblock Minimizing finite sums with the stochastic average gradient.
\newblock \emph{Mathematical Programming}, 162\penalty0 (1-2):\penalty0
  83--112, 2017.

\bibitem[Schraudolph(1999)]{Schraudolph1999}
Nicol~N. Schraudolph.
\newblock Local gain adaptation in stochastic gradient descent.
\newblock In \emph{Proceedings of Nineth International Conference on Artificial
  Neural Networks (ICANN)}, pages 569--574, 1999.

\bibitem[Streiner(2003)]{streiner2003unicorns}
David~L Streiner.
\newblock Unicorns do exist: A tutorial on “proving” the null hypothesis.
\newblock \emph{The Canadian Journal of Psychiatry}, 48\penalty0 (11):\penalty0
  756--761, 2003.

\bibitem[Sutskever et~al.(2013)Sutskever, Martens, Dahl, and
  Hinton]{SutskeverMartensDahlHinton13}
Ilya Sutskever, James Martens, George Dahl, and Geoffrey Hinton.
\newblock On the importance of initialization and momentum in deep learning.
\newblock In Sanjoy Dasgupta and David McAllester, editors, \emph{Proceedings
  of the 30th International Conference on Machine Learning}, volume~28 of
  \emph{Proceedings of Machine Learning Research}, pages 1139--1147, Atlanta,
  Georgia, USA, 17--19 Jun 2013. PMLR.

\bibitem[Sutton(1992)]{Sutton92IDBD}
Richard~S. Sutton.
\newblock Adapting bias by gradient descent: An incremental version of
  {Delta-Bar-Delta}.
\newblock In \emph{Proceedings of the Tenth National Conference on Artificial
  Intelligence (AAAI'92)}, pages 171--176. The MIT Press, 1992.

\bibitem[Tieleman and Hinton(2012)]{TielemanHinton2012lecture}
Tijmen Tieleman and Geoffrey Hinton.
\newblock Lecture 6.5-rmsprop: Divide the gradient by a running average of its
  recent magnitude.
\newblock \emph{COURSERA: Neural networks for machine learning}, 4\penalty0
  (2):\penalty0 26--31, 2012.

\bibitem[Vaswani et~al.(2019)Vaswani, Mishkin, Laradji, Schmidt, Gidel, and
  Lacoste-Julian]{Vaswani2019LineSearch}
Sharan Vaswani, Aaron Mishkin, Issam Laradji, Mark Schmidt, Gauthier Gidel, and
  Simon Lacoste-Julian.
\newblock Painless stochastic gradient: Interpolation, line-search, and
  convergence rates.
\newblock In \emph{Advances in Neural Information Processing Systems},
  volume~32, 2019.

\bibitem[Wilson et~al.(2017)Wilson, Roelofs, Stern, Srebro, and
  Recht]{wilson2017marginal}
Ashia~C Wilson, Rebecca Roelofs, Mitchell Stern, Nati Srebro, and Benjamin
  Recht.
\newblock The marginal value of adaptive gradient methods in machine learning.
\newblock In \emph{Advances in Neural Information Processing Systems}, pages
  4148--4158, 2017.

\bibitem[Yaida(2018)]{yaida2018fluctuation}
Sho Yaida.
\newblock Fluctuation-dissipation relations for stochastic gradient descent.
\newblock \emph{arXiv preprint arXiv:1810.00004}, 2018.

\end{thebibliography}

\clearpage

\appendix
\section{More details on statistical tests}
\subsection{From the master equation in SASA+ to that in \cite{yaida2018fluctuation} and \cite{LangZhangXiao2019}}
\begin{prop}
When the dynamics of $(\xk, \gk, \dk)$ is given by the stochastic heavy ball method, i.e., \eqref{eqn:sgm-general} and \eqref{eqn:d-shb}, the master equation in SASA+, i.e., 
\begin{equation}\label{eqn:master-condition-app}
\E_{(\xk, \gk, \dk)\sim \pi} \left[\bigl\langle x^t,d^t\bigr\rangle
-\textstyle\frac{\alpha}{2}\|d^t\|^2\right] = 0 \quad \forall t \ge k
\end{equation}
and the stationarity of $\|\dk\|^2$ together lead to the master equation in \cite{yaida2018fluctuation}, i.e.,
\begin{equation}\label{eqn:yaida-hb-test-app}
\E_{(\xk, \gk, \dk)\sim \pi} \left[\bigl\langle x^t,g^t\bigr\rangle \textstyle
-\frac{\alpha}{2}\frac{1+\beta}{1-\beta}\|d^t\|^2\right] = 0 \quad \forall t\ge k+1.
\end{equation}
Here, $(\xk, \gk, \dk)\sim \pi$ means that the dynamics of $(\xk, \gk, \dk)$ reaches its stationary distribution~$\pi$ at time~$k$.
\end{prop}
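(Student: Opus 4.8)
The plan is a direct algebraic reduction that eliminates $\gk$ in favor of $\dk$ and $\dkm$ and then invokes stationarity. Starting from the heavy-ball recursion $\dk = (1-\beta)\gk + \beta\dkm$, I would first solve for the stochastic gradient, $\gk = \frac{1}{1-\beta}\bigl(\dk - \beta\dkm\bigr)$, and substitute into the inner product appearing in the Yaida condition:
\[
\langle\xk,\gk\rangle = \frac{1}{1-\beta}\Bigl(\langle\xk,\dk\rangle - \beta\langle\xk,\dkm\rangle\Bigr).
\]
Next I would remove the mismatched term $\langle\xk,\dkm\rangle$ using the update rule $\xk = \xkm - \alpha\dkm$, which gives $\langle\xk,\dkm\rangle = \langle\xkm,\dkm\rangle - \alpha\|\dkm\|^2$. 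Combining the two identities expresses $\langle\xk,\gk\rangle$ entirely through $\langle\xk,\dk\rangle$, $\langle\xkm,\dkm\rangle$, and $\|\dkm\|^2$.

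Then I would take expectations under the stationary distribution $\pi$ and use stationarity twice. First, since the joint process $(\xk,\gk,\dk)$ is at $\pi$, the functionals $\langle\xk,\dk\rangle$ and $\langle\xkm,\dkm\rangle$ have the same expectation, and by the SASA+ master equation~\eqref{eqn:master-condition-app} this common value equals $\frac{\alpha}{2}\E_\pi[\|\dk\|^2]$. Second, by the assumed stationarity of $\|\dk\|^2$, we have $\E_\pi[\|\dkm\|^2]=\E_\pi[\|\dk\|^2]$. Substituting, the right-hand side collapses to
\[
\E_\pi[\langle\xk,\gk\rangle] = \frac{1}{1-\beta}\Bigl(\tfrac{\alpha}{2}(1-\beta) + \alpha\beta\Bigr)\E_\pi[\|\dk\|^2] = \frac{\alpha}{2}\,\frac{1+\beta}{1-\beta}\,\E_\pi[\|\dk\|^2],
\]
which is precisely~\eqref{eqn:yaida-hb-test-app} at index $k+1$. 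The statement for all $t\ge k+1$ follows by running the identical computation with $k$ replaced by $t$; the offset $+1$ is needed only so that $d^{t-1}$ is already at stationarity when the SASA+ condition is applied to it.

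There is no deep obstacle here — the argument is a short computation — so the ``hard part'' is really just careful bookkeeping. One must make sure stationarity is applied to the two consecutive copies $\langle\xk,\dk\rangle$ and $\langle\xkm,\dkm\rangle$ of the same stationary functional (not to $\langle\xk,\dkm\rangle$, which is \emph{not} of that form), and that the coefficient algebra $\frac{1}{1-\beta}\bigl(\frac{1-\beta}{2}+\beta\bigr)=\frac{1+\beta}{2(1-\beta)}$ is tracked correctly. It is also worth emphasizing in the write-up why the separate ``stationarity of $\|\dk\|^2$'' hypothesis appears: it is exactly what licenses the second use of stationarity above, and without full distributional stationarity it would be an independent necessary ingredient rather than a consequence.
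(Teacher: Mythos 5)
Your proof is correct and takes essentially the same route as the paper's: the paper expands $\E[\langle x^t,d^t\rangle]=\E[\langle x^t,(1-\beta)g^t+\beta d^{t-1}\rangle]$ and solves for $\E[\langle x^t,g^t\rangle]$, whereas you invert the recursion to write $g^t=\frac{1}{1-\beta}(d^t-\beta d^{t-1})$ first — the same linear identity read in the opposite direction, combined with the same two ingredients (the update rule to reduce $\langle x^t,d^{t-1}\rangle$ to $\langle x^{t-1},d^{t-1}\rangle-\alpha\|d^{t-1}\|^2$, and the stationarity of $\|d^t\|^2$). Your closing remarks on where stationarity is invoked and on the index offset are accurate.
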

\begin{proof}
To avoid cumbersome subscripts, we simply write $\E_{(\xk, \gk, \dk)\sim \pi} $ as $\E$ in the proof.
For any $t \ge k+1$, from SASA+'s master equation, we have
\begin{equation*}
\begin{aligned}
\E\left[\bigl\langle x^t,d^t\bigr\rangle
-\textstyle\frac{\alpha}{2}\|d^t\|^2\right] &= 0, \\
\E\left[\bigl\langle x^t,d^{t-1}\bigr\rangle
+\textstyle\frac{\alpha}{2}\|d^{t-1}\|^2\right] &= 0.
\end{aligned}
\end{equation*}
Then we have
\begin{equation*}
    \begin{aligned}
    &\frac{\alpha}{2} \E[\|d^t\|^2] = \E\left[\bigl\langle x^t,(1-\beta)g^t + \beta d^{t-1}\bigr\rangle\right] \\
    &= (1-\beta) \E\left[\bigl\langle x^t,g^t\bigr\rangle\right] + \beta \E\left[\bigl\langle x^t,d^{t-1}\bigr\rangle\right] \\
    &= (1-\beta) \E\left[\bigl\langle x^t,g^t\bigr\rangle\right] - \beta \frac{\alpha}{2}\E\left[\|d^{t-1}\|^2\right]
    \end{aligned}
\end{equation*}
Thanks to the stationarity of $\|d^t\|^2$ (i.e., $\E[\|d^t\|^2] = \E[\|d^{t+1}\|^2]$ for $t \ge k$), we obtain Equation~\eqref{eqn:yaida-hb-test-app}, i.e., the master equation in \cite{yaida2018fluctuation}.
\end{proof}

In fact, when the dynamics of $(\xk, \gk, \dk)$ is given by the stochastic heavy ball method, i.e., \eqref{eqn:sgm-general} and \eqref{eqn:d-shb}, one can verify the following identity for any $k \ge 1$:
\begin{equation}\label{eqn:shb-identity}
\begin{aligned}
    &\bigl\langle x^k,g^k\bigr\rangle \textstyle
-\frac{\alpha}{2}\frac{1+\beta}{1-\beta}\|d^k\|^2 \\
= &\frac{1}{2\alpha (1-\beta)} \big((\beta\|x^k\|^2 - \|x^{k+1}\|^2 - \alpha^2 \beta \|d^k\|^2) \\
&- (\beta\|x^{k-1}\|^2 - \|x^{k}\|^2 - \alpha^2 \beta \|d^{k-1}\|^2)\big).
\end{aligned}
\end{equation}
Therefore, the master equation in \cite{yaida2018fluctuation} is taking a test function 
\begin{equation}\label{eqn:shb-yaida-testphi}
    \phi(\{\xk, \gk, \dk\}) = \beta\|x^k\|^2 - \|x^{k+1}\|^2 - \alpha^2 \beta \|d^k\|^2.
\end{equation}
The tests in \cite{yaida2018fluctuation} and \cite{LangZhangXiao2019} are essentially testing whether $\phi(\{\xk, \gk, \dk\})$ in \eqref{eqn:shb-yaida-testphi} reaches stationarity or not.

\subsection{MCMC variance estimators}
\label{apd:batch-means}
Several estimators for the asymptotic variance of the history-average
estimator of a Markov chain have appeared in work on Markov Chain
Monte Carlo (MCMC). \citet{jones2006fixed} gives a nice example of
such results. Here we simply list two common variance estimators, and
we refer the reader to that work and \citet{LangZhangXiao2019} (from
which we borrow notation) for appropriate context and formality.

\paragraph{Batch Means (BM) variance estimator.}
Let $\bar{z}_N$ be the history average estimator with $N$ samples,
that is, given samples $\{z_i\}$ from a Markov chain, $\bar{z}_N =
\frac{1}{N}\sum_{i=i_0}^{i_0 + N} z_i$. Now form $p$ batches from the
$N$ samples, each of size $q$. Compute the ``batch means'' $\bar{z}^j
= \frac{1}{q}\sum_{i=jq}^{(j+1)q - 1}z_i$ for each batch $j$. Then
compute the batch means estimator using:
\begin{equation}
\label{eqn:bm}
\hat{\sigma}_N^2 = \frac{q}{p-1}\sum_{j=0}^{p-1}(\bar{z}^j - \bar{z}_N)^2.
\end{equation}
The estimator is just the variance of the batch means around the
history average $\bar{z}_N$. This statistic has $p-1$ degrees of
freedom.  As in \citet{jones2006fixed} and \citet{LangZhangXiao2019},
we take $p = q = \sqrt{N}$ when using this estimator.

\paragraph{Overlapping batch means variance estimator.}
The \emph{overlapping batch means} (OLBM) estimator
\citet{flegal2010batch} has better asymptotic variance than the batch
means estimator. The OLBM estimator is conceptually the same, but it
uses $N - p + 1$ overlapping batches of size $p$ (rather than disjoint
batches) and has $N - p$ degrees of freedom. It can be computed as:
\begin{equation}
\label{eqn:olbm}
\hat{\sigma}_N^2 = \frac{Np}{(N-p)(N-p+1)}\sum_{j=0}^{N-p}(\bar{z}_N - \frac{1}{p}\sum_{i=1}^{p}z_{j+i})^2.
\end{equation}
In practice, there is not much difference between using the BM or the OLBM estimators in SASA+.

\subsection{Slope test for linear regression}
\label{apd:slopetest}
% \linx{Maybe put the slope test details below to appendix, together with the formulas for BM adn OLBM.}

For the SLOPE test presented in Section~\ref{sec:stats-tests}, the formulas for 
the statistic are given by

%\begin{equation}\label{test:slope_stat}
\begin{equation}
\begin{aligned}
t_{\text{slope}} &= \hat{c}_1/\sqrt{\hat{\sigma}_f}, \\
%t_{\text{slope}} = \frac{\hat{c}_1}{\sqrt{\hat{\sigma}_f^2}}, \quad
\hat{\sigma}_f^2 &= \frac{\textstyle\sum_{i=k-N+1}^k (f_{\xi_i}(x_i) - \hat{f}_{\xi_i}(x_i))^2}{\textstyle \sum_{i=k-N+1}^k (i - \overline{i})^2}
%	t_{\text{slope}} = \frac{\hat{c}_1}{\sqrt{\sum\limits_{i=k-N+1}^k (f_{\xi_i}(x_i) - \hat{f}_{\xi_i}(x_i))^2 / \sum\limits_{i=k-N+1}^k (i - \overline{i})^2}},
\end{aligned}
\end{equation}
%\end{equation}
where $\overline{i} = \frac{1}{N}\sum_{i=k-N+1}^k i$ is the center of the regressor~$i$. 
See, for example, \citet[][Section~2.3]{montgomery2012introduction} for further details.
%Note that we can also use log-scale for the iteration, i.e., $\hat{f}_{\xi_i}(x_i) = \hat{c}_1 \log k + \hat{c}_0$. We empirically found that this linear/log scale choice does not make a difference.
\section{The statistical tests in SASA and SASA+}
\label{apd:sasavssasaplus}
As mentioned in our contributions, there are two main differences between SASA from \cite{LangZhangXiao2019} and SASA+. First, SASA+ has a much more general ``master stationary condition'' \eqref{eqn:master-condition}, so it can be applied to any stochastic optimization method with constant hyperparameters (e.g., SGD, stochastic heavy-ball, NAG, QHM, etc.), while the stationary condition in Yaida (2018) and SASA proposed in \citet{LangZhangXiao2019} (see Equation~\eqref{eqn:yaida-hb-test}) only applies to the stochastic heavy-ball method. Second, the statistical tests used in SASA and SASA+ are quite different, as we elaborate below. 

The difference starts from a conceptual change from SASA to SASA+. In
SASA, one wants to \textbf{confidently} detect stationarity. If
stationarity is detected, one decreases the learning rate, and otherwise
keeps it the same. In SASA+, one wants to \textbf{confidently} detect
non-stationarity. If non-stationarity is detected, one keeps the learning
rate the same, and otherwise decreases. This conceptual change leads to
a simpler and more rigorous statistical test in SASA+.

\subsection{Equivalence test in SASA}
\label{apd:sasatest}
To confidently detect stationarity, SASA has to set non-stationarity as null hypothesis and stationarity as the alternative hypothesis. If one confidently rejects the null (non-stationarity), then one can be confident that the process is stationarity. Instead of detecting stationarity, SASA simplifies to only detect the \textbf{necessary but not sufficient} stationarity condition \eqref{eqn:yaida-hb-test}. Formally, the test in SASA is
\begin{equation}\label{test:sasa0}
	H_0: \E[\Delta] \neq 0 \qquad \mbox{ vs. } \qquad H_1: \E[\Delta] = 0,
\end{equation}
where samples of $\Delta$, i.e., $\Delta_k \triangleq \bigl\langle \xk,\dk\bigr\rangle-\frac{\alpha}{2}\|\dk\|^2$, are collected along the training process. This kind of test is called an equivalence test in statistics, see, e.g., \cite{streiner2003unicorns}. There is no power\footnote{In statistical hypothesis testing, power is the ability to reject the null hypothesis when it is false.} in the equivalence test \eqref{test:sasa0}, i.e., one cannot confidently reject the null hypothesis and prove stationarity at all! Intuitively, even when the process is stationary, with only a finite number of (noisy) samples $\{\Delta_k\}_{k=0}^{N_1}$, the sample mean $\bar{\Delta}_N \neq 0$ (with probability one) is always more likely to be the true mean than the \textbf{singleton} $0$. In other words, one can not deny that the process is probably infinitely close to stationary but still non-stationary.

To gain power in the equivalence test, one needs to use domain knowledge to define an \emph{equivalence interval}. Formally, the true test in SASA is
\begin{equation}\label{test:sasa}
H_0: |\E[\Delta]| > \zeta\nu \qquad \mbox{ vs. }\qquad H_1: \E[\Delta] \in [-\zeta \nu, \zeta \nu],
\end{equation}
where $\zeta \nu$ is the equivalence interval. In English, instead of the usual null hypothesis of not-equal-to-zero in \eqref{test:sasa0}, now the null hypothesis is not-equal-to-zero by a margin $\zeta \nu$. In this case, when $\E[\Delta]$'s confidence interval is contained in the equivalence interval, i.e.,
\begin{equation}\label{eqn:sasatest}
	\biggl[\bar{\Delta}_N - t_{1-\delta/2}^*\frac{\hat{\sigma}_N}{\sqrt{N}}, ~\bar{\Delta}_N + t_{1-\delta/2}^*\frac{\hat{\sigma}_N}{\sqrt{N}}\biggr] \subset [-\zeta \bar{\nu}_N, \zeta \bar{\nu}_N].
\end{equation}
we are confident to reject the null hypothesis and to prove/accept $H_1$ (the stationary condition is met within an error tolerance). This is exactly the test in SASA~\cite{LangZhangXiao2019}, see its Equation (10). However, this equivalence test requires estimation of $\bar{\nu}_N$ (that estimates the magnitude of $\Delta$) and an additional hyperparameter $\zeta$ that controls the equivalence interval width. In SASA's notation, the equivalence width $\zeta$ is denoted by $\delta$.

Moreover, SASA makes the unjustified assumption that under their null
hypothesis ($H_0: |\E[\Delta]| > \zeta\nu$), the Markov central limit
theorem holds. Under their $H_0$, the process is non-stationary, while
the construction of $\E[\Delta]$'s confidence interval in
Eqn.~\eqref{eqn:sasatest} relies on the (asymptotic) stationarity of
the Markov process. Therefore, despite the empirical success of SASA,
its statistical test is intrinsically flawed because of this testing setup.

\subsection{Standard test in SASA+}
\label{apd:sasaplustest}
In SASA+, we want to \textbf{confidently} detect non-stationarity. If
non-stationarity is detected, one keeps the learning rate the same, and
otherwise one decreases it. This conceptual change naturally removes the
complication of the equivalence test and the intrinsic flaw in SASA.

To confidently detect non-stationarity, SASA+ sets stationarity as the
null hypothesis and non-stationarity as the alternative:
\begin{equation}\label{test:sasaplus}
\begin{aligned}
&H_0: \mbox{The process is staionary} \quad \mbox{ vs. } \quad H_1: \mbox{The process is not staionary}.
\end{aligned}
\end{equation}
The master stationary condition $\E[\Delta] = 0$ is a necessary
condition for stationary of the process, and thus confidently
rejecting $\E[\Delta] = 0$ is sufficient to confidently reject the
null (stationarity) hypothesis. Moreover, under this null hypothesis,
the Central Limit Theorem of Markov Processes exactly holds true
(because the process is stationary), validating the construction of
$\E[\Delta]$'s confidence interval. Therefore, the intrinsic flaw in
SASA is naturally solved in SASA+.

Now we describe the test in SASA+. When the confidence interval of $\E[\Delta]$ does not contain $0$, i.e.,
\begin{equation}\label{eqn:sasaplustest}
0 \not\in \biggl[\bar{\Delta}_N - t_{1-\delta/2}^*\frac{\hat{\sigma}_N}{\sqrt{N}}, ~\bar{\Delta}_N + t_{1-\delta/2}^*\frac{\hat{\sigma}_N}{\sqrt{N}}\biggr],
\end{equation}
then we reject the null (stationary) hypothesis and keep the learning
rate the same. Otherwise, we decrease the learning rate. Notice that
there is no additional hyperparameter $\zeta$.

\begin{figure}[t]
	\begin{center}
	\includegraphics[width=0.7\linewidth]{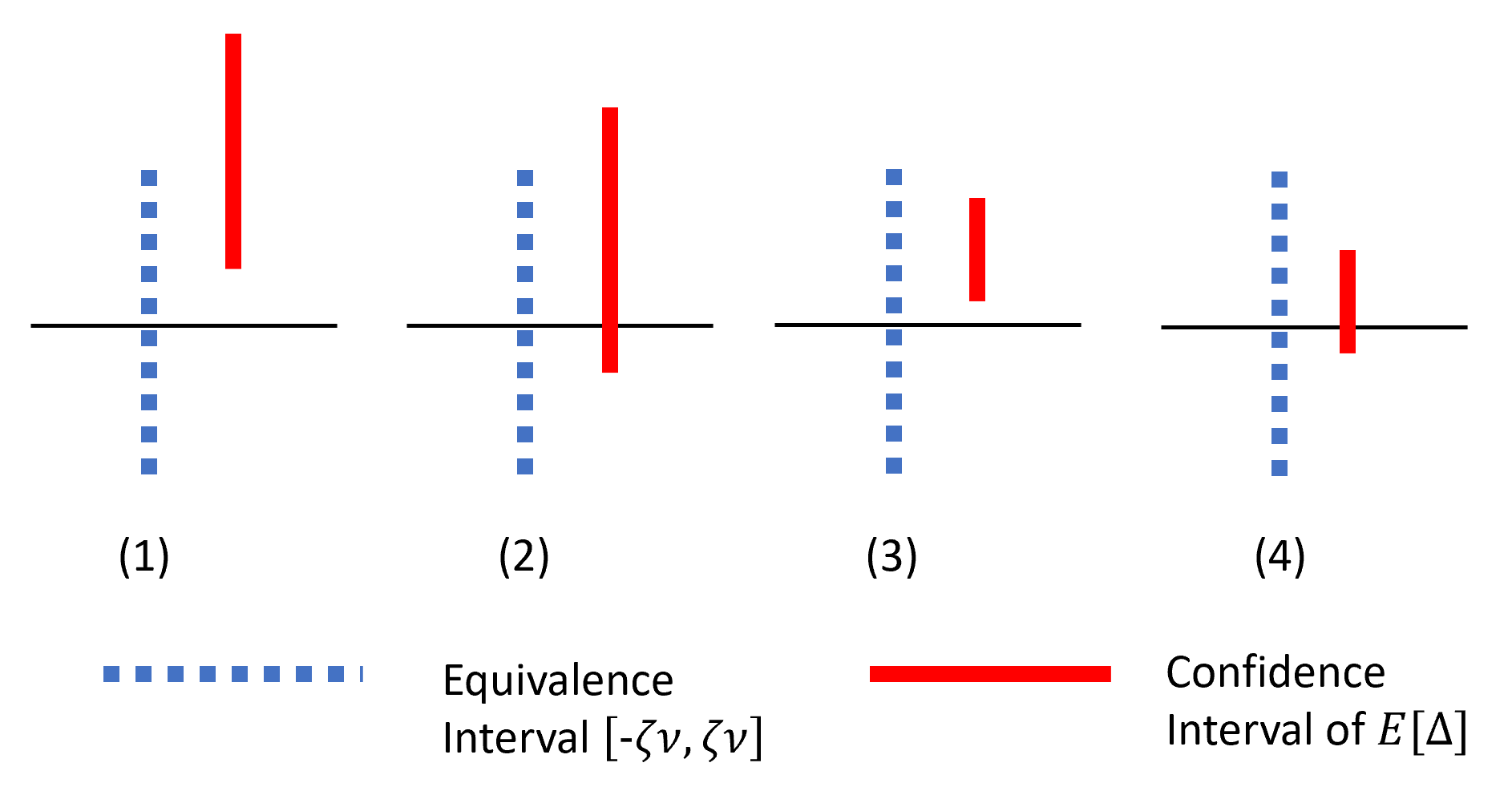}
	\end{center}
	\caption{Statistical tests in SASA and SASA+. Case (1): both SASA and SASA+ keep the learning rate. Case (4): both SASA and SASA+ decrease the learning rate. Case (2): SASA keeps the learning rate while SASA+ decreases. Case (3): SASA decreases the learning rate while SASA+ keeps.}
	\label{fig:sasavssasaplus}
\end{figure}

\subsection{The difference in practice}
\label{apd:sasavssasapluspractice}

Although SASA and SASA+ are conceptually quite different, they both use
the same confidence interval (see \eqref{eqn:sasatest} and
\eqref{eqn:sasaplustest}) for $\E[\Delta]$. In practice, their
difference is illustrated in Figure~\ref{fig:sasavssasaplus}. Their
difference lies in Case (2) and (3). In Case (3), the process has not
reached stationarity yet with high probability, according to the
confidence interval of $\E[\Delta]$ (Red). SASA+'s test is confident
to reject its (stationary) null hypothesis, so it keeps the learning
rate the same. However, SASA decreases its learning rate because it is
confident that the stationary condition holds true within its error
tolerance (equivalence interval). In this case, SASA makes an error
due to its relatively large equivalence interval. In Case (2), SASA+'s
test is not confident to reject its (stationary) null hypothesis and
so it decreases its learning rate. On the contrary, SASA's test is not
confident to claim that the stationary condition holds true within its
error tolerance and thus keeps its learning rate. In this sense, SASA+
is more aggressive than SASA in decreasing learning rate.

In numerical experiments on the CIFAR10 and ImageNet datasets, the (width
of) the equivalence interval $\zeta \nu$ is typically much smaller than
the (width of) the confidence interval of $\E[\Delta]$, see Figure 5(a),
7(a) and 9(a) in \cite{LangZhangXiao2019}. Notice that in those
figures, the yellow curve is $\nu$ instead of $\zeta \nu$, and thus
the equivalence interval is even smaller. Therefore, Case (3) happens
very rarely in those experiments, and this explains the reason why
SASA does not make obvious mistakes in decreasing the learning
rate. In practice, Case (2) sometimes happens, and thus we can see
that SASA+ seems to be slightly more aggressive at decreasing the
learning rate than SASA. For example in Figure
\ref{fig:cifar_results}, the black curve (SASA+) is faster at
decreasing the learning rate than the green curve (SASA).

\section{More experimental results}
\subsection{More results for SALSA}
\label{apd:salsaresults}
In Figure~\ref{fig:salsa_cifar10_imagenet}, we showed that for both CIFAR-10 and ImageNet, SALSA can start with a small initial learning rate and the training dynamics and final performance are the same as when it starts with a hand-tuned initial learning rate. Here, in Figure~\ref{fig:salsa_mnist_rnn}, we show that the same holds true for SALSA on MNIST and Wikitext-2 datasets.
\begin{figure*}[t]
    \includegraphics[width=0.33\linewidth]{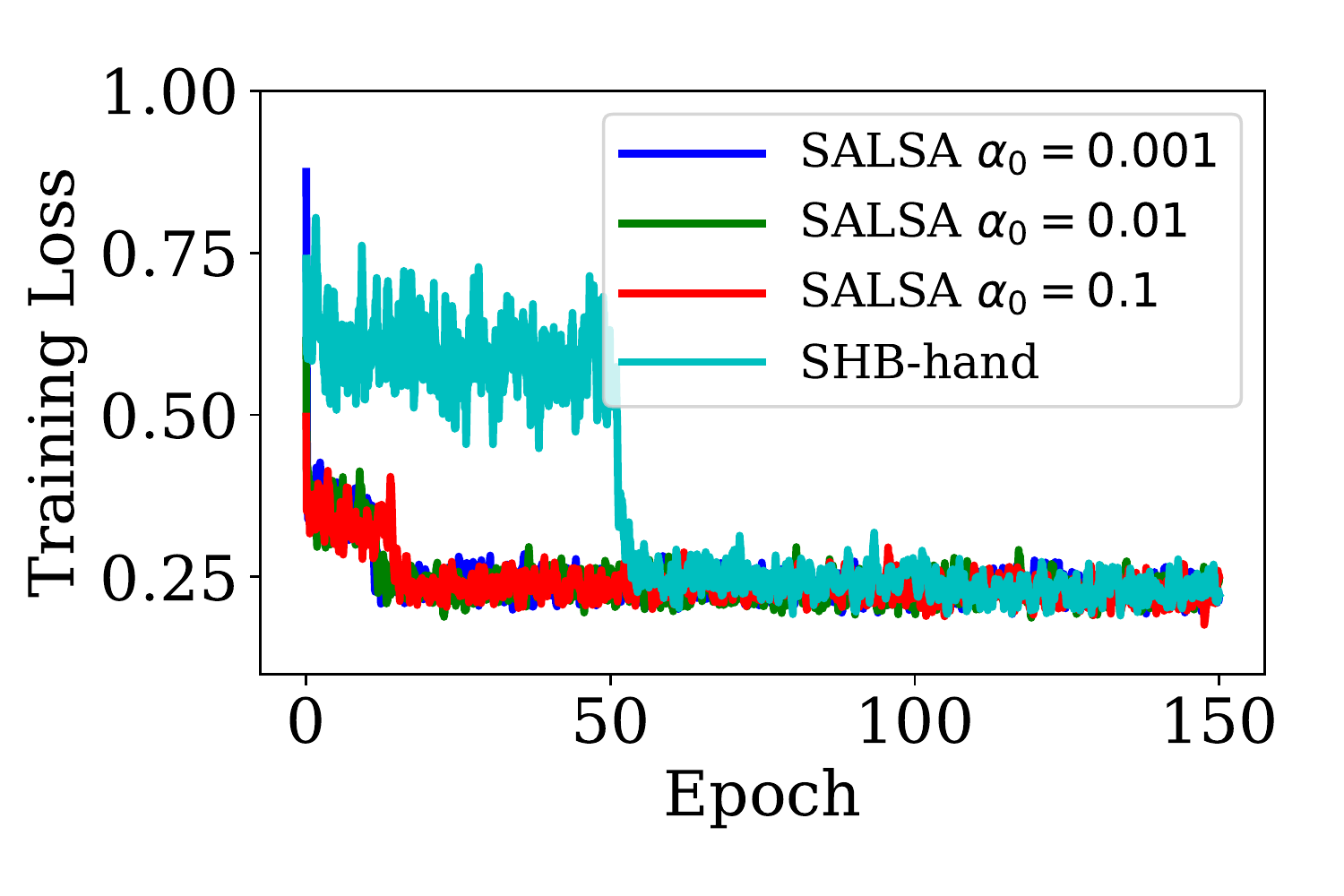}
    \includegraphics[width=0.33\linewidth]{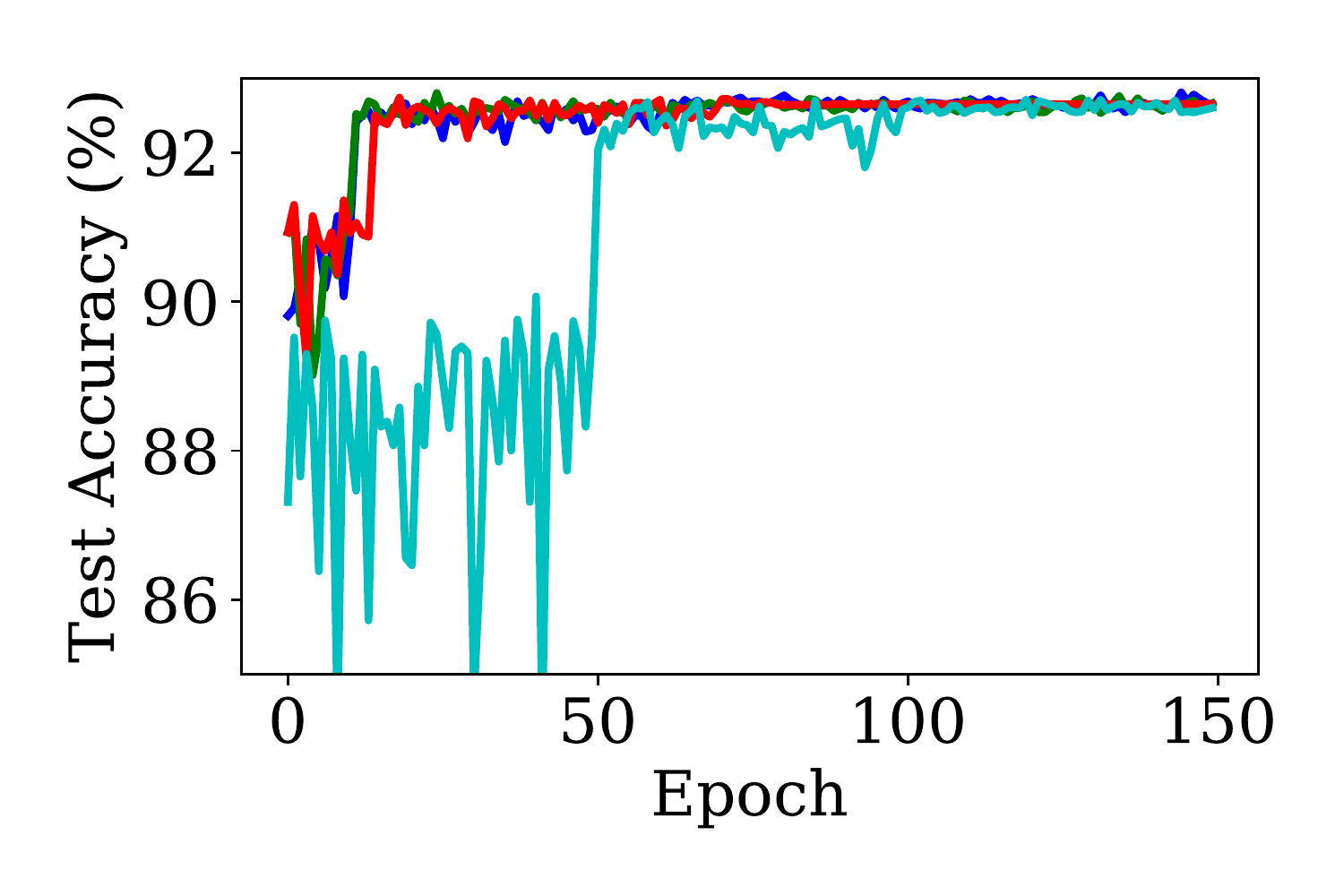}
    \includegraphics[width=0.33\linewidth]{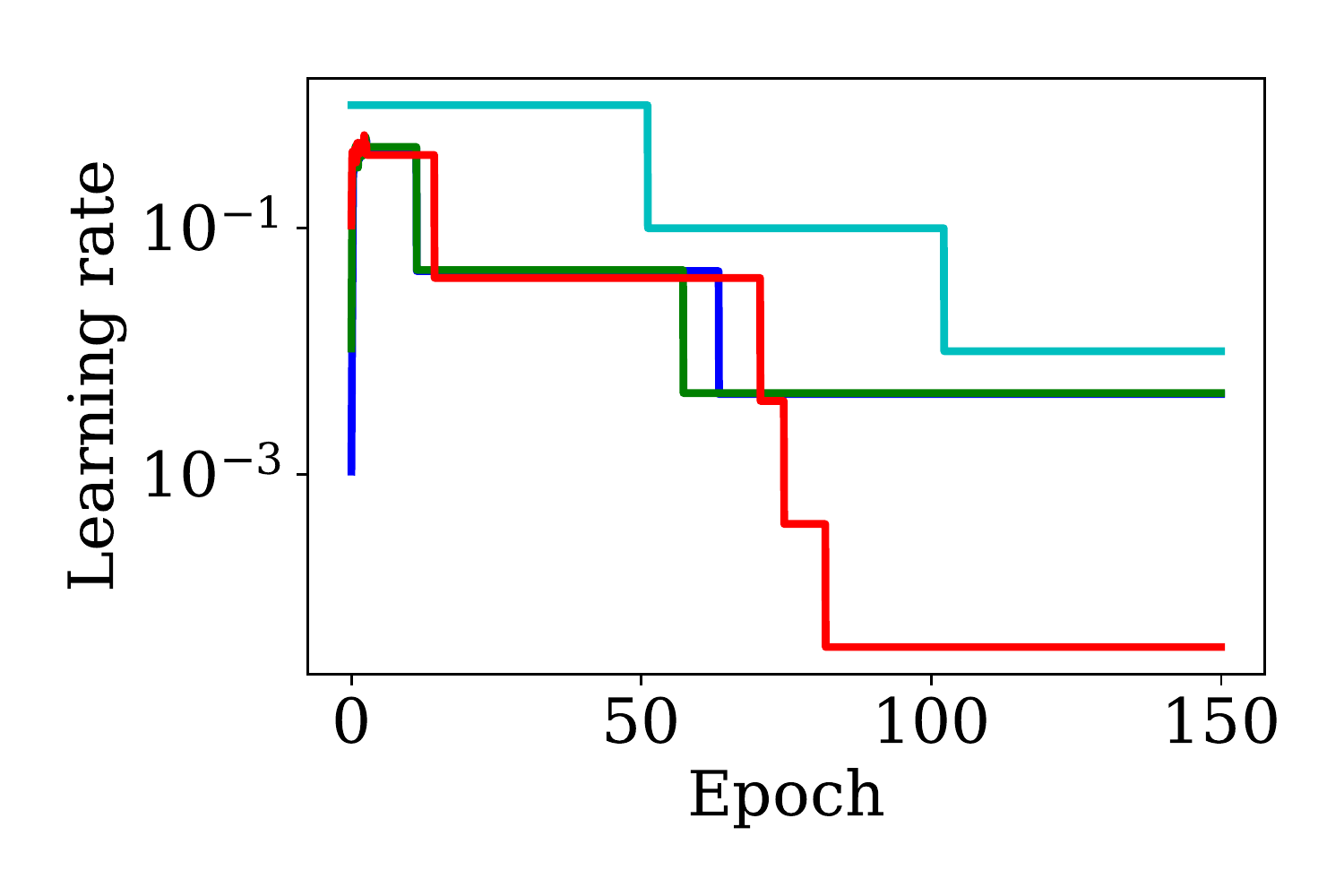}
	\\
	\includegraphics[width=0.33\linewidth]{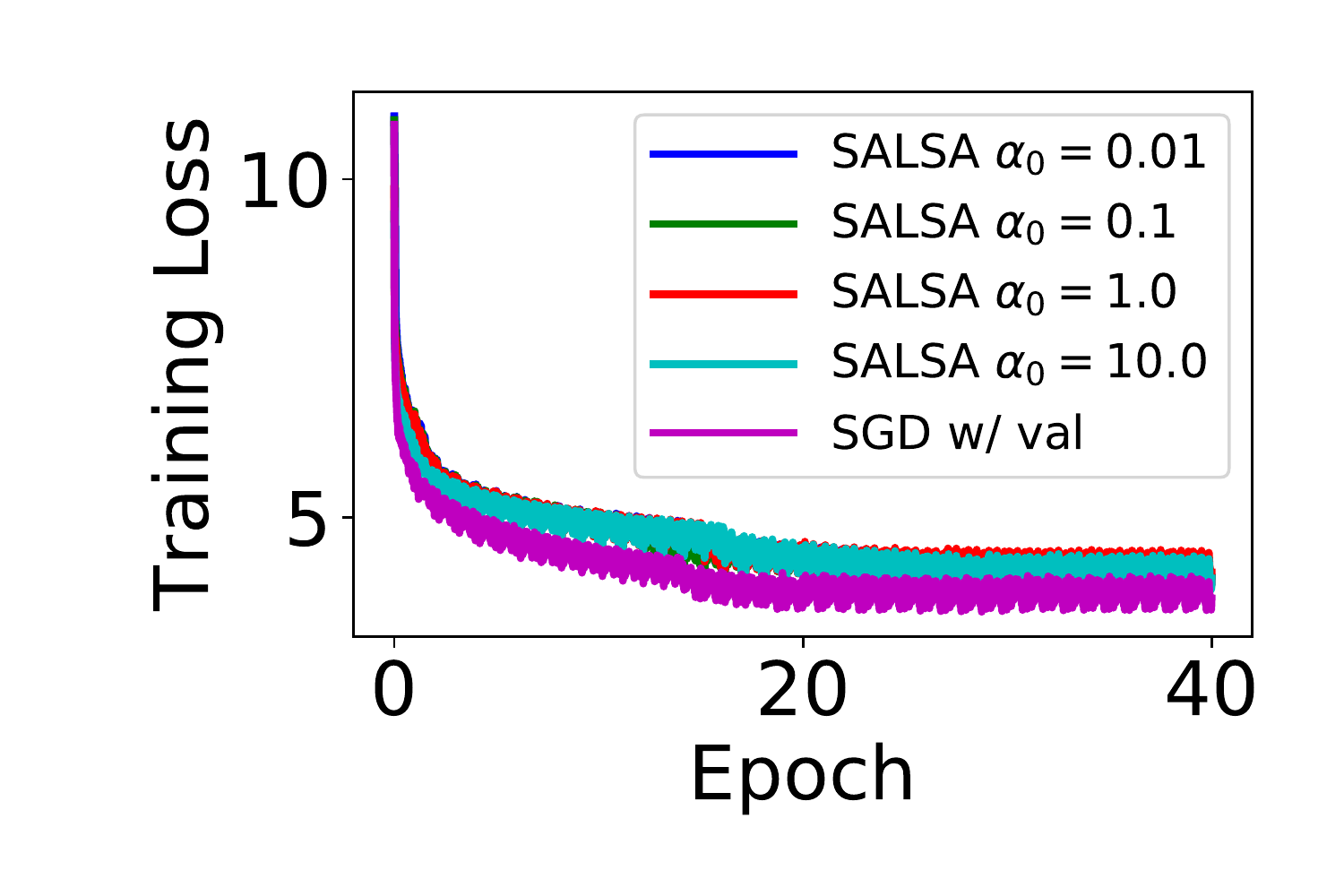}
    \includegraphics[width=0.33\linewidth]{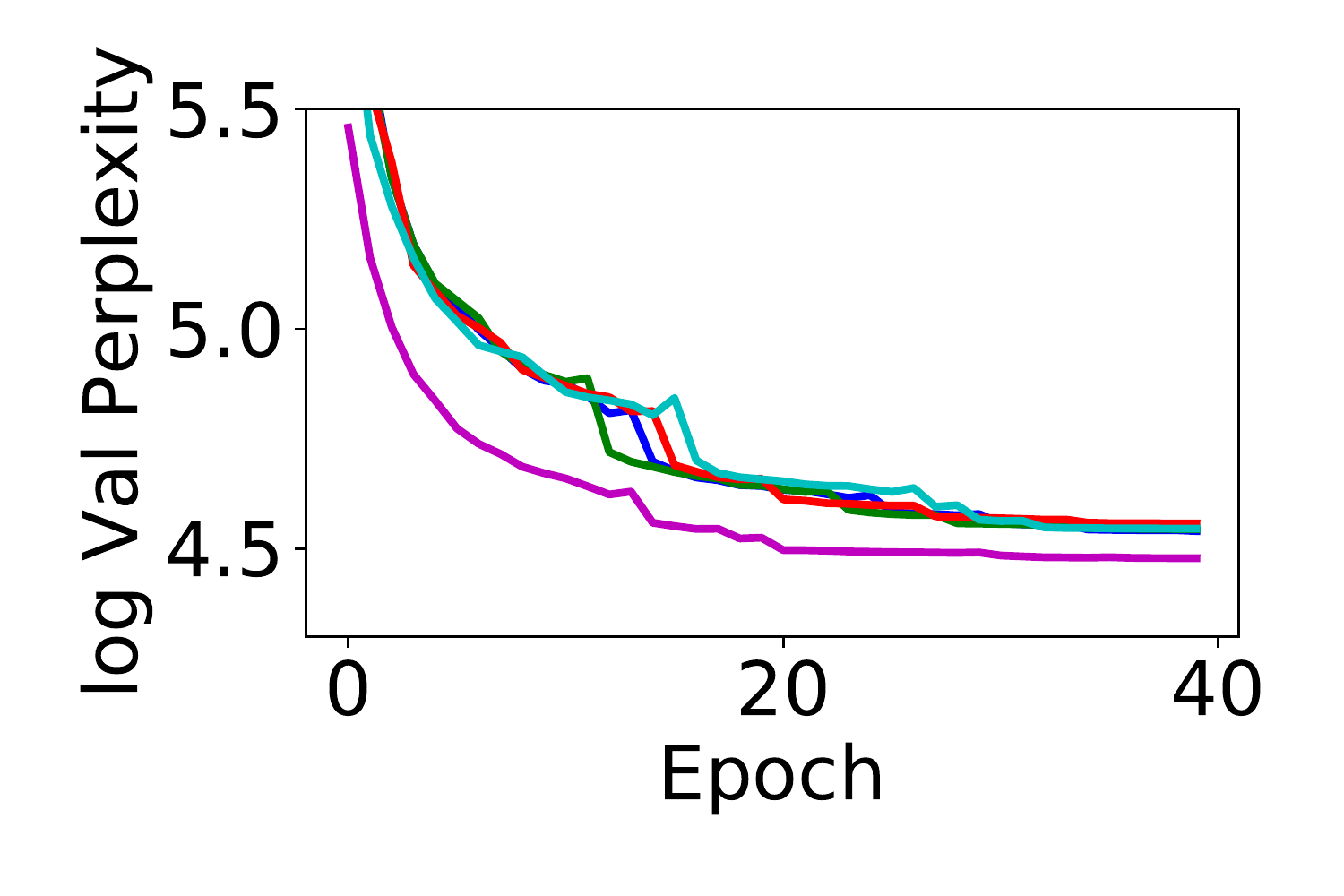}
    \includegraphics[width=0.33\linewidth]{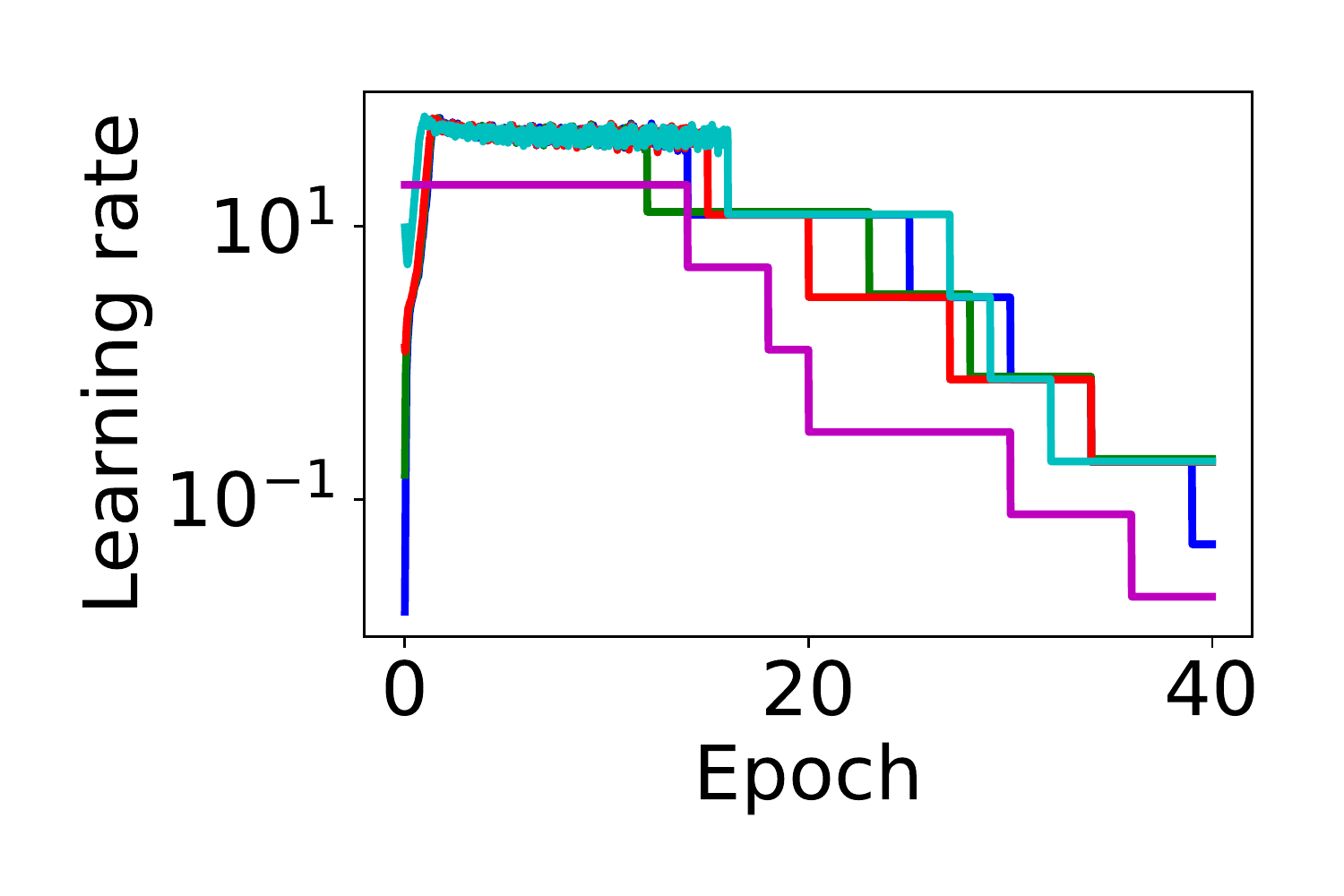}
    \vspace{-3ex}
    \caption{``SALSA'' on MNIST (top row) and ``SALSA w/ val'' Wikitext-2 (bottom row) with default parameters, starting from three different initial learning rates. The performance of SALSA is competitive with the state-of-the-art results reported in the literature.}
    \label{fig:salsa_mnist_rnn}
    \vspace{3ex}
\end{figure*}

When SALSA with the default parameter is applied to the task of LSTM on Wikitext-2 (Figure~\ref{fig:salsa_mnist_rnn}, bottom), the stable learning rate obtained by the SSLS, i.e., around 45, is larger than the hand-tuned initial learning rate 20. This larger initial learning rate results in the small gap between ``SALSA w/ val'' and ``SGD w/ val'' in terms of the final perplexity on the validation dataset. To verify this, we show ``SALSA w/ val'' with different $c$'s ($c = 0.025, 0.05, 0.1, 0.2$) in Figure~\ref{fig:rnn_salsa}. The larger the $c$ is, the smaller the stable learning rate SSLS reaches. For $c=0.1$ and $0.2$ (larger than the default $0.05$), the stable learning rates obtained by SSLS are nearly the same with the hand-tuned initial learning rate, and there is no performance gap between ``SALSA w/ val'' and ``SGD w/ val'' in terms of the final validation perplexity.

\begin{figure*}[t]
    \includegraphics[width=0.33\linewidth]{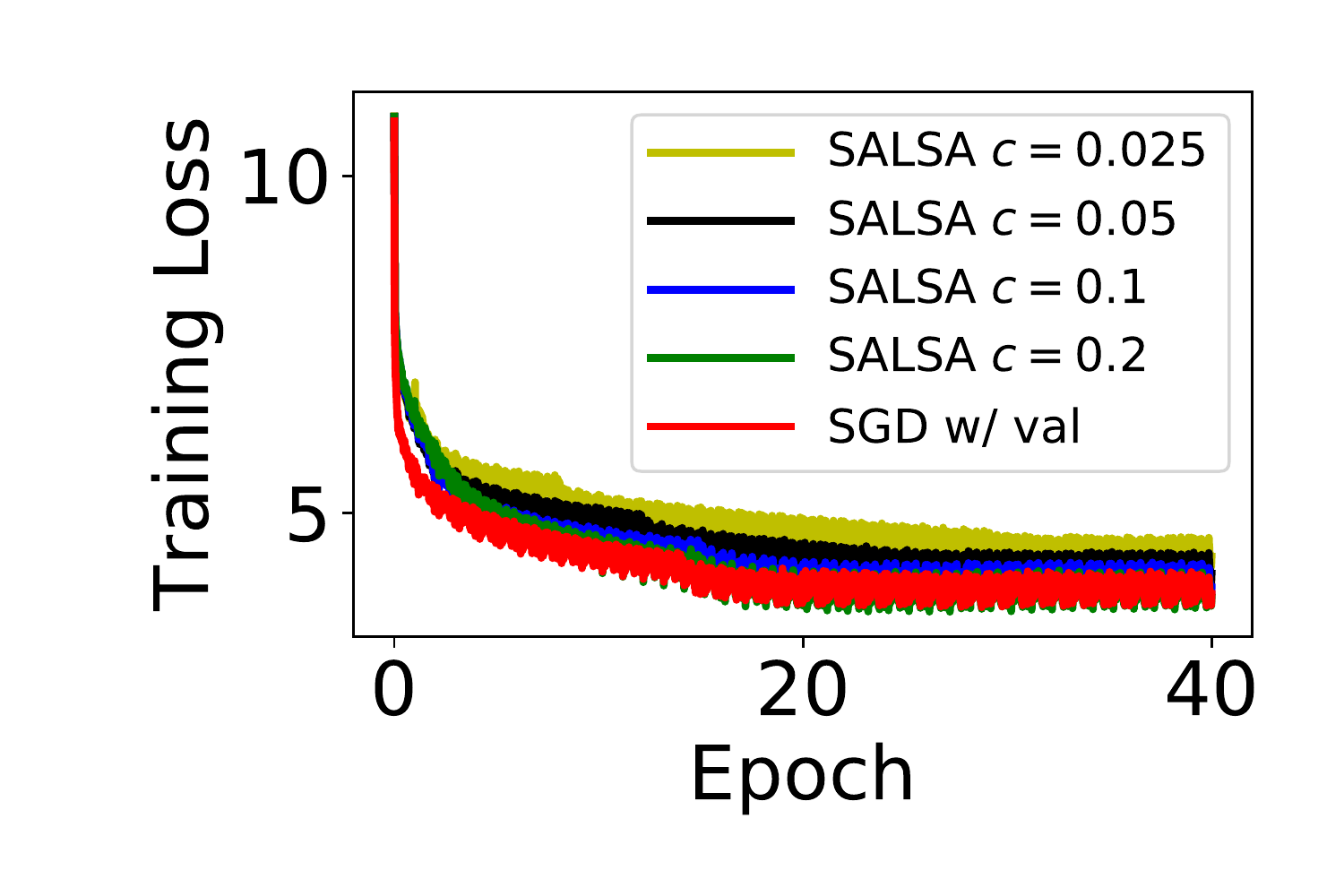}
    \includegraphics[width=0.33\linewidth]{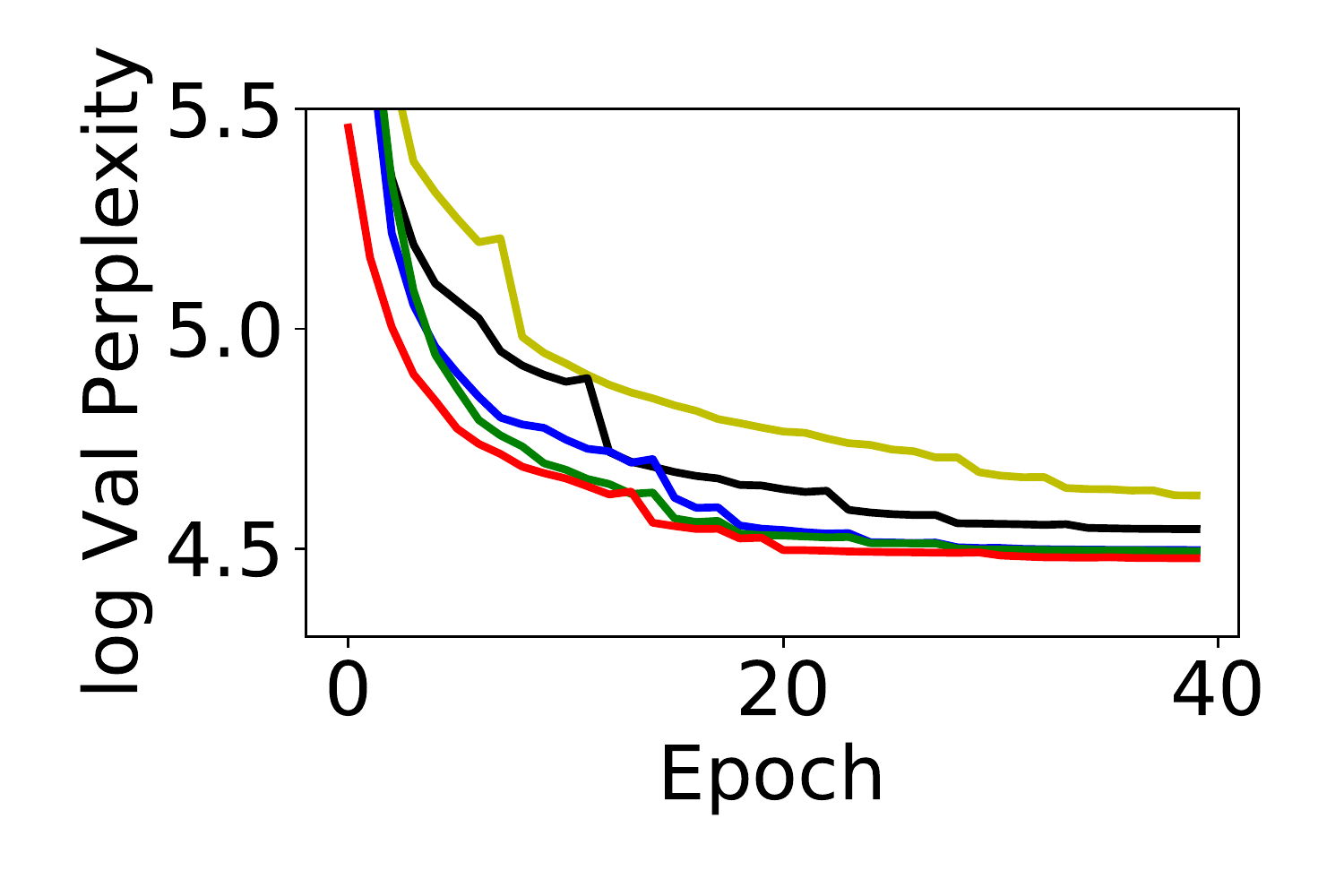}
    \includegraphics[width=0.33\linewidth]{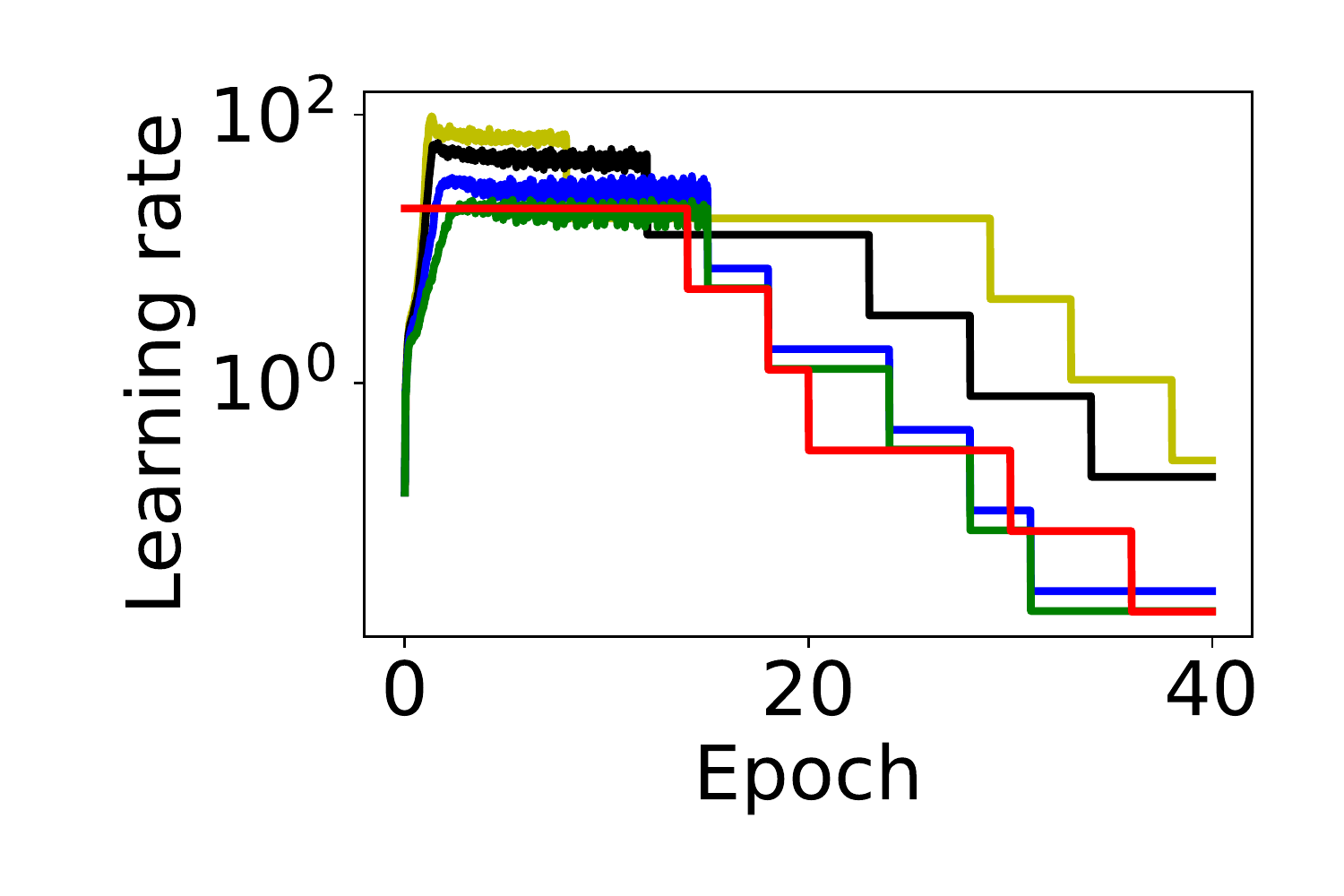}
    \vspace{-3ex}
    \caption{RNN: ``SALSA w/ val'' with different sufficient descent constant $c$.}
    \label{fig:rnn_salsa}
\end{figure*}

\subsection{More results for SASA+}\label{apd:sasaplusresults}
In Figure~\ref{fig:cifar_sasa_ablation}, we show the sensitivity analysis of SASA+ on the CIFAR-10 dataset, by showing the testing accuracy and learning rate schedule. Here, in Figure~\ref{fig:cifar_sasa_ablation_app}, we provide the full sensitivity analysis, including the drop factor $\tau$ (first row), the test confidence parameter $\delta$ (second row),  the ratio of recent samples to keep $\theta$ (third row), the test frequency $K_\textrm{test}$ (fourth row), and combined with different methods (last row). Notice that with higher testing frequency $K_\textrm{test}=100$, the test is fired earlier. However, with different test frequency $K_\textrm{test}$, the final results are nearly the same. This indicates that our test does not suffer from the multiple-test problem in statistics. Since the statistics in our tests are changing very smoothly, see the middle row in Figure~\ref{fig:cifar_sasa_statistics}, this robustness against multiple tests seems reasonable.
\begin{figure*}[t]
    \includegraphics[width=0.33\linewidth]{testsperepoch1_archmyresnet18_trainloss_smooth_df.pdf}
    \includegraphics[width=0.33\linewidth]{testsperepoch1_archmyresnet18_testacc_df.pdf}
    \includegraphics[width=0.33\linewidth]{testsperepoch1_archmyresnet18_lrs_df.pdf} \\
    \includegraphics[width=0.33\linewidth]{testsperepoch1_archmyresnet18_trainloss_smooth_sigma.pdf}
    \includegraphics[width=0.33\linewidth]{testsperepoch1_archmyresnet18_testacc_sigma.pdf}
    \includegraphics[width=0.33\linewidth]{testsperepoch1_archmyresnet18_lrs_sigma.pdf} \\
    \includegraphics[width=0.32\linewidth]{testsperepoch1_archmyresnet18_trainloss_smooth.pdf}
    \includegraphics[width=0.33\linewidth]{testsperepoch1_archmyresnet18_testacc.pdf}
    \includegraphics[width=0.33\linewidth]{testsperepoch1_archmyresnet18_lrs.pdf} \\
    \includegraphics[width=0.33\linewidth]{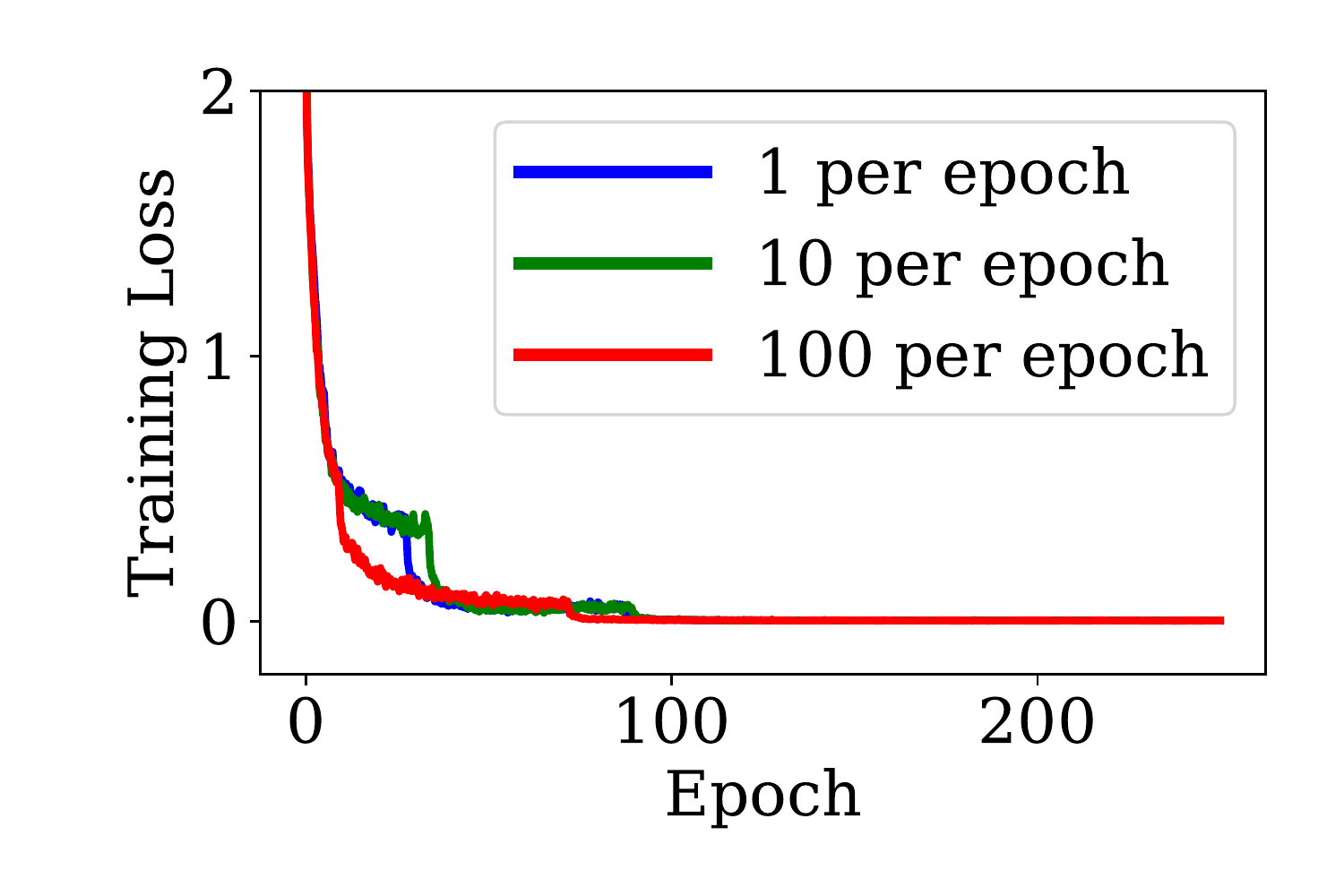}
    \includegraphics[width=0.33\linewidth]{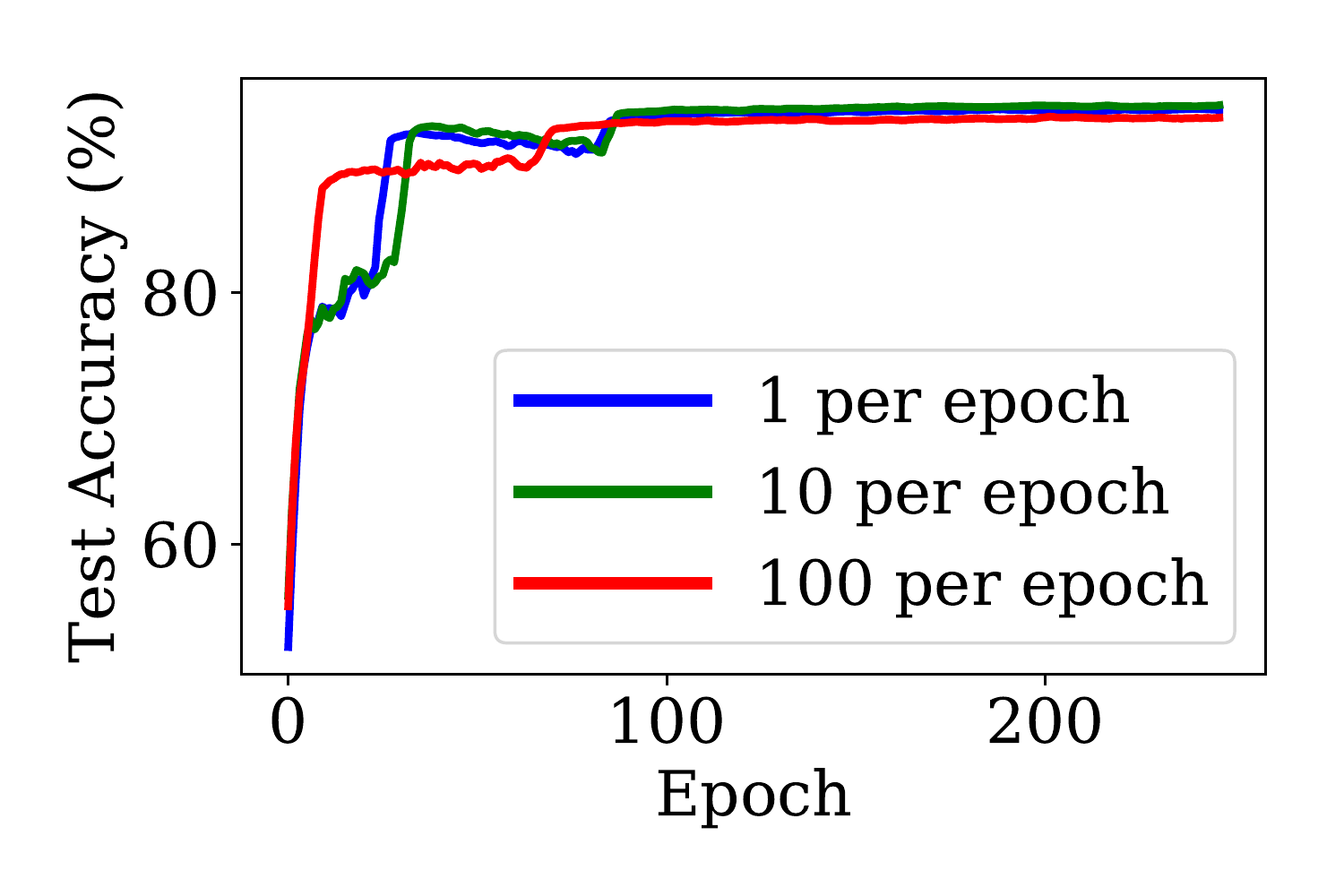}
    \includegraphics[width=0.33\linewidth]{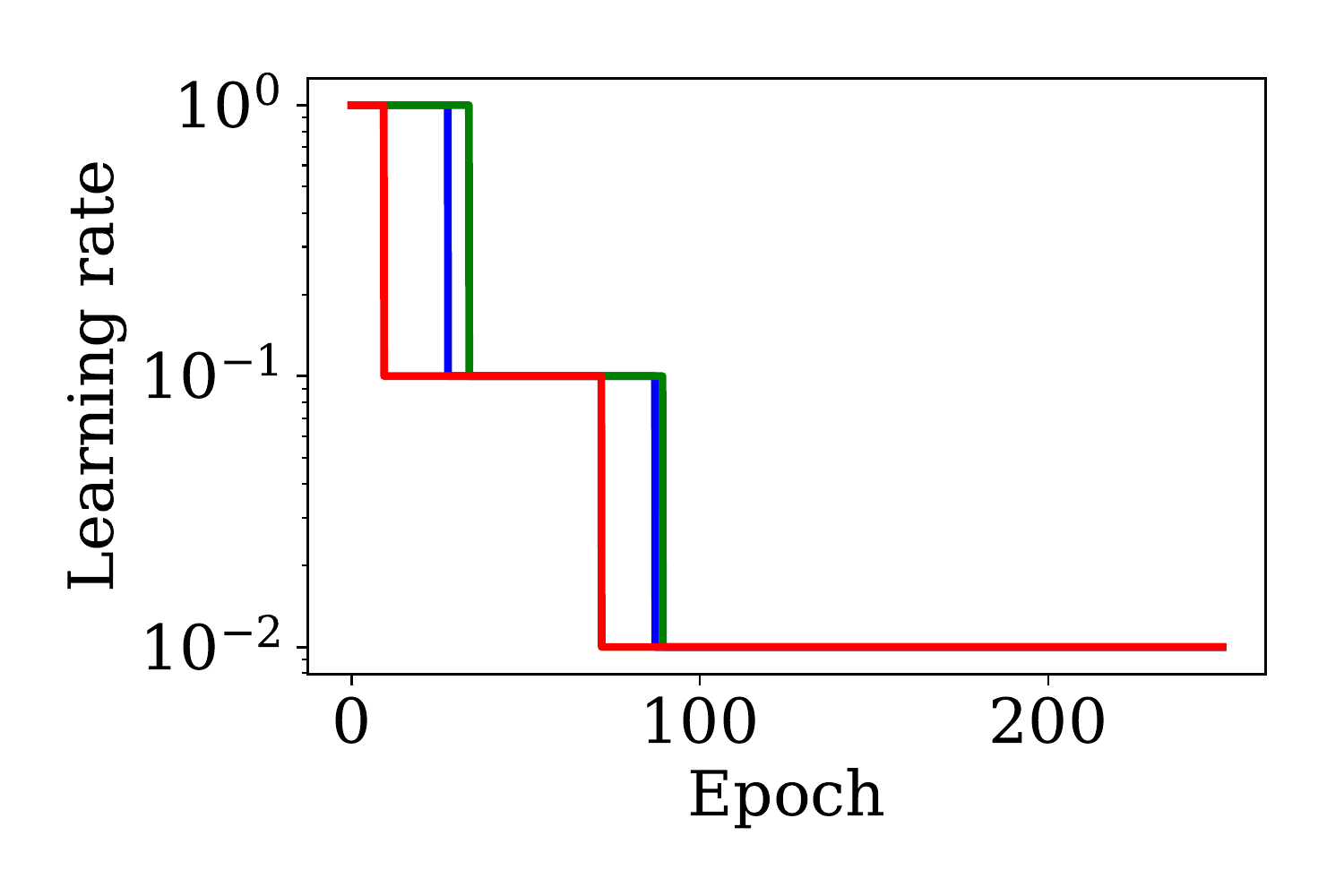} \\
    \includegraphics[width=0.33\linewidth]{qhm_archmyresnet18_trainloss_smooth_method.pdf}
    \includegraphics[width=0.33\linewidth]{qhm_archmyresnet18_testacc_method.pdf}
    \includegraphics[width=0.33\linewidth]{qhm_archmyresnet18_lrs_method.pdf}
    \caption{Sensitivity analysis of SASA+ on CIFAR10, using $\beta=0.9$ and $\nu=1$. The training loss, test accuracy, and learning rate schedule for SASA+ using different values of $\tau$ (first row), $\delta$ (second row),  $\theta$ (third row), $K_\textrm{test}$ (fourth row on number of tests in an epoch) around the default values are shown, and the SASA+ applied on different methods (last row, i.e., SGD with momenton, NAG and QHM).}
    \label{fig:cifar_sasa_ablation_app}
\end{figure*}

In the last row of Figure~\ref{fig:cifar_sasa_ablation_app}, we show that SASA+ can work with different optimization methods (i.e., SHB, NAG and QHM) on the CIFAR-10 dataset. 
Similar results for ImageNet are shown in Figure~\ref{fig:imagenet_sasaplus}.
We also show a sensitivity test of SASA+ on ImageNet with respect to $\tau$ (first row), $\delta$ (second row) and  $\theta$ (third row) in Figure~\ref{fig:imagenet_sasa_ablation_app}.
\begin{figure*}[t]
	\includegraphics[width=0.33\linewidth]{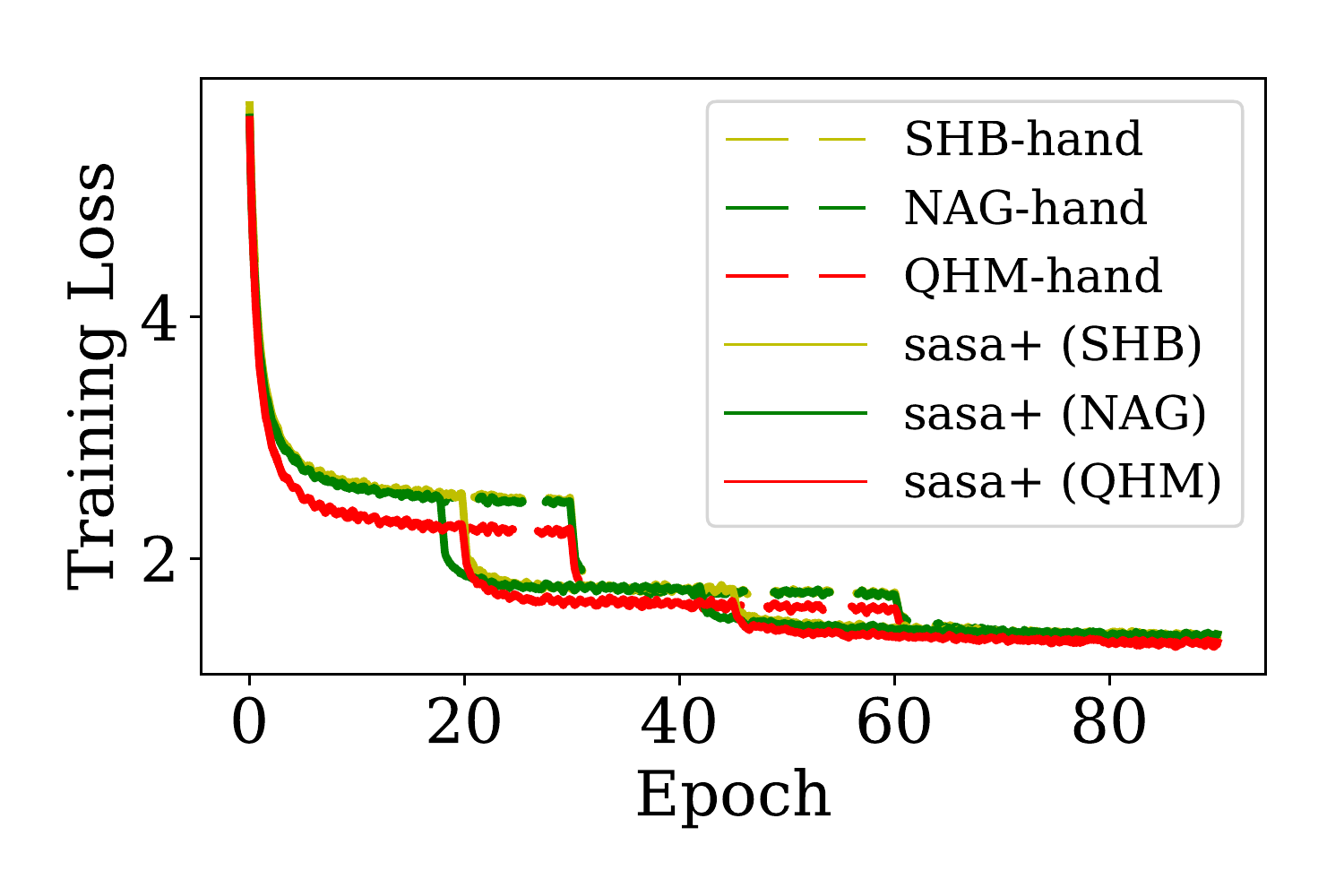}
	\includegraphics[width=0.33\linewidth]{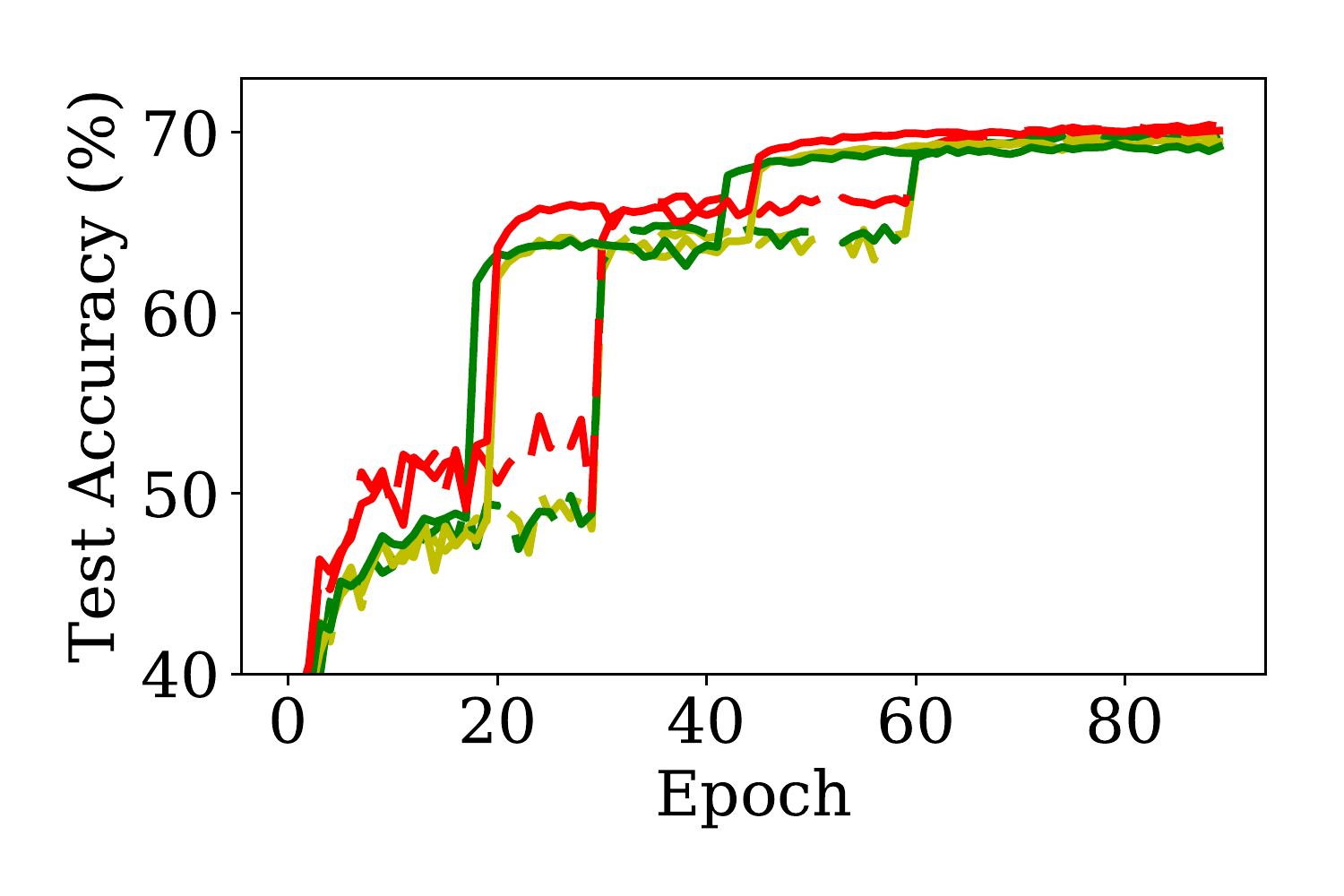}
	\includegraphics[width=0.33\linewidth]{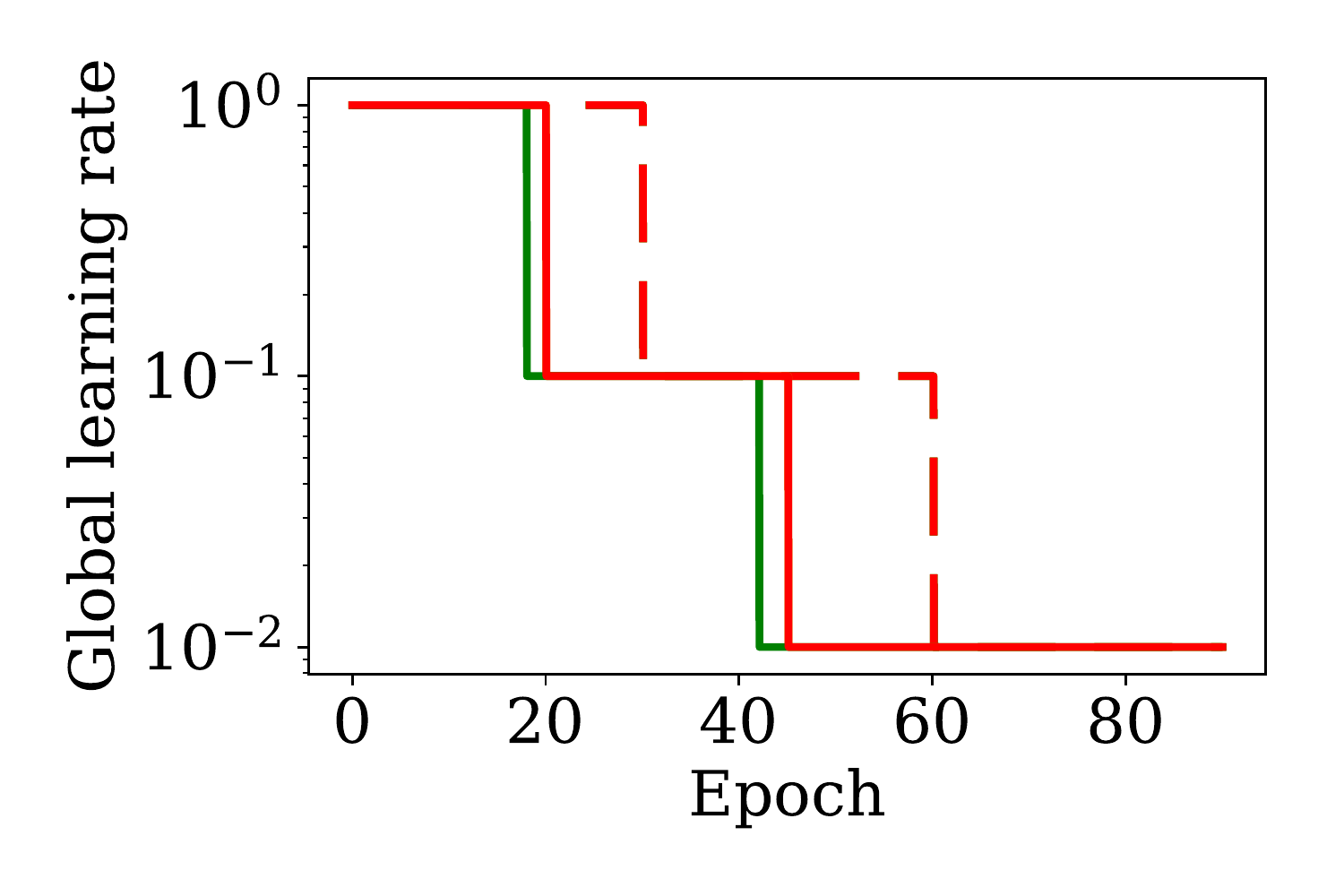} 
	\caption{The dotted curves are stochastic optimization algorithms with hand-tuning learning rate, i.e., decrease every 30 epochs, as shown in the lower right panel. The solid curves are SASA+ combined with these 3 algorithms. SASA+ automatically adapts their learning rate (see the right panel) and achieves comparable and even slightly higher testing accuracy (see the middle panel).}
	\label{fig:imagenet_sasaplus}
\end{figure*}

\begin{figure*}[t]
    \includegraphics[width=0.33\linewidth]{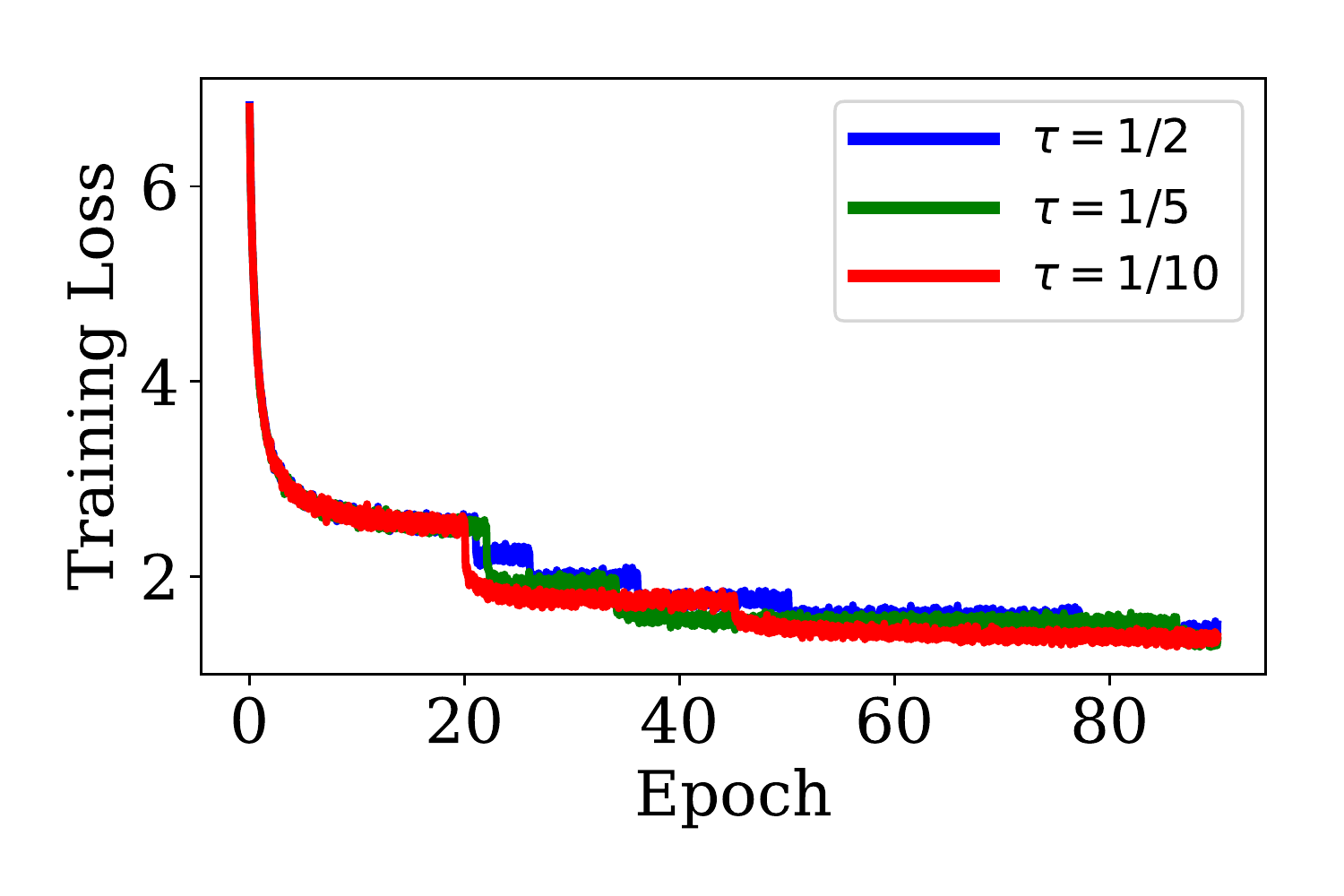}
    \includegraphics[width=0.33\linewidth]{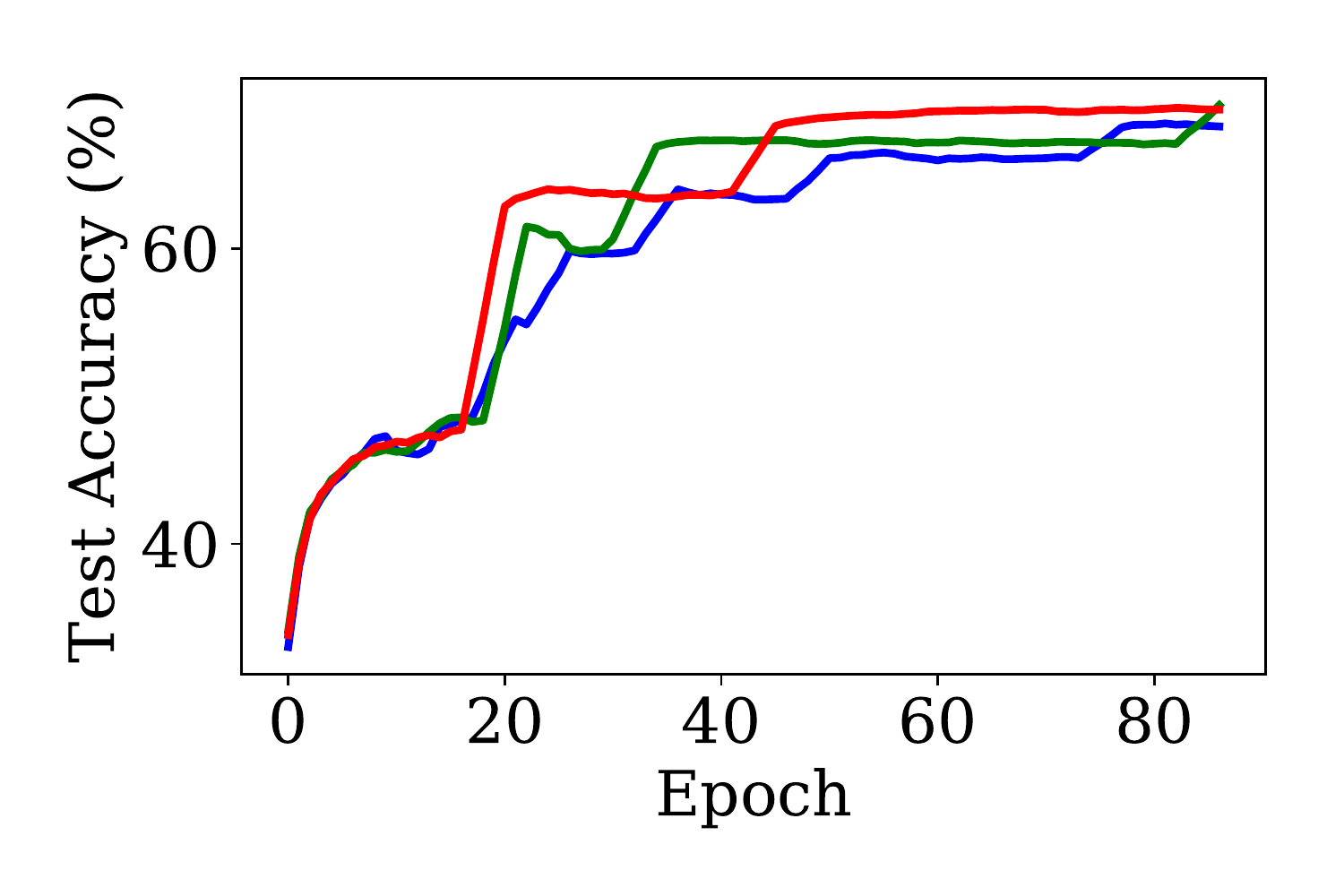}
    \includegraphics[width=0.33\linewidth]{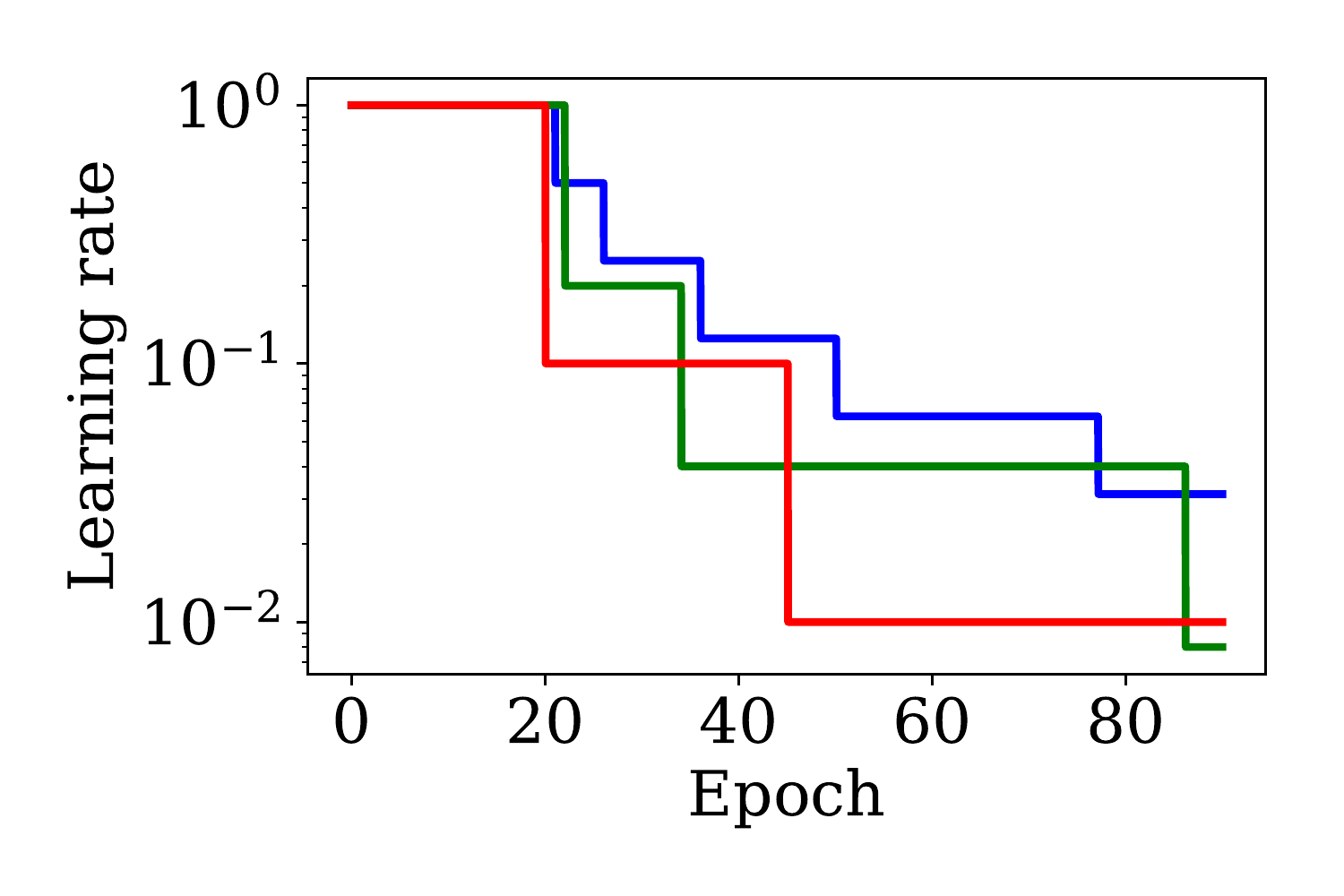} \\
    \includegraphics[width=0.33\linewidth]{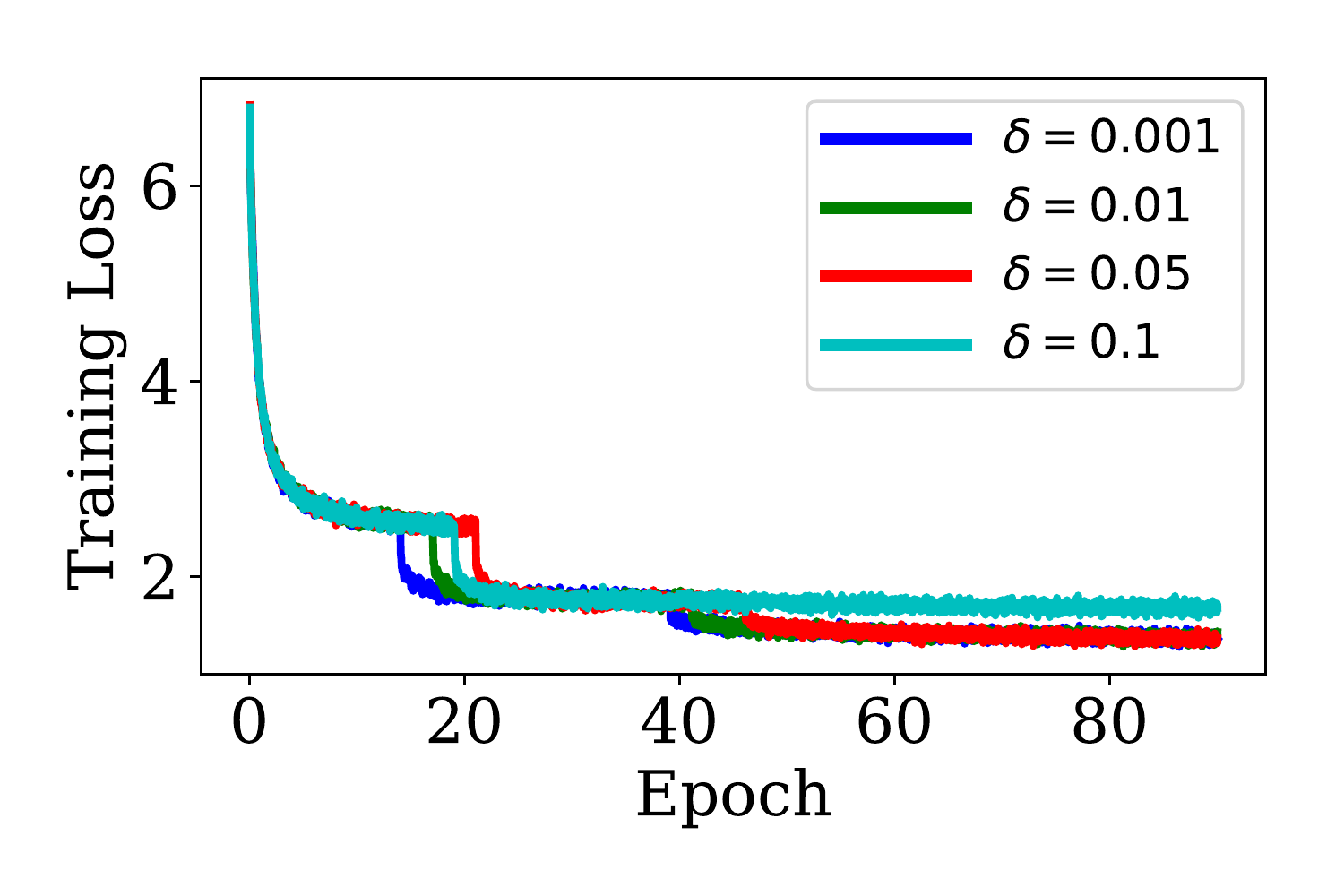}
    \includegraphics[width=0.33\linewidth]{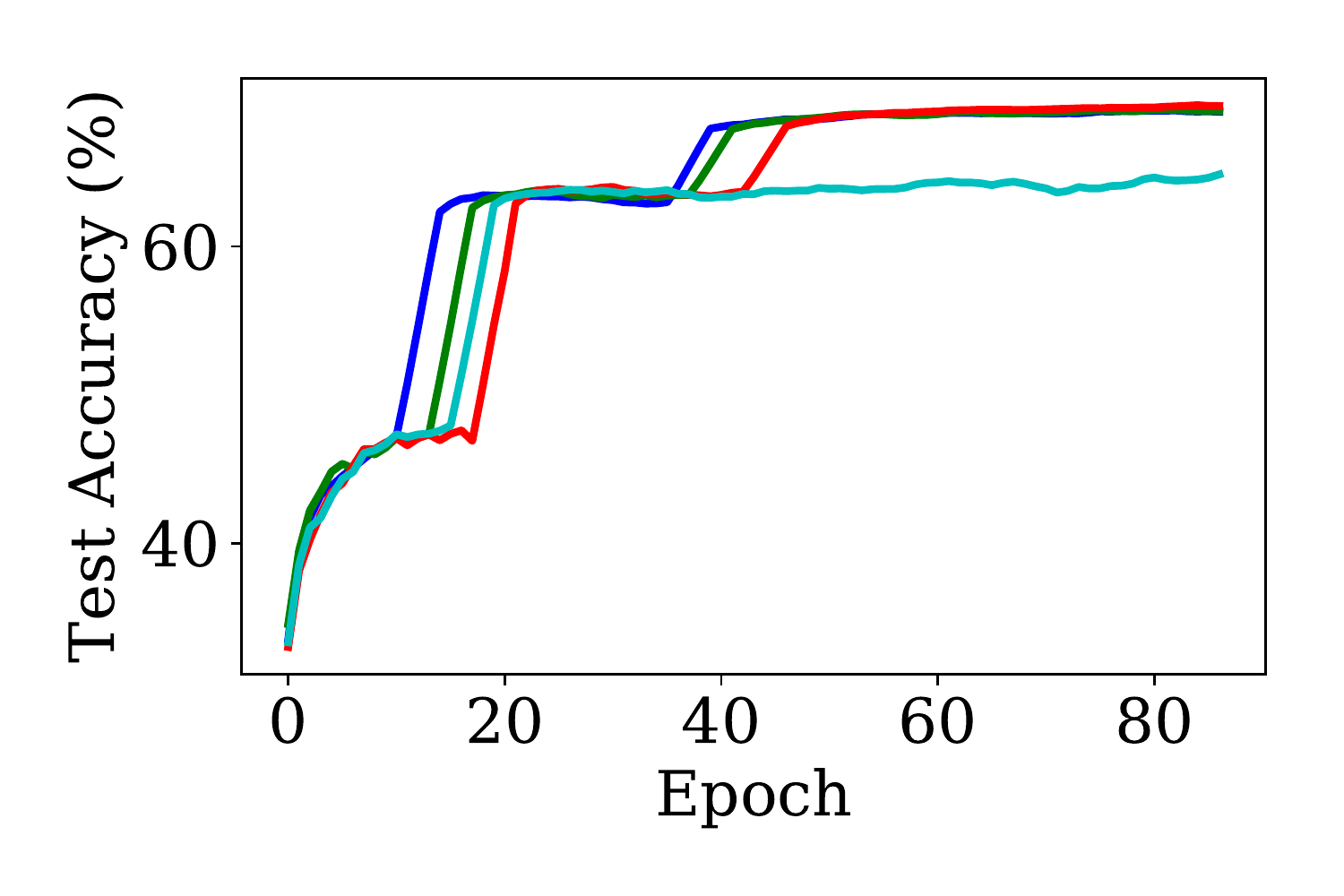}
    \includegraphics[width=0.33\linewidth]{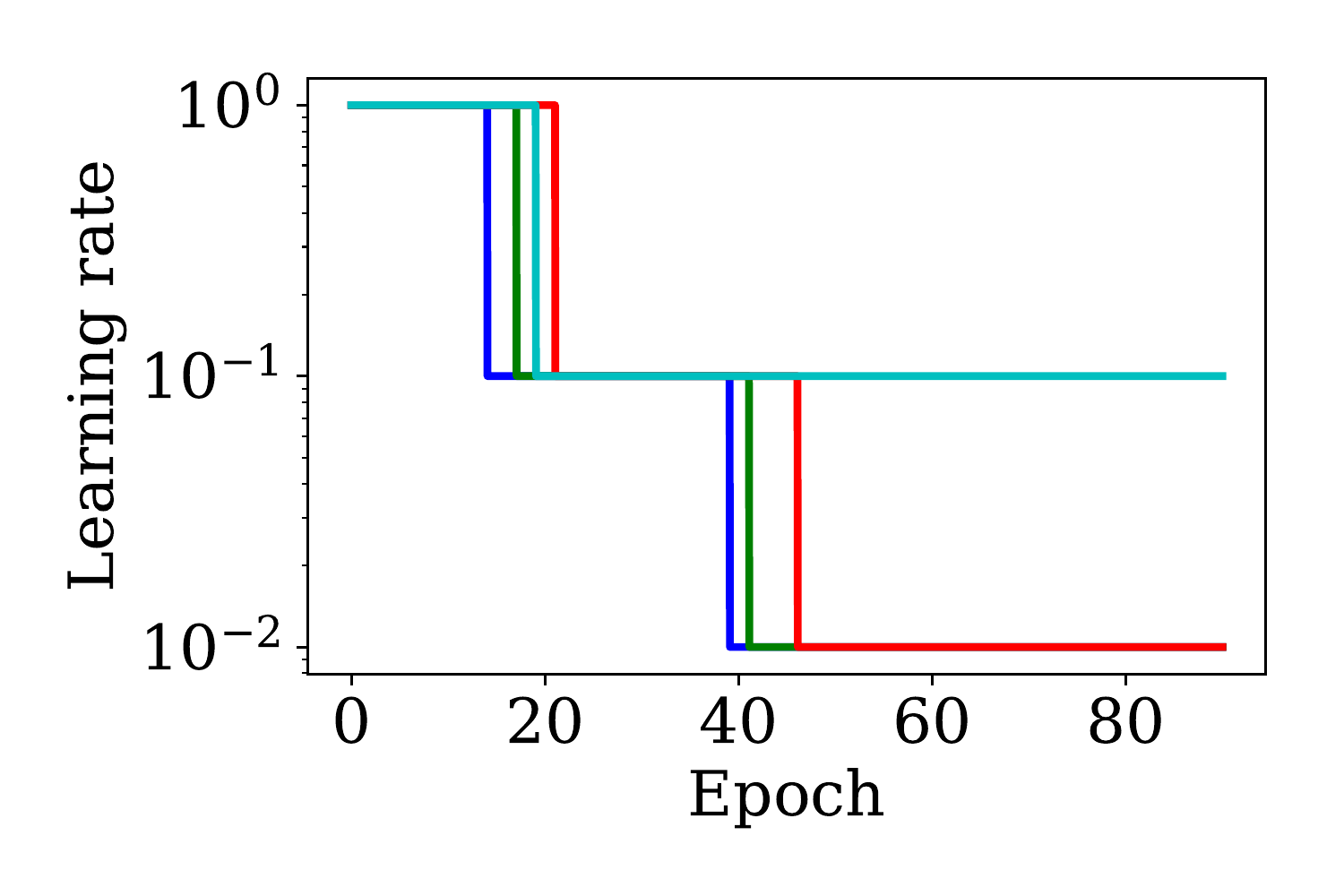} \\
    \includegraphics[width=0.33\linewidth]{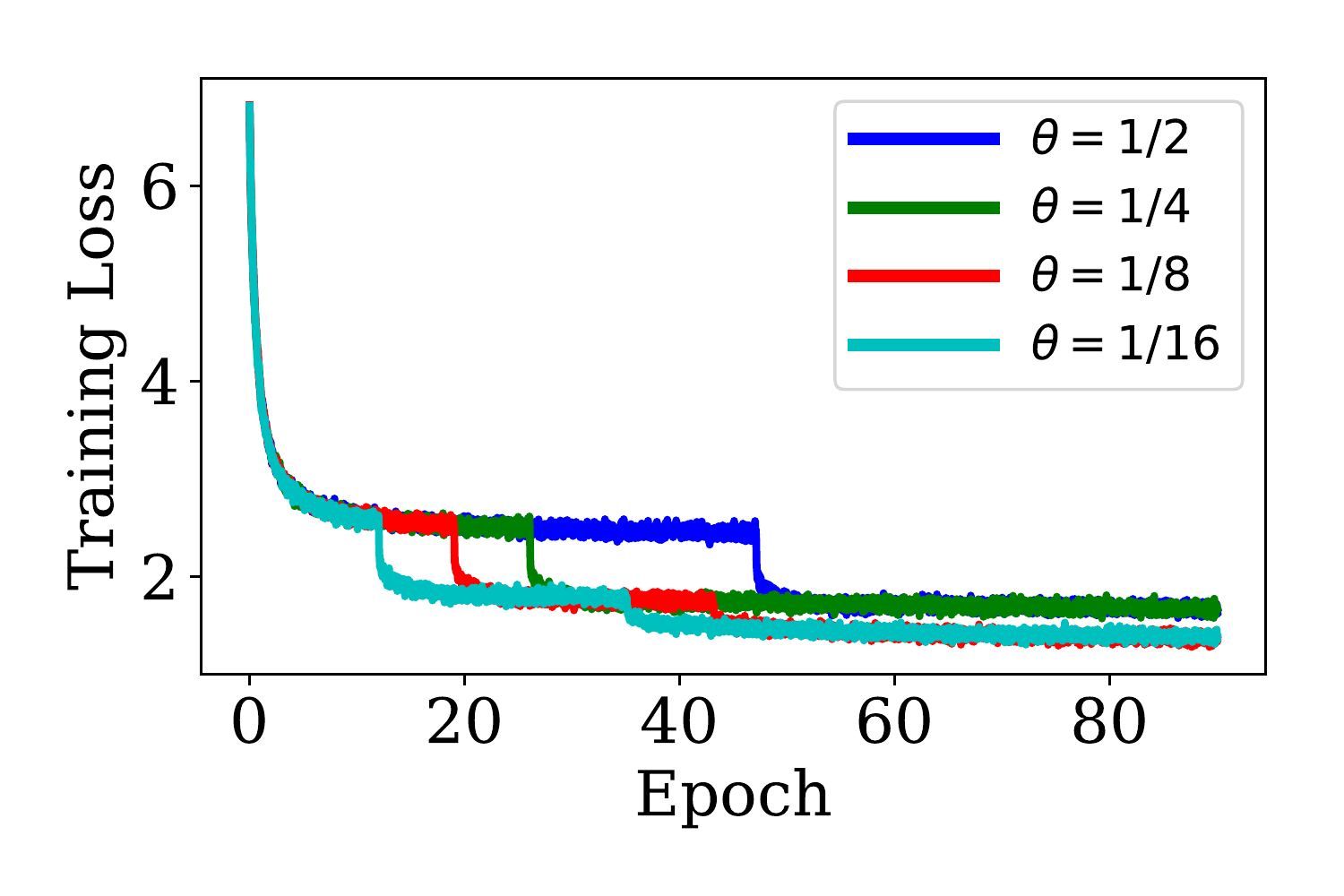}
    \includegraphics[width=0.33\linewidth]{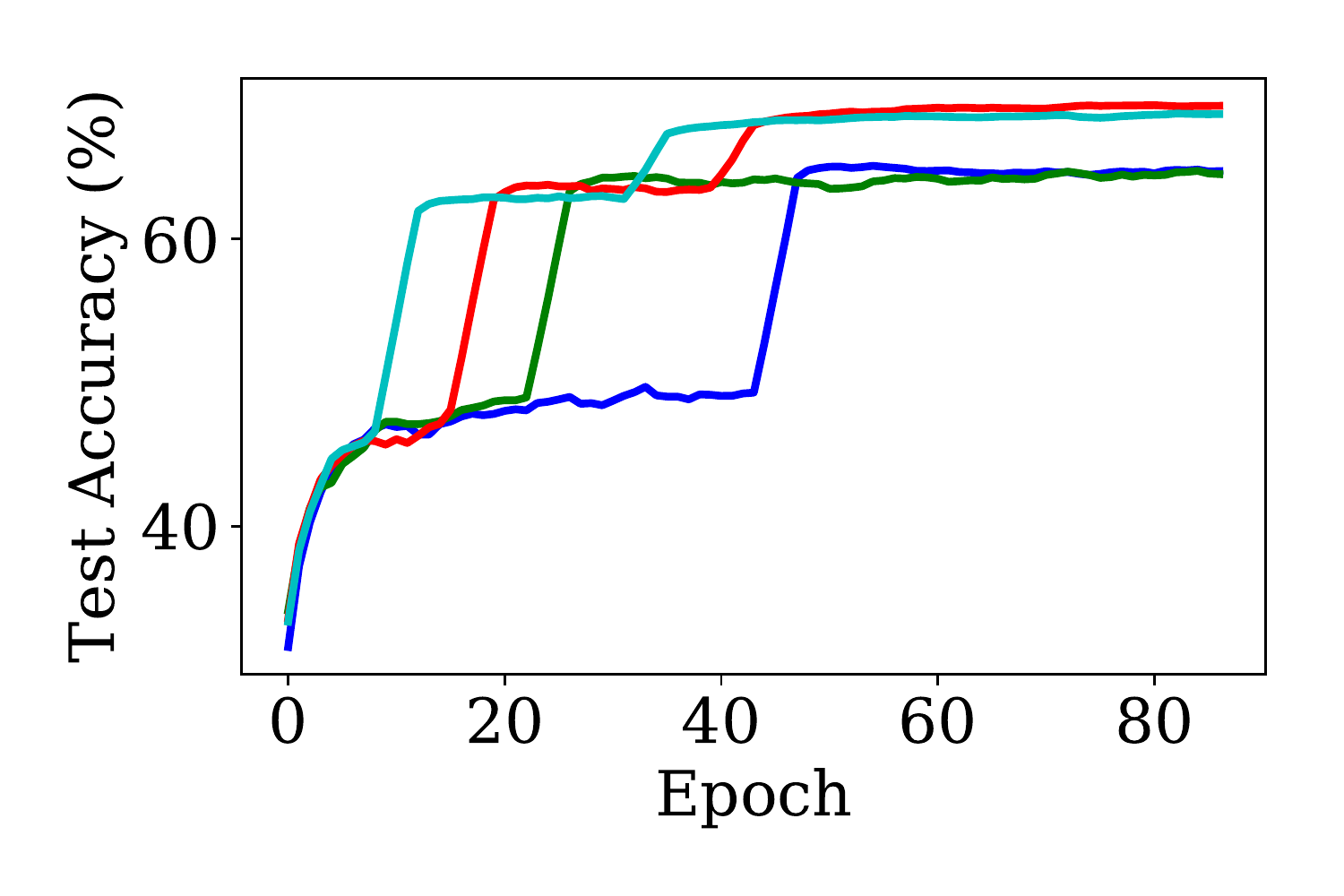}
    \includegraphics[width=0.33\linewidth]{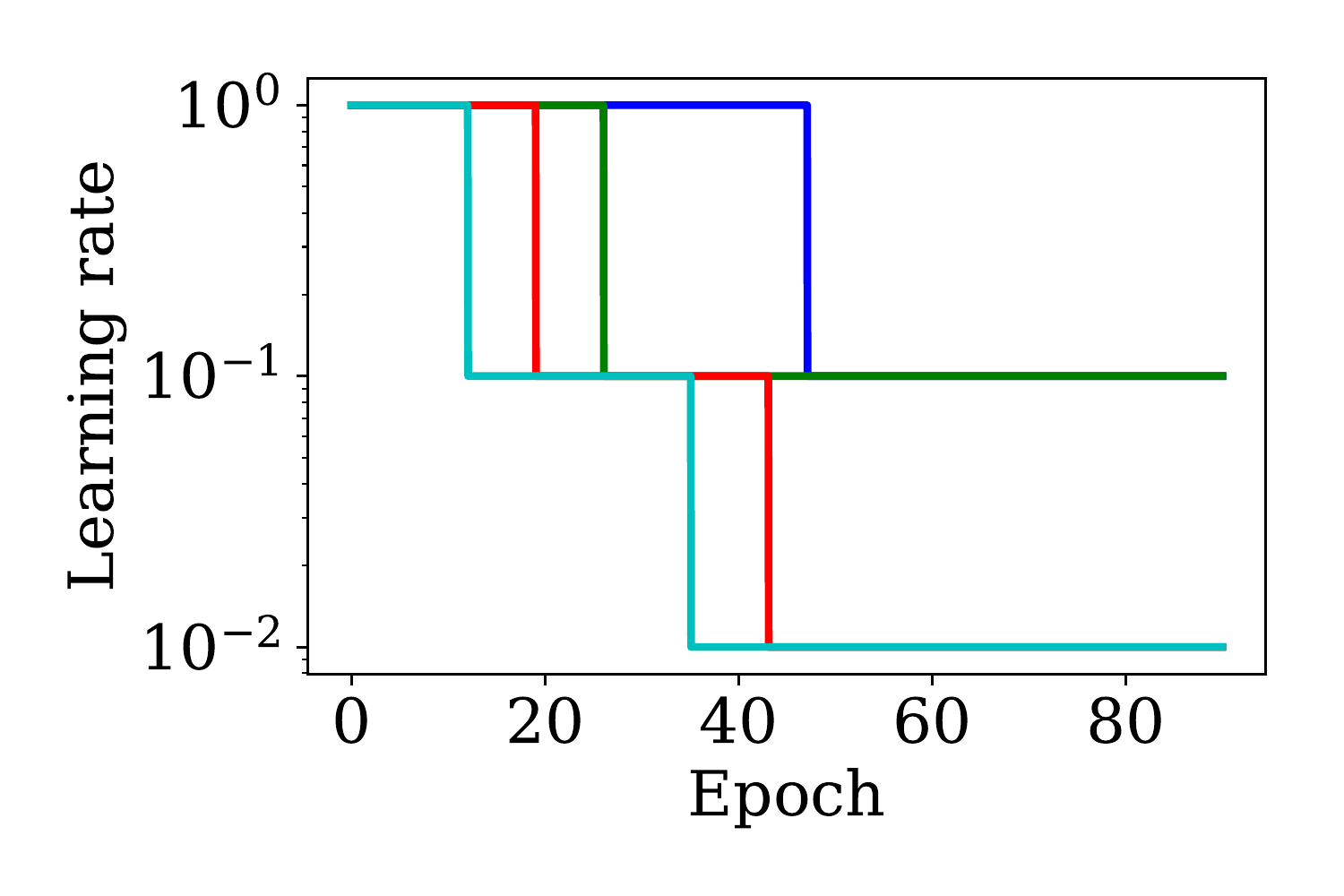} 
    \caption{Sensitivity analysis of SASA+ on ImageNet, using $\beta=0.9$ and $\nu=1$. The training loss, test accuracy, and learning rate schedule for SASA+ using different values of $\tau$ (first row), $\delta$ (second row) and  $\theta$ (third row) around the default values are shown.}
    \label{fig:imagenet_sasa_ablation_app}
\end{figure*}

In Figure~\ref{fig:rnn_sasa}, we show that the large LSTM model (as described in Section~\ref{sec:salsa}) quickly overfits the Wikitest-2 dataset, which can be seen from the quickly decreasing training loss but increasing validation perplexity for SASA and SASA+. Adding the validation dataset as another learning rate drop criterion avoids overfitting during training and to obtain good final performance on the validation/testing set. 
\begin{figure*}[t]
    \includegraphics[width=0.33\linewidth]{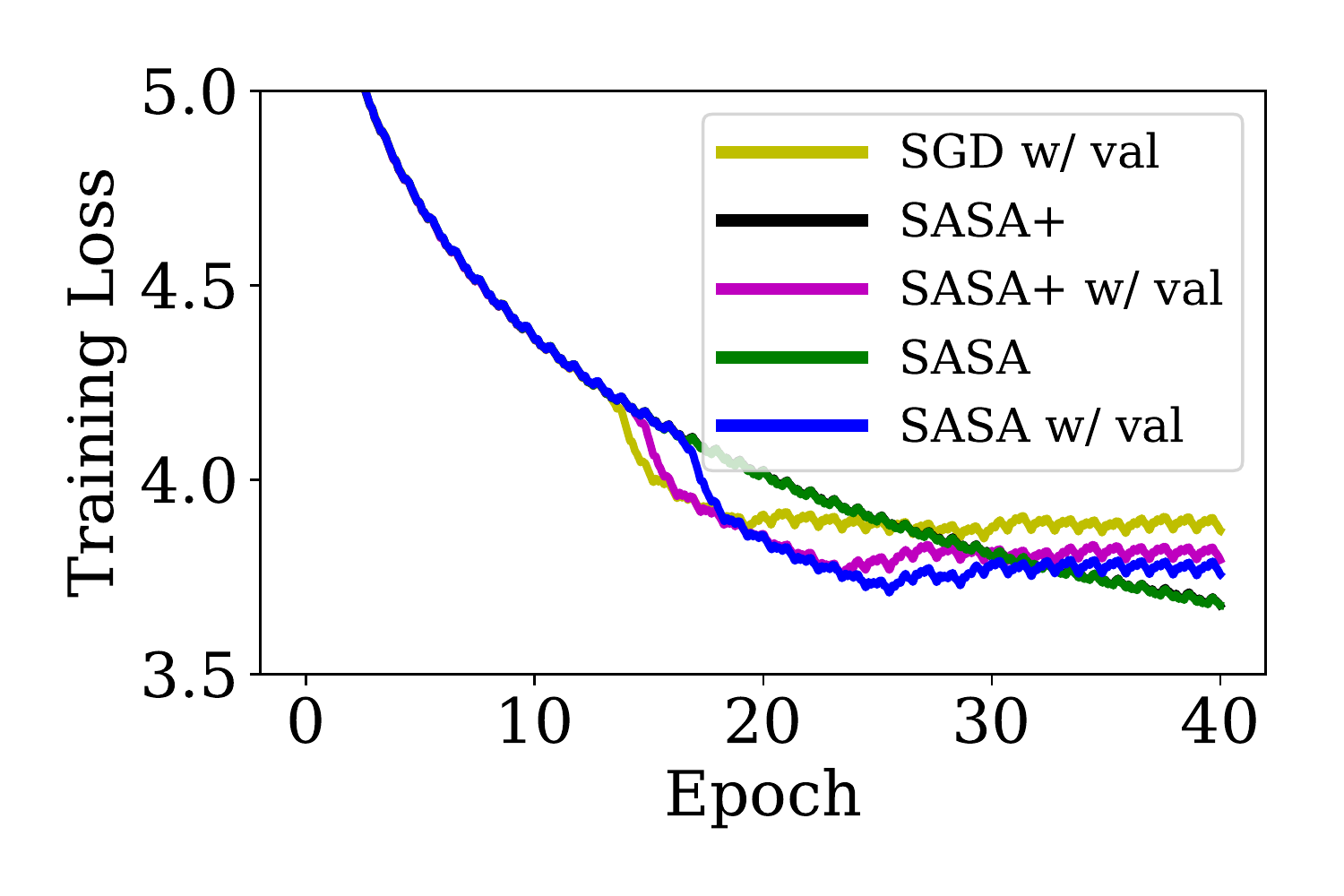}
    \includegraphics[width=0.33\linewidth]{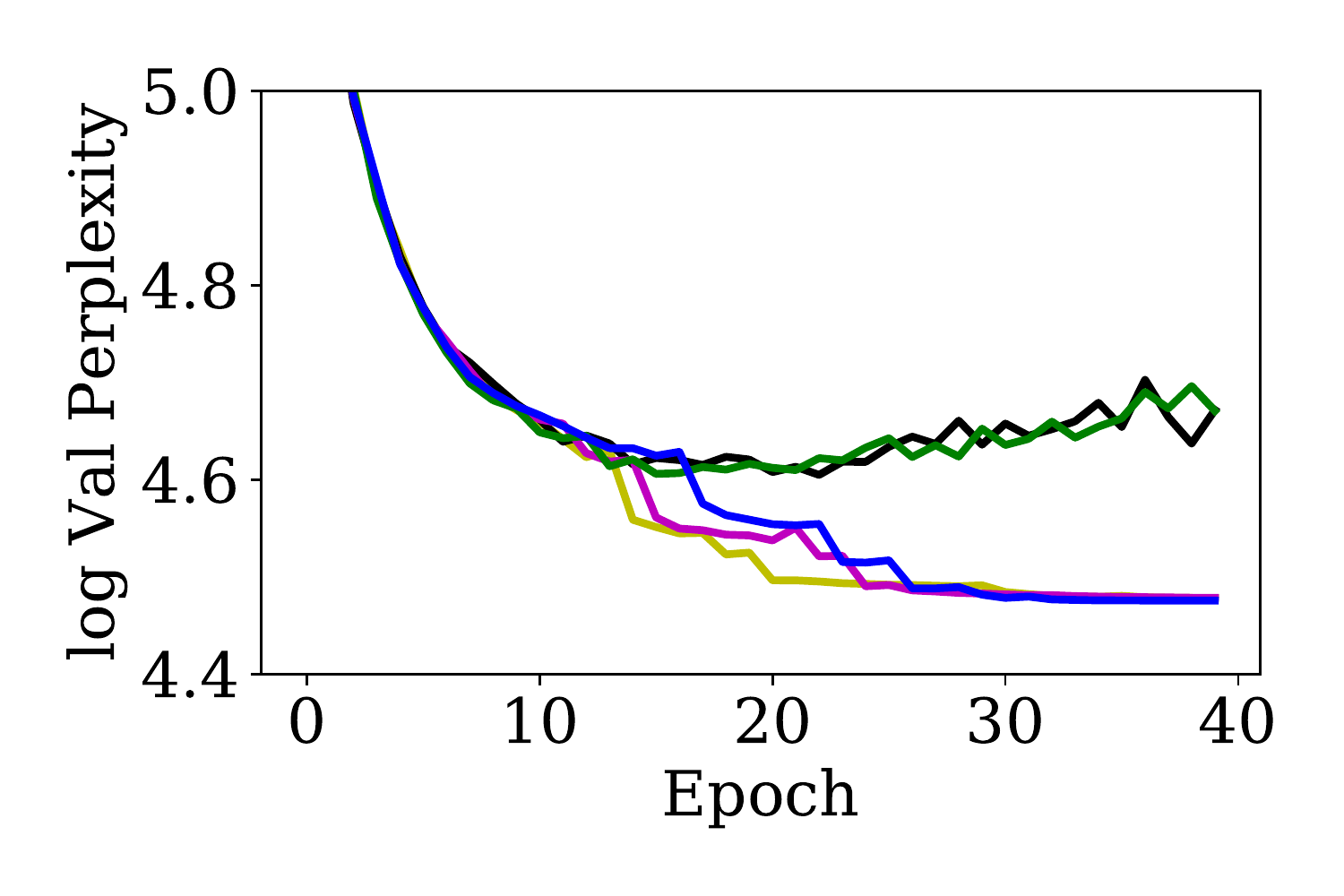}
    \includegraphics[width=0.33\linewidth]{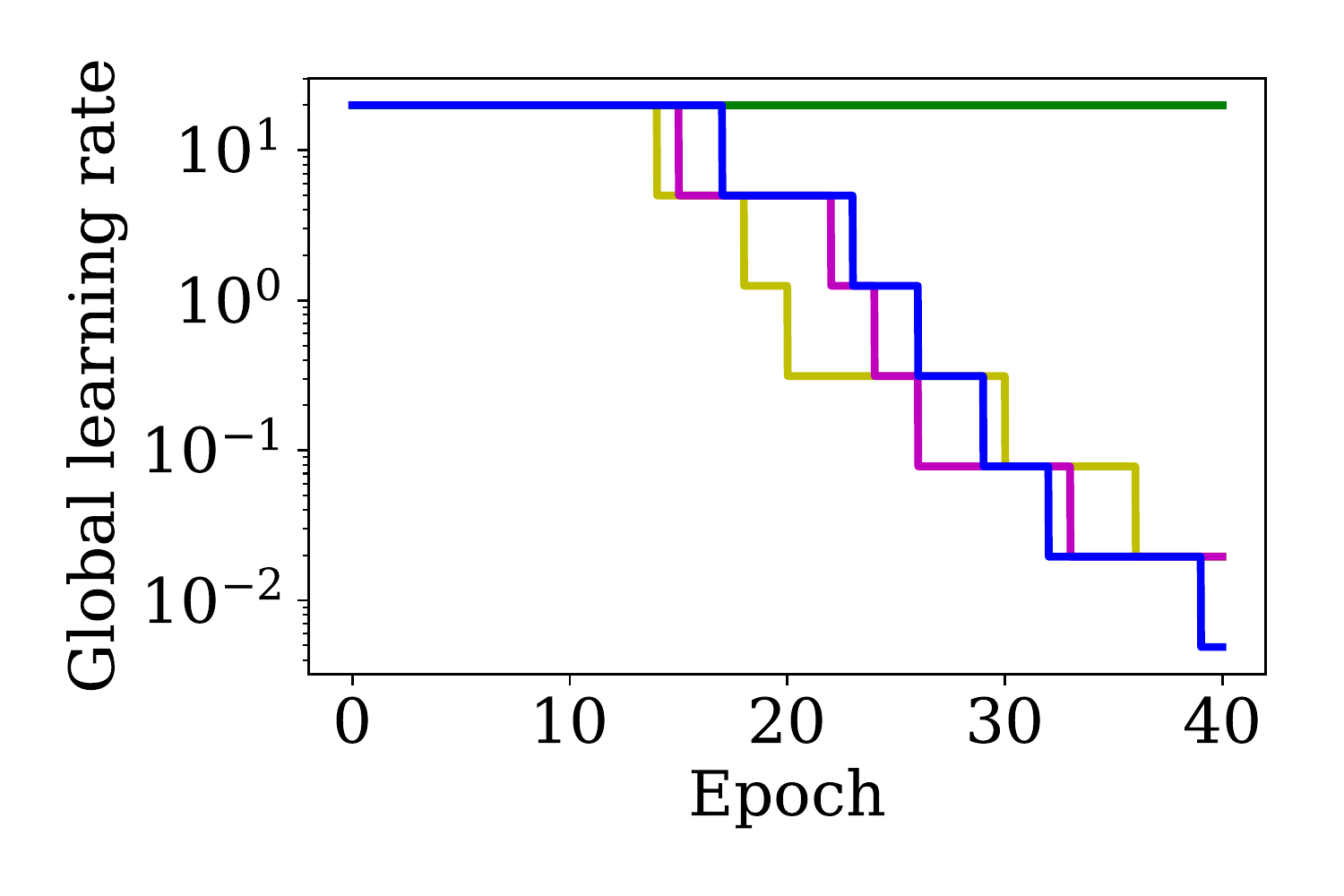}
    \caption{RNN: comparison of the baseline (SGD with validation dataset), SASA+ and SASA. The large LSTM model quickly overfits the Wikitest-2 dataset, which can be seen from the quickly decreasing training loss but increasing validation perplexity for SASA and SASA+. Adding the validation dataset as another learning rate drop criterion is necessary to avoid overfitting during training.}
    \label{fig:rnn_sasa}
\end{figure*}

\subsection{More results for SLOPE test}
\label{apd:sloperesults}
In addition to Figure~\ref{fig:slopevssasa}, Figure~\ref{fig:slopevssasa_app} contains more results to comparing SASA+ and the SLOPE test. The learning rate schedules from SLOPE on CIFAR-10, ImageNet and MNIST are similar: it takes a few epochs for the first drop, and then the learning rate drops exponentially because the SLOPE test always fires (we do the test once every epoch). Due to the aggressive learning rate drop, the final testing performance is not comparable with SASA+ and hand-tuned SHB.  

\begin{figure}[t]
    \centering
	\includegraphics[width=0.32\linewidth]{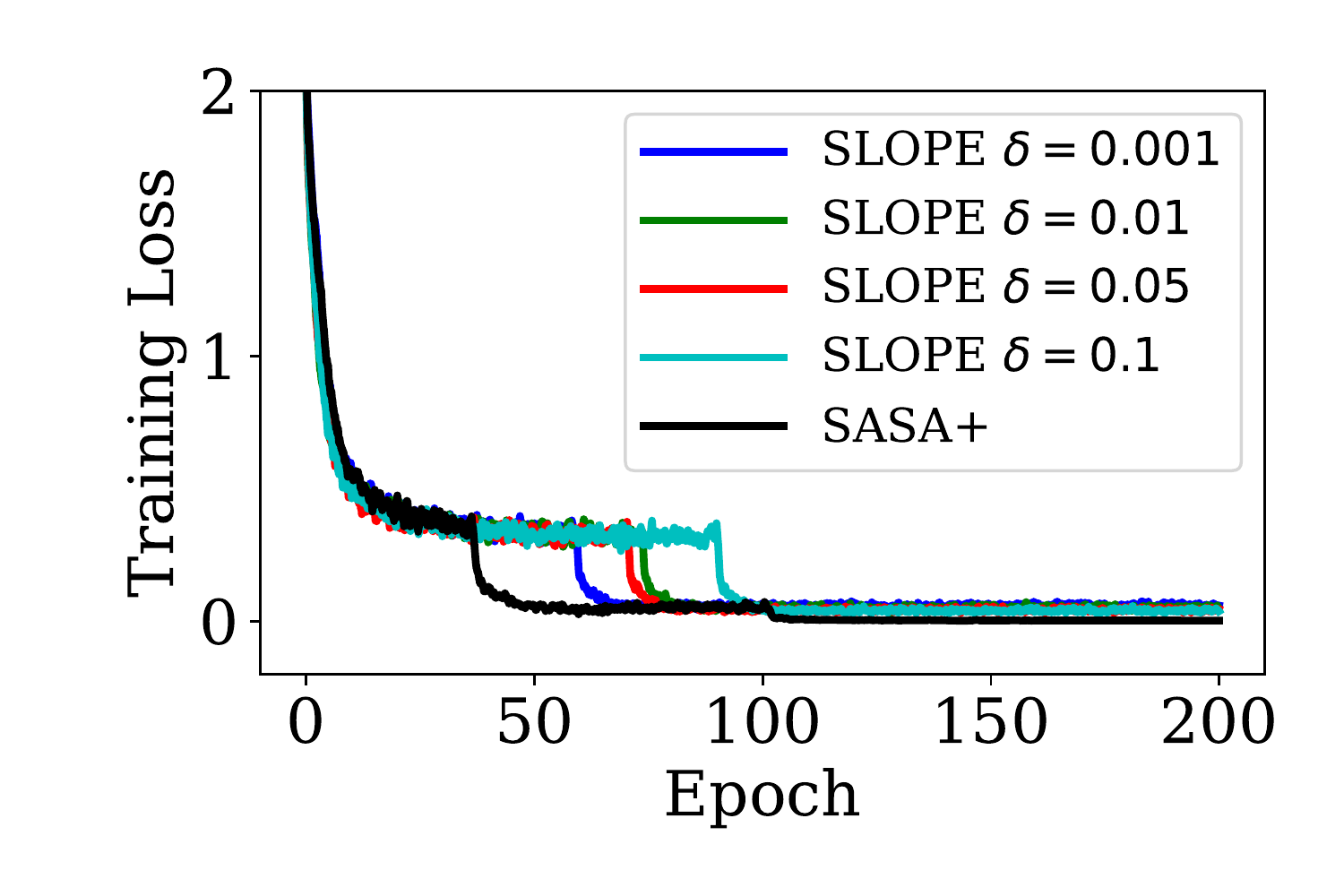}
	\includegraphics[width=0.32\linewidth]{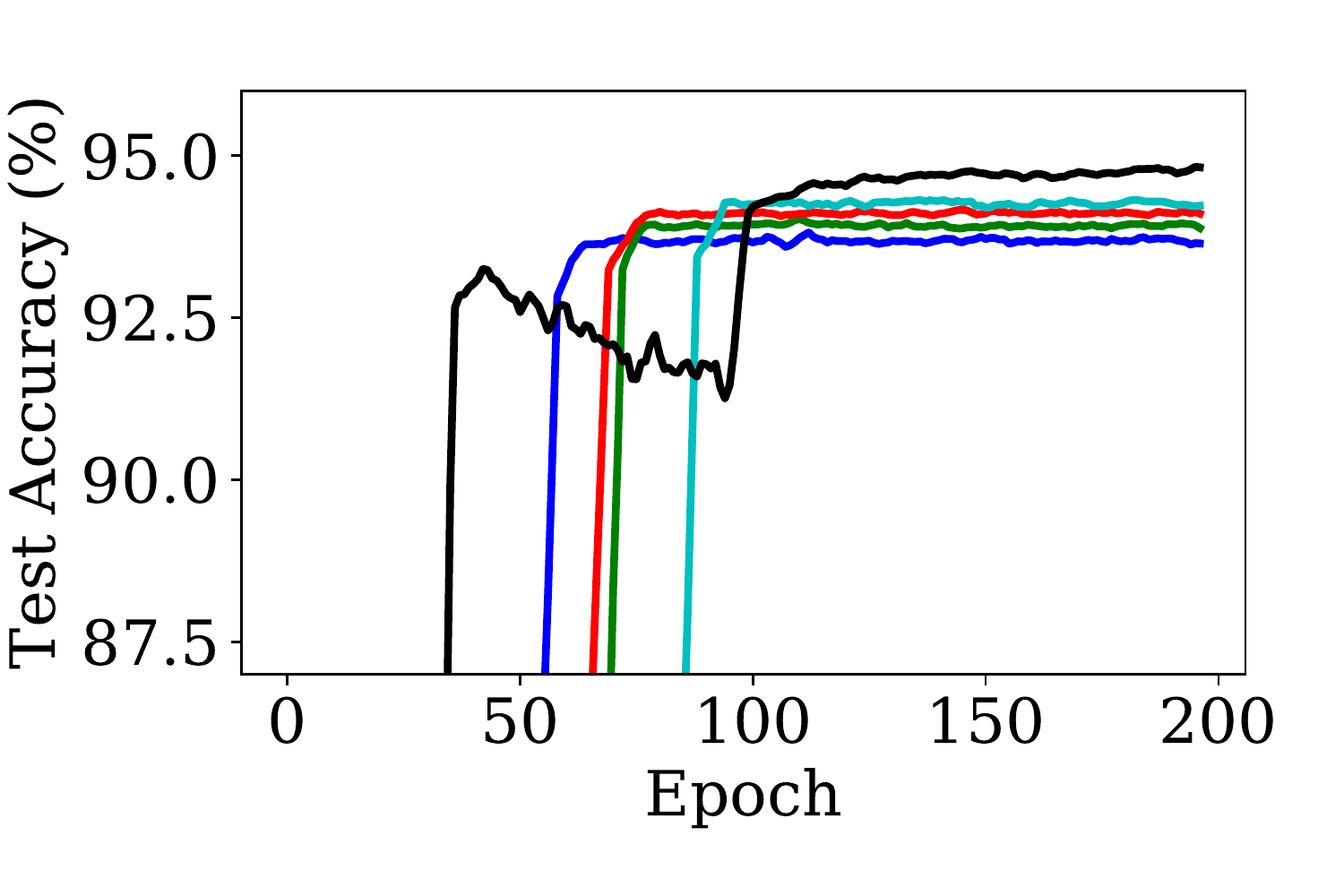}
	\includegraphics[width=0.32\linewidth]{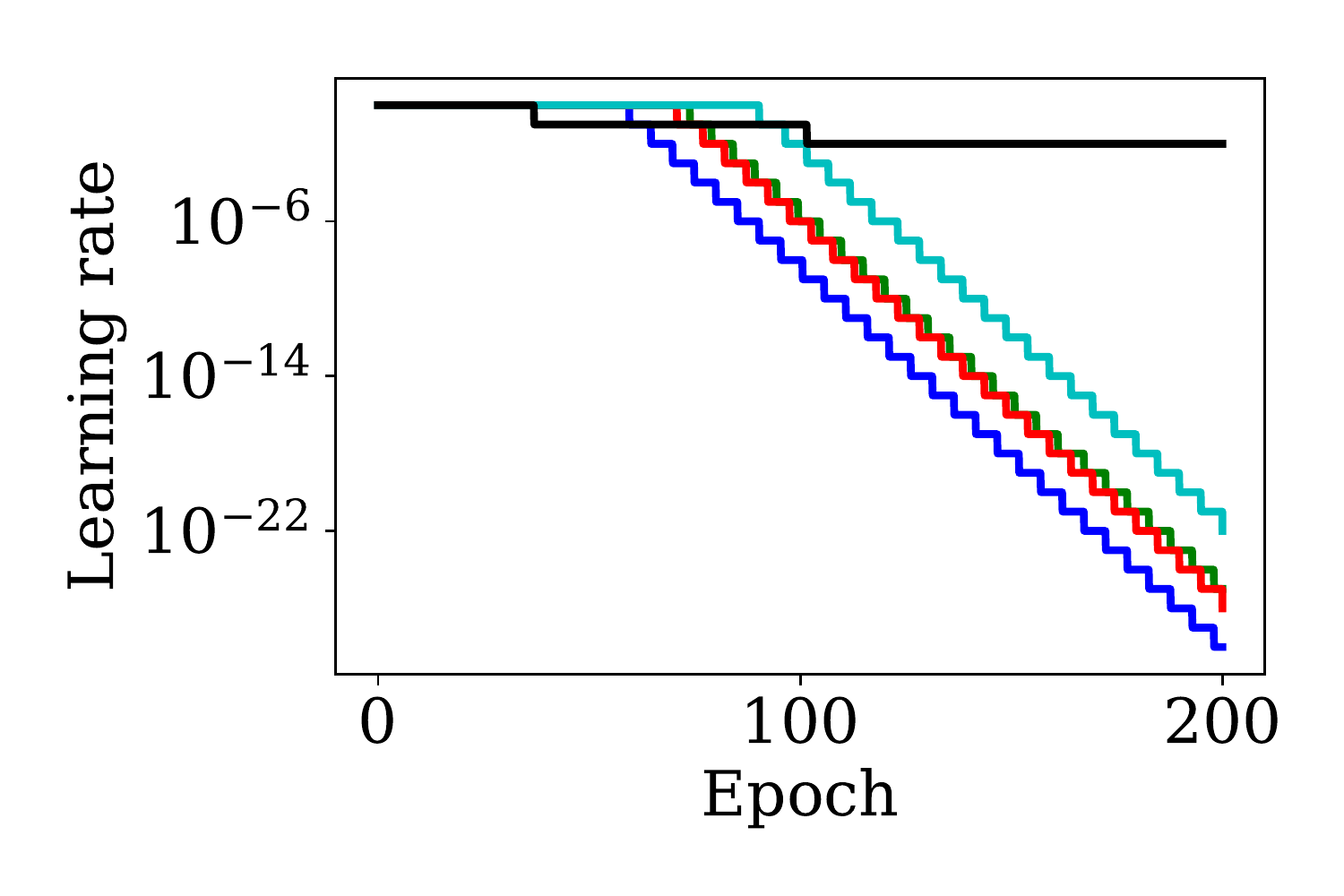}\\
	\includegraphics[width=0.32\linewidth]{archresnet18_trainloss_smooth_slope_leak8.pdf}
	\includegraphics[width=0.32\linewidth]{archresnet18_testacc_slope_leak8.pdf}
	\includegraphics[width=0.32\linewidth]{archresnet18_lrs_slope_leak8.pdf}\\
	\includegraphics[width=0.32\linewidth]{wd5e-4_optslope_modelinear_trainloss_smooth_mnist.pdf}
	\includegraphics[width=0.32\linewidth]{wd5e-4_optslope_modelinear_testacc_mnist.pdf}
	\includegraphics[width=0.32\linewidth]{wd5e-4_optslope_modelinear_lrs_mnist.pdf}%
	\caption{Comparison of SLOPE test and SASA+. SLOPE test is shown with the same default parameters as SASA+ and with different confidence levels $\delta$. First column: CIFAR-10 with ResNet18. Second column: ImageNet with ResNet18. Third column: MNIST with the linear model.}
	\label{fig:slopevssasa_app}
\end{figure}

\subsection{More results for SSLS}
\label{apd:sslsresults}
In Figure~\ref{fig:sgdsls}, we present the sensitivity analysis of SSLS on CIFAR-10. In Figure~\ref{fig:imagenet_ssls_ablation}, \ref{fig:mnist_ssls_ablation} and \ref{fig:rnn_ssls_ablation}, we show similar results on ImageNet, MNIST and Wikitest-2, respectively. These results confirm that: 
\begin{itemize}
    \item The final stable learning rate obtained by SSLS is robust to the initial learning rate $\alpha_0$. The dynamics starting from different initial learning rates are nearly the same. 
    \item The final stable learning rate obtained by SSLS is robust to the smoothing factor $\gamma$. The smaller $\gamma$ is, the smoother the learning rate schedule is and the slower the process reaches the stable learning rate. Empirically, our recommended default $\gamma = \sqrt{b/n}$ achieves a good trade-off between smoothness of the learning rate schedule and the speed to reach the stable learning rate.
    \item The larger the sufficient decrease constant $c$ is, the smaller the final stable learning rate is. The final stable learning rates from different $c$'s are still at the same order, and are also at the same order of the hand-tuned initial learning rate. 
\end{itemize}
\begin{figure*}[t]
    \includegraphics[width=0.33\linewidth]{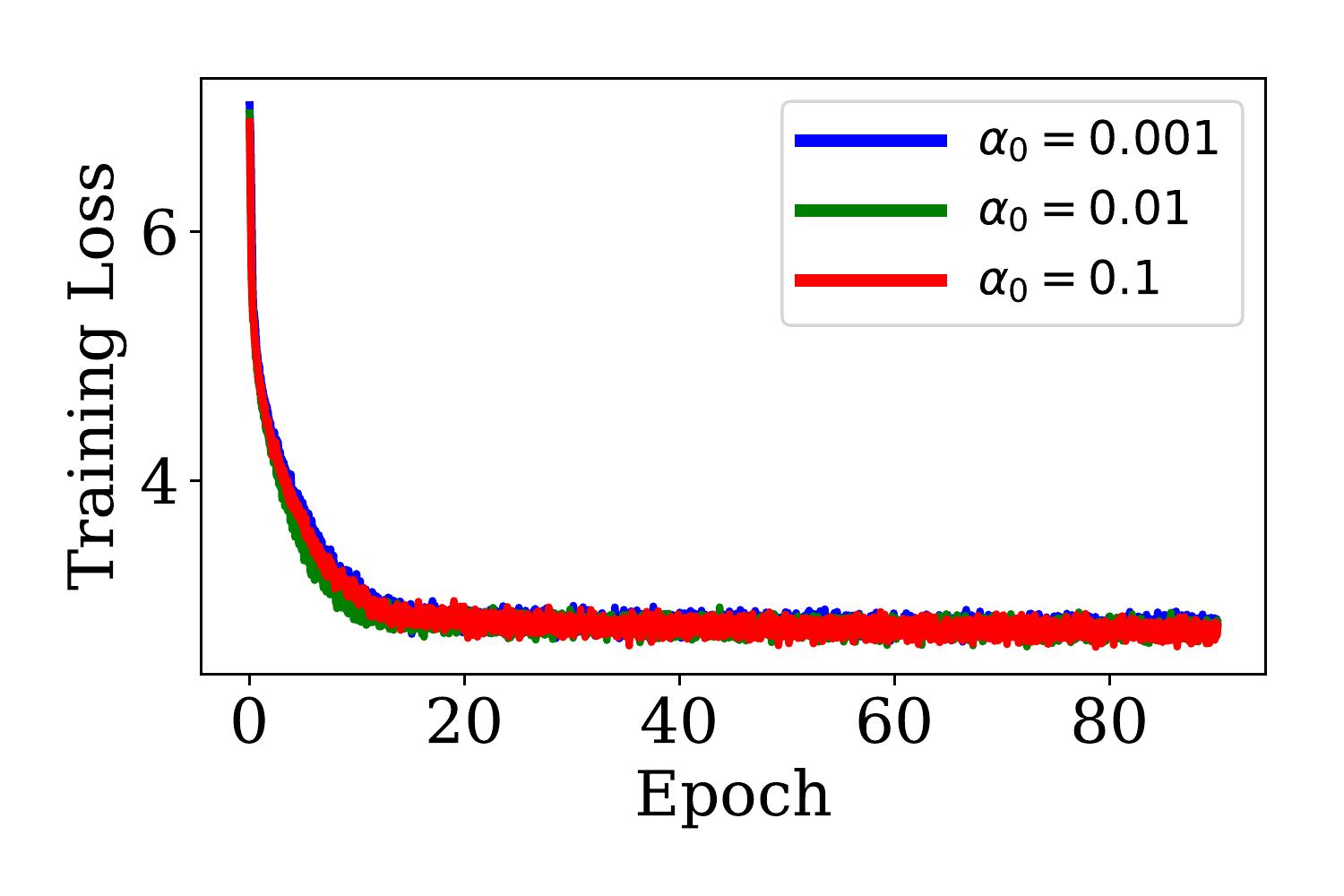}
    \includegraphics[width=0.33\linewidth]{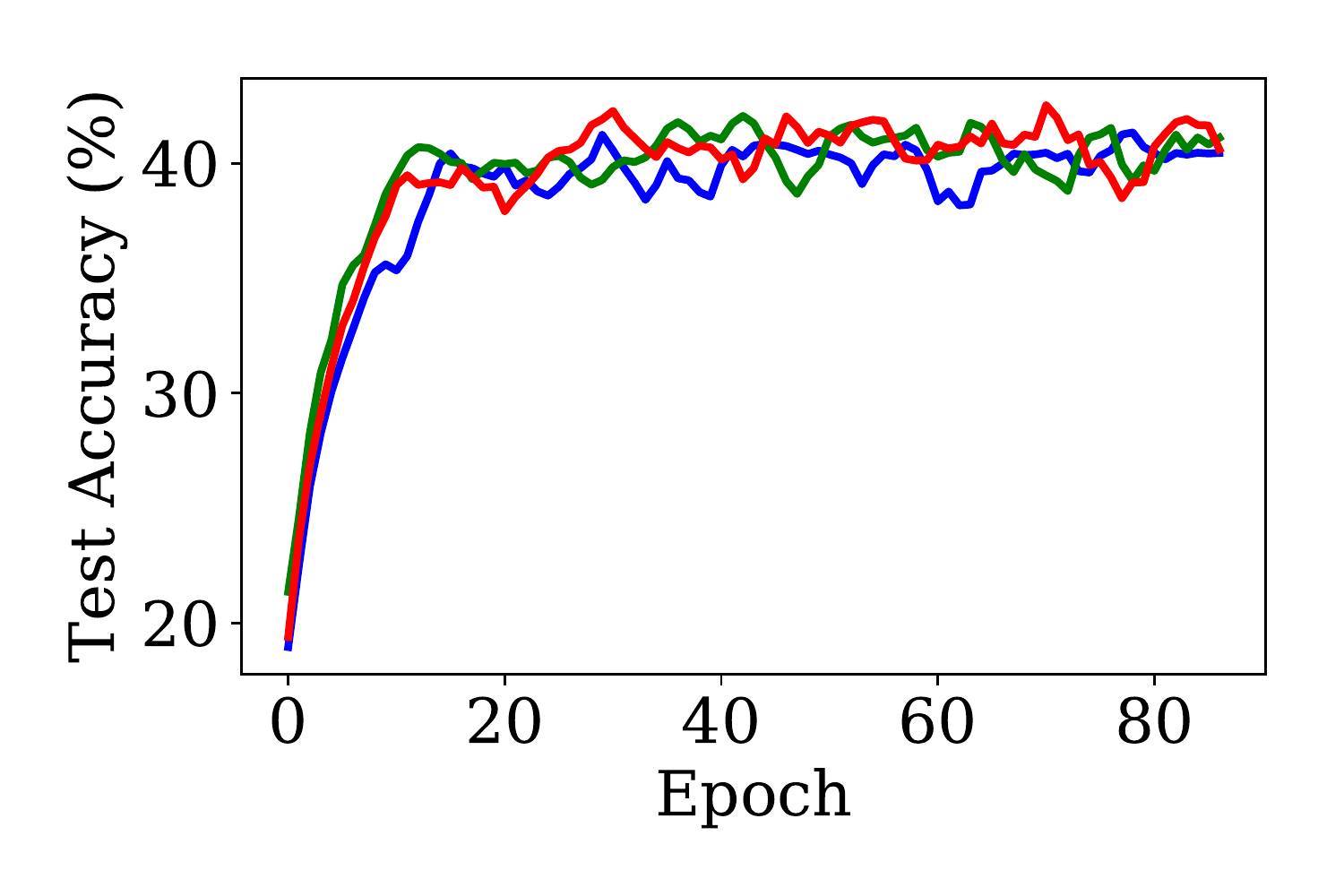}
    \includegraphics[width=0.33\linewidth]{archresnet18_dirg_lrs_sslslrs.pdf} \\
    \includegraphics[width=0.33\linewidth]{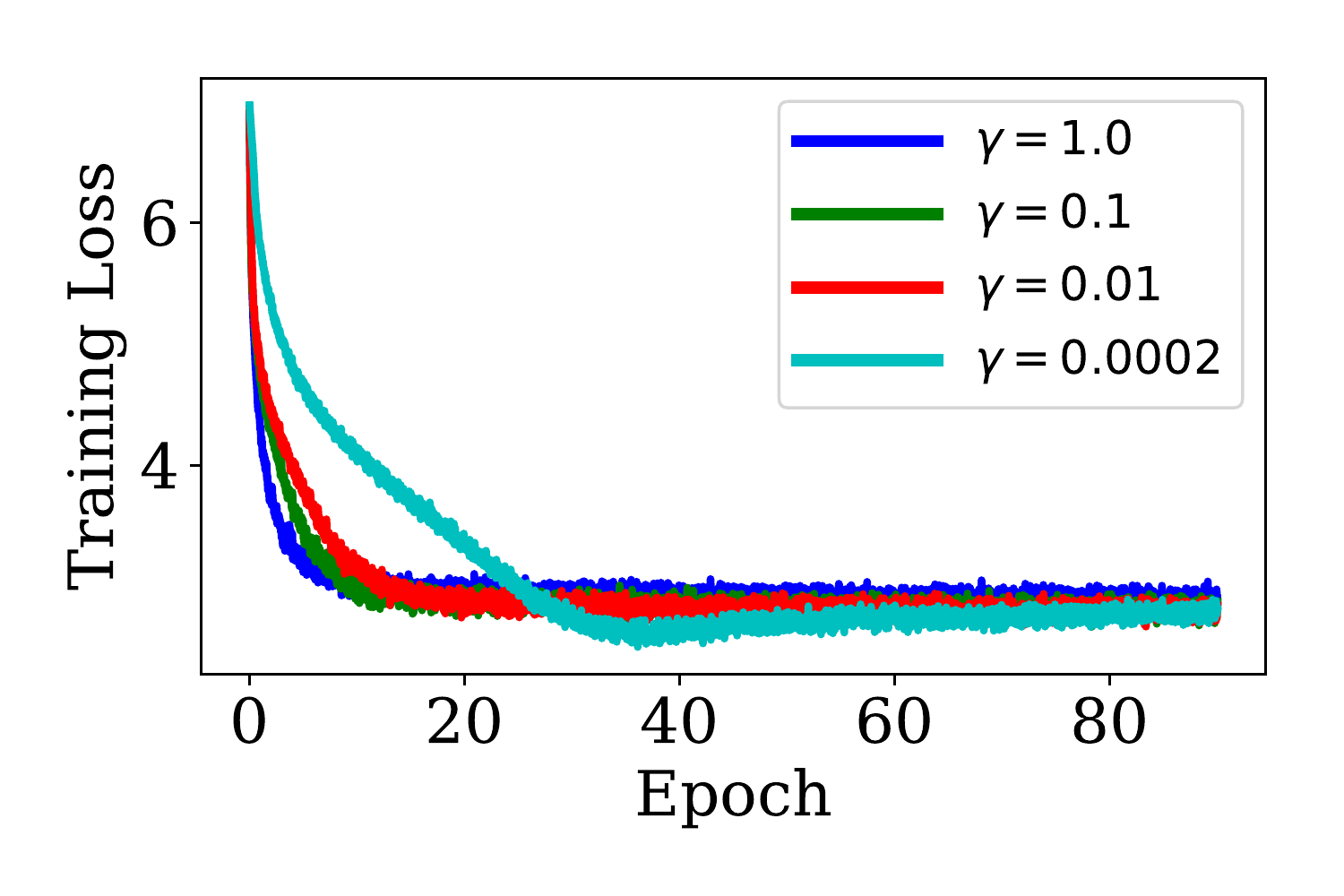}
    \includegraphics[width=0.33\linewidth]{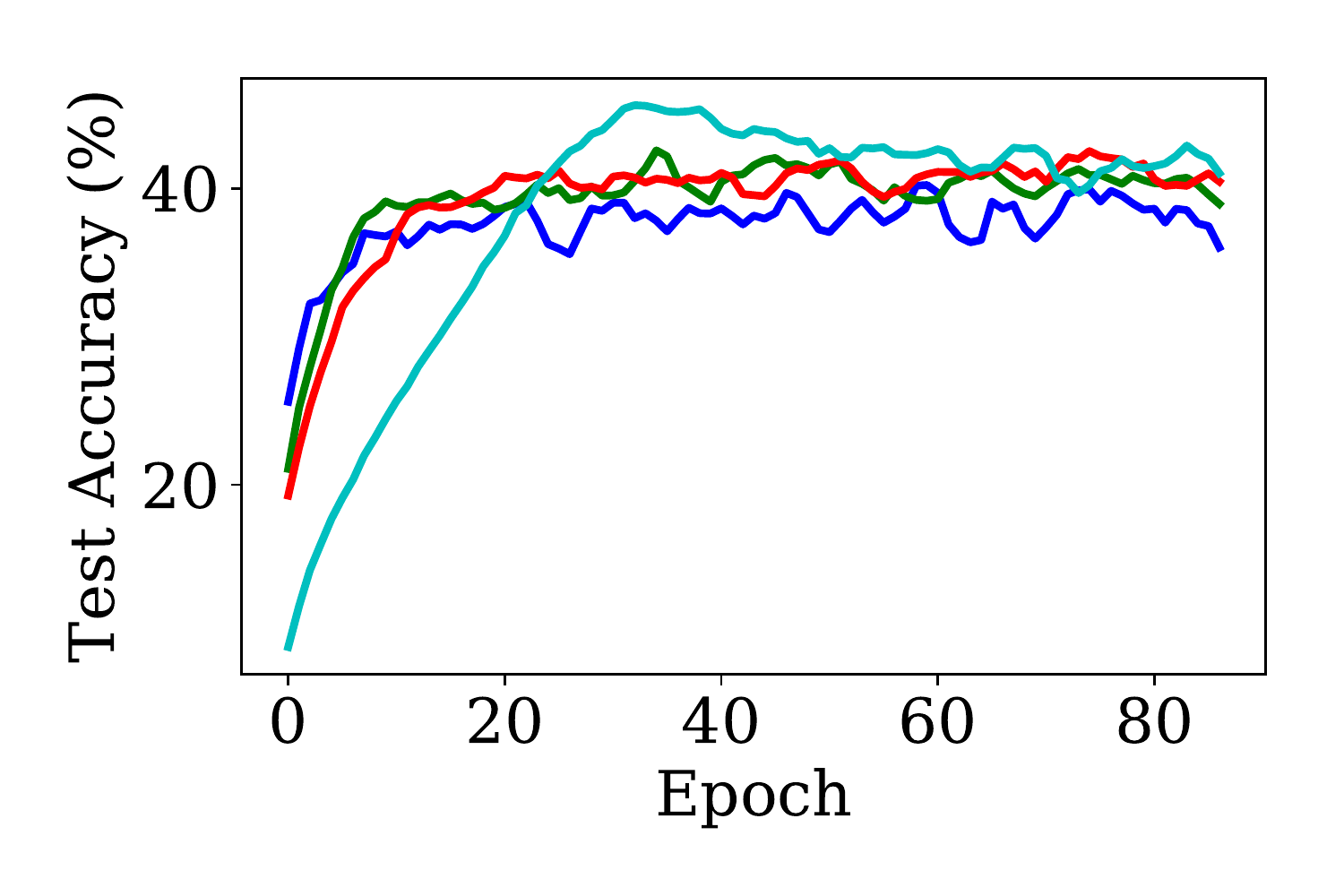}
    \includegraphics[width=0.33\linewidth]{archresnet18_dirg_lrs_sslsgamma.pdf} \\
    \includegraphics[width=0.33\linewidth]{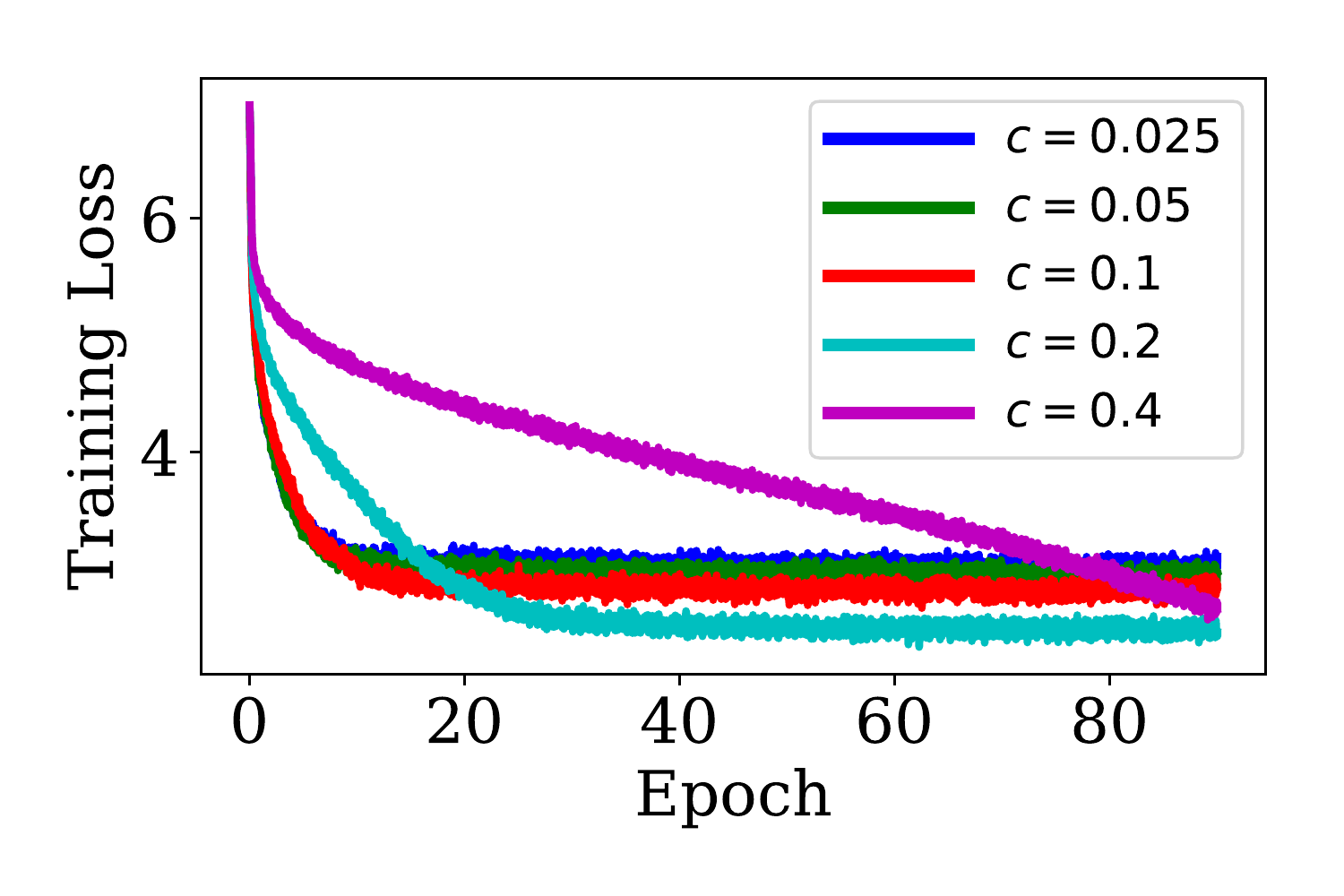}
    \includegraphics[width=0.33\linewidth]{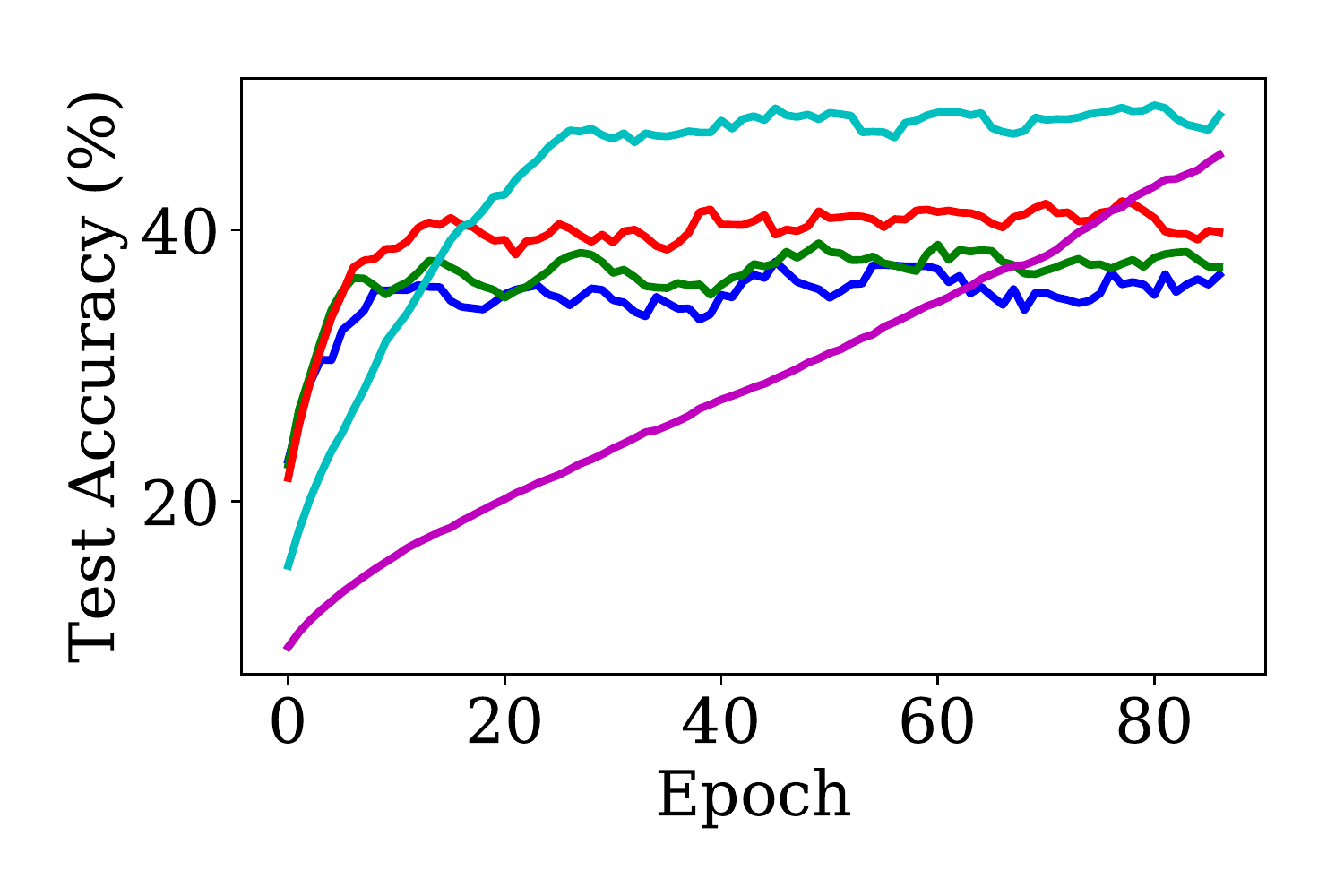}
    \includegraphics[width=0.33\linewidth]{archresnet18_dirg_lrs_sslssdcs.pdf}
    \caption{ImageNet: Sensitivity of SSLS to the initial learning rate $\alpha_0$ (first row), the smoothing factor $\gamma$ (second row) and the sufficient decrease constant $c$ (third row).}
    \label{fig:imagenet_ssls_ablation}
\end{figure*}

\begin{figure*}[t]
    \includegraphics[width=0.33\linewidth]{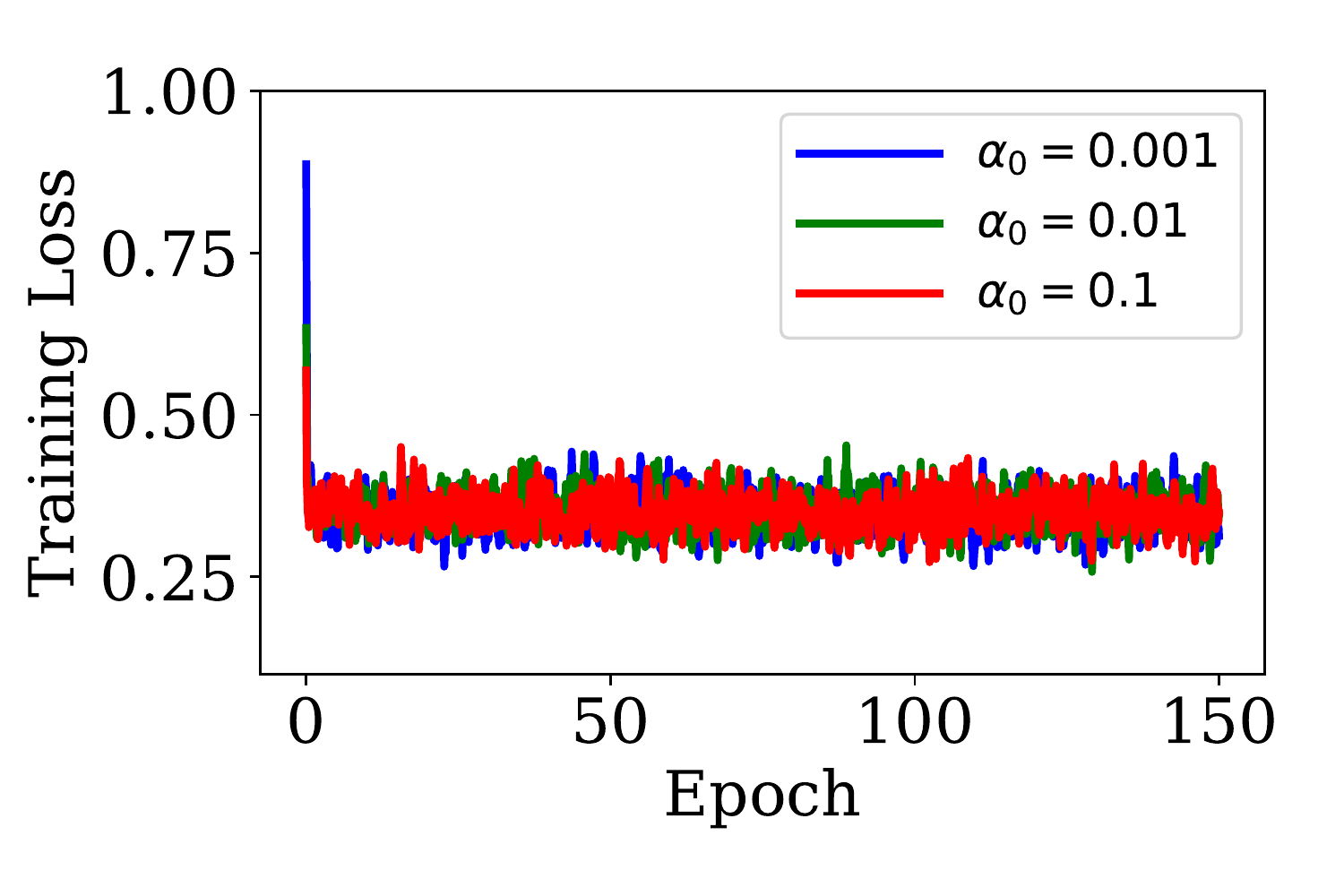}
    \includegraphics[width=0.33\linewidth]{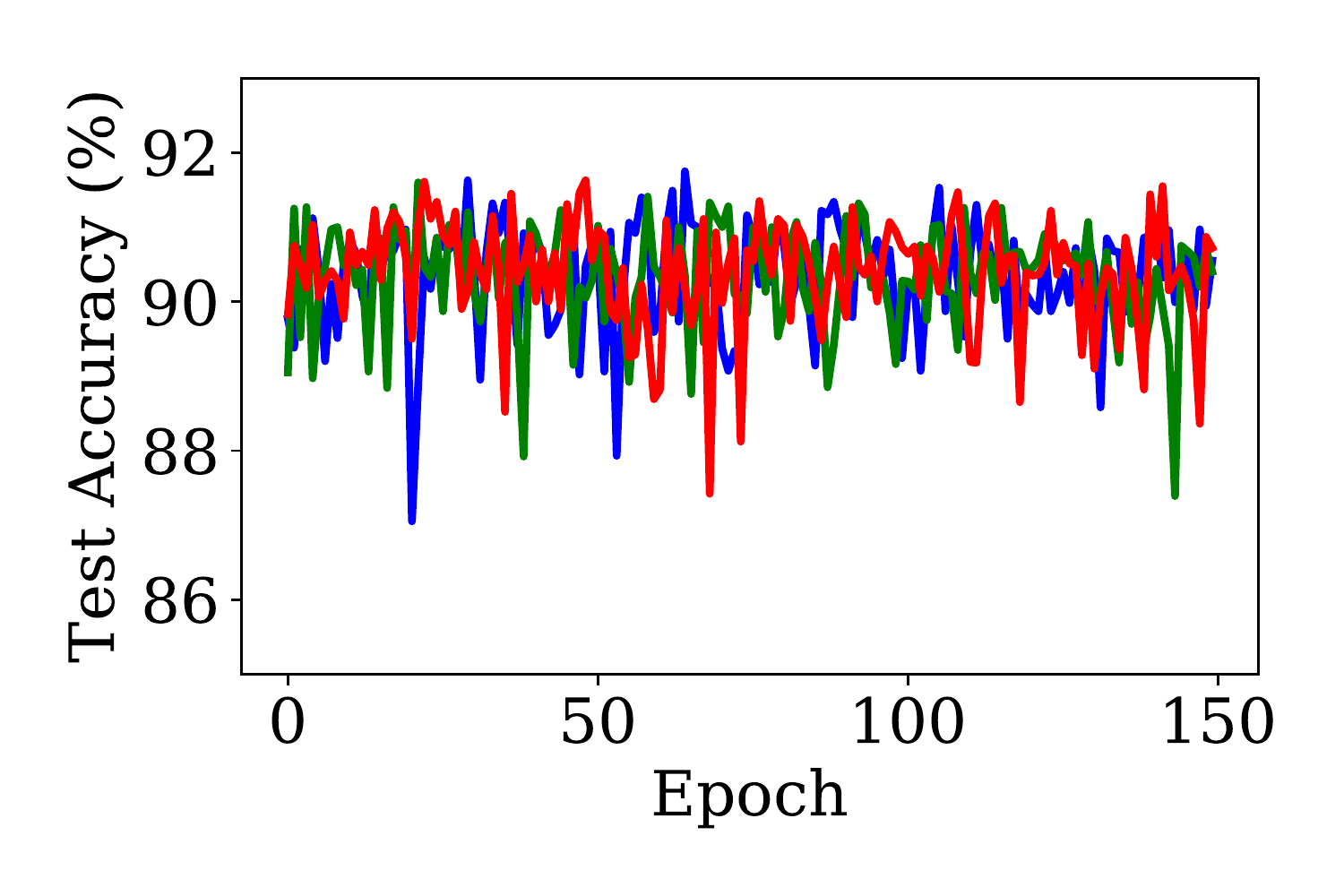}
    \includegraphics[width=0.33\linewidth]{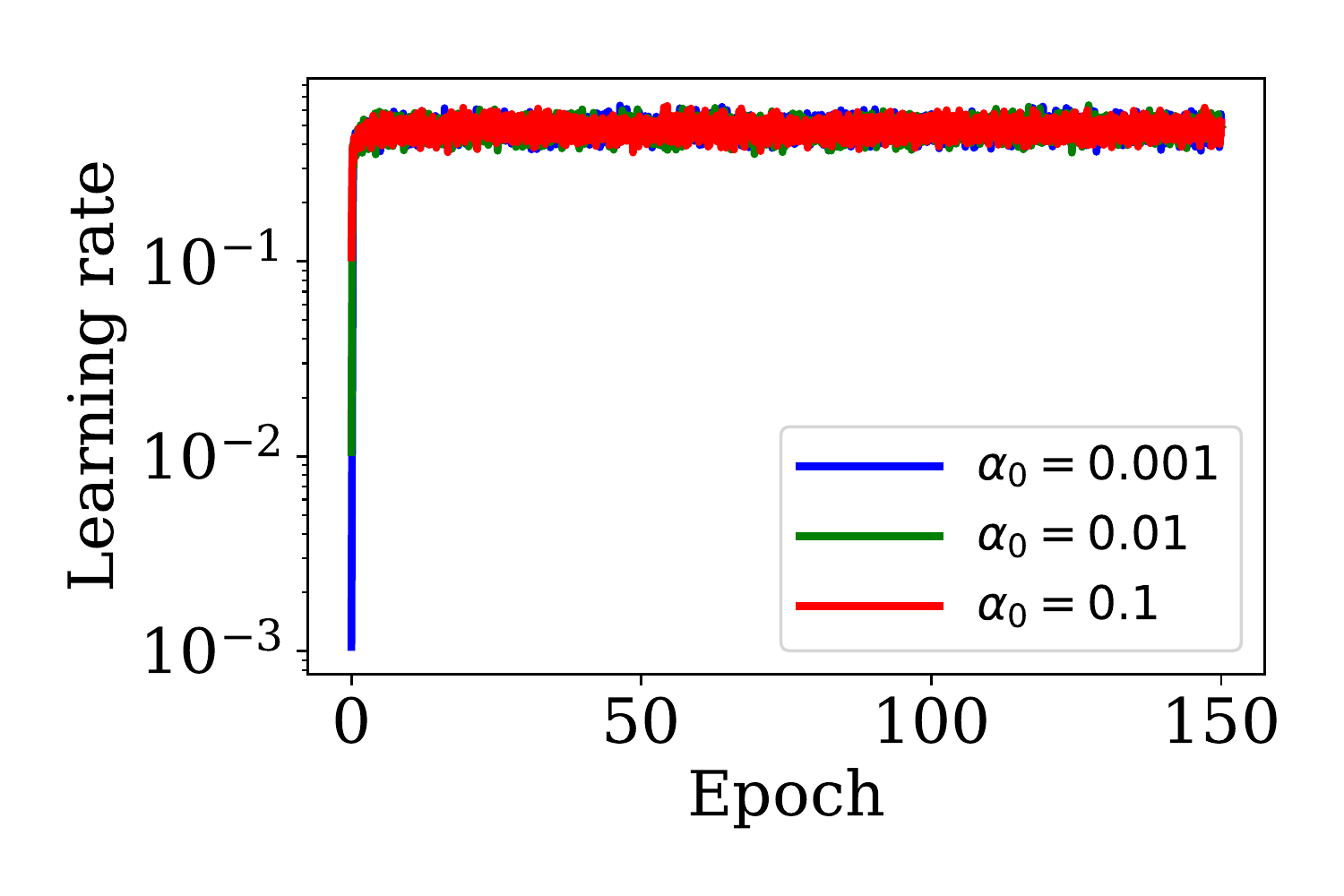} \\
    \includegraphics[width=0.33\linewidth]{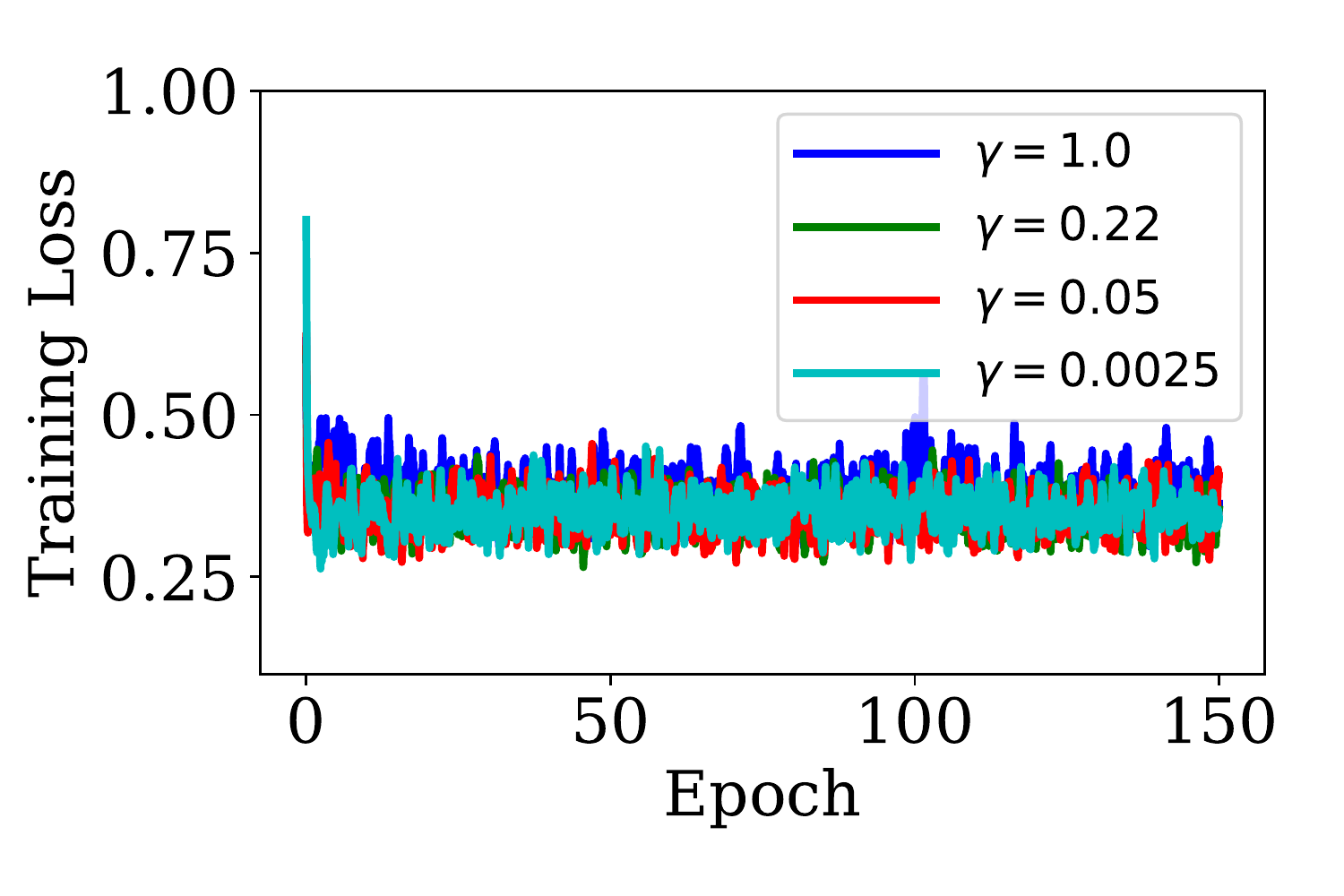}
    \includegraphics[width=0.33\linewidth]{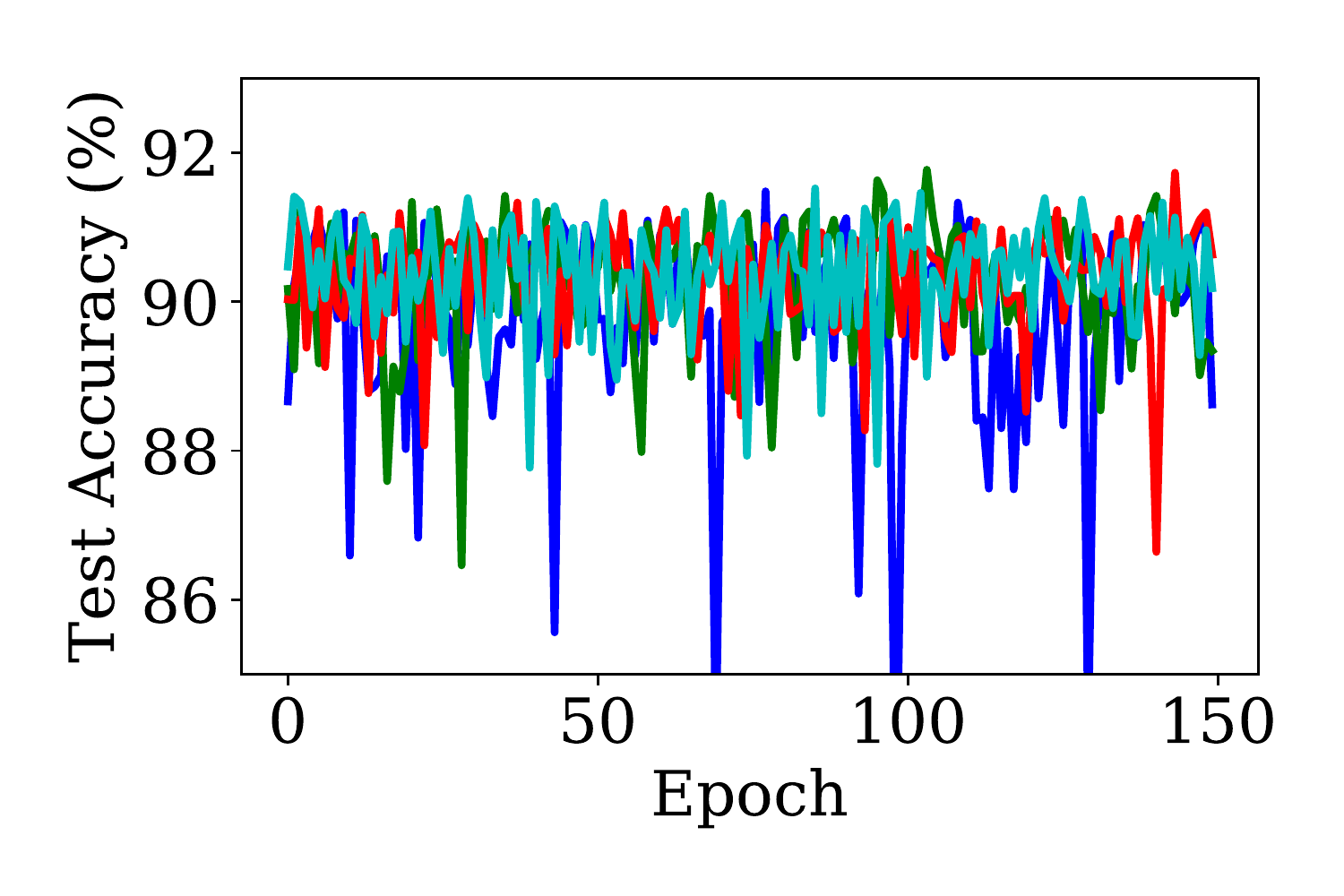}
    \includegraphics[width=0.33\linewidth]{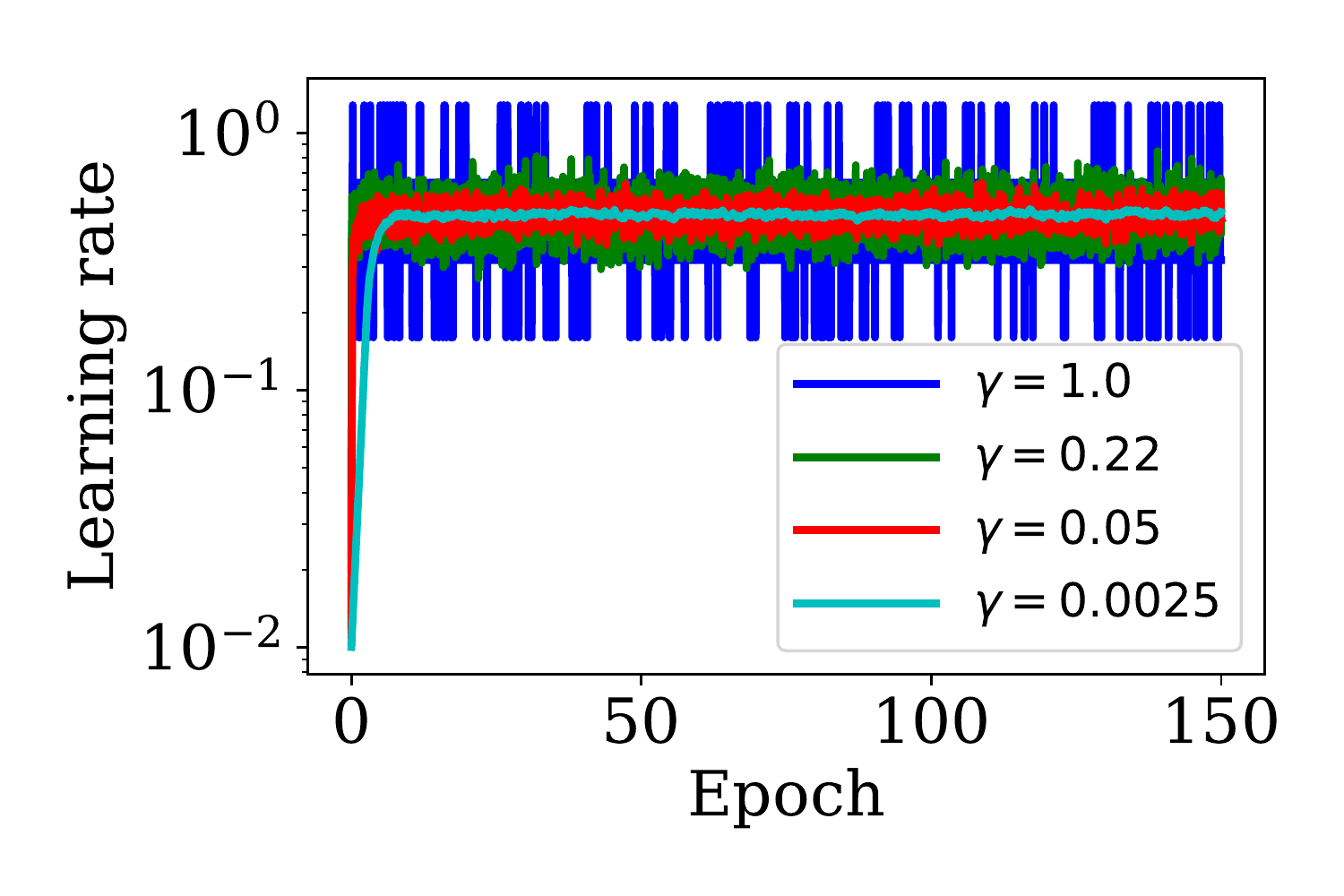} \\
    \includegraphics[width=0.33\linewidth]{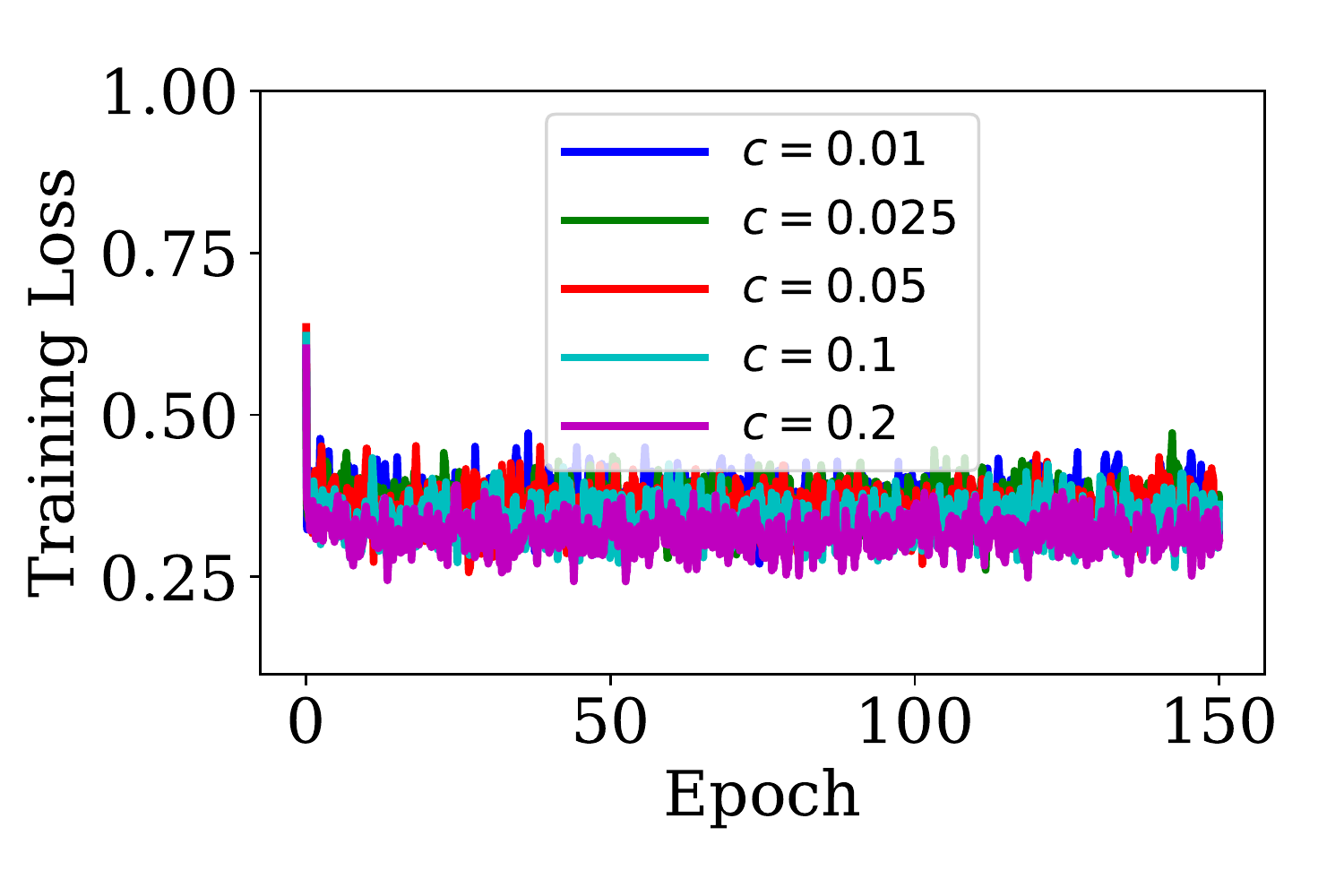}
    \includegraphics[width=0.33\linewidth]{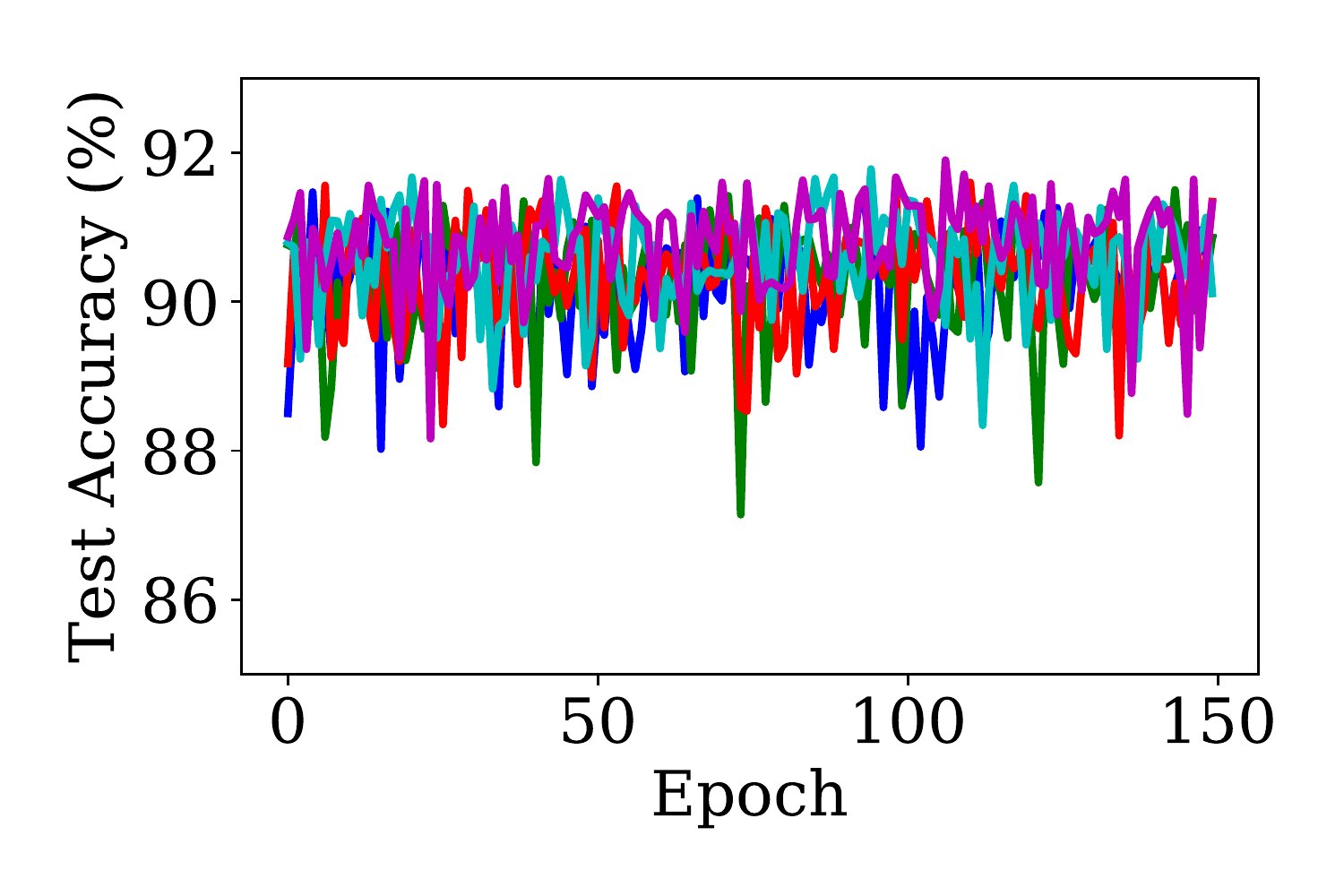}
    \includegraphics[width=0.33\linewidth]{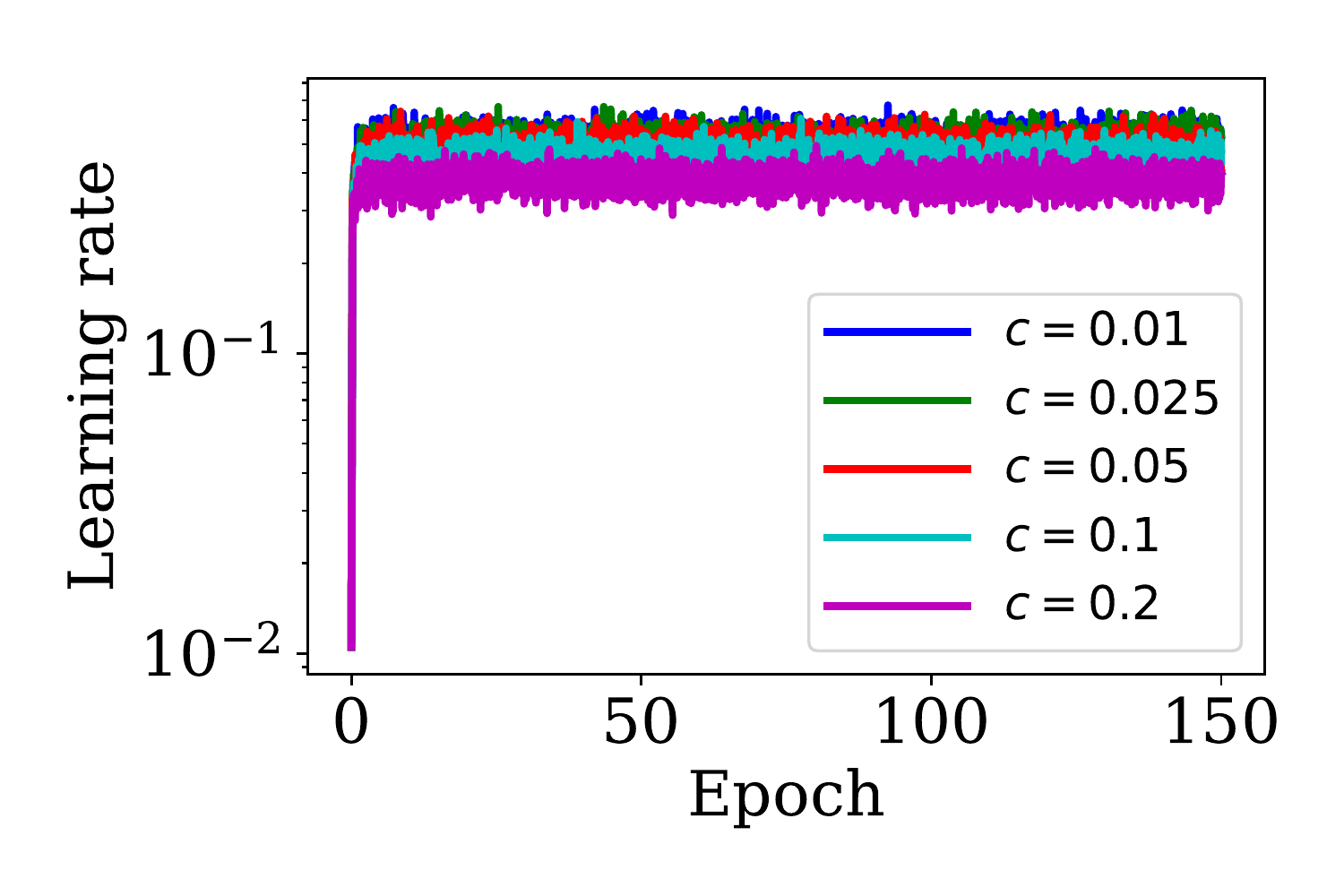}
    \caption{MNIST: Sensitivity of SSLS to the initial learning rate $\alpha_0$ (first row), the smoothing factor $\gamma$ (second row) and the sufficient decrease constant $c$ (third row).}
    \label{fig:mnist_ssls_ablation}
\end{figure*}

\begin{figure*}[t]
    \includegraphics[width=0.33\linewidth]{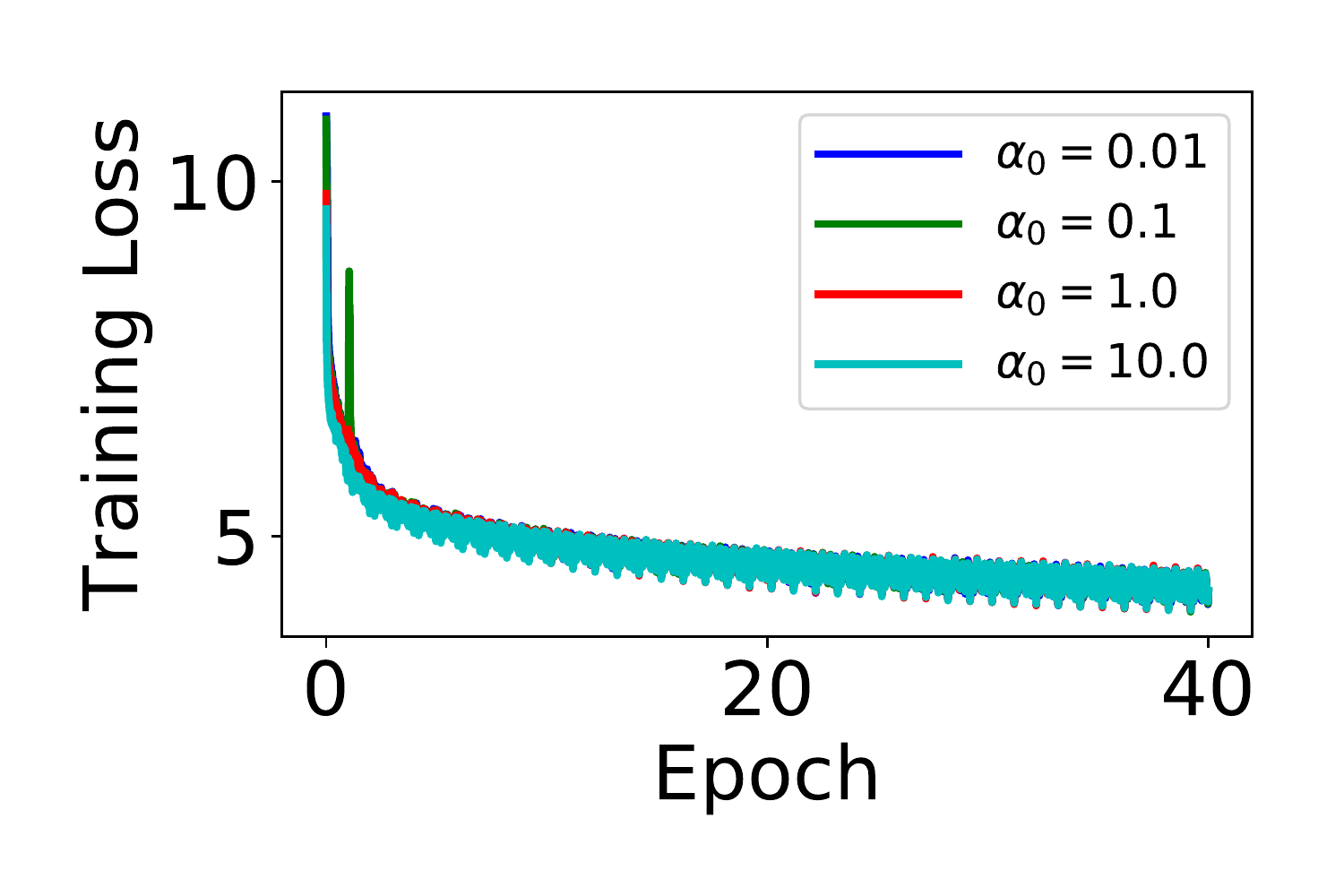}
    \includegraphics[width=0.33\linewidth]{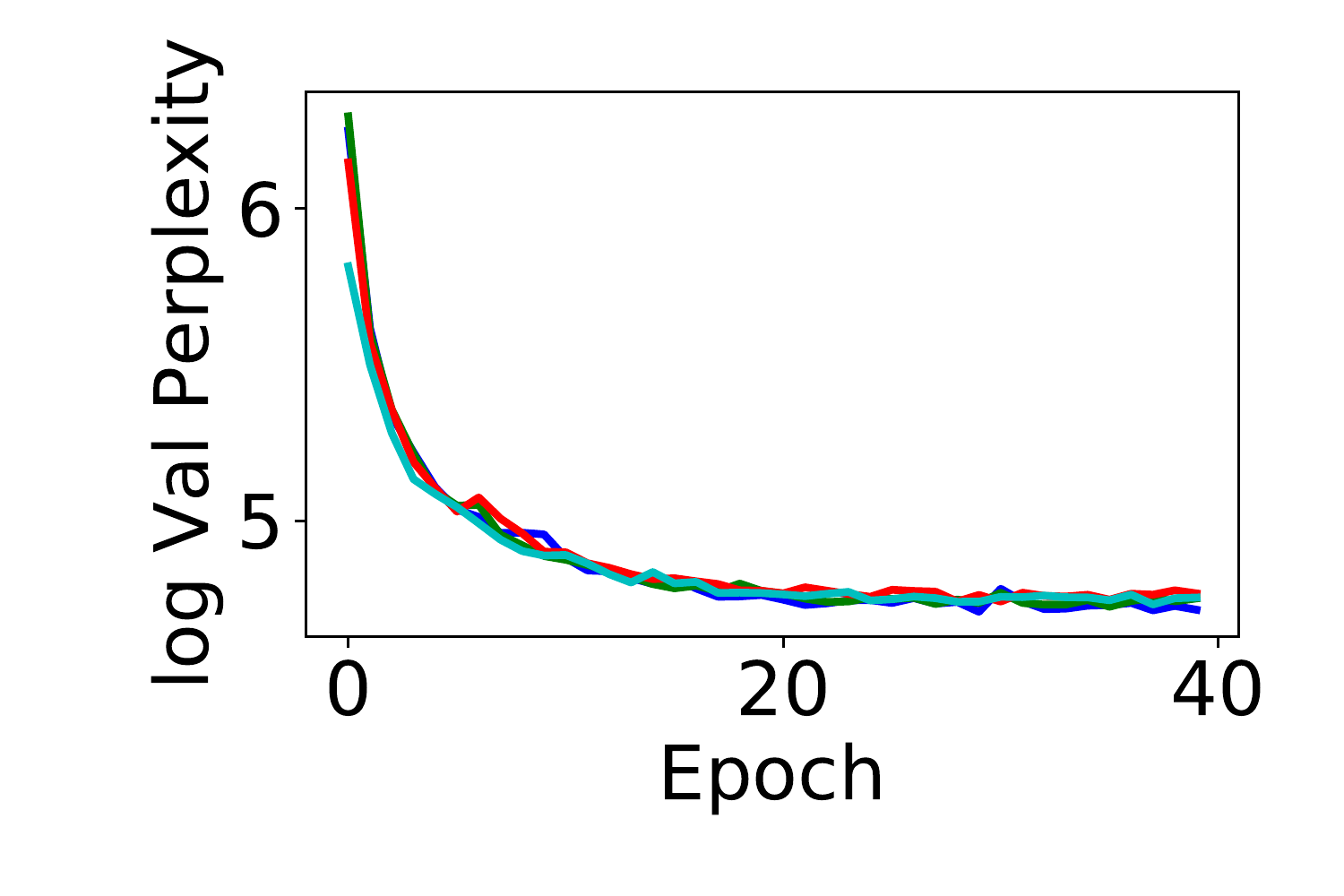}
    \includegraphics[width=0.33\linewidth]{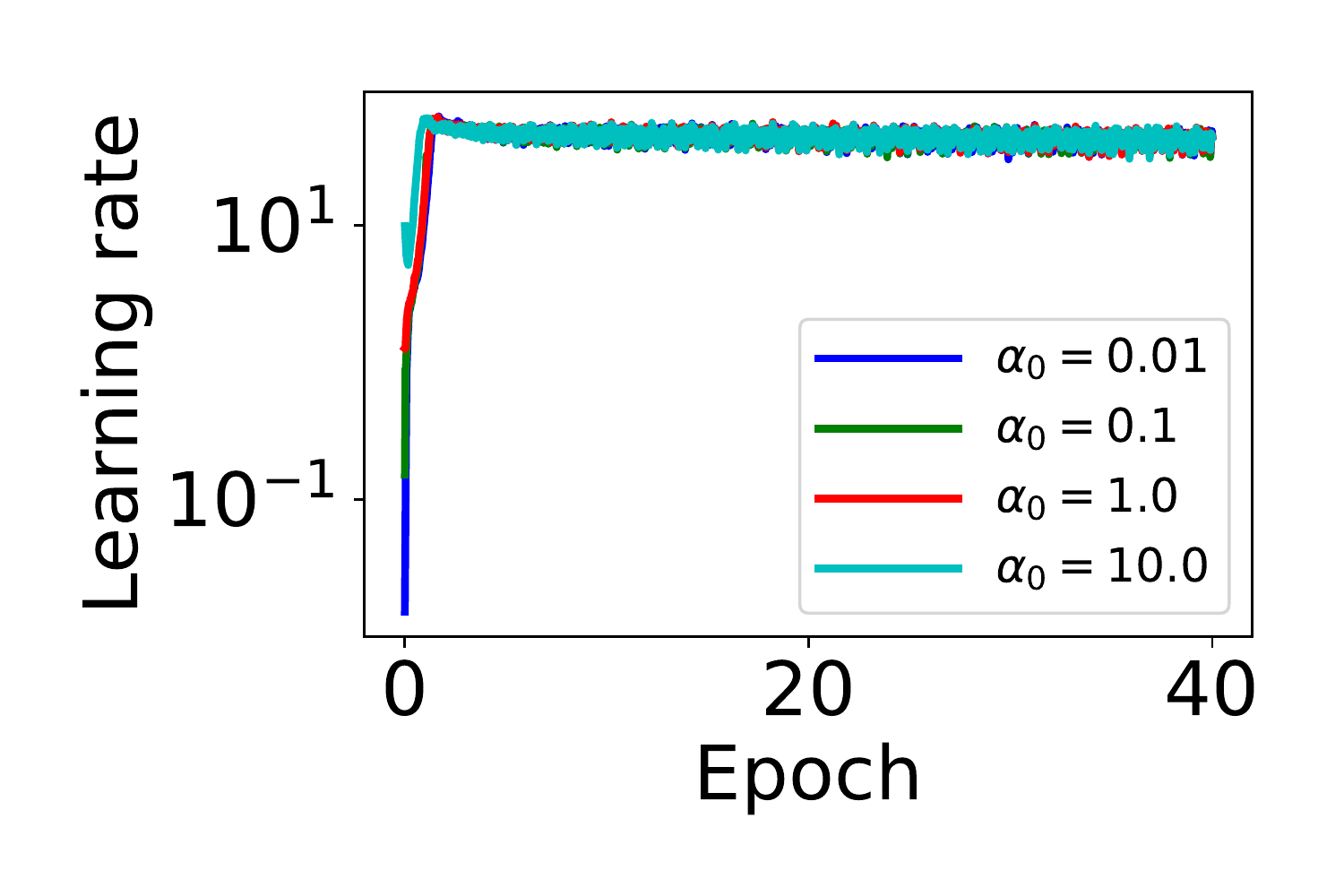} \\
    \includegraphics[width=0.33\linewidth]{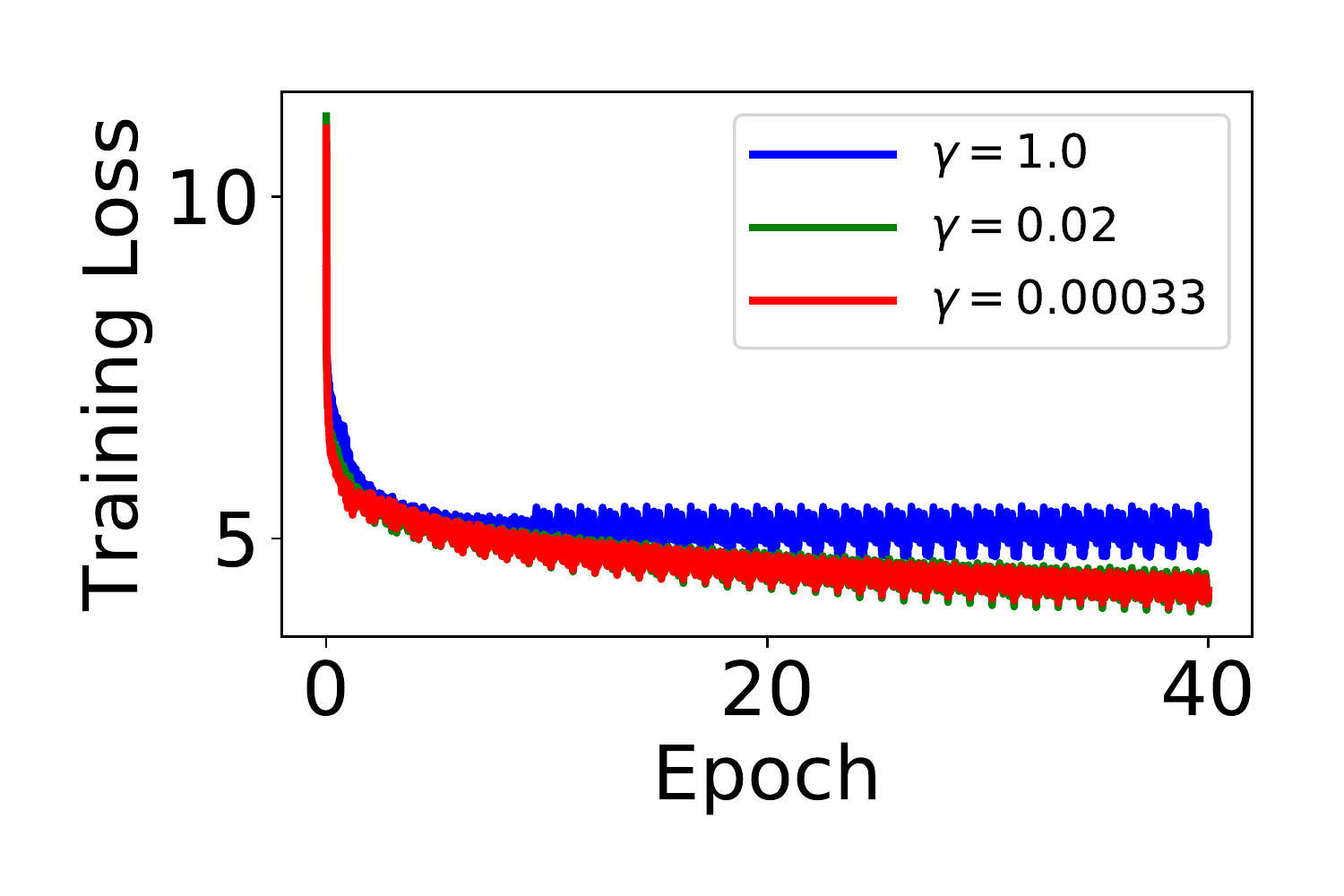}
    \includegraphics[width=0.33\linewidth]{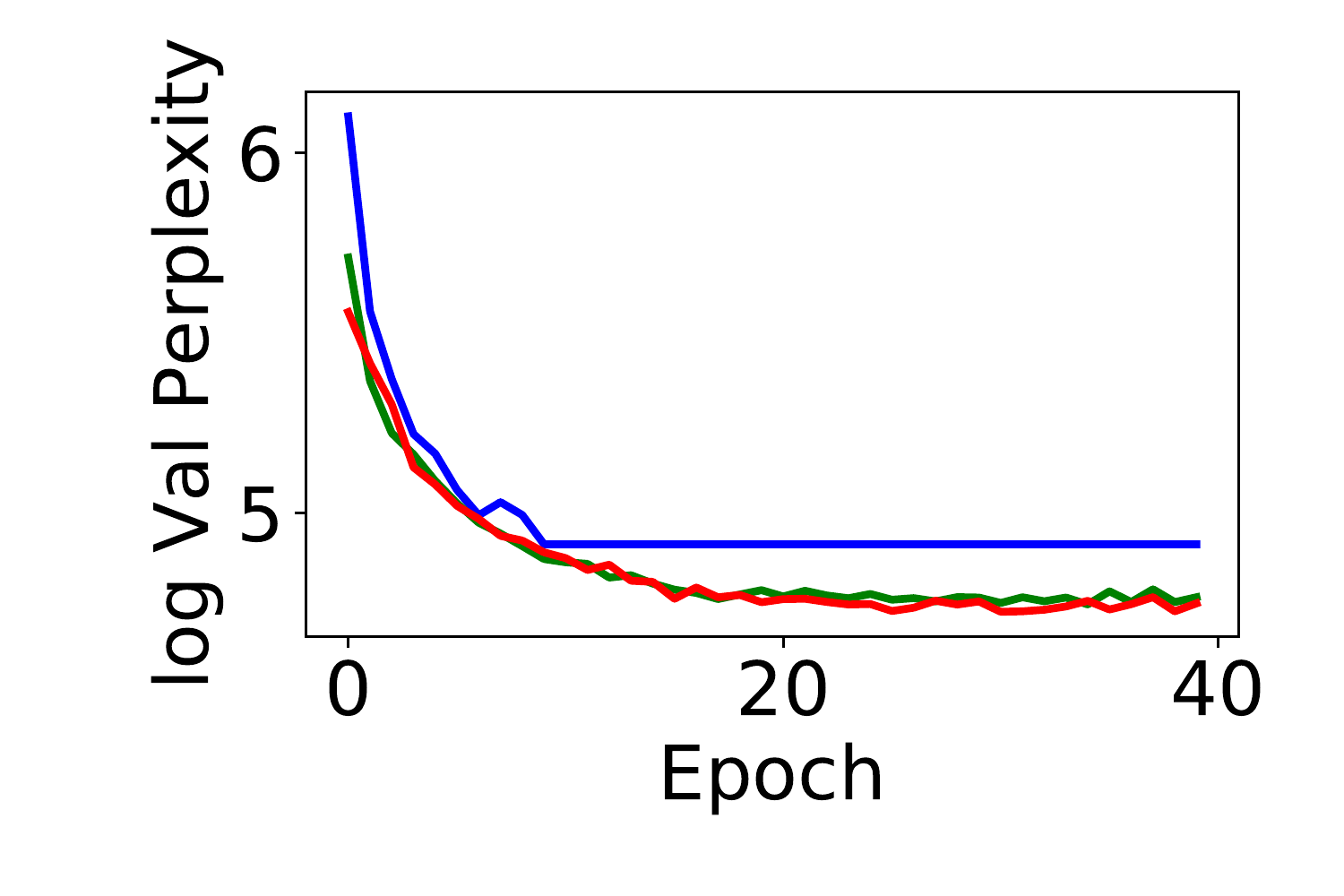}
    \includegraphics[width=0.33\linewidth]{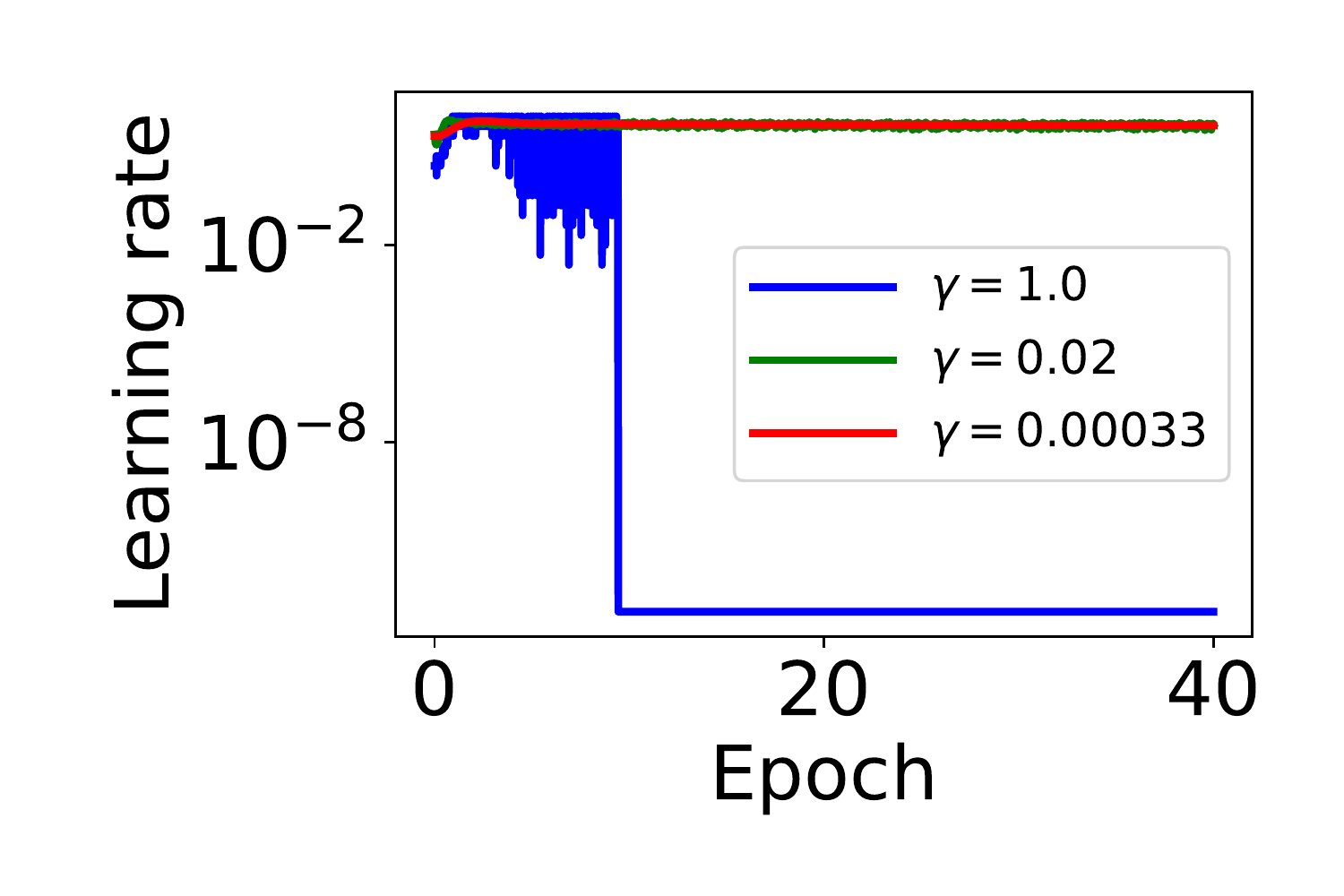} \\
    \includegraphics[width=0.33\linewidth]{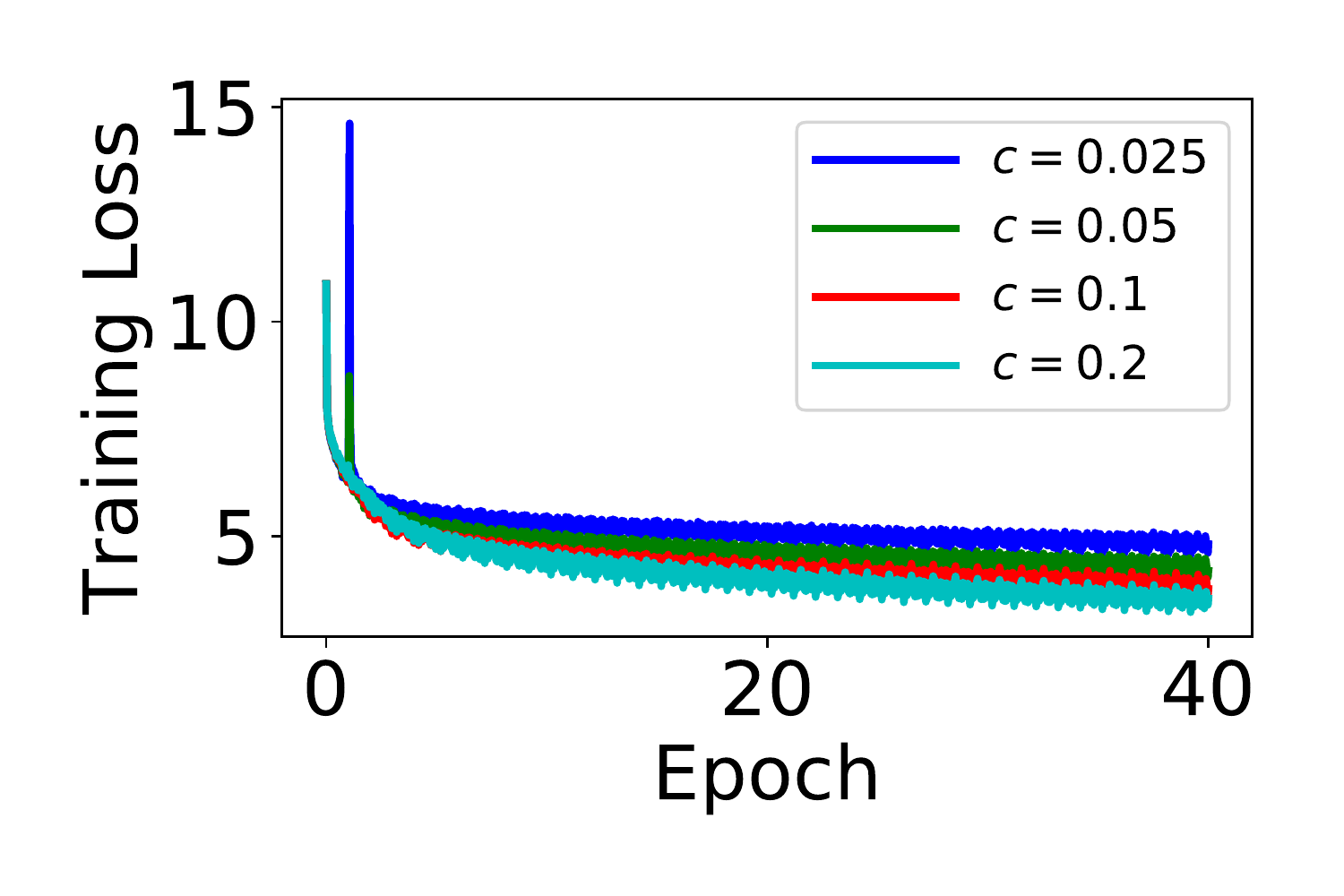}
    \includegraphics[width=0.33\linewidth]{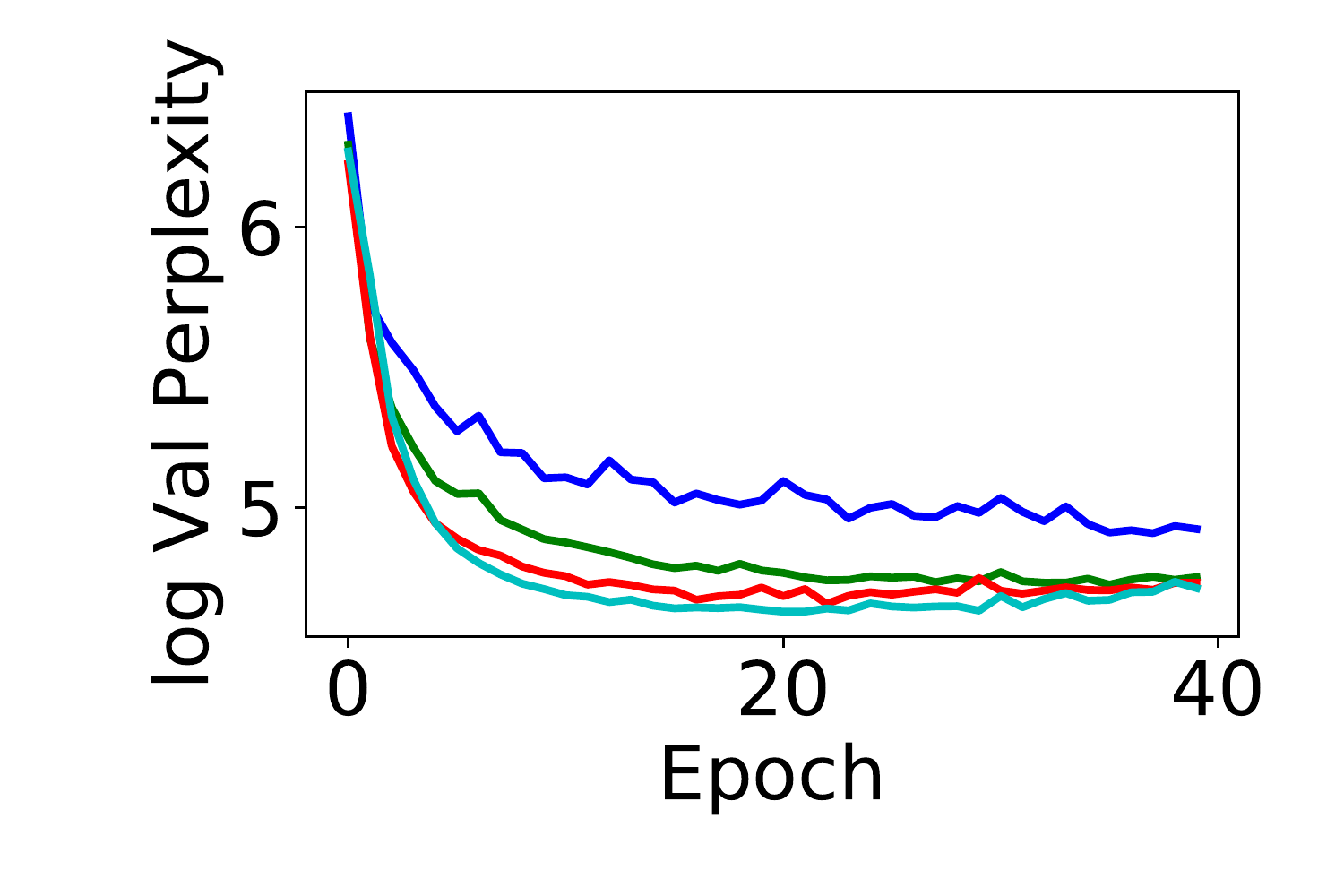}
    \includegraphics[width=0.33\linewidth]{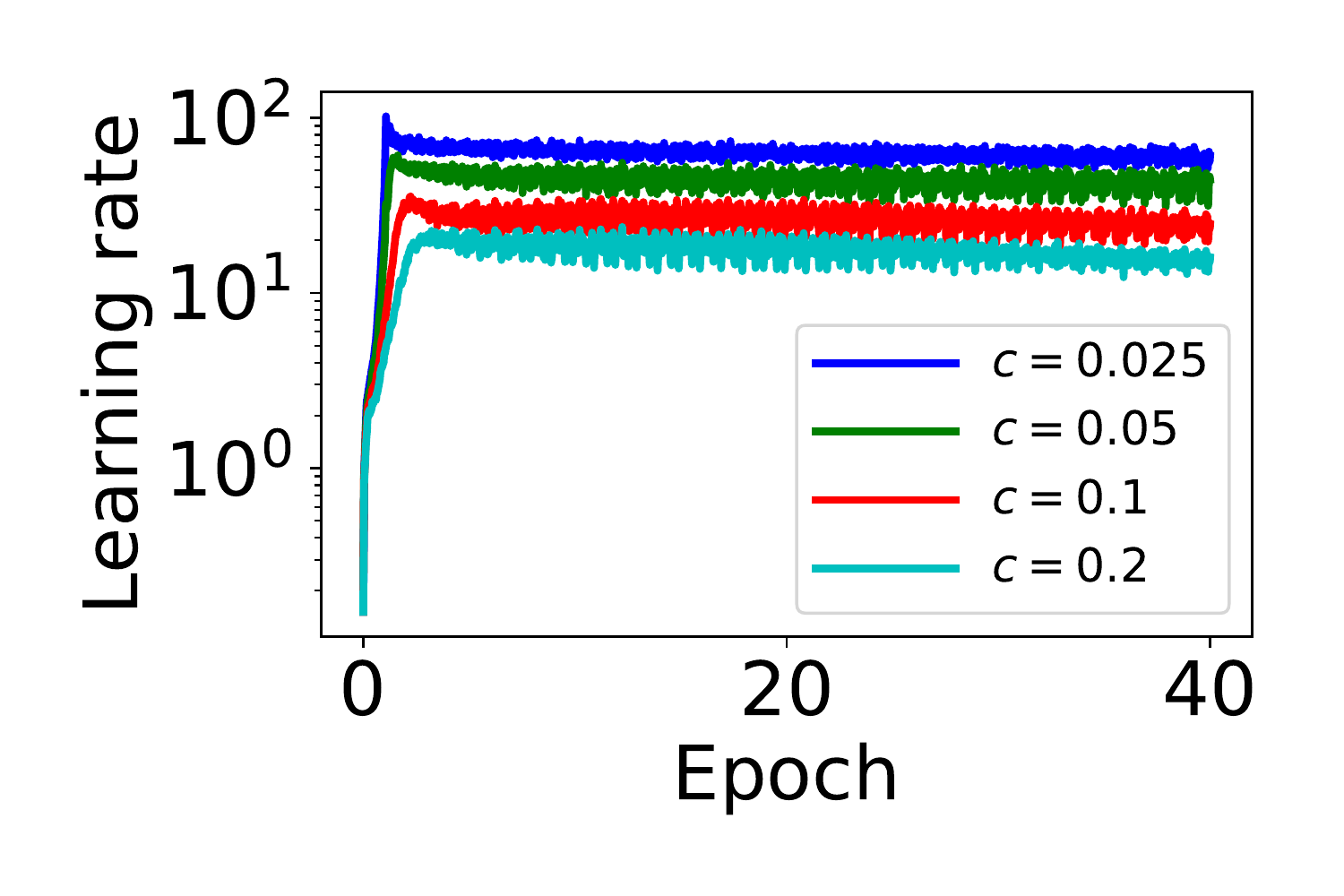}
    \caption{Wikitext-2: Sensitivity of SSLS to the initial learning rate $\alpha_0$ (first row), the smoothing factor $\gamma$ (second row) and the sufficient decrease constant $c$ (third row).}
    \label{fig:rnn_ssls_ablation}
\end{figure*}

\end{document}